\newcommand{\E}{\mathop{\mathbb{E}}}
\newcommand{\R}{\mathbb{R}}
\title{Extreme Memorization via Scale of Initialization}
\author{Harsh Mehta\\
Google Research\\
\texttt{harshm@google.com} \And Ashok Cutkosky\\Boston University\\\texttt{ashok@cutkosky.com} \And Behnam Neyshabur \\Blueshift, Alphabet\\\texttt{neyshabur@google.com}}
\begin{document}

\maketitle

\begin{abstract}
We construct an experimental setup in which changing the scale of initialization strongly impacts the implicit regularization induced by SGD, interpolating from good generalization performance to completely memorizing the training set while making little progress on the test set. Moreover, we find that the extent and manner in which generalization ability is affected depends on the activation and loss function used, with $\sin$ activation demonstrating extreme memorization. In the case of the homogeneous ReLU activation, we show that this behavior can be attributed to the loss function. Our empirical investigation reveals that increasing the scale of initialization correlates with misalignment of representations and gradients across examples in the same class. This insight allows us to devise an alignment measure over gradients and representations which can capture this phenomenon. We demonstrate that our alignment measure correlates with generalization of deep models trained on image classification tasks.
\end{abstract}

\section{Introduction}
Training highly overparametrized deep neural nets on large datasets has been a very successful modern recipe for building machine learning systems. As a result, there has been a significant interest in explaining some of the counter-intuitive behaviors seen in practice, with the end-goal of further empirical success.

One such counter-intuitive trend is that the number of parameters in models being trained have increased considerably over time, and yet these models continue to increase in accuracy without loss of generalization performance. In practice, improvements can be observed even after the point where the number of parameters far exceed the number of examples in the dataset, i.e., when the network is overparametrized \citep{DBLP:journals/corr/ZhangBHRV16, closerlook2017} . These wildly over-parameterized networks avoid overfitting even without explicit regularization techniques such as weight decay or dropout, suggesting that the training procedure (usually SGD) has an implicit bias which encourages the net to generalize \citep{caruana2000,neyshabur2014search, neyshabur2018roleofoverparametrization, belkin2018reconciling, soudry2018the}.

\begin{figure}[h!]
  \centering
      \begin{subfigure}[b]{0.32\textwidth}
    \includegraphics[width=\textwidth]{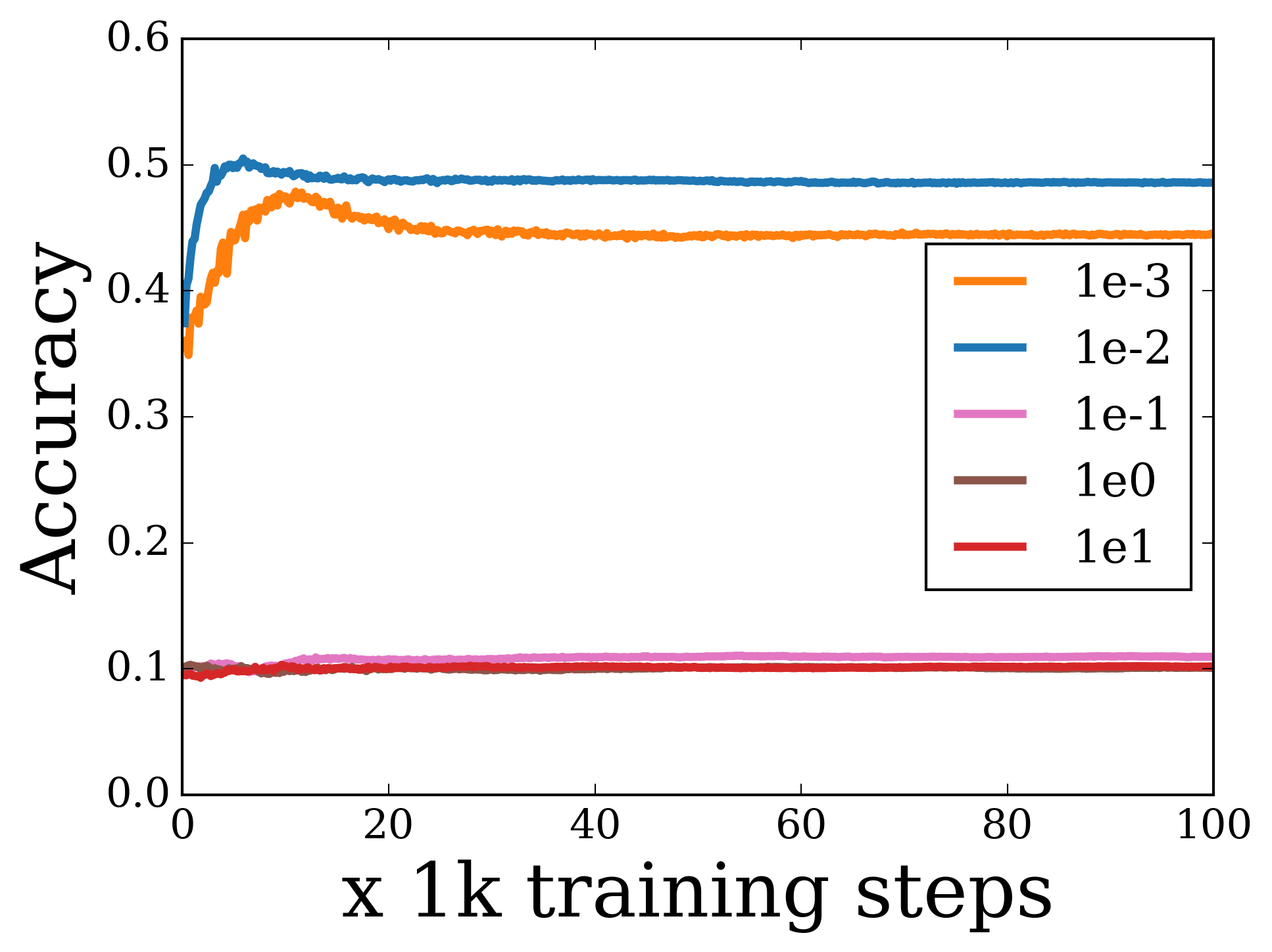}
    \caption{Test accuracy}
  \end{subfigure} 
    \begin{subfigure}[b]{0.32\textwidth}
    \includegraphics[width=\textwidth]{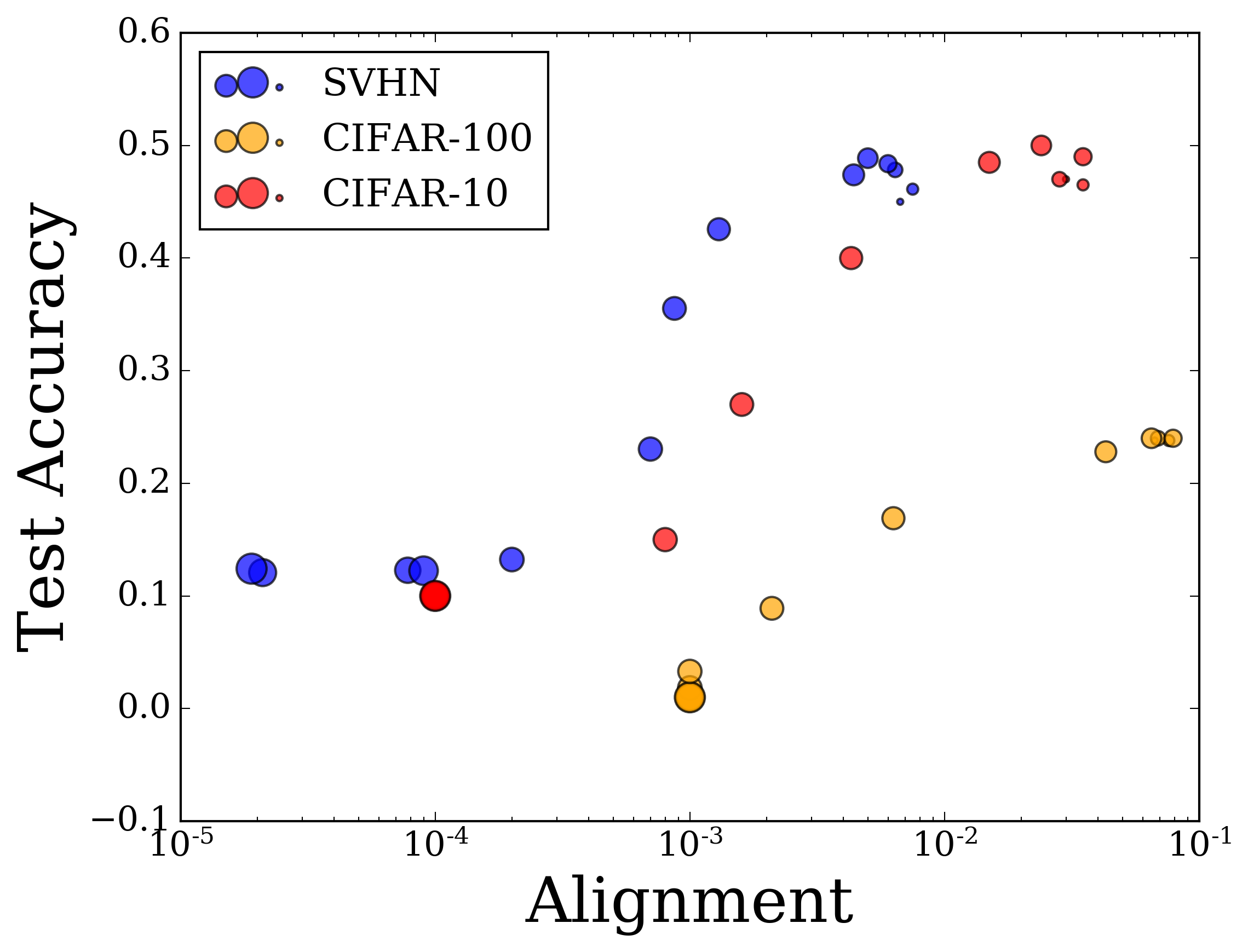}
    \caption{Changing scale of initialization}
  \end{subfigure}
    \begin{subfigure}[b]{0.32\textwidth}
    \includegraphics[width=\textwidth]{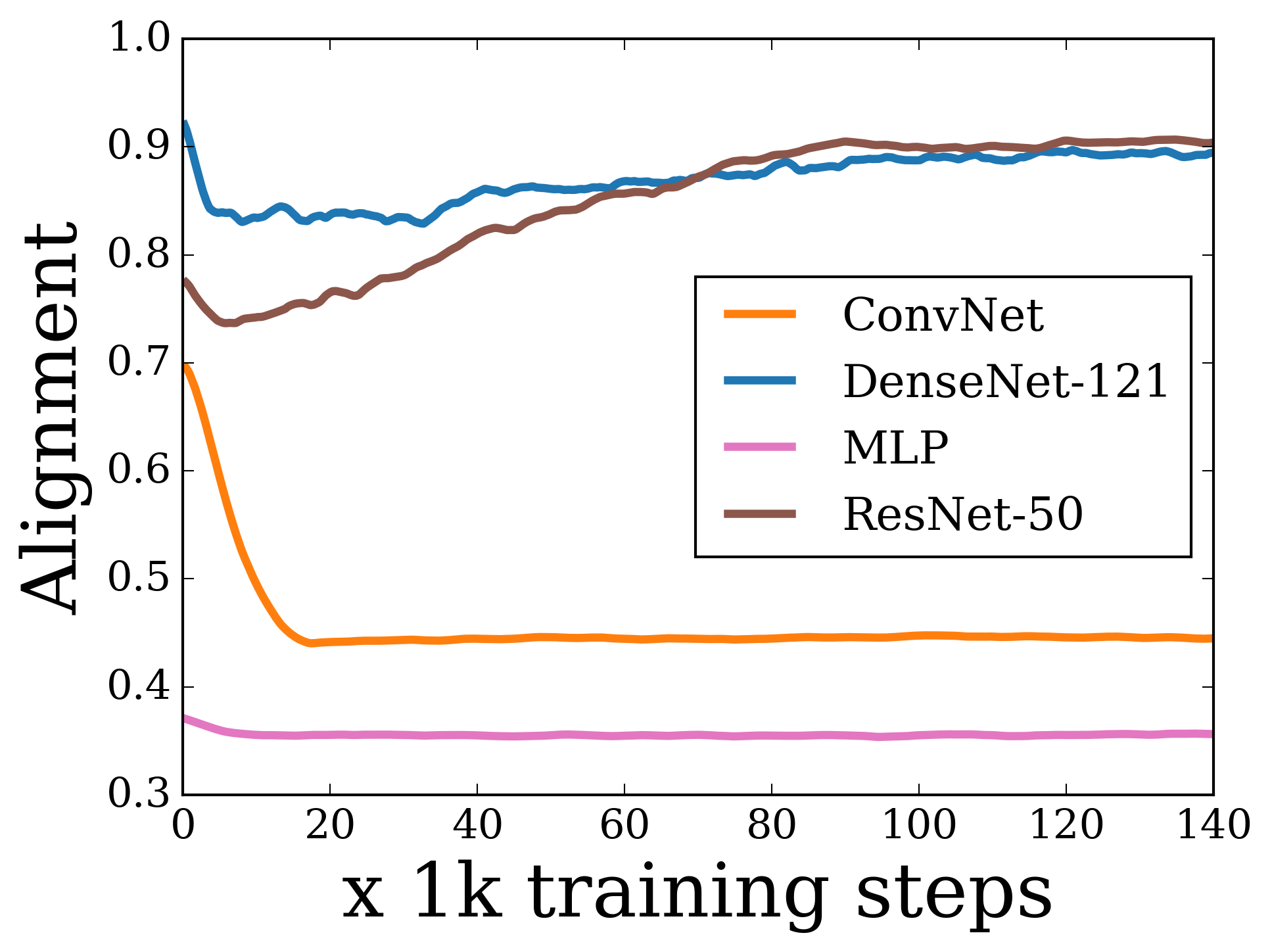}
    \caption{Alignment on deep models}
    \end{subfigure}
  \caption{\small (a) Results when using $\sin$ activation function in a 2-layer MLP. We initialize the first layer using random normal distribution with mean zero and vary the standard deviation $\sigma$ as shown in the plots. Initialization scheme for the top layer is kept unchanged and uses a glorot uniform initializer \citep{pmlr-v9-glorot10a}. The plot shows the drastic changes in generalization ability solely due the changes in scaling on CIFAR-10 dataset. Plot (b) shows the correlation between best test accuracy and gradient alignment values across 3 different datasets, CIFAR-10 \citep{Krizhevsky2009LearningML} CIFAR-100 and SVHN as we change scale of initialization. Finally, plot (c) illustrates that the alignment measure can also capture differences in generalization across model architectures. Note that, in order to do a fair comparison, all hyperparameters (e.g. learning rate, optimizer) are kept constant.}
  \label{fig:top}
\end{figure}

\vspace{-0.1in}
\paragraph{Contributions}
In order to understand the interplay between training and generalization, we investigate situations in which the network can be made to induce a scenario in which the accuracy on the test set drops to random chance while maintaining perfect accuracy on the training set. We refer to this behavior as \emph{extreme memorization}, distinguished from the more general category of memorization where either test set performance is higher than random chance or the net fails to attain perfect training set accuracy. In this paper, we examine the effect of \emph{scale of initialization} on the generalization performance of SGD. We found that it is possible to construct an experimental setup in which simply changing the scale of the initial weights allows for a continuum of generalization ability, from very little overfitting to perfectly memorizing the training set. It is our hope that these observations provide fodder for further advancements in both theoretical and empirical understanding of generalization. \footnote{Code is open-sourced at \url{https://github.com/google-research/google-research/tree/master/extreme_memorization}}

\begin{itemize}
  \itemsep0em
  \item We construct a two-layer feed forward network using $\sin$ activation and observe that increasing the scale of initialization of the first layer has a strong effect on the implicit regularization induced by SGD, approaching extreme memorization of the training set as the scale is increased. We observe this phenomenon on 3 different image classification datasets: CIFAR-10, CIFAR-100 and SVHN.
  \item For the popular ReLU activation, one might expect that changing the scale should not affect the predictions of network, due to its homogeneity property. Nevertheless, even with ReLU activation we see a similar drop in generalization performance. We demonstrate that generalization behavior can be attributed further up in the network to a variety of common loss functions (softmax cross-entropy, hinge and squared loss).
  \item Gaining insight from these phenomena, we devise an empirical ``gradient alignment'' measure which quantifies the agreement between gradients for examples corresponding to a class. We observe that this measure correlates well with the generalization performance as the scale of initialization is increased. Moreover, we formulate a similar notion for representations as well.
  \item Finally, we provide evidence that our alignment measure is able to capture generalization performance across architectural differences of deep models on image classification tasks.
\end{itemize}


\section{Related Work}
Understanding the generalization performance of neural networks is a topic of widespread interest. While overparametrized nets generalize well when trained via SGD on real datasets, they can just as easily fit the training data when the labels are completely shuffled \citep{DBLP:journals/corr/ZhangBHRV16}. In fact, \citet{kernal2018learning} show that the perfect overfitting phenomenon seen in deep nets can also be observed in kernel methods. Further studies like \citet{neyshabur2017exploring, closerlook2017} expose the qualitative differences between nets trained with real vs random data. Generalization performance has been shown to depend on many factors including model family, number of parameters, learning rate schedule, explicit regularization techniques, batch size, etc \citep{keskar2016largebatch,marginalvalue2017adaptivegradient}. \citet{Xiao2019DisentanglingTA} further characterize regions of hyperparameter spaces where the net memorizes the training set but fails to generalize completely.

Interestingly, there has been recent work showing that over-parametrization aids not just with generalization but optimization too \citep{du2018gradient, du2018gradientdnn, allenzhu2018convergence, Zou_2019}. \citet{du2018gradientdnn} show that for sufficiently over-parameterized nets, the gram matrix of the gradients induced by ReLU activation remains positive definite throughout training due to parameters staying close to initialization. Moreover, in the infinite width limit the network behaves like its linearized version of the same net around initialization \citep{wide_nn_linear_NIPS2019_9063}. \citet{jacot2018neural} explicitly characterize the solution obtained by SGD in terms of Neural Tangent Kernel which, in the infinite width limit, stays fixed through the training iterations and deterministic at initialization. Finally, \citet{lotteryticket2018frankle,frankle2019lottery} hypothesize that overparametrized neural nets contain much smaller sub-networks, called ``lottery tickets'', which when trained in isolation can match the performance of the original net.

From an optimization standpoint, several initialization schemes have been proposed in order to facilitate neural network training  \citep{pmlr-v9-glorot10a,He2015DelvingDI}. \citet{pmlr-v70-balduzzi17b} identify the \emph{shattered gradient problem} where the correlation between gradients w.r.t to the input in feedforward networks decays exponentially with depth. They further introduce \emph{looks linear} initialization in order to prevent shattering. Recent work explores some intriguing behavior induced by changing just the scaling of the net at initialization. Building on observation made by others \citep{li2018learning,du2018gradient,du2018gradientdnn,Zou_2019,allenzhu2018convergence}, \citet{Chizat2018ANO} formally introduce the notion of \emph{lazy training}, a phenomenon in which an over-parametrized net can converge to zero training loss even as parameters barely change. \citet{Chizat2018ANO} further observe that any model can be pushed to this regime by scaling the initialization by a certain factor, assuming the output is close to zero at initialization. Moreover, \citet{woodworth2020kernel} expand on how scale of initialization acts as a controlling quantity for transitioning between two very different regimes, called the kernel and rich regimes. In the kernel regime, the behavior of the net is equivalent to learning using kernel methods, while in the rich regime, gradient descent shows richer inductive biases which are not captured by RKHS norms. In practice, the transition from rich regime to kernel regime also comes with a drop in generalization performance. \citet{Geiger2019DisentanglingFA} further explore the interplay between hidden layer size and scale of initialization in disentangling both regimes, specifically that the scale of initialization, which separates kernel and rich regime, is a function of the hidden size.

On a somewhat orthogonal direction, from a theoretical perspective, several studies attempt to bound the generalization error of the network based on VC-dimension \citep{Vapnik1971ChervonenkisOT}, sharpness based measures such as PAC-Bayes bounds \citep{McAllesterPacBayes99,dziugaite2017computing,neyshabur2017exploring}, or norms of the weights \citep{bartlett1998,neyshabur2015norm,bartlett2017spectrally,neyshabur2018roleofoverparametrization,Golowich_2019}. Further works explore generalization from an empirical standpoint such as sharpness based measures \citep{keskar2016largebatch}, path norm \citep{neyshabur2015pathsgd} and Fisher-Rao metric \citep{liang2017fisherrao}. A few have also emphasized the role of distance from initialization in capturing generalization behavior \citep{dziugaite2017computing,nagarajan2019generalization,neyshabur2018roleofoverparametrization,long2019generalization}.

\citet{li2018learning} study 2-layer ReLU net and points out that final learned weights are accumulated gradients added to the random initialization and these accumulated gradients have low rank when trained on structured datasets. \citet{wei2019datadependent} obtain tighter bounds by considering data-dependent properties of the network such as norm of the Jacobians of each layer with respect to the previous layers. More recently, \citet{chatterjee2020coherent} hypothesize that similar examples lead to similar gradients, reinforcing each other in making the the overall gradient stronger in these directions and biasing the net to make changes in parameters which benefit multiple examples.


\section{Extreme memorization}
\label{sec:sin_scale}
In this section, we discuss the experimental setup which leads to extreme memorization due to increase in scale of initialization. To reiterate, we refer to the scenario where the net obtains perfect training accuracy but random chance performance on the test set as \textbf{extreme memorization}.

In order to investigate this in the simplest setup possible, we consider a 2-layer feed-forward network trained using stochastic gradient descent (SGD):
\small
\begin{align*}
    z(x) = W_2 \phi(W_1x)
\end{align*}
\normalsize
where $\phi$ is the chosen activation function, $\textbf{x} \in \mathbb{R}^p$, $\textbf{W}_1\in \R^{h\times p}$, $\textbf{W}_2\in \R^{k\times h}$ and $\textbf{z} \in \mathbb{R}^k$ is the output of the net. The aim is to find parameters $[\textbf{W}_1^*, \textbf{W}_2^*]$ which minimizes the empirical loss $\mathcal{L} = \frac{1}{n}\sum_{i=1}^n \ell(z(\textbf{x}_i), \textbf{y}_i)$ given i.i.d  draws of $n$ data points $\{(\textbf{x}_i, \textbf{y}_i)\}$ from some unknown joint distribution over $\textbf{x}\in \R^p$ and $\textbf{y}\in \R^k$. We focus on multi-class classification problems, in which each $\textbf{y}$ is restricted to be one of the standard basis vectors in $\R^k$. We use the notations $\ell_i=\ell(\textbf{z}(\textbf{x}_i), \textbf{y}_i)$ and $\textbf{r}_i = \phi(\textbf{W}_1\textbf{x}_i)$ is a shorthand for the hidden layer representation for input $\textbf{x}_i$. Also, for any $c\in\{1,\dots,k\}$, we use the shorthand $y=c$ to say that $\textbf{y}$ is the $c$th standard basis vector.

In our experiments, we choose a large hidden size so that the net is very over-parameterized and always gets perfect accuracy on the training set. Also, since we are only interested in studying the implicit regularization induced by SGD, we do not use explicit regularizers like weight decay, dropout, etc. More details on the exact setup, datasets used and hyper-parameters are in the appendix.

\subsection{Sin activation}
\begin{figure}[h!]

  \centering
    \begin{subfigure}[b]{0.245\textwidth}
    \includegraphics[width=\textwidth]{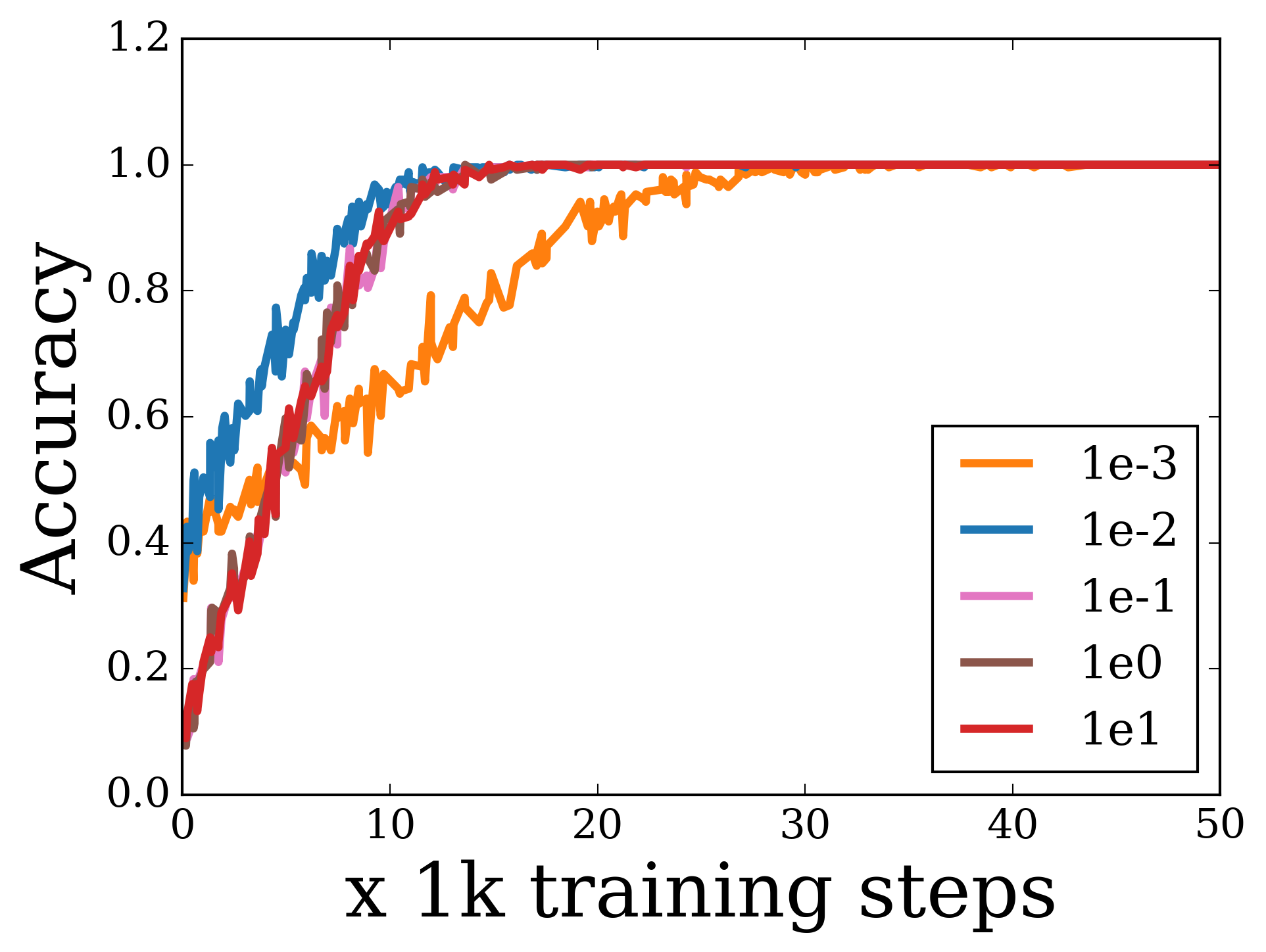}
    \caption{Training accuracy}
  \end{subfigure}
  \begin{subfigure}[b]{0.245\textwidth}
    \includegraphics[width=\textwidth]{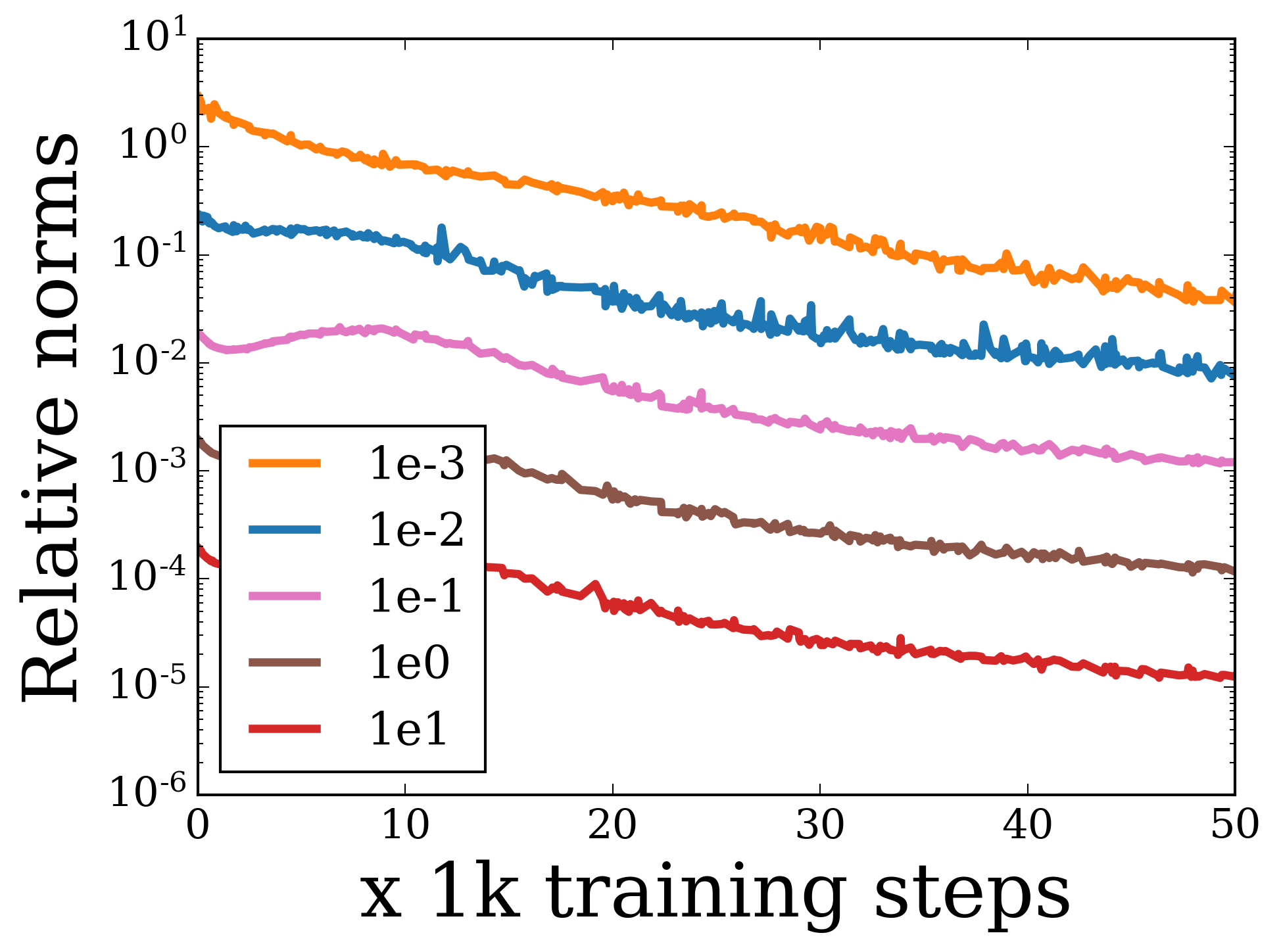}
    \caption{Relative norms}
  \end{subfigure}
     \begin{subfigure}[b]{0.245\textwidth}
    \includegraphics[width=\textwidth]{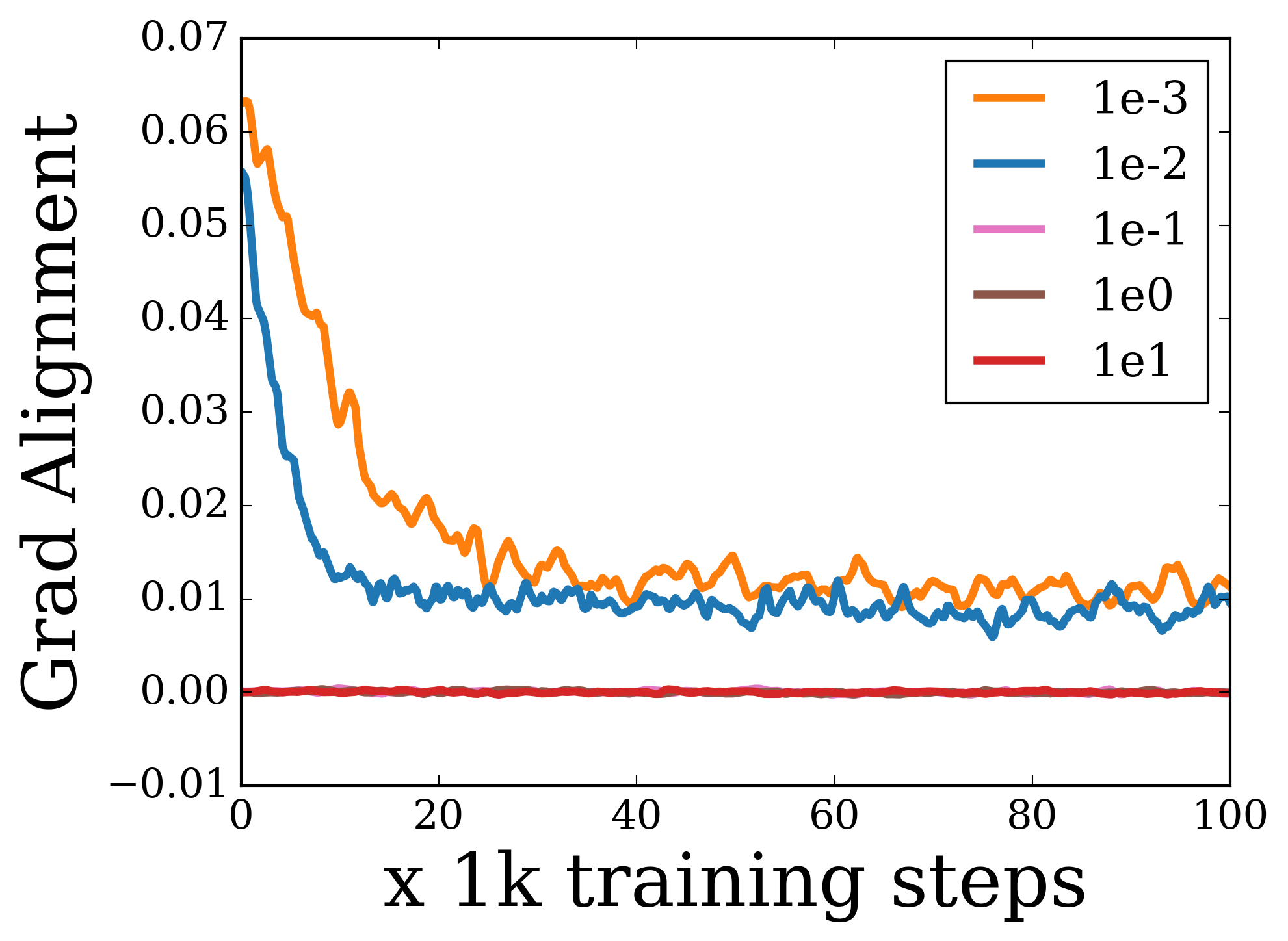}
    \caption{Gradient alignment}
  \end{subfigure}
    \begin{subfigure}[b]{0.245\textwidth}
    \includegraphics[width=\textwidth]{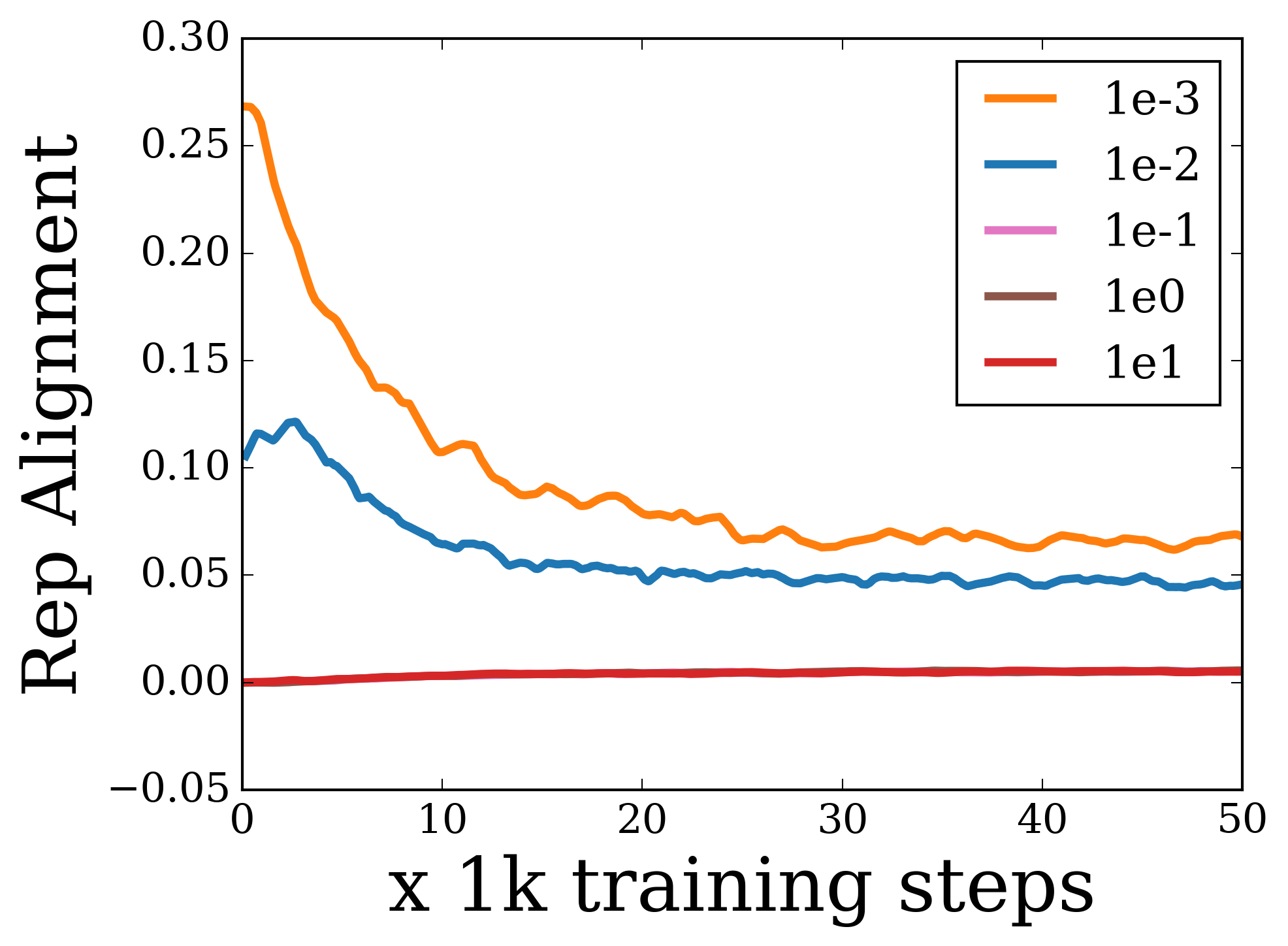}
    \caption{Rep alignment}
  \end{subfigure}
  \caption{\small Results when using $\sin$ activation function in a 2-layer MLP applied on CIFAR-10 dataset \citep{Krizhevsky2009LearningML}. We initialize $W_1$ using random normal distribution with mean zero and vary the standard deviation $\sigma$ as shown in the plots. The initialization scheme for $W_2$ is kept unchanged, using a Glorot uniform initializer \citep{pmlr-v9-glorot10a}. (a) shows the the evolution and rate of attaining perfect training accuracy. (b) plots the norm of the gradients of $W_1$ over norm of $W_1$. As elucidated in \citep{Chizat2018ANO}, increasing the scale initialization leads to gradients being increasingly smaller than the weights and thus weights not being able to move very far from initialization. (c) shows how example gradient alignment can capture differences in generalization ability in case of $\sin$ activation as the scale of initialization is increased. Plot (d) shows that representation alignment is also able to discriminate generalization ability induced at high scale of initialization. We obtain similar results on CIFAR-100 and SVHN datasets as well, which are included in the appendix.}
  \label{fig:sin}
\end{figure}

As shown in Figure~\ref{fig:top}, setting $\phi$ to $\sin$ function results in a degradation of generalization performance to the point of extreme memorization just by increasing the scale of initialization of the hidden layer $W_1$. Intuitively, when using $\sin$ activations, if $W_1$ remains close to its initial value, then a single hidden layer can be approximated by a kernel machine with a specific shift-invariant kernel $K$, where $K$ is determined by the initializing distribution \citep{rahimi2008random}. For example, when the initializing distribution is a Gaussian with standard deviation $\sigma$, $K$ is a Gaussian kernel with width $1/\sigma$. Formally, consider a network architecture of $z(x) = W_2\phi(W_1 x+b)$, where $W_1$ is a matrix whose entries are initialized via a Gaussian distribution with variance $\sigma^2$ and $b\in \R^h$ is a bias vector whose coordinates are initialized uniformly from $[0,2\pi]$. Then \citep{rahimi2008random} showed
\small
\begin{align}\label{eqn:randomfourier}
    \E_{W_1, b}[\langle \phi(W_1x +b),\phi(W_1x'+b)\rangle] \propto \exp\left(-\frac{\sigma^2 \|x-x'\|^2}{2}\right)
\end{align}
\normalsize
Thus, when holding $W_1$ and $b$ fixed, the network approximates a kernel machine with a Gaussian kernel whose width decreases as $W_1$ is scaled up (which corresponds to increasing the variance parameter in its initialization). In this scenario, one expects that the classifier will obtain near-perfect accuracy on the train data, but have no signal elsewhere because all points are nearly orthogonal in the kernel space. We did not specify a bias vector in our architecture, but intuitively one expects similar behavior. In fact, we have the following analogous observation (proved in Appendix \ref{sec:proof}):
\begin{restatable}{Theorem}{thmsin}\label{thm:sin}
Suppose each entry of $W_1$ is initialized via a Gaussian with mean 0 and variance $\sigma^2$. Then for any $x$ and $x'$, we have
\small
\begin{align*}
    \left|\E_{W_1}[\langle \phi(W_1x),\phi(W_1x')\rangle]\right|\le h \exp\left(-\frac{\sigma^2\|x-x'\|^2}{2}\right)
\end{align*}
\end{restatable}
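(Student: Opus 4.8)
The plan is to compute $\E_{W_1}[\langle \phi(W_1 x), \phi(W_1 x')\rangle]$ exactly and only then bound it. First I would expand the inner product row by row. Letting $w_j^\top$ denote the $j$-th row of $W_1$, so that $\phi(W_1 x)_j = \sin(\langle w_j, x\rangle)$ and each $w_j \sim \mathcal{N}(0,\sigma^2 I_p)$ independently, linearity of expectation together with the fact that all $h$ rows are identically distributed collapses the sum into
\[
  \E_{W_1}[\langle \phi(W_1 x), \phi(W_1 x')\rangle] = h\,\E_{w}\big[\sin(\langle w, x\rangle)\,\sin(\langle w, x'\rangle)\big],
\]
where $w \sim \mathcal{N}(0,\sigma^2 I_p)$ is a single Gaussian vector. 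This reduces the entire problem to one scalar expectation.

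Next I would apply the product-to-sum identity $\sin(a)\sin(b) = \tfrac12\big(\cos(a-b) - \cos(a+b)\big)$ with $a = \langle w,x\rangle$ and $b = \langle w, x'\rangle$, so that $a - b = \langle w, x - x'\rangle$ and $a + b = \langle w, x + x'\rangle$. The key tool is then the characteristic function of a Gaussian: for any fixed $v \in \R^p$ one has $\E_w[\cos(\langle w, v\rangle)] = \mathrm{Re}\,\E_w[e^{i\langle w,v\rangle}] = \exp\!\big(-\tfrac12 \sigma^2 \|v\|^2\big)$, the sine part vanishing by symmetry of the Gaussian. Substituting $v = x - x'$ and $v = x + x'$ produces the exact identity
\[
  \E_{W_1}[\langle \phi(W_1 x), \phi(W_1 x')\rangle] = \frac{h}{2}\Big(\exp\big(-\tfrac12\sigma^2\|x-x'\|^2\big) - \exp\big(-\tfrac12\sigma^2\|x+x'\|^2\big)\Big).
\]

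Finally I would pass to absolute values. Writing $a = \exp(-\tfrac12\sigma^2\|x-x'\|^2)$ and $b = \exp(-\tfrac12\sigma^2\|x+x'\|^2)$, both lie in $(0,1]$, and the target bound $\tfrac{h}{2}|a-b| \le h\,a$ amounts to $|a-b|\le 2a$. I expect this last step to be the main obstacle, since it is precisely where the extra cross term $b$ — present because the architecture omits the bias vector used in \eqref{eqn:randomfourier} — must be controlled. When $a \ge b$, equivalently $\langle x,x'\rangle\ge 0$ (which holds for nonnegative features such as normalized pixel values), we get $|a-b| = a-b \le a$ and the claim follows with a factor of two to spare; the opposite sign of $\langle x,x'\rangle$ would require separately arguing that $b$ cannot exceed $a$ by the factor needed to break the inequality. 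I would therefore isolate the comparison between $\|x-x'\|$ and $\|x+x'\|$ as the crux, while the reduction and the Gaussian Fourier computation above constitute the routine part of the argument.
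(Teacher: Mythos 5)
Your reduction to a single row and the characteristic-function computation are correct, and they take a genuinely different route from the paper's proof. The paper never computes the expectation exactly: it decomposes $x' = cx + \Delta$ with $c = \langle x',x\rangle/\|x\|^2$ and $\Delta \perp x$, uses the resulting independence of the Gaussians $A = W_1x$ and $B = W_1\Delta$ to factor $\E[\sin(A)\sin(cA+B)] = \E[\sin(A)\sin(cA)]\,\E[\cos(B)]$, discards the first factor via the coarse bound $|\E[\sin(A)\sin(cA)]| \le 1$, and bounds the second by $\exp(-\sigma^2\|\Delta\|^2/2)$. Your calculation subsumes this: evaluating $\E[\sin(A)\sin(cA)]$ with the same product-to-sum identity and multiplying the factors recovers exactly your closed form $\frac{h}{2}(a-b)$ with $a = \exp(-\tfrac12\sigma^2\|x-x'\|^2)$ and $b = \exp(-\tfrac12\sigma^2\|x+x'\|^2)$, via Pythagoras ($\|x \mp x'\|^2 = (1\mp c)^2\|x\|^2 + \|\Delta\|^2$). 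What exactness buys you is visibility of the cross term $b$, which the paper's bound $|\E[\sin(A)\sin(cA)]| \le 1$ silently throws away.

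The obstruction you flagged at the last step is not a weakness of your approach --- it is genuine, and it also sinks the paper's own argument. The paper's chain of inequalities terminates at $\exp(-\sigma^2\|\Delta\|^2/2)$; since $\|\Delta\| \le \|x-x'\|$ (it is the distance from $x'$ to the line through $x$), this quantity is \emph{at least} $\exp(-\sigma^2\|x-x'\|^2/2)$, so the implicit final substitution needed to reach the stated bound runs in the wrong direction. Indeed the theorem as stated fails when $\langle x, x'\rangle$ is sufficiently negative: take $x' = -x$, so $\phi(W_1x') = -\phi(W_1x)$ by oddness of $\sin$, and your formula (or a direct computation with $\sin^2 A = \tfrac12(1-\cos 2A)$) gives
\begin{align*}
    \E_{W_1}[\langle \phi(W_1x),\phi(W_1x')\rangle] = -\frac{h}{2}\left(1 - e^{-2\sigma^2\|x\|^2}\right),
\end{align*}
whose magnitude approaches $h/2$ as $\sigma\|x\|$ grows, while the claimed bound $h\exp(-\tfrac12\sigma^2\|2x\|^2) = h e^{-2\sigma^2\|x\|^2}$ tends to $0$. (Consistently, here $\Delta = 0$ and the bound the paper actually derives is the vacuous $h$.) So your proposed case split is the best possible resolution: when $\langle x,x'\rangle \ge 0$ you have $b \le a$, and your argument completes the proof with the improved constant $h/2$ --- covering, e.g., inputs with nonnegative coordinates --- whereas in the opposite case no further argument can close the gap, because the inequality is false. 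The honest fix is to add the hypothesis $\langle x,x'\rangle \ge 0$, or to restate the bound with $\min(\|x-x'\|, \|x+x'\|)$ (which your exact formula yields immediately, since $\tfrac12|a-b| \le \tfrac12\max(a,b)$), or with the orthogonal component $\|\Delta\|$ in place of $\|x-x'\|$.
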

\normalsize
This suggests that for large enough $\sigma$, the vectors $\phi(W_1 x)$ will be nearly uncorrelated in expectation at initialization. Further, for any loss function $\ell$ and label $y$, we have that the columns of $\nabla_{W_2}\ell(z(x),y)$ are proportional to $\phi(W_1 x)$, and so these gradients should also display a lack of correlation as $\sigma$ increases. We argue that this lack of correlation leads to memorization behavior. By \emph{memorization}, we mean that our trained model will have near-perfect accuracy on the training set, while having very low or even near-random performance on the testing set, indicating that the model has ``memorized'' the training set without learning anything about the testing set.

To gain some intuition for why we might expect poor correlation among features or gradients to produce memorization, let us take a look at an extreme case where the gradients for all the examples are orthogonal to each other. More concretely, suppose the true data distribution is such that for all independent samples $(x_1,y_1), (x_2,y_2)$ with $(x_1,y_1)\ne (x_2,y_2)$, we have $\langle \nabla \ell_1,\nabla \ell_2\rangle<\epsilon$ for all $W_1,W_2$ for some small $\epsilon$. Then we should expect that taking a gradient step along any given example gradient should have a negligible $O(\epsilon)$ effect on the loss for any other example. As a result, the final trained model may achieve very small loss on the training set, but should learn essentially nothing about the test set - it will be a perfectly memorizing model.

\subsection{Measuring Alignment}
\label{subsec:measure_orth}

Motivated by this orthogonality intuition, we wish to develop an empirical metric that can measure the degree to which training points are well-aligned with each other. We begin with a review of related metrics in the existing literature and suggest improvements in order to better capture generalization.

\begin{figure}[h!]
  \centering
\begin{subfigure}[b]{0.28\textwidth}
    \includegraphics[width=\textwidth]{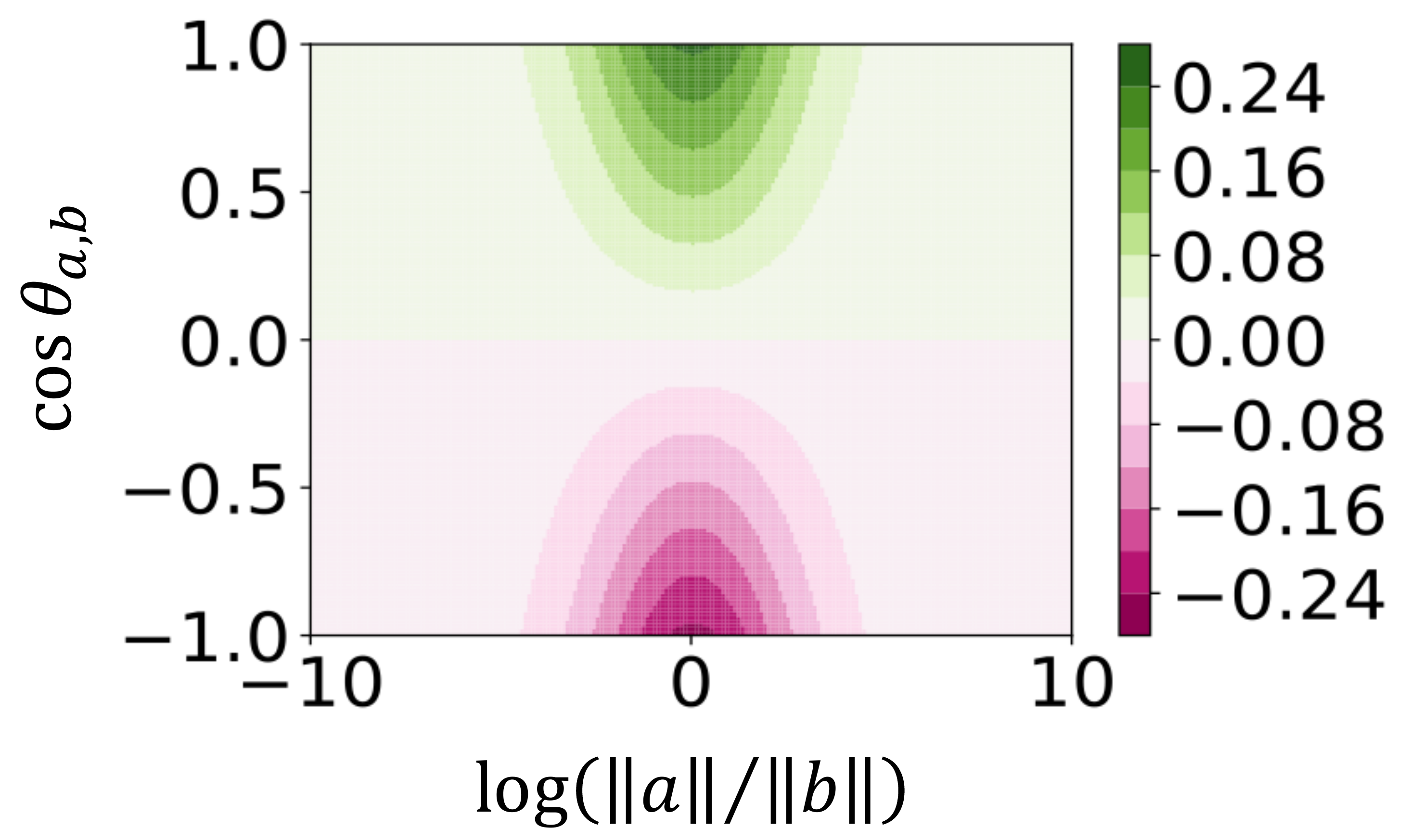}
    \caption{Alignment (Ours)}
  \end{subfigure}
  \begin{subfigure}[b]{0.22\textwidth}
    \includegraphics[width=\textwidth]{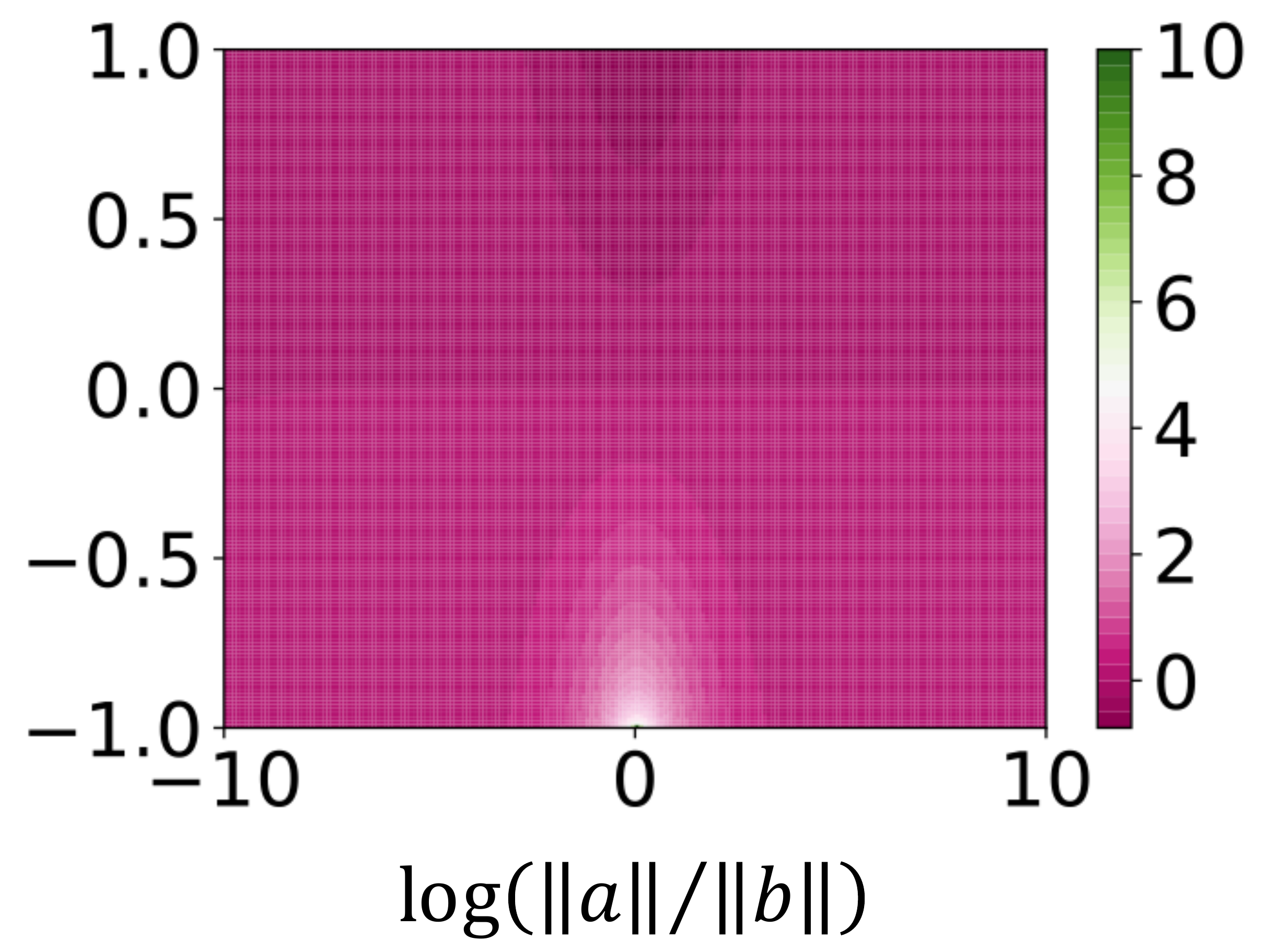}
    \caption{Diversity}
  \end{subfigure}
    \begin{subfigure}[b]{0.235\textwidth}
    \includegraphics[width=\textwidth]{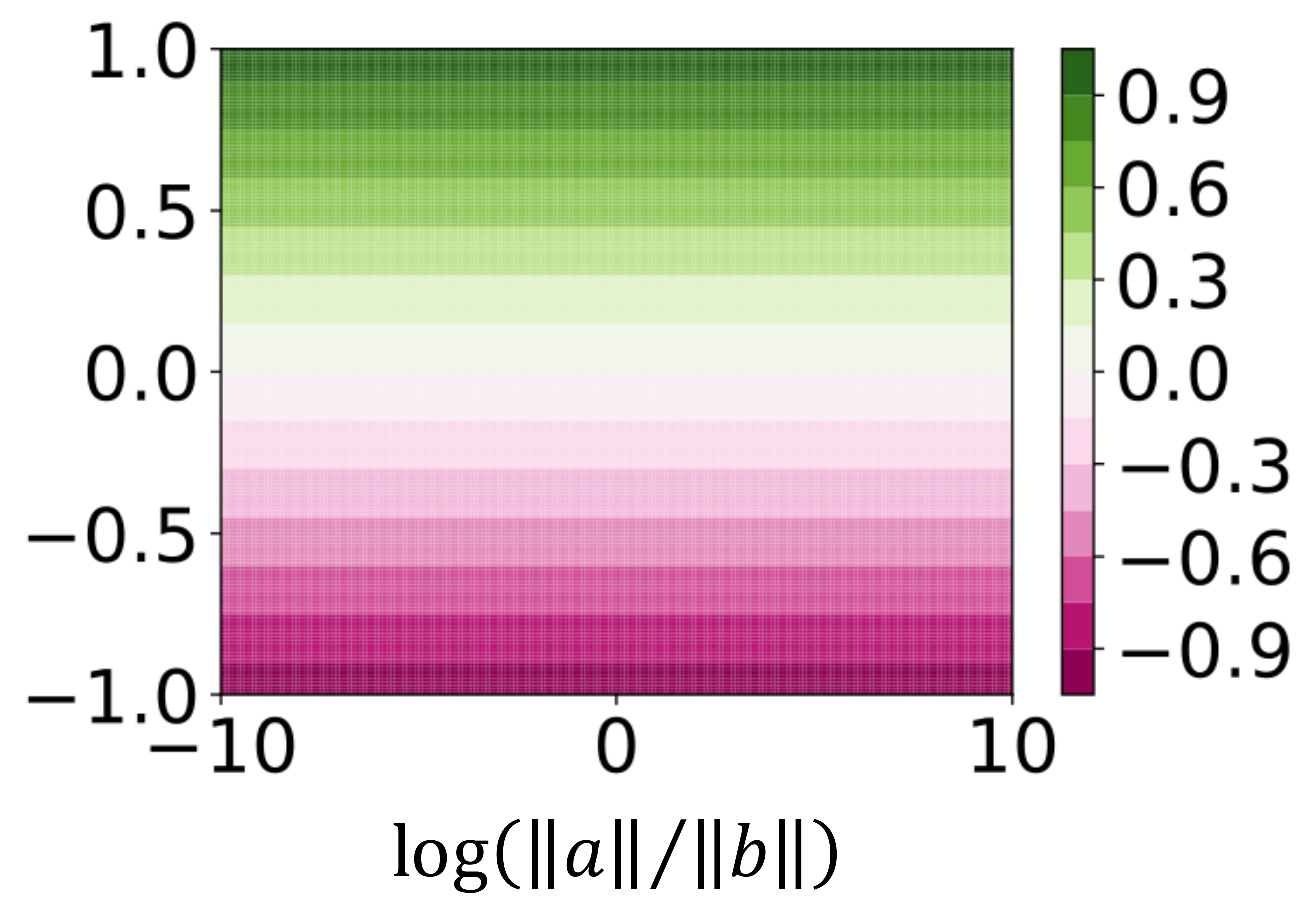}
    \caption{Stiffness}
  \end{subfigure}
    \begin{subfigure}[b]{0.235\textwidth}
    \includegraphics[width=\textwidth]{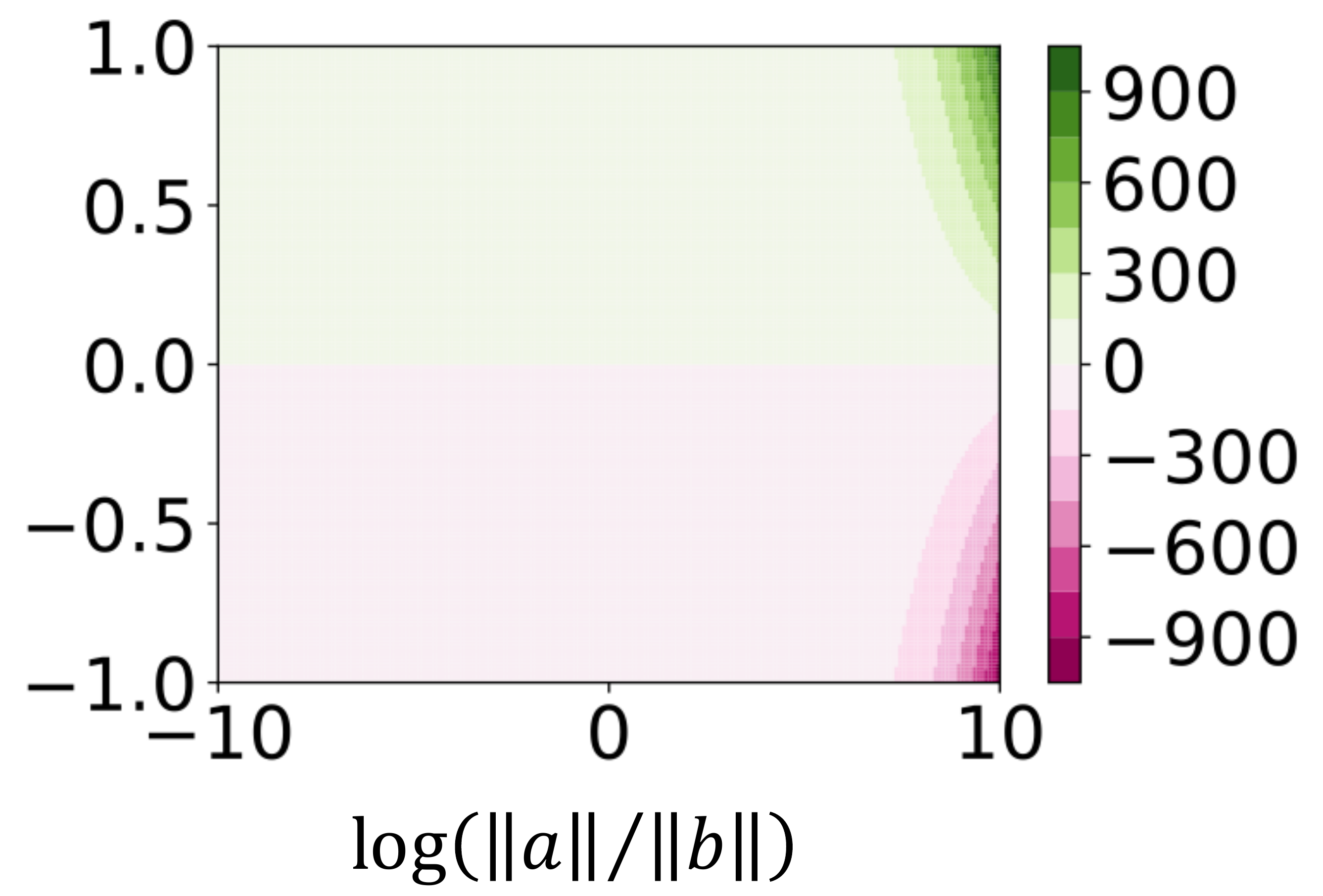}
    \caption{Confusion}
  \end{subfigure}
  \caption{\small Comparing different gradient-based measures for the simple case of having two samples from the same class where $a=\nabla \ell_1$ and $b=\nabla \ell_2$.}
  \label{fig:grad_measures}
\end{figure}

\paragraph{Related statistics} Other relevant gradient-based measures have been suggested for understanding optimization or generalization. One such measure is Gradient diversity \citep{grad-div-yin18a}, that quantifies the extent to which individual gradients are orthogonal to each other and is defined as $\sum_{i=1}^{n} \Vert\nabla\ell_i\Vert_2^2/\Vert\sum_{i=1}^n\nabla\ell_i\Vert_2^2$. Unfortunately, Gradient Diversity did not correlate with generalization in our experiments in Section \ref{sec:sin_scale}. Moreover, as shown in Figure~\ref{fig:grad_measures}, Gradient Diversity is most sensitive when the cosine of the angle between two gradient is highly negative, a scenario which is rare in high dimensional spaces. Furthermore, this notation does not take the class information into account and treats all pairs of samples equally. Cosine Gradient Stiffness~\citep{fort2019stiffness} is another measure to capture the similarity of gradients and can be calculated as $\mathbb{E}_{i \ne j}[\cos(\nabla\ell_i, \nabla\ell_j)]$. \citet{fort2019stiffness} also define a modified version of Cosine Gradient Stiffness that allows this calculation within classes. Although it measures a quantity which is close to what we want, as shown in Figure~\ref{fig:grad_measures}, this measure is invariant to the scale of the gradient. That means that samples with very small gradients would be weighted as much as samples with large gradients, thus discarding valuable information. Finally, we also consider Gradient Confusion~\citep{Sankararaman2019TheIO}, which can be calculated as $\min_{i\ne j} \langle\nabla\ell_i,\nabla\ell_j\rangle$. We note that, as shown in Figure~\ref{fig:grad_measures}, gradient confusion is sensitive to the norm of gradients and is most affected by the ratio of the norms. Also, similar to Gradient Diversity, this measure does not take the class information into account.

With these observations in mind, we formulate our measure of alignment $\Omega$ between gradient vectors and compare it with other measures in Figure~\ref{fig:grad_measures}. Note that we normalize our alignment measure by the \textbf{mean} gradient norm in order to avoid discarding magnitude information from individual gradients:
\small
\begin{align}
    \Omega := \frac{\mathbb{E}_{i \ne j}[\langle\nabla\ell_i,\nabla\ell_j\rangle]}{\mathbb{E}[\Vert\nabla\ell\Vert]^2}\label{eqn:omegadef}
\end{align}
\normalsize
Assuming $n$ vectors, we can further expand since $\mathbb{E}_{i \ne j}[\langle\nabla\ell_i,\nabla\ell_j\rangle] = \frac{\sum_{i\ne j} \langle\nabla\ell_i,\nabla\ell_j\rangle}{n (n-1)}$ and $\mathbb{E}[\Vert\nabla\ell\Vert] =\frac{\sum_{i=1}^{n}\Vert\nabla\ell_i\Vert}{n}$
\small
\begin{align}
        \Omega = \frac{n\sum_{i\ne j} \langle\nabla\ell_i,\nabla\ell_j\rangle}{(n-1)(\sum_{i=1}^{n}\Vert\nabla\ell_i\Vert)^2}
\end{align}
\normalsize

\textbf{Efficient computation of alignment}
Note that $\sum_{i\ne j} \langle\nabla\ell_i,\nabla\ell_j\rangle$ may appear to require $O(n^2)$ time to compute, but in fact it can be computed in $O(n)$ time by reformulating the expression as:
\small
\begin{align}
    \sum_{i\ne j} \langle\nabla\ell_i,\nabla\ell_j\rangle = \left\Vert\sum_{i=1}^{n}\nabla\ell_i\right\Vert^2 - \sum_{i=1}^{n}\Vert\nabla\ell_i\Vert^2
\end{align}
\normalsize
For comparison, Gradient Diversity also can be computed in $O(n)$ time but Cosine Gradient Stiffness and Gradient Confusion appears to require $O(n^2)$ time.

\textbf{Alignment within a class}
In order to compute alignment for a specific class, we propose selecting the examples of a class and compute the alignment for that subset. More specifically, we formulate specific alignment for each class $c=1,\dots,k$, as follows:
\small
\begin{align}
    \Omega_c := \frac{n_c\sum_{i\ne j} \langle\nabla\ell_i,\nabla\ell_j\rangle\mathbb{1}[y_i=y_j=c]}{(n_c-1)(\sum_{i}^n\Vert\nabla\ell_i\Vert\mathbb{1}[y_i=c])^2}
\end{align}
\normalsize
where $n_c$ is the number of training examples with label $y=c$ and $\mathbb{1}[p]$ is the indicator of the proposition $p$ - it is one if $p$ is true and zero otherwise. We further take the mean of $\Omega_c$ over all classes for an overall view of how in-class alignment behaves.
\small
\begin{align}
    \Omega_{in-class} := \frac{1}{k}\sum_{c=1}^k\Omega_c
\end{align}
\normalsize
As shown in Figure \ref{fig:top}, $\Omega_{in-class}$ correlates well with generalization ability of the net when scale of initialization is increased. \textbf{All of our gradient alignment plots report the average in-class alignment $\Omega_{in-class}$}.

\label{sec:activations_scale}
\begin{figure}[h!]
  \centering
    \begin{subfigure}[b]{0.32\textwidth}
    \includegraphics[width=\textwidth]{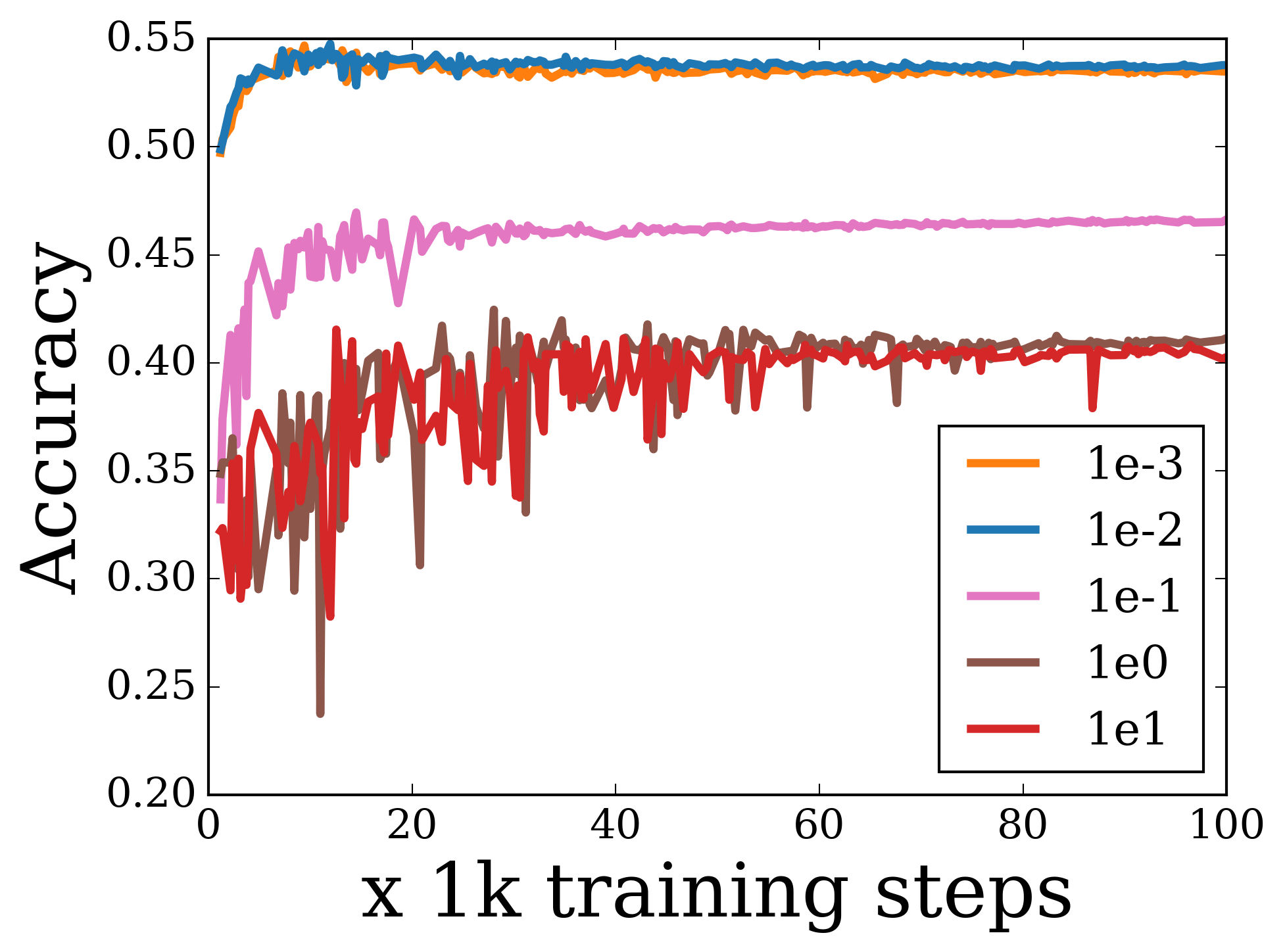}
    \caption{Test accuracy}
  \end{subfigure}
  \begin{subfigure}[b]{0.32\textwidth}
    \includegraphics[width=\textwidth]{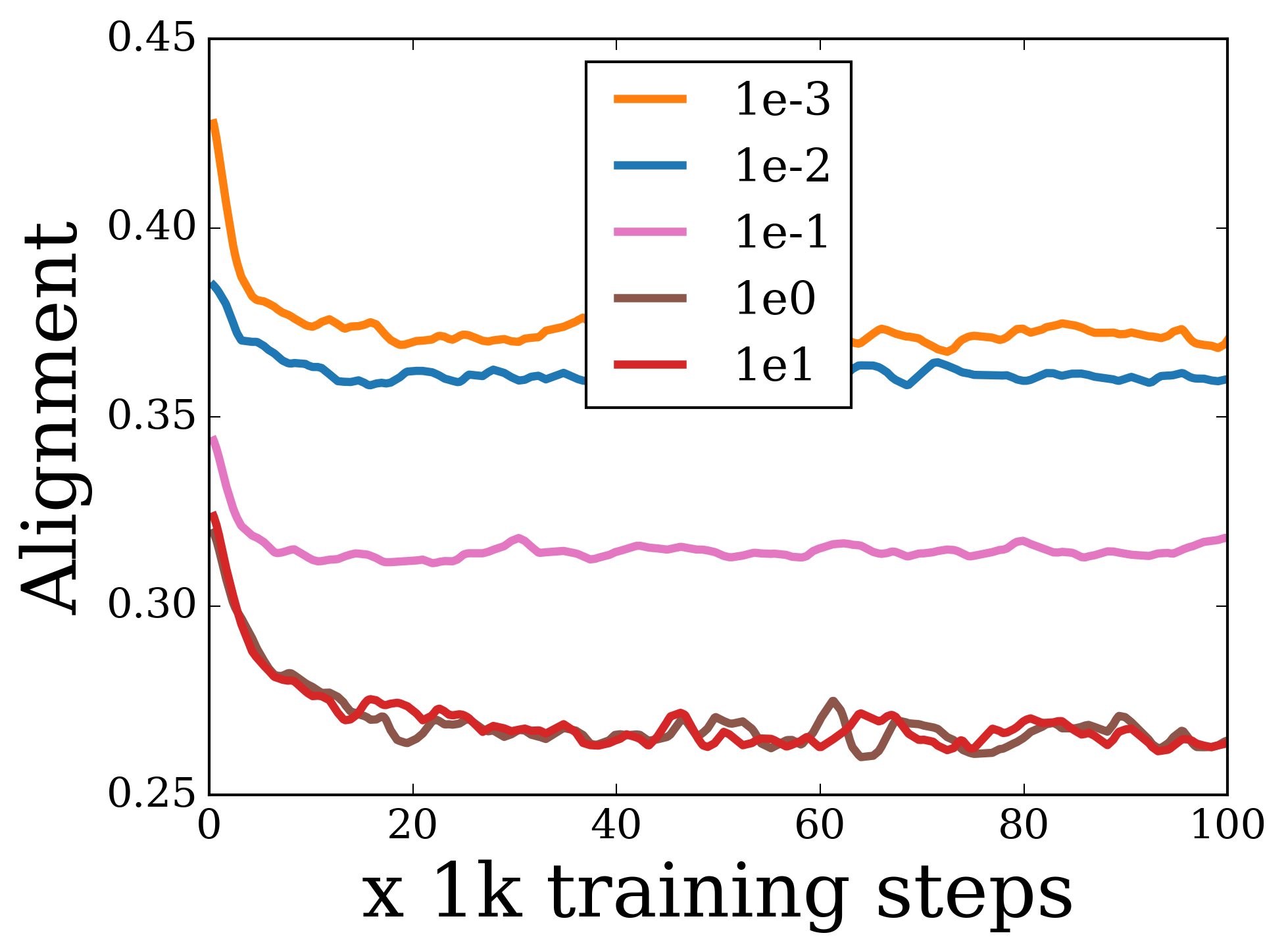}
    \caption{Representation alignment}
  \end{subfigure}
  \begin{subfigure}[b]{0.32\textwidth}
    \includegraphics[width=\textwidth]{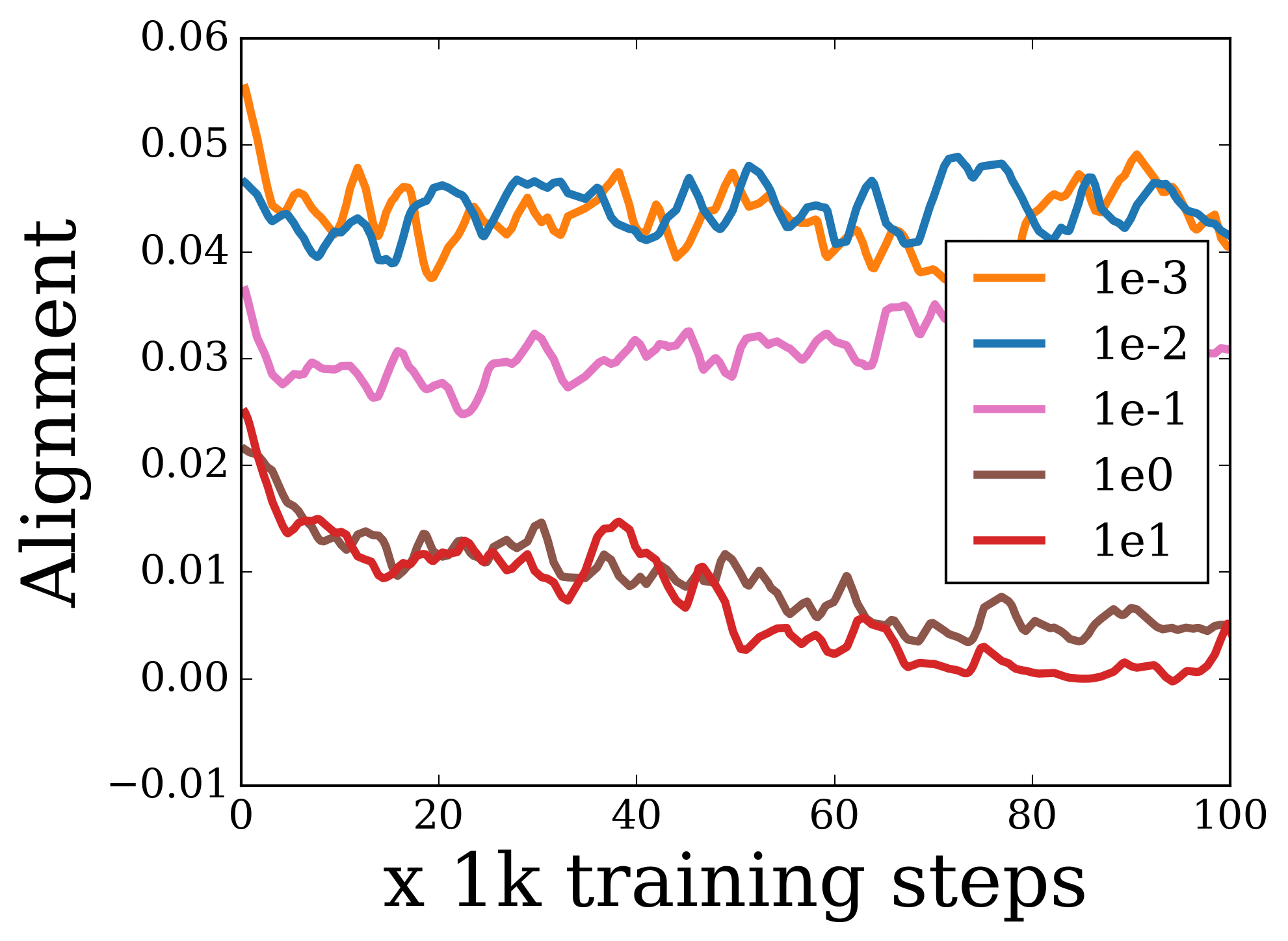}
    \caption{Gradient alignment}
  \end{subfigure}
  \caption{\small Results when using ReLU activation in a 2-layer MLP with Softmax cross-entropy loss function when trained on CIFAR-10 dataset. Similar to Figure \ref{fig:sin}, $W_1$ is initialized with random normal distribution with mean zero and varying standard deviation scale $\sigma$ as shown in the plots. (a) shows how the test accuracy drops and saturates as $\sigma$ is increased. (c) shows how gradients start to show misalignment as the scale is increased. (b) shows a similar misalignment trend for hidden layer representations. Note that, in contrast to the extreme memorization phenomenon we observed for $\sin$ activation, here we observe a more limited decrease in both generalization performance and alignment. Similar results on CIFAR-100, SVHN and additional plots for CIFAR-10 with all the loss functions discussed in Section \ref{sec:activations_scale} can be found in the appendix.}
  \label{fig:relu_softmax}
\end{figure}

\paragraph{Representation Alignment} Since gradients are the sole contributor to changes in the the weights of the net, they play a crucial part in capturing generalization performance. However, calculating the gradient for every example in the batch can incur a significant compute and memory overhead. Fortunately, the gradients for $W_2$ are a functions of the intermediate representations $r_i = \phi(W_1x_i)$. Considering example representations instead of example gradients has a practical advantage that representations can be obtained for free with the forward pass. Also, representation alignment, defined below as $\Omega^r$, at any training step, accounts for the cumulative changes made by the gradients since the beginning of the training whereas gradient alignment only accounts for the current step. We show a comparison with gradient alignment in Figure \ref{fig:sin}. For completeness, we provide plots for both gradient and representation alignment for all the experiments where its useful to do so. We formulate representation alignment in the same way we formulated gradient alignment below. 
\small
\begin{align*}
    \Omega_{in-class}^r := \frac{1}{k}\sum_{c=1}^k\Omega_c^r && \Omega_c^r := \frac{n_c\sum_{i\ne j} \langle r_i,r_j\rangle\mathbb{1}[y_i=y_j=c]}{(n_c-1)(\sum_{i}^n\Vert r_i\Vert\mathbb{1}[y_i=c])^2}
\end{align*}
\normalsize

\vspace{-0.1in}
\section{Why should the scaling affect homogeneous activations ?}
For $\sin$ activations, extreme memorization phenomenon may be explainable through the lens of random Fourier features and kernel machines, which suggests that large initialization leads to very poorly aligned examples. In this Section, we investigate what happens when we use more typical activations such as ReLU. We find that even for ReLU, increasing the scale of the initialization leads to a drop in generalization performance, and a similar downward movement in alignment as the initialization scale increases (see Figure \ref{fig:relu_softmax}). This might feel counter-intuitive as the ReLU activation is positive-homogeneous, which means that scale factors from the input can be pulled out of the function altogether and it only changes the scale of the output. However, this does not take into account the effect of the loss function $\ell$, which is typically \emph{not} homogeneous. We study 3 commonly used loss functions, namely softmax cross-entropy, multi-class hinge loss and squared loss, and show their effect on gradients when weights are close to their initialization. The result we present for ReLU holds for linear activation too. Even though with linear activations we don't expect perfect training accuracy, we do see the same trend in alignment measures and the drops in generalization performance that goes with it. Due to space constraints, we refer the reader to appendix Section \ref{app:sec:linear} for the plots.

As shown in Figure \ref{fig:sin}, increasing the scale of initialization also leads to the scale of the gradients being much smaller than the scale of the parameters at initialization \citep{Chizat2018ANO, woodworth2020kernel}. Thus if it was high enough in the beginning, SGD should not be able to \emph{correct} the scale of the weights during the course of the training.

\textbf{Softmax cross entropy}
Typically, the softmax layer consists of a weight vector $s_i$ for every class, which is used to compute the logits $z_i$. These logits then are used to compute the probability $p_i$ for each class using the softmax function $g:\R^k\rightarrow \R^k$:
\small
\begin{align}
    p_i = g_i(z) = \frac{e^{z_i / T}}{\sum_{j=1}^k e^{z_j / T}} \text{ for } i = 1, \dotsc , k \text{ and } z=(z_1,\dotsc,z_k) \in\R^k
\end{align}
\normalsize

Assuming $T$ is 1, which is typically the case, the derivative of the loss with respect to $z_i$ is $\frac{d}{dz_i}\ell(g(z), y)= p_i-y_i$ where $\ell$ is the negative log-likelihood and $g(z) = (p_1,\dots,p_n)$ is the Softmax function. Let us consider the limiting behavior of this gradient when we increase the scale of the network, which causes the $z$ values to become arbitrarily high. In this case, all the $p_i$ except the one corresponding to the largest $z$ value become zero, so that the gradient is $0$ if the prediction is correct, and otherwise is $-1$ in the coordinate of the correct class and $1$ in the coordinate of the predicted class. Contrasting this with the case where the scale of the network is arbitrarily close to 0, the gradient in the coordinate of the correct class will be $1/k - 1.0$ and $1/k$ in the incorrect class coordinates, so that all the gradients are the same and the alignment is $1$, which is the maximum possible alignment. Therefore the gradients with respect to the logits will on average be more orthogonal in the former case.
Since the gradients for parameters will be multiplied by gradient with respect to the logits due to the chain rule, they will be more orthogonal as well. We corroborate this intuition with empirical evidence as shown in Figure \ref{fig:relu_softmax}.

In practice, since the initialization scheme is chosen carefully, weight scaling is less of a concern in the beginning, but it can become an issue during the course of the training if the magnitude of the weights starts to increase. In either scenario, one simple strategy to counteract the effect of scaling of the net is to increase the temperature term $T$ with it such that magnitude of the input to the Softmax can stay the same and consequently there will be no relative change to alignment in the gradients coming from the loss function. Moreover, this observation also brings some clarity into why tuning hyper-parameters that affects the scale of the network is sometimes helpful in practice, either by explicitly tuning the temperature term or appling weight decay which favors parameters of low norm and implicitly controls the scale of the network throughout the training run.

The arguments made for Softmax can also be adapted to Sigmoid for binary classification. Moreover, since Sigmoid is also used occasionally as an activation function, it is valuable to see how it behaves with changes in scale of initialization in that capacity. We do in fact observe similar degradation in generalization performance, although in this case, there is an extra complication that increasing the scale of the input to Sigmoid also affects the training accuracy since gradients for the hidden layer starts to saturate beyond a certain scale. More details on this can be found in the appendix.

\textbf{Hinge loss} is defined by:
\small
\begin{align}
    \ell(z, y) = \sum_{i \ne y} \max(0, \Delta + {z}_i - {z}_y)
\end{align}
\normalsize

where $\Delta$ is the target margin. In practice $\Delta$ is typically set to $1.0$. However, if the network outputs are scaled by a factor of $\alpha$, this will have the same effect as scaling the margin to be $\frac{\Delta}{\alpha}$ and then scaling the loss by $\alpha$: $\sum_{i \ne y} \max(0, \Delta + \alpha {z}_i - \alpha {z}_y) = \alpha \sum_{i \ne y} \max(0, \Delta/\alpha + {z}_i - {z}_y)$. With this in mind, let us calculate the gradient:

\small
\begin{align}
    \frac{d\ell}{dz_i} = \begin{cases} \mathbb{1} (\Delta + z_i - z_y > 0) & i \ne y\\
- \sum_{i \ne y} \mathbb{1} (\Delta + z_i - z_y > 0) & i = y \end{cases}
\end{align}
\normalsize

It is instructive to take a look at what happens when the effective margin is arbitrarily close to zero. At initialization, we can treat each $\mathbb{1}(z_i-z_y>0)$ as independently 0 or 1 uniformly at random, so we can expect half of the gradient coordinates for incorrect classes to be $1$. On the other extreme, if the effective margin becomes large $\mathbb{1}(\Delta+z_i-z_y>0)$ will always be 1, and the gradients for all incorrect classes will be $1$. Again, the latter case will lead to the maximum alignment value of 1, so that the gradients more aligned across examples.

In this case, misalignment can be fixed by scaling the margin $\Delta$ with the scale of the network. Intuitively, we want to change the loss function such that the scale factor can be pulled out of the loss entirely so that scaling of the loss by a constant doesn't change the minimizer.

\begin{figure}[h]
  \centering
    \begin{subfigure}[b]{0.245\textwidth}
    \includegraphics[width=\textwidth]{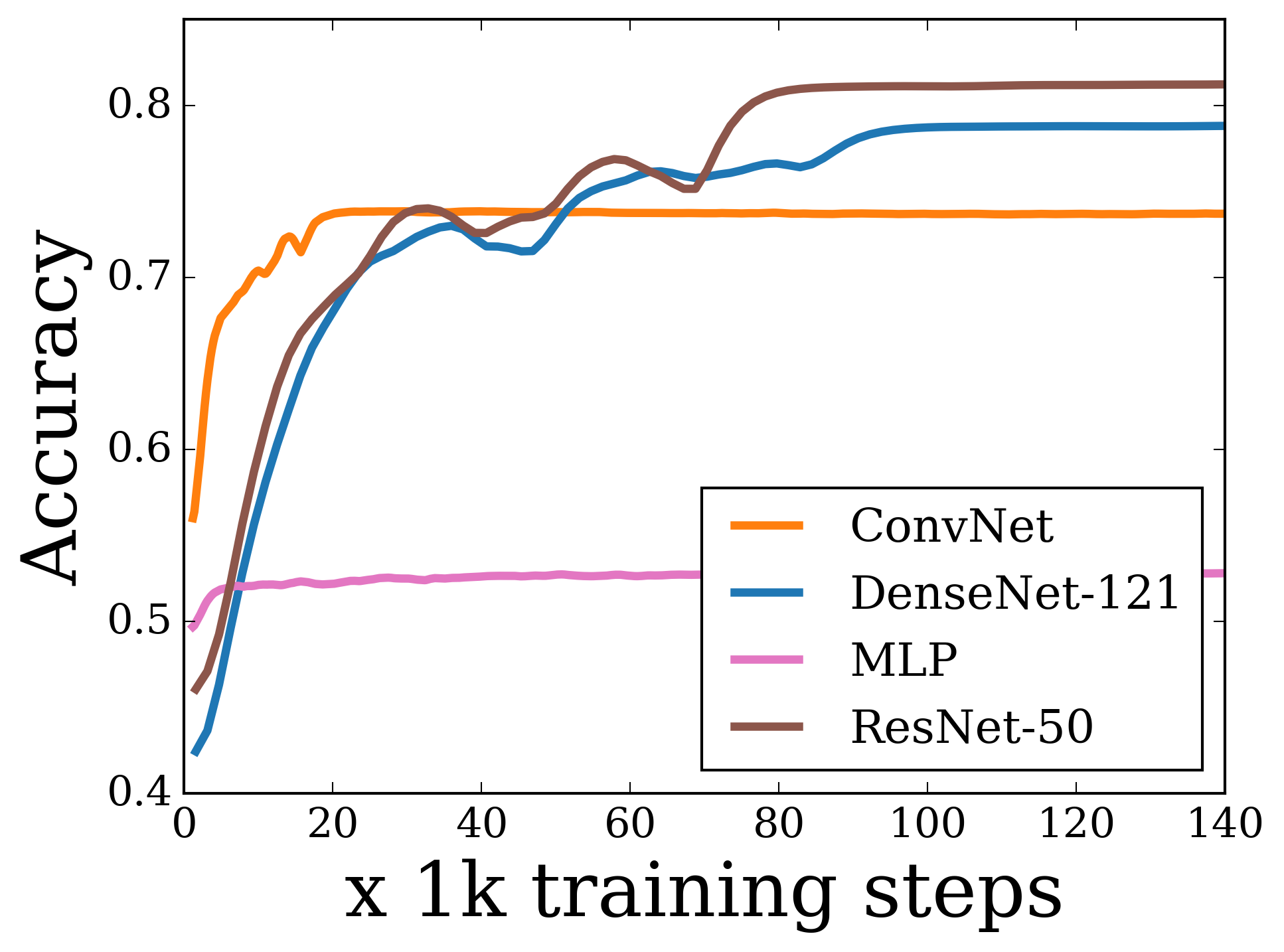}
    \caption{Test accuracy}
  \end{subfigure}
    \begin{subfigure}[b]{0.245\textwidth}
    \includegraphics[width=\textwidth]{imgs/mlp_vs_conv/alignment_new.png}
    \caption{Alignment}
  \end{subfigure}
    \begin{subfigure}[b]{0.245\textwidth}
    \includegraphics[width=\textwidth]{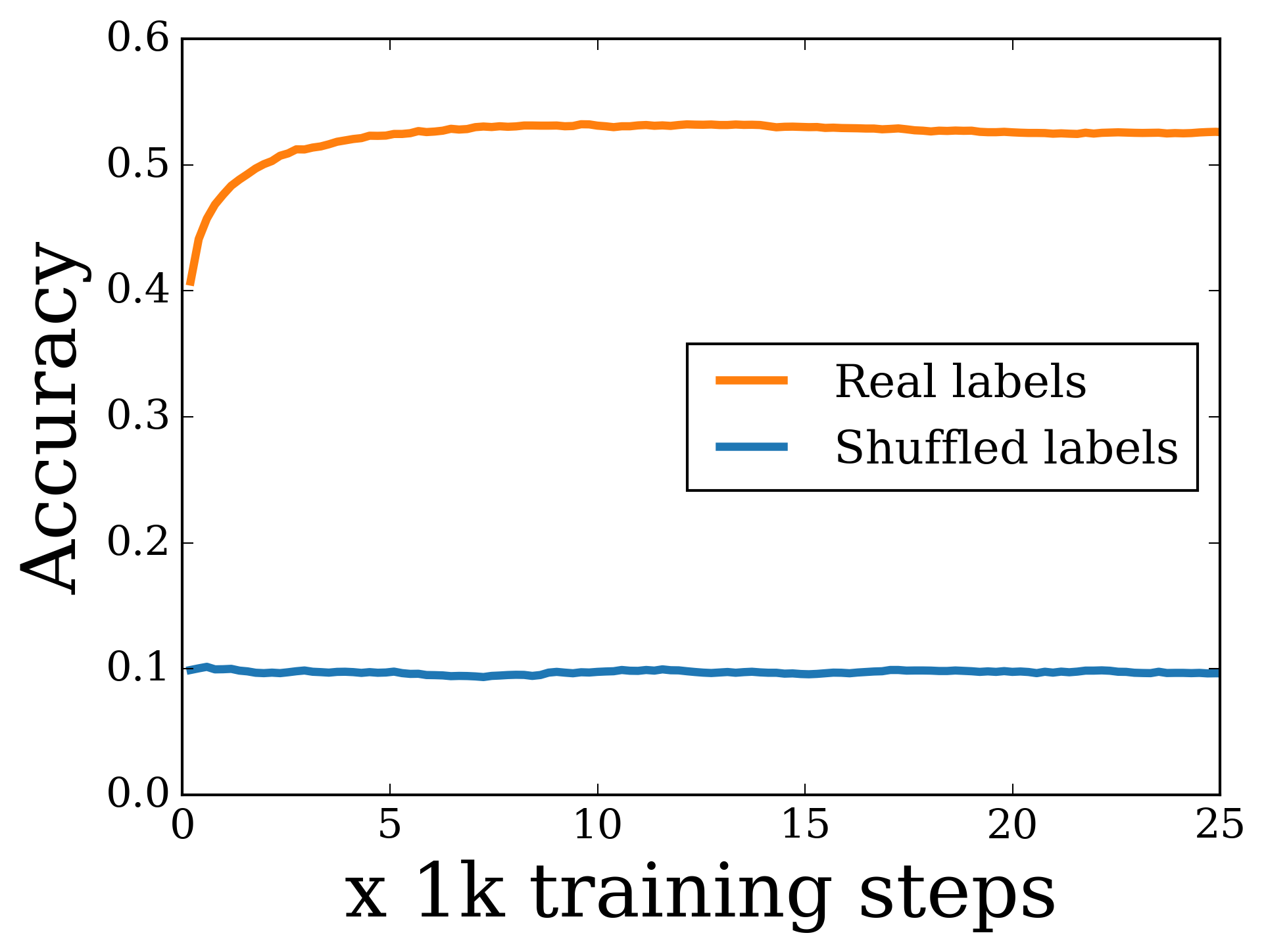}
    \caption{Test accuracy}
  \end{subfigure}
  \begin{subfigure}[b]{0.245\textwidth}
    \includegraphics[width=\textwidth]{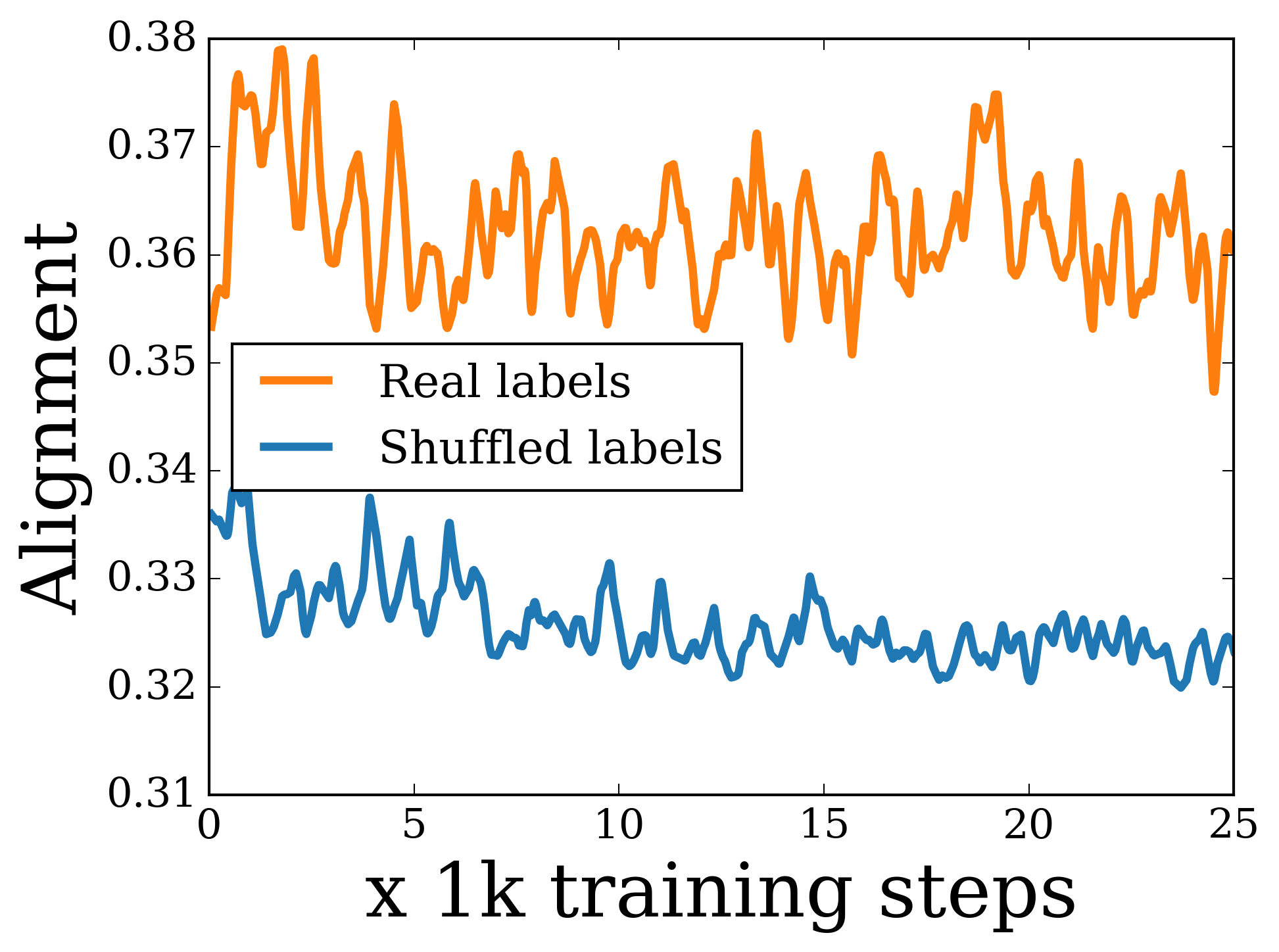}
    \caption{Alignment}
  \end{subfigure}
  \caption{\small Plots (a) and (b) shows how representation alignment increases with generalization performance as the architecture is improved from 2-layer MLP with ReLU activation to a ConvNet architecture (exact details in the appendix) on CIFAR-10 dataset. We see further increase when even bigger and widely used architectures like ResNet-50 and DenseNet-121 are employed on the same task. Note that we keep all the hyperparameters same across architectures in this experiment. Plots (c) and (d) elucidates the drop in representation alignment when the labels are shuffled in the case of 2-layer MLP.}
  \label{fig:mlp_vs_convnet}
\end{figure}

\textbf{Squared loss} 
is defined as $\ell(z, y) = \frac{1}{2}(z - y)^2$ and $\frac{d\ell}{dz} = z - y$, where $y$ is a one hot vector with $1$ in the coordinate of the correct class. In the extreme case where scale of the net is close to $0$, $z$ will also be close to zero so the $y$ term will dominate in gradients for all the examples. On the other hand, when the scale is high, the $z$ term dominates. Again, since $y$ is a constant in our training and $z$ will essentially be a random vector at initialization, we can expect the gradients across examples to be more aligned when the scale of the network is lower. Similar to the argument presented for hinge loss, the effect of scaling on generalization performance in this case can be fixed by scaling the one-hot vector $y$ appropriately with it so that the scale factor can be pulled out of the loss function.
\vspace{-0.1in}
\section{Is alignment relevant more broadly?}
\label{sec:alignment_broadly}
In this Section, we explore whether the alignment metric is useful in capturing generalization performance more generally. More specifically, is alignment relevant when we make changes to architecture or data distribution rather than the initialization scheme? We provide empirical evidence which suggests an optimistic answer.

Introduction of new architecture changes has been a very successful recipe in advancing performance of deep learning models. In the task of image recognition, addition of convolutional layers and pooling layers \citep{Lecun98gradient-basedlearning} and, more recently, residual layers \citep{He_2015,He_2016} have caused significant jumps in generalization performance. Moreover, several theoretical studies show how convolution and pooling operations can significantly affect the implicit bias of SGD, in favor of better generalization in the image domain \citep{cohen2016inductive,gunasekar2018implicit}. In Figure \ref{fig:mlp_vs_convnet}, we investigate the architectural change of extending our standard 2-layer MLP with preceding convolutional layers. Unsurprisingly, we observe substantial improvement in generalization performance. Moreover, the plots show that the addition of convolutional layers leads the last layer representations to be significantly more aligned, suggesting that these architecture changes cause the net to discard irrelevant variations in the input more effectively across examples and ultimately leads to better generalization. Finally, we experiment with popular large scale image recognition models like ResNet-50 and DenseNet-121 \citep{DenseNetHuang_2017} and also observe a similar trend. 

Another way to impact generalization performance is to shuffle the labels in the training set \citep{DBLP:journals/corr/ZhangBHRV16, closerlook2017}. If we completely shuffle the labels in the dataset, we don't expect the model to generalize on the test set at all, i.e., random chance performance. Fig \ref{fig:mlp_vs_convnet} shows that shuffling the labels also leads to a drop in representation alignment.

\section{Conclusion and future work}
In this work, investigated how increasing the scale of initialization can degrade the generalization ability of neural nets. We observed an extreme case of this phenomenon in the case of $\sin$ activation, making it particularly interesting given a recent rise in the use of $\sin$ activation in practical setting \citep{sitzmann2019siren, tancik2020fourierfeat}, since our work shows how sensitive the generalization performance can be to scale of initialization in those case. 
This phenomena is also quite conspicuous even with more popular activations like ReLU and Sigmoid. In the case of ReLU, we argue that the drop in generalization performance can be attributed to the loss function since the rest of the net is unaffected by scaling due to homogeneity. 
We complement these observations by defining an alignment measure that correlates empirically well with generalization in a variety of settings, including the inductive bias introduced by architectural changes like convolution layers, indicating its broader importance.

Our formulation of alignment measure suggests some intriguing avenues for future research. For example, as shown in Figure \ref{fig:top}, even though our experiments suggest that low scale of initialization leads to increased representational alignment, there seems to be a sweet spot below which its affect on generalization ability no longer holds true. Exploring generalization behavior in this case of ultra-low scale of initialization is also an interesting direction of future research.

\section{Acknowledgements}
We are grateful to Eugene Ie, Walid Krichene and Jascha Sohl-dickstein for reading earlier drafts of this paper and providing valuable feedback.

\bibliographystyle{iclr2021_conference}
\bibliography{citations}

\begin{thebibliography}{55}
\providecommand{\natexlab}[1]{#1}
\providecommand{\url}[1]{\texttt{#1}}
\expandafter\ifx\csname urlstyle\endcsname\relax
  \providecommand{\doi}[1]{doi: #1}\else
  \providecommand{\doi}{doi: \begingroup \urlstyle{rm}\Url}\fi

\bibitem[Allen-Zhu et~al.(2018)Allen-Zhu, Li, and
  Song]{allenzhu2018convergence}
Zeyuan Allen-Zhu, Yuanzhi Li, and Zhao Song.
\newblock A convergence theory for deep learning via over-parameterization,
  2018.

\bibitem[Arpit et~al.(2017)Arpit, Jastrzębski, Ballas, Krueger, Bengio,
  S.~Kanwal, Maharaj, Fischer, Courville, Bengio, and
  Lacoste-Julien]{closerlook2017}
Devansh Arpit, Stanisław Jastrzębski, Nicolas Ballas, David Krueger, Emmanuel
  Bengio, Maxinder S.~Kanwal, Tegan Maharaj, Asja Fischer, Aaron Courville,
  Y~Bengio, and Simon Lacoste-Julien.
\newblock A closer look at memorization in deep networks.
\newblock 06 2017.

\bibitem[Balduzzi et~al.(2017)Balduzzi, Frean, Leary, Lewis, Ma, and
  McWilliams]{pmlr-v70-balduzzi17b}
David Balduzzi, Marcus Frean, Lennox Leary, J.~P. Lewis, Kurt Wan-Duo Ma, and
  Brian McWilliams.
\newblock The shattered gradients problem: If resnets are the answer, then what
  is the question?
\newblock In Doina Precup and Yee~Whye Teh (eds.), \emph{Proceedings of the
  34th International Conference on Machine Learning}, volume~70 of
  \emph{Proceedings of Machine Learning Research}, pp.\  342--350,
  International Convention Centre, Sydney, Australia, 06--11 Aug 2017. PMLR.
\newblock URL \url{http://proceedings.mlr.press/v70/balduzzi17b.html}.

\bibitem[{Bartlett}(1998)]{bartlett1998}
P.~L. {Bartlett}.
\newblock The sample complexity of pattern classification with neural networks:
  the size of the weights is more important than the size of the network.
\newblock \emph{IEEE Transactions on Information Theory}, 44\penalty0
  (2):\penalty0 525--536, 1998.

\bibitem[Bartlett et~al.(2017)Bartlett, Foster, and
  Telgarsky]{bartlett2017spectrally}
Peter~L Bartlett, Dylan~J Foster, and Matus~J Telgarsky.
\newblock Spectrally-normalized margin bounds for neural networks.
\newblock In \emph{Advances in Neural Information Processing Systems}, pp.\
  6240--6249, 2017.

\bibitem[Belkin et~al.(2018{\natexlab{a}})Belkin, Hsu, Ma, and
  Mandal]{belkin2018reconciling}
Mikhail Belkin, Daniel Hsu, Siyuan Ma, and Soumik Mandal.
\newblock Reconciling modern machine learning practice and the bias-variance
  trade-off, 2018{\natexlab{a}}.

\bibitem[Belkin et~al.(2018{\natexlab{b}})Belkin, Ma, and
  Mandal]{kernal2018learning}
Mikhail Belkin, Siyuan Ma, and Soumik Mandal.
\newblock To understand deep learning we need to understand kernel learning.
\newblock 02 2018{\natexlab{b}}.

\bibitem[Caruana et~al.(2000)Caruana, Lawrence, and Giles]{caruana2000}
Rich Caruana, Steve Lawrence, and Lee Giles.
\newblock Overfitting in neural nets: Backpropagation, conjugate gradient, and
  early stopping.
\newblock In \emph{Proceedings of the 13th International Conference on Neural
  Information Processing Systems}, NIPS’00, pp.\  381–387, Cambridge, MA,
  USA, 2000. MIT Press.

\bibitem[Chatterjee(2020)]{chatterjee2020coherent}
Satrajit Chatterjee.
\newblock Coherent gradients: An approach to understanding generalization in
  gradient descent-based optimization, 2020.

\bibitem[Chizat \& Bach(2018)Chizat and Bach]{Chizat2018ANO}
L{\'e}na{\"i}c Chizat and Francis Bach.
\newblock A note on lazy training in supervised differentiable programming.
\newblock \emph{ArXiv}, abs/1812.07956, 2018.

\bibitem[Cohen \& Shashua(2016)Cohen and Shashua]{cohen2016inductive}
Nadav Cohen and Amnon Shashua.
\newblock Inductive bias of deep convolutional networks through pooling
  geometry, 2016.

\bibitem[Du et~al.(2018)Du, Lee, Li, Wang, and Zhai]{du2018gradientdnn}
Simon~S. Du, Jason~D. Lee, Haochuan Li, Liwei Wang, and Xiyu Zhai.
\newblock Gradient descent finds global minima of deep neural networks, 2018.

\bibitem[Du et~al.(2019)Du, Zhai, Poczos, and Singh]{du2018gradient}
Simon~S. Du, Xiyu Zhai, Barnabas Poczos, and Aarti Singh.
\newblock Gradient descent provably optimizes over-parameterized neural
  networks.
\newblock In \emph{International Conference on Learning Representations}, 2019.
\newblock URL \url{https://openreview.net/forum?id=S1eK3i09YQ}.

\bibitem[Dziugaite \& Roy(2017)Dziugaite and Roy]{dziugaite2017computing}
Gintare~Karolina Dziugaite and Daniel~M Roy.
\newblock Computing nonvacuous generalization bounds for deep (stochastic)
  neural networks with many more parameters than training data.
\newblock \emph{arXiv preprint arXiv:1703.11008}, 2017.

\bibitem[Fort et~al.(2019)Fort, Nowak, Jastrzebski, and
  Narayanan]{fort2019stiffness}
Stanislav Fort, Paweł~Krzysztof Nowak, Stanislaw Jastrzebski, and Srini
  Narayanan.
\newblock Stiffness: A new perspective on generalization in neural networks,
  2019.

\bibitem[Frankle \& Carbin(2018)Frankle and Carbin]{lotteryticket2018frankle}
Jonathan Frankle and Michael Carbin.
\newblock The lottery ticket hypothesis: Training pruned neural networks.
\newblock \emph{CoRR}, abs/1803.03635, 2018.
\newblock URL \url{http://arxiv.org/abs/1803.03635}.

\bibitem[Frankle et~al.(2019)Frankle, Dziugaite, Roy, and
  Carbin]{frankle2019lottery}
Jonathan Frankle, Gintare~Karolina Dziugaite, Daniel~M Roy, and Michael Carbin.
\newblock The lottery ticket hypothesis at scale.
\newblock \emph{arXiv preprint arXiv:1903.01611}, 8, 2019.

\bibitem[Geiger et~al.(2019)Geiger, Spigler, Jacot, and
  Wyart]{Geiger2019DisentanglingFA}
Mario Geiger, Stefano Spigler, Arthur Jacot, and Matthieu Wyart.
\newblock Disentangling feature and lazy learning in deep neural networks: an
  empirical study.
\newblock \emph{ArXiv}, abs/1906.08034, 2019.

\bibitem[Glorot \& Bengio(2010)Glorot and Bengio]{pmlr-v9-glorot10a}
Xavier Glorot and Yoshua Bengio.
\newblock Understanding the difficulty of training deep feedforward neural
  networks.
\newblock In Yee~Whye Teh and Mike Titterington (eds.), \emph{Proceedings of
  the Thirteenth International Conference on Artificial Intelligence and
  Statistics}, volume~9 of \emph{Proceedings of Machine Learning Research},
  pp.\  249--256, Chia Laguna Resort, Sardinia, Italy, 13--15 May 2010. PMLR.

\bibitem[Golowich et~al.(2019)Golowich, Rakhlin, and Shamir]{Golowich_2019}
Noah Golowich, Alexander Rakhlin, and Ohad Shamir.
\newblock Size-independent sample complexity of neural networks.
\newblock \emph{Information and Inference: A Journal of the IMA}, May 2019.
\newblock ISSN 2049-8764.
\newblock \doi{10.1093/imaiai/iaz007}.
\newblock URL \url{http://dx.doi.org/10.1093/imaiai/iaz007}.

\bibitem[Gunasekar et~al.(2018)Gunasekar, Lee, Soudry, and
  Srebro]{gunasekar2018implicit}
Suriya Gunasekar, Jason Lee, Daniel Soudry, and Nathan Srebro.
\newblock Implicit bias of gradient descent on linear convolutional networks,
  2018.

\bibitem[He et~al.(2015{\natexlab{a}})He, Zhang, Ren, and Sun]{He2015DelvingDI}
Kaiming He, X.~Zhang, Shaoqing Ren, and Jian Sun.
\newblock Delving deep into rectifiers: Surpassing human-level performance on
  imagenet classification.
\newblock \emph{2015 IEEE International Conference on Computer Vision (ICCV)},
  pp.\  1026--1034, 2015{\natexlab{a}}.

\bibitem[He et~al.(2015{\natexlab{b}})He, Zhang, Ren, and Sun]{He_2015}
Kaiming He, Xiangyu Zhang, Shaoqing Ren, and Jian Sun.
\newblock Delving deep into rectifiers: Surpassing human-level performance on
  imagenet classification.
\newblock \emph{2015 IEEE International Conference on Computer Vision (ICCV)},
  Dec 2015{\natexlab{b}}.
\newblock \doi{10.1109/iccv.2015.123}.
\newblock URL \url{http://dx.doi.org/10.1109/ICCV.2015.123}.

\bibitem[He et~al.(2016)He, Zhang, Ren, and Sun]{He_2016}
Kaiming He, Xiangyu Zhang, Shaoqing Ren, and Jian Sun.
\newblock Deep residual learning for image recognition.
\newblock \emph{2016 IEEE Conference on Computer Vision and Pattern Recognition
  (CVPR)}, Jun 2016.
\newblock \doi{10.1109/cvpr.2016.90}.
\newblock URL \url{http://dx.doi.org/10.1109/cvpr.2016.90}.

\bibitem[Huang et~al.(2017)Huang, Liu, Van Der~Maaten, and
  Weinberger]{DenseNetHuang_2017}
Gao Huang, Zhuang Liu, Laurens Van Der~Maaten, and Kilian~Q. Weinberger.
\newblock Densely connected convolutional networks.
\newblock \emph{2017 IEEE Conference on Computer Vision and Pattern Recognition
  (CVPR)}, Jul 2017.
\newblock \doi{10.1109/cvpr.2017.243}.
\newblock URL \url{http://dx.doi.org/10.1109/CVPR.2017.243}.

\bibitem[Jacot et~al.(2018)Jacot, Gabriel, and Hongler]{jacot2018neural}
Arthur Jacot, Franck Gabriel, and Clément Hongler.
\newblock Neural tangent kernel: Convergence and generalization in neural
  networks, 2018.

\bibitem[Keskar et~al.(2016)Keskar, Mudigere, Nocedal, Smelyanskiy, and
  Tang]{keskar2016largebatch}
Nitish~Shirish Keskar, Dheevatsa Mudigere, Jorge Nocedal, Mikhail Smelyanskiy,
  and Ping Tak~Peter Tang.
\newblock On large-batch training for deep learning: Generalization gap and
  sharp minima, 2016.

\bibitem[Krizhevsky(2009)]{Krizhevsky2009LearningML}
Alex Krizhevsky.
\newblock Learning multiple layers of features from tiny images.
\newblock 2009.

\bibitem[Krizhevsky et~al.()Krizhevsky, Nair, and Hinton]{cifar100}
Alex Krizhevsky, Vinod Nair, and Geoffrey Hinton.
\newblock Cifar-100 (canadian institute for advanced research).
\newblock URL \url{http://www.cs.toronto.edu/~kriz/cifar.html}.

\bibitem[Lecun et~al.(1998)Lecun, Bottou, Bengio, and
  Haffner]{Lecun98gradient-basedlearning}
Yann Lecun, Léon Bottou, Yoshua Bengio, and Patrick Haffner.
\newblock Gradient-based learning applied to document recognition.
\newblock In \emph{Proceedings of the IEEE}, pp.\  2278--2324, 1998.

\bibitem[Lee et~al.(2019)Lee, Xiao, Schoenholz, Bahri, Novak, Sohl-Dickstein,
  and Pennington]{wide_nn_linear_NIPS2019_9063}
Jaehoon Lee, Lechao Xiao, Samuel Schoenholz, Yasaman Bahri, Roman Novak, Jascha
  Sohl-Dickstein, and Jeffrey Pennington.
\newblock Wide neural networks of any depth evolve as linear models under
  gradient descent.
\newblock In H.~Wallach, H.~Larochelle, A.~Beygelzimer, F.~d\textquotesingle
  Alch\'{e}-Buc, E.~Fox, and R.~Garnett (eds.), \emph{Advances in Neural
  Information Processing Systems 32}, pp.\  8572--8583. Curran Associates,
  Inc., 2019.

\bibitem[Li \& Liang(2018)Li and Liang]{li2018learning}
Yuanzhi Li and Yingyu Liang.
\newblock Learning overparameterized neural networks via stochastic gradient
  descent on structured data, 2018.

\bibitem[Liang et~al.(2017)Liang, Poggio, Rakhlin, and
  Stokes]{liang2017fisherrao}
Tengyuan Liang, Tomaso Poggio, Alexander Rakhlin, and James Stokes.
\newblock Fisher-rao metric, geometry, and complexity of neural networks, 2017.

\bibitem[Long \& Sedghi(2019)Long and Sedghi]{long2019generalization}
Philip~M. Long and Hanie Sedghi.
\newblock Generalization bounds for deep convolutional neural networks, 2019.

\bibitem[McAllester(1999)]{McAllesterPacBayes99}
David~A. McAllester.
\newblock Pac-bayesian model averaging.
\newblock In \emph{Proceedings of the Twelfth Annual Conference on
  Computational Learning Theory}, COLT ’99, pp.\  164–170, New York, NY,
  USA, 1999. Association for Computing Machinery.
\newblock ISBN 1581131674.
\newblock \doi{10.1145/307400.307435}.
\newblock URL \url{https://doi.org/10.1145/307400.307435}.

\bibitem[Nagarajan \& Kolter(2019)Nagarajan and
  Kolter]{nagarajan2019generalization}
Vaishnavh Nagarajan and J.~Zico Kolter.
\newblock Generalization in deep networks: The role of distance from
  initialization, 2019.

\bibitem[Netzer et~al.(2011)Netzer, Wang, Coates, Bissacco, Wu, and Ng]{svhn}
Yuval Netzer, Tiejie Wang, Adam Coates, Alessandro Bissacco, Baolin Wu, and
  Andrew~Y. Ng.
\newblock Reading digits in natural images with unsupervised feature learning.
\newblock 2011.

\bibitem[Neyshabur et~al.(2014)Neyshabur, Tomioka, and
  Srebro]{neyshabur2014search}
Behnam Neyshabur, Ryota Tomioka, and Nathan Srebro.
\newblock In search of the real inductive bias: On the role of implicit
  regularization in deep learning, 2014.

\bibitem[Neyshabur et~al.(2015{\natexlab{a}})Neyshabur, Salakhutdinov, and
  Srebro]{neyshabur2015pathsgd}
Behnam Neyshabur, Ruslan Salakhutdinov, and Nathan Srebro.
\newblock Path-sgd: Path-normalized optimization in deep neural networks,
  2015{\natexlab{a}}.

\bibitem[Neyshabur et~al.(2015{\natexlab{b}})Neyshabur, Tomioka, and
  Srebro]{neyshabur2015norm}
Behnam Neyshabur, Ryota Tomioka, and Nathan Srebro.
\newblock Norm-based capacity control in neural networks.
\newblock In \emph{Conference on Learning Theory}, pp.\  1376--1401,
  2015{\natexlab{b}}.

\bibitem[Neyshabur et~al.(2017)Neyshabur, Bhojanapalli, McAllester, and
  Srebro]{neyshabur2017exploring}
Behnam Neyshabur, Srinadh Bhojanapalli, David McAllester, and Nathan Srebro.
\newblock Exploring generalization in deep learning, 2017.

\bibitem[Neyshabur et~al.(2019)Neyshabur, Li, Bhojanapalli, LeCun, and
  Srebro]{neyshabur2018roleofoverparametrization}
Behnam Neyshabur, Zhiyuan Li, Srinadh Bhojanapalli, Yann LeCun, and Nathan
  Srebro.
\newblock The role of over-parametrization in generalization of neural
  networks.
\newblock In \emph{International Conference on Learning Representations}, 2019.
\newblock URL \url{https://openreview.net/forum?id=BygfghAcYX}.

\bibitem[Rahimi \& Recht(2008)Rahimi and Recht]{rahimi2008random}
Ali Rahimi and Benjamin Recht.
\newblock Random features for large-scale kernel machines.
\newblock In \emph{Advances in neural information processing systems}, pp.\
  1177--1184, 2008.

\bibitem[Sankararaman et~al.(2019)Sankararaman, De, Xu, Huang, and
  Goldstein]{Sankararaman2019TheIO}
Karthik~Abinav Sankararaman, Soham De, Zheng Xu, W.~Ronny Huang, and Tom
  Goldstein.
\newblock The impact of neural network overparameterization on gradient
  confusion and stochastic gradient descent.
\newblock \emph{ArXiv}, abs/1904.06963, 2019.

\bibitem[Sitzmann et~al.(2020)Sitzmann, Martel, Bergman, Lindell, and
  Wetzstein]{sitzmann2019siren}
Vincent Sitzmann, Julien~N.P. Martel, Alexander~W. Bergman, David~B. Lindell,
  and Gordon Wetzstein.
\newblock Implicit neural representations with periodic activation functions.
\newblock In \emph{arXiv}, 2020.

\bibitem[Soudry et~al.(2018)Soudry, Hoffer, and Srebro]{soudry2018the}
Daniel Soudry, Elad Hoffer, and Nathan Srebro.
\newblock The implicit bias of gradient descent on separable data.
\newblock In \emph{International Conference on Learning Representations}, 2018.
\newblock URL \url{https://openreview.net/forum?id=r1q7n9gAb}.

\bibitem[Tancik et~al.(2020)Tancik, Srinivasan, Mildenhall, Fridovich-Keil,
  Raghavan, Singhal, Ramamoorthi, Barron, and Ng]{tancik2020fourierfeat}
Matthew Tancik, Pratul~P. Srinivasan, Ben Mildenhall, Sara Fridovich-Keil,
  Nithin Raghavan, Utkarsh Singhal, Ravi Ramamoorthi, Jonathan~T. Barron, and
  Ren Ng.
\newblock Fourier features let networks learn high frequency functions in low
  dimensional domains.
\newblock \emph{arXiv preprint arXiv:2006.10739}, 2020.

\bibitem[Vapnik(1971)]{Vapnik1971ChervonenkisOT}
Vladimir Vapnik.
\newblock Chervonenkis: On the uniform convergence of relative frequencies of
  events to their probabilities.
\newblock 1971.

\bibitem[Wei \& Ma(2019)Wei and Ma]{wei2019datadependent}
Colin Wei and Tengyu Ma.
\newblock Data-dependent sample complexity of deep neural networks via
  lipschitz augmentation, 2019.

\bibitem[Wilson et~al.(2017)Wilson, Roelofs, Stern, Srebro, and
  Recht]{marginalvalue2017adaptivegradient}
Ashia~C Wilson, Rebecca Roelofs, Mitchell Stern, Nati Srebro, and Benjamin
  Recht.
\newblock The marginal value of adaptive gradient methods in machine learning.
\newblock In I.~Guyon, U.~V. Luxburg, S.~Bengio, H.~Wallach, R.~Fergus,
  S.~Vishwanathan, and R.~Garnett (eds.), \emph{Advances in Neural Information
  Processing Systems 30}, pp.\  4148--4158. Curran Associates, Inc., 2017.

\bibitem[Woodworth et~al.(2020)Woodworth, Gunasekar, Lee, Moroshko, Savarese,
  Golan, Soudry, and Srebro]{woodworth2020kernel}
Blake Woodworth, Suriya Gunasekar, Jason~D. Lee, Edward Moroshko, Pedro
  Savarese, Itay Golan, Daniel Soudry, and Nathan Srebro.
\newblock Kernel and rich regimes in overparametrized models, 2020.

\bibitem[Xiao et~al.(2019)Xiao, Pennington, and
  Schoenholz]{Xiao2019DisentanglingTA}
Lechao Xiao, Jeffrey Pennington, and Samuel~S. Schoenholz.
\newblock Disentangling trainability and generalization in deep learning.
\newblock \emph{ArXiv}, abs/1912.13053, 2019.

\bibitem[Yin et~al.(2018)Yin, Pananjady, Lam, Papailiopoulos, Ramchandran, and
  Bartlett]{grad-div-yin18a}
Dong Yin, Ashwin Pananjady, Max Lam, Dimitris Papailiopoulos, Kannan
  Ramchandran, and Peter Bartlett.
\newblock Gradient diversity: a key ingredient for scalable distributed
  learning.
\newblock In Amos Storkey and Fernando Perez-Cruz (eds.), \emph{Proceedings of
  the Twenty-First International Conference on Artificial Intelligence and
  Statistics}, volume~84 of \emph{Proceedings of Machine Learning Research},
  pp.\  1998--2007, Playa Blanca, Lanzarote, Canary Islands, 09--11 Apr 2018.
  PMLR.
\newblock URL \url{http://proceedings.mlr.press/v84/yin18a.html}.

\bibitem[Zhang et~al.(2016)Zhang, Bengio, Hardt, Recht, and
  Vinyals]{DBLP:journals/corr/ZhangBHRV16}
Chiyuan Zhang, Samy Bengio, Moritz Hardt, Benjamin Recht, and Oriol Vinyals.
\newblock Understanding deep learning requires rethinking generalization.
\newblock \emph{CoRR}, abs/1611.03530, 2016.
\newblock URL \url{http://arxiv.org/abs/1611.03530}.

\bibitem[Zou et~al.(2019)Zou, Cao, Zhou, and Gu]{Zou_2019}
Difan Zou, Yuan Cao, Dongruo Zhou, and Quanquan Gu.
\newblock Gradient descent optimizes over-parameterized deep relu networks.
\newblock \emph{Machine Learning}, 109\penalty0 (3):\penalty0 467–492, Oct
  2019.
\newblock ISSN 1573-0565.
\newblock \doi{10.1007/s10994-019-05839-6}.
\newblock URL \url{http://dx.doi.org/10.1007/s10994-019-05839-6}.

\end{thebibliography}
\vfill\eject
\appendix
\section{Organization of the appendix}
\begin{itemize}
    \item We describe the training procedure and datasets used throughout the paper in detail in appendix \ref{app:sec:ds}.
    \item In appendix \ref{app:sec:sin}, we reproduce the extreme memorization phenomenon from Figure \ref{fig:sin} on CIFAR-100 and SVHN.
    \item In Section \ref{sec:activations_scale}, Fig \ref{fig:relu_softmax} shows how changing the scale of activation leads to a drop in generalization performance in the case of ReLU activation and softmax cross-entropy loss. In appendix \ref{app:sec:relu}, we include results with multi-class hinge and squared losses. 
    \item We include results when using linear activation with softmax cross-entropy loss in appendix \ref{app:sec:linear}.
    \item In appendix \ref{app:sec:sigmoid}, we discuss how Sigmoid function, when used as activation, responds to scaling of initialization. 
    \item Appendix \ref{app:sec:cnn} includes details on the exact architecture and hyperparameters used in Section \ref{sec:alignment_broadly}. 
\end{itemize}

\section{Proof of Theorem \ref{thm:sin}}\label{sec:proof}
In this section we provide the missing proof of Theorem \ref{thm:sin}, restated below:
\thmsin*

\begin{proof}
Since each individual row of $W_1$ is independent, it suffices to prove the statement for $h=1$. If $x=0$ the statement is trivially true, so suppose $x\ne 0$. Let $c = \frac{\langle x', x\rangle}{\|x\|^2}$ and let $\Delta = x'-cx$. Notice that $\langle \Delta, x\rangle =0$ and $\|\Delta\|\le \|x-x'\|$. We also have
\begin{align*}
    W_1 x' = cW_1x + W_1\Delta
\end{align*}
Notice that $W_1x$ is normally distributed with mean 0 and variance $\sigma^2\|x\|^2$. Further, $W_1\Delta$ is normally distributed with mean 0 and variance $\sigma^2 \|\Delta\|^2$. Let $A$ be a mean 0 random variable with variance $\sigma^2 \|x\|^2$ and $B$ be a mean 0 random variable with variance $\sigma^2 \|\Delta\|^2$. Notice that since $\langle \Delta, x\rangle=0$, the joint distribution $(W_1 x, W_1x')$ is the same as that of $(A, cA+B)$. Therefore we have:
\begin{align*}
    \left|\E_{W_1}[\langle \phi(W_1x),\phi(W_1x')\rangle]\right|&=\left|\E_{A,B}[\sin(A)\sin(cA+B)]\right|\\
    &= \left|\E[\sin(A)\sin(cA)\cos(B) +\sin(A)\cos(cA)\sin(B)]\right|\\
    &=\left|\E[\sin(A)\sin(cA)\cos(B)]\right|\\
    &= \left|\E[\sin(A)\sin(cA)]\E[\cos(B)]\right|\\
    &\le \left|\E[\cos(B)]\right|\le \exp\left(-\frac{\sigma^2\|\Delta\|^2}{2}\right)
\end{align*}
\end{proof}

\section{Description of the training procedure and datasets}
\label{app:sec:ds}
We use the Tensorflow framework for conducting our empirical study and all of our code is included as part of supplementary material. In every experiment, we train using SGD, without momentum, with a constant learning rate of 0.01 and batch size of 256. We employ a p100 single-instance GPU for each training run. For most of the experiments, the model is trained until it obtains perfect accuracy on the training set, with only a few exceptions which are either unavoidable or requires extravagant training iterations. For example, in the experiments involving linear activation, since none of the datasets we use are completely linearly separable, we do not expect the net to get 100\% accuracy on the training set. Another interesting case is the Sigmoid activation, for which the gradients starts to saturate as the scale of the input to the Sigmoid function increases. Thus, we stop the training at a point when at least one of the model in the study achieves perfect accuracy on the training set. 

In our 2-layer MLP model, in almost all cases we use 1024 units for the hidden layer with exceptions of 1) experiments with Sigmoid activation and 2) ReLU activation with squared loss. In both of these cases, we increase the number of hidden units to 2048 in order to increase their training speed. Number of units for the softmax layer depends on the number of output classes, which is 10 for CIFAR-10 / SVHN, and 100 for CIFAR-100. The details of the ConvNet architecture are included in appendix \ref{app:sec:cnn}. For any layer that doesn't involve changing the initialization scale, for instance the top layer in all our models, defaults to using Glorot uniform initializer \citep{pmlr-v9-glorot10a}. For experiments corresponding to Sections \ref{sec:sin_scale} and \ref{sec:activations_scale}, we refrain from employing bias variables in order to match the setup exactly. For experiments in Section \ref{sec:alignment_broadly}, all biases are initialized to zero. 

We employ 3 image classification datasets each having 32x32 pixels color image as input. CIFAR-10 dataset \citep{Krizhevsky2009LearningML} consists of 60000 images with 10 classes. Classes are balanced with 6000 images per class. Training set consists of 50000 images and 10000 test images. CIFAR-100 \citep{cifar100} is very similar to CIFAR-10 except that it has 100 classes with 600 images per class. Finally, The Street View House Numbers (SVHN) Dataset \citep{svhn} has images of digits from house numbers obtained from Google Street View with a total of 10 classes. Training set contains 73257 images and 26032 test images.  

\section{Sin activation}
\label{app:sec:sin}
Figure \ref{fig:sin} shows that increasing the scale of initialization for hidden layer weights $W_1$ in a 2-layer MLP model leads to extreme memorization on CIFAR-10 dataset. Keeping everything else the same, we reproduce the same phenomenon on two other datasets, namely, CIFAR-100 and SVHN respectively.


\begin{figure}[h!]
  \centering
  \begin{subfigure}[b]{0.32\textwidth}
    \includegraphics[width=\textwidth]{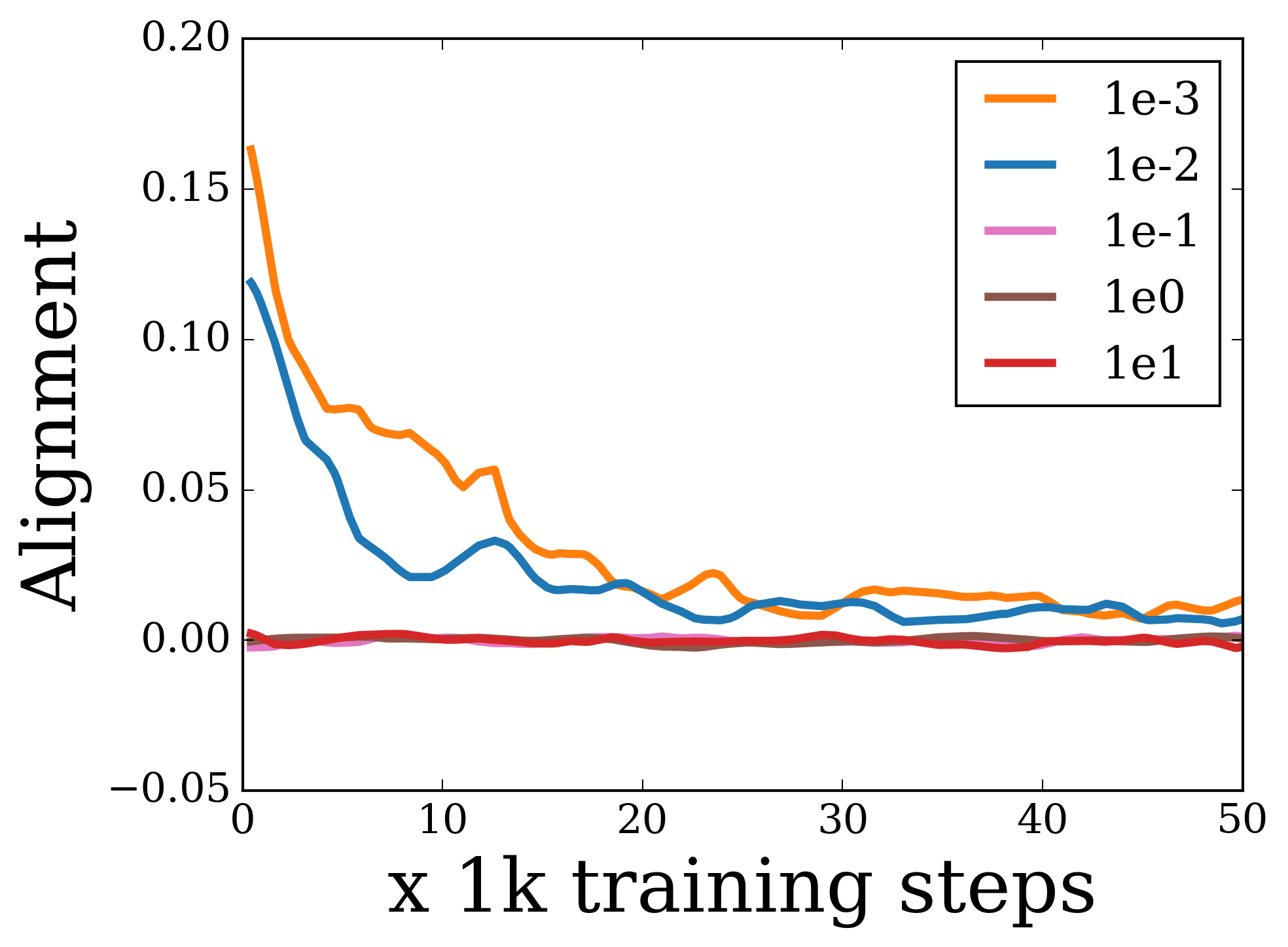}
    \caption{Hidden layer gradients}
  \end{subfigure}
  \begin{subfigure}[b]{0.32\textwidth}
    \includegraphics[width=\textwidth]{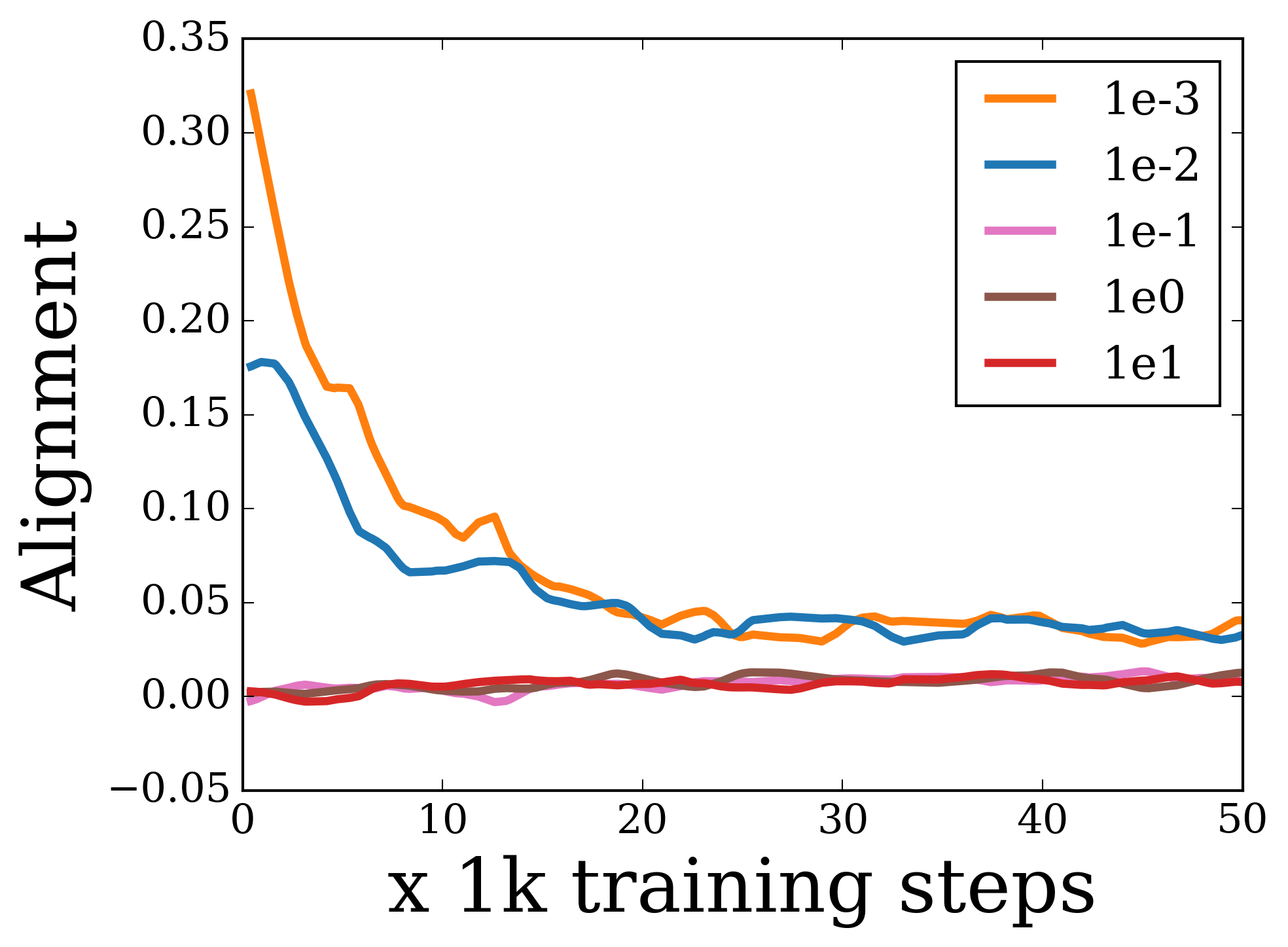}
    \caption{Top layer gradients}
  \end{subfigure}
  \begin{subfigure}[b]{0.32\textwidth}
    \includegraphics[width=\textwidth]{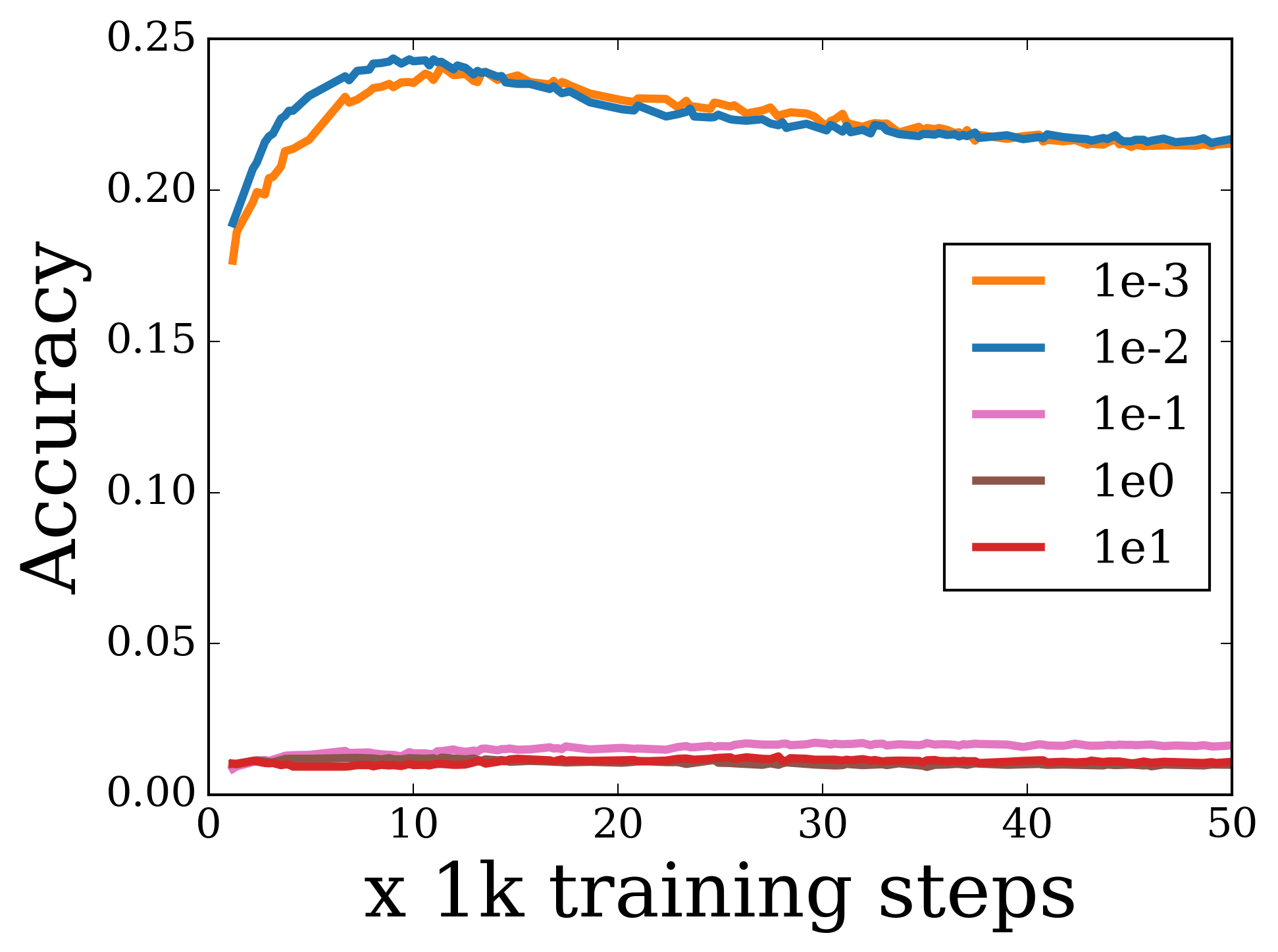}
    \caption{Test accuracy}
  \end{subfigure}
  \begin{subfigure}[b]{0.32\textwidth}
    \includegraphics[width=\textwidth]{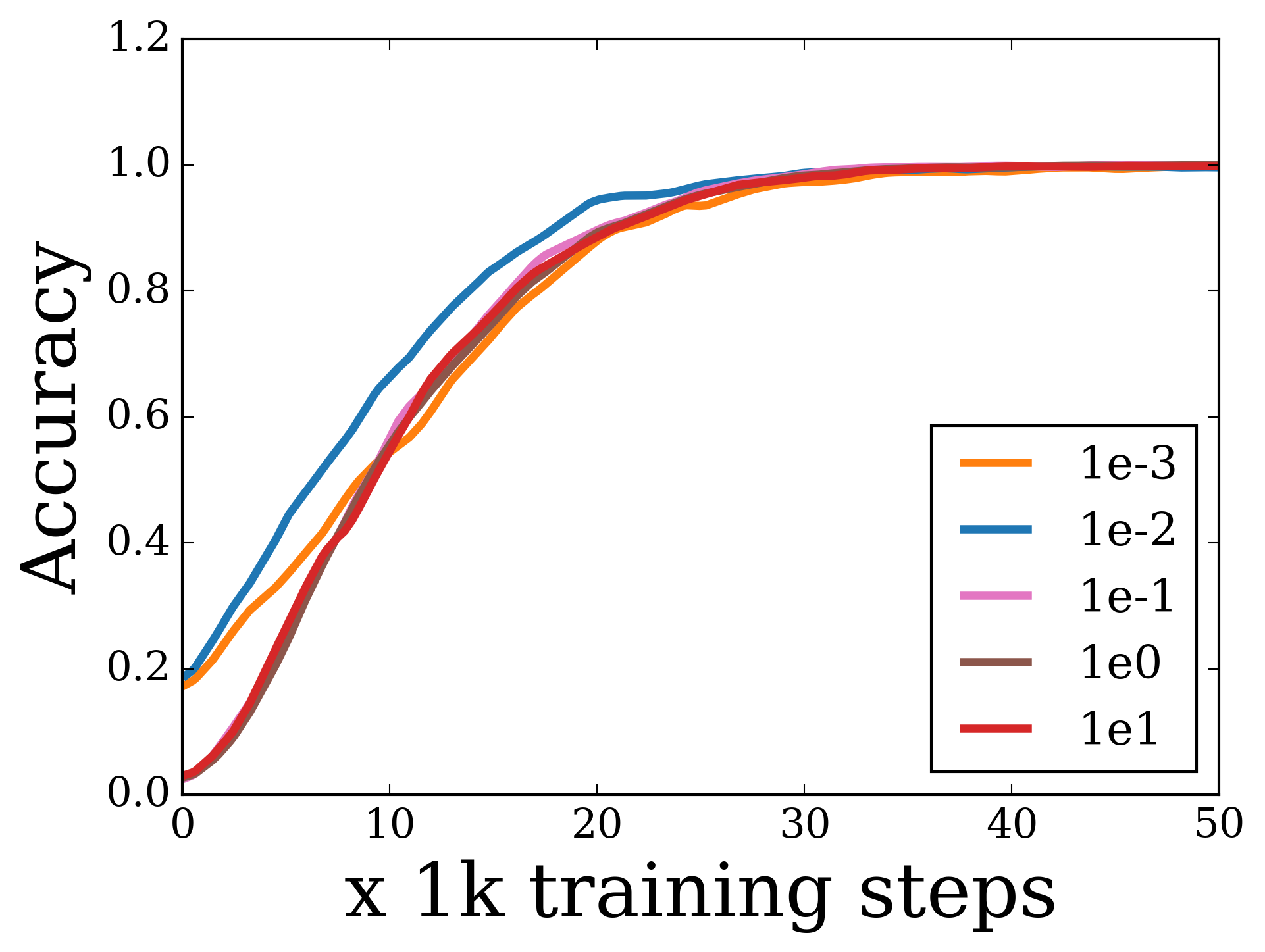}
    \caption{Train accuracy}
  \end{subfigure}
  \begin{subfigure}[b]{0.32\textwidth}
    \includegraphics[width=\textwidth]{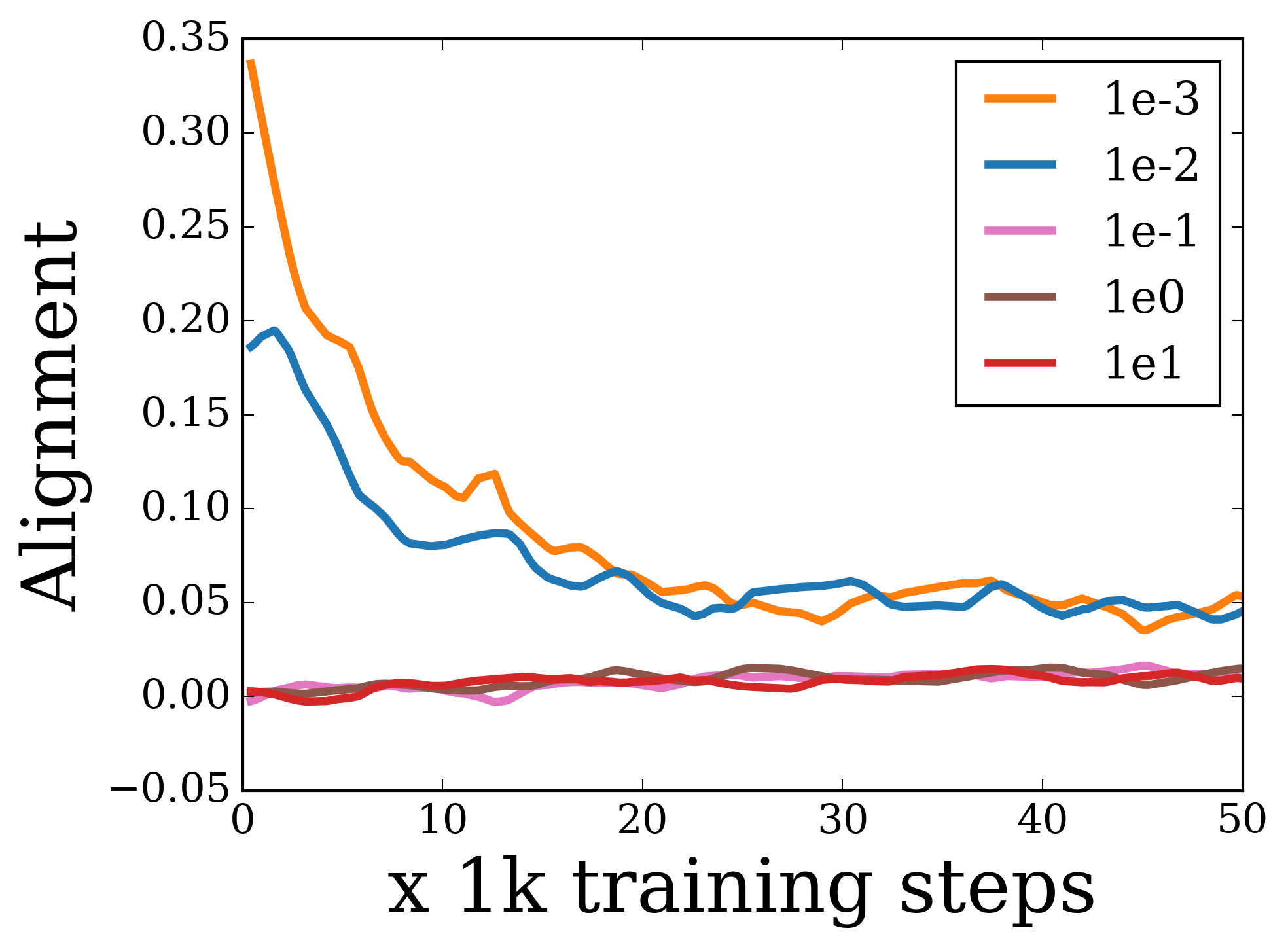}
    \caption{Hidden layer activations}
  \end{subfigure}
  \caption{\small Results when using $\sin$ activation function on CIFAR-100 dataset.}
  \label{app:fig:sin_cifar100}
\end{figure}
\FloatBarrier

\begin{figure}[h!]
  \centering
  \begin{subfigure}[b]{0.32\textwidth}
    \includegraphics[width=\textwidth]{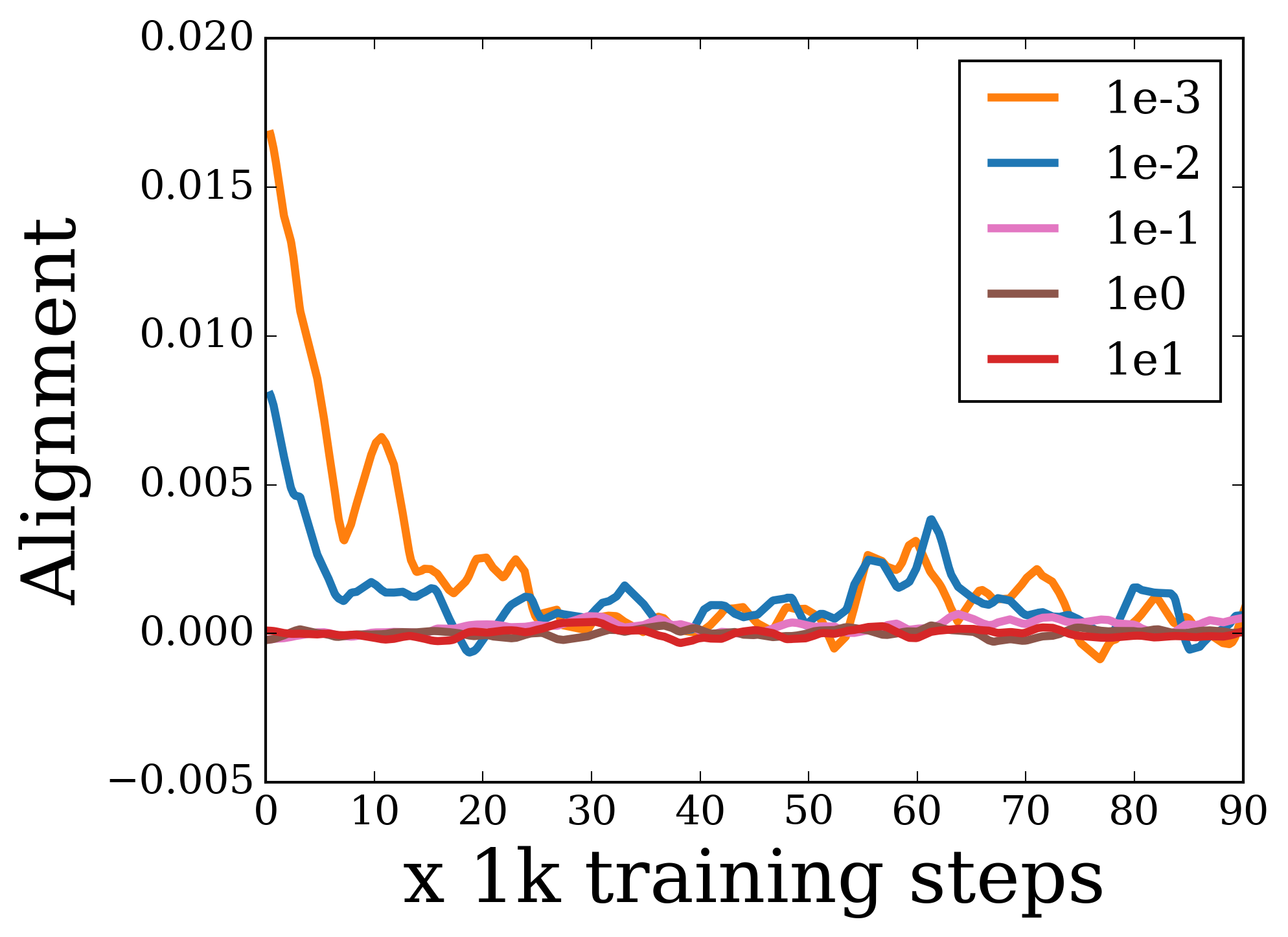}
    \caption{Hidden layer gradients}
  \end{subfigure}
  \begin{subfigure}[b]{0.32\textwidth}
    \includegraphics[width=\textwidth]{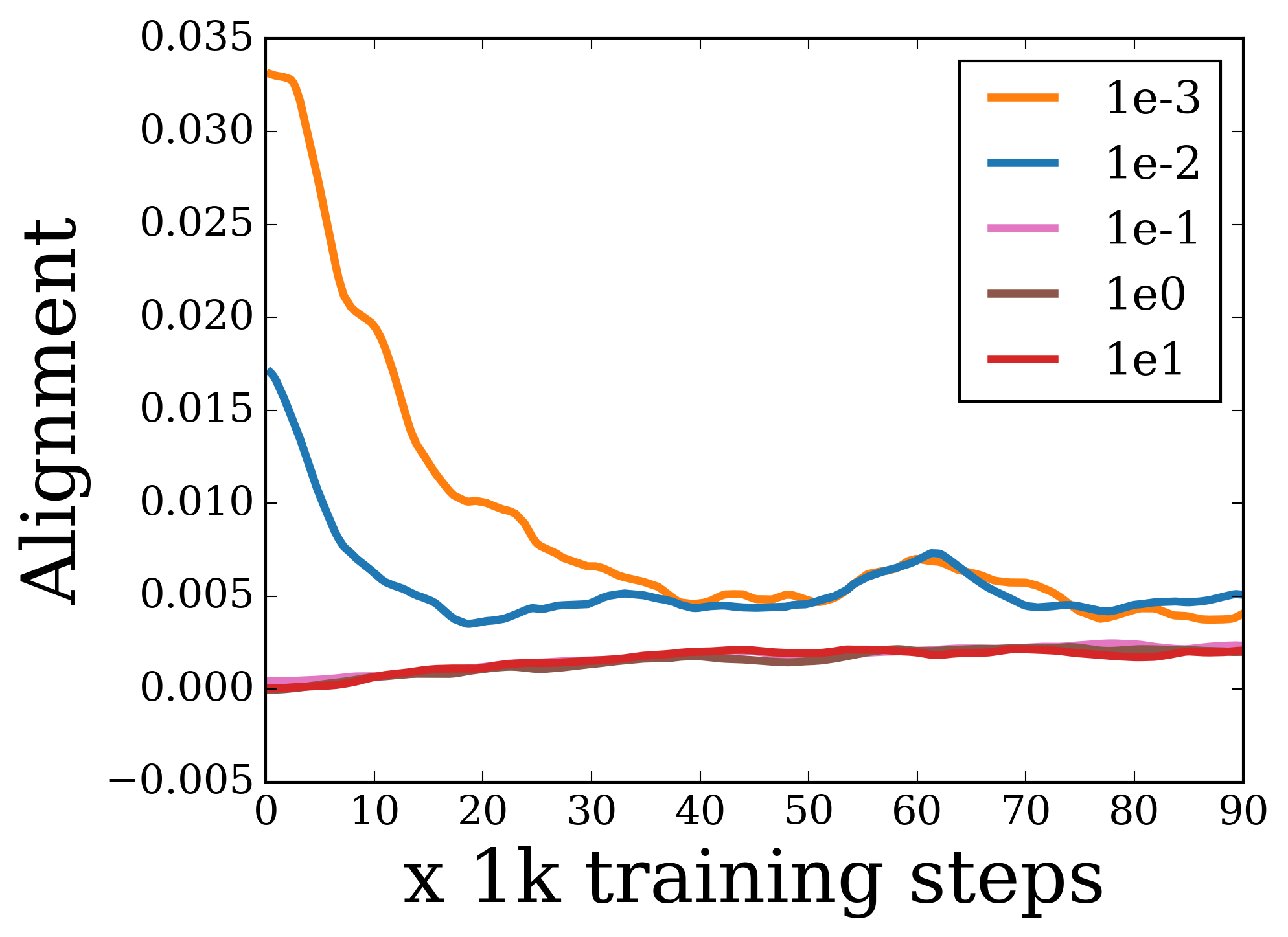}
    \caption{Top layer gradients}
  \end{subfigure}
  \begin{subfigure}[b]{0.32\textwidth}
    \includegraphics[width=\textwidth]{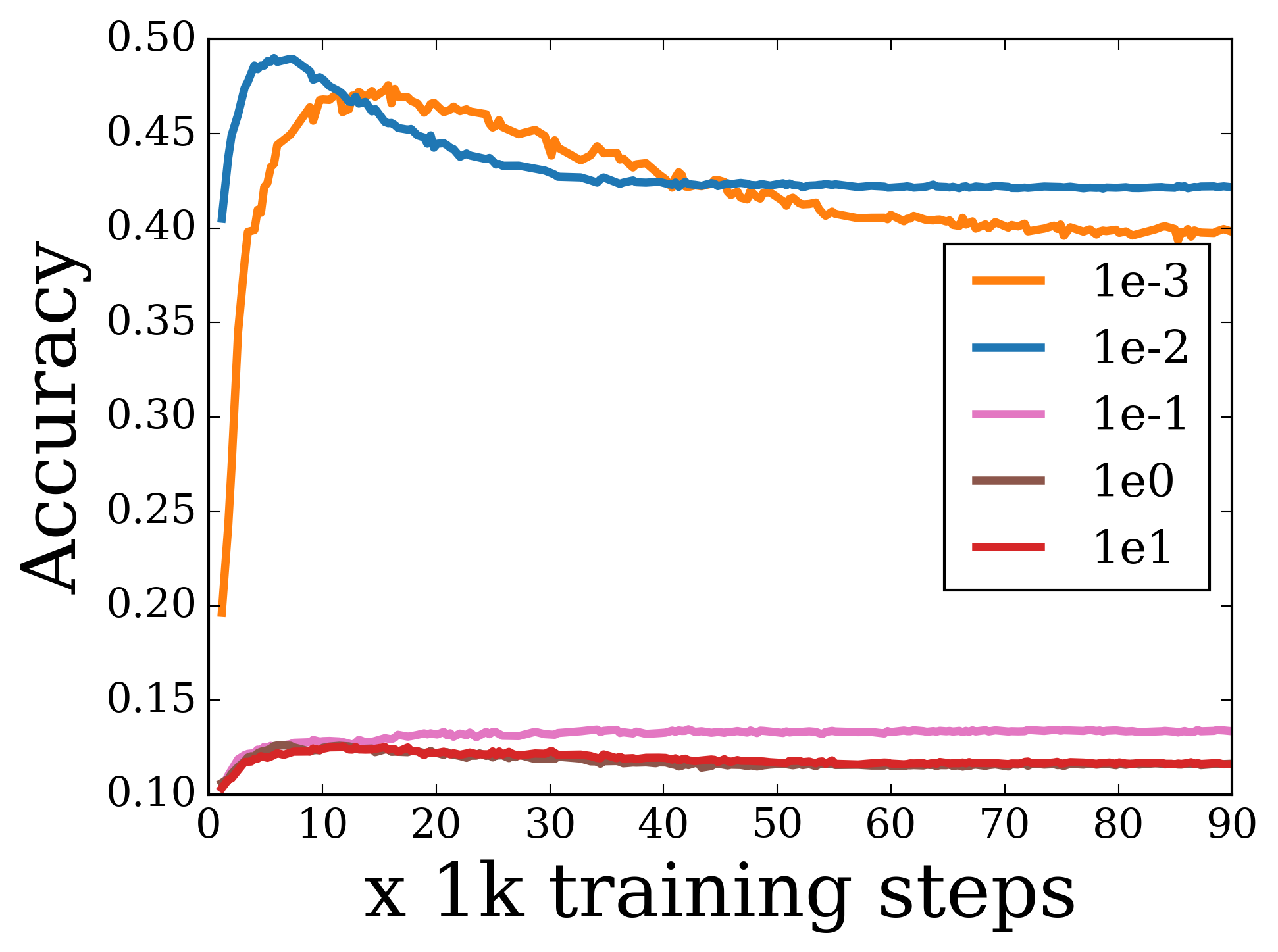}
    \caption{Test accuracy}
  \end{subfigure}
  \begin{subfigure}[b]{0.32\textwidth}
    \includegraphics[width=\textwidth]{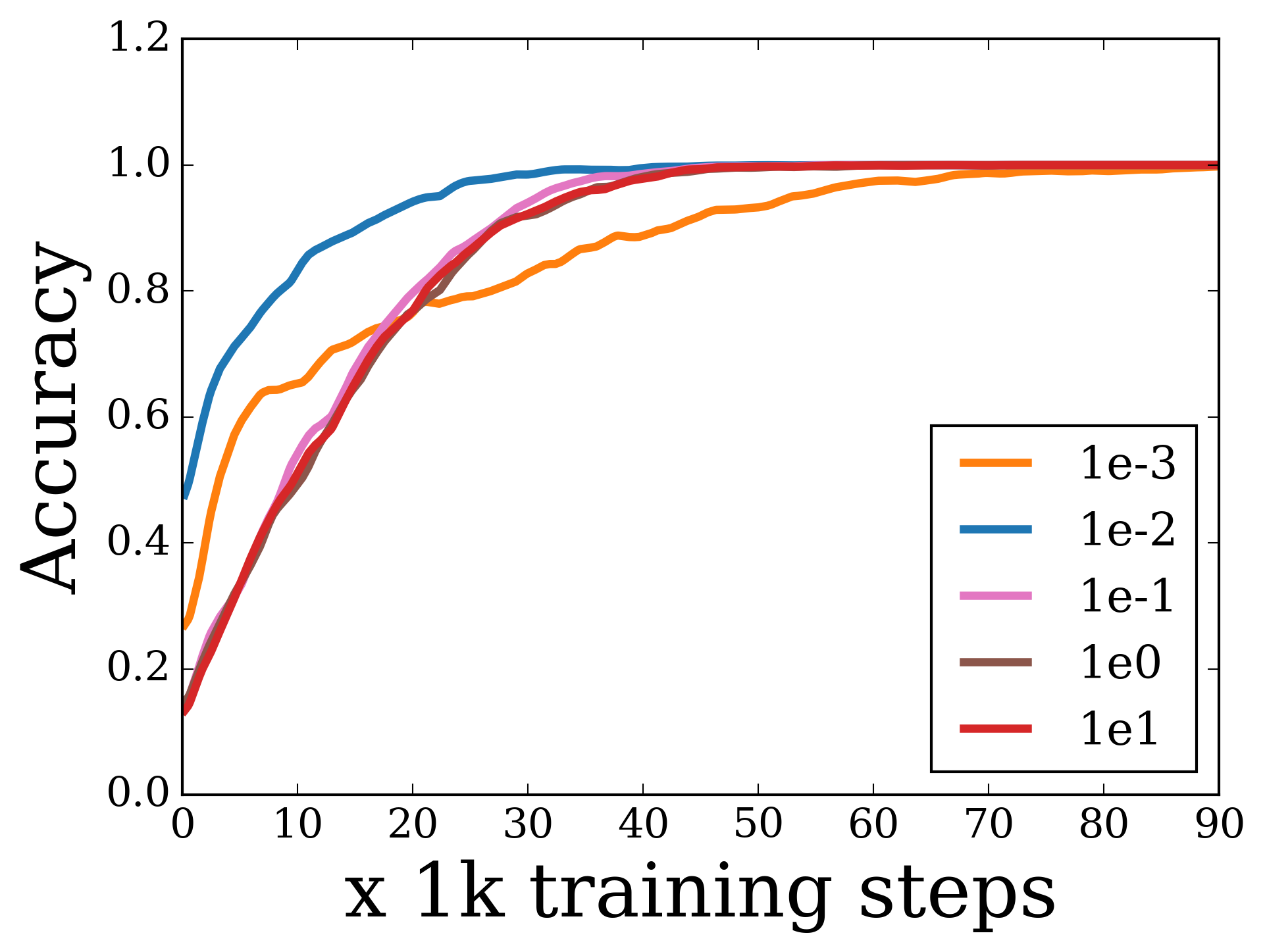}
    \caption{Train accuracy}
  \end{subfigure}
  \begin{subfigure}[b]{0.32\textwidth}
    \includegraphics[width=\textwidth]{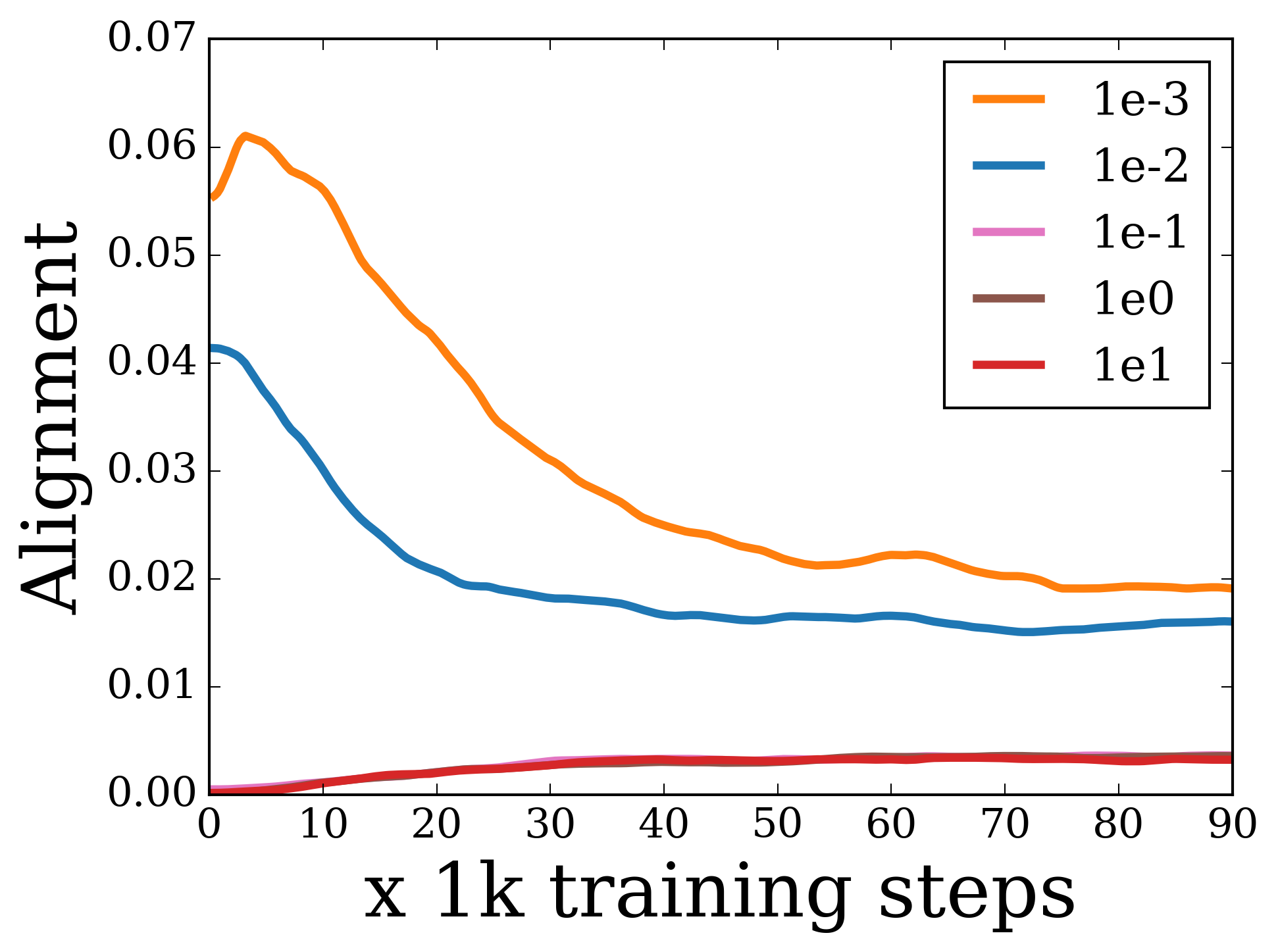}
    \caption{Hidden layer activations}
  \end{subfigure}
  \caption{\small Results when using $\sin$ activation function on SVHN dataset.}
  \label{app:fig:sin_svhn}
\end{figure}
\FloatBarrier

\subsection{Varying learning rates}
Figure \ref{fig:sin} also illustrates the lazy training phenomenon, in which gradients for the first layer are smaller compared to the weights of the net. Thus, one might worry that the lack of generalization behavior is due to some scaling mismatch between the weights and the gradients - essentially due to a poor learning rate choice - rather than some modified implicit bias due to modified scaling. To investigate this possibility, we conduct the following study where we selectively alter the learning rate for the first layer. As shown in the figure below, we observe extreme memorization phenomenon when the scale of the net is high across a wide range of learning rates. Note that increasing the learning rate after a certain point induces the net to stop learning altogether i.e. it no longer achieves perfect \emph{training} accuracy.  

As a special case, we also experiment with frozen first layer weights. Even though, in this case, the net fails to achieve perfect training accuracy, we can still see the extreme memorization phenomenon when the scale of initialization is high enough.

\begin{figure}[h!]
  \centering
  \begin{subfigure}[b]{0.45\textwidth}
    \includegraphics[width=\textwidth]{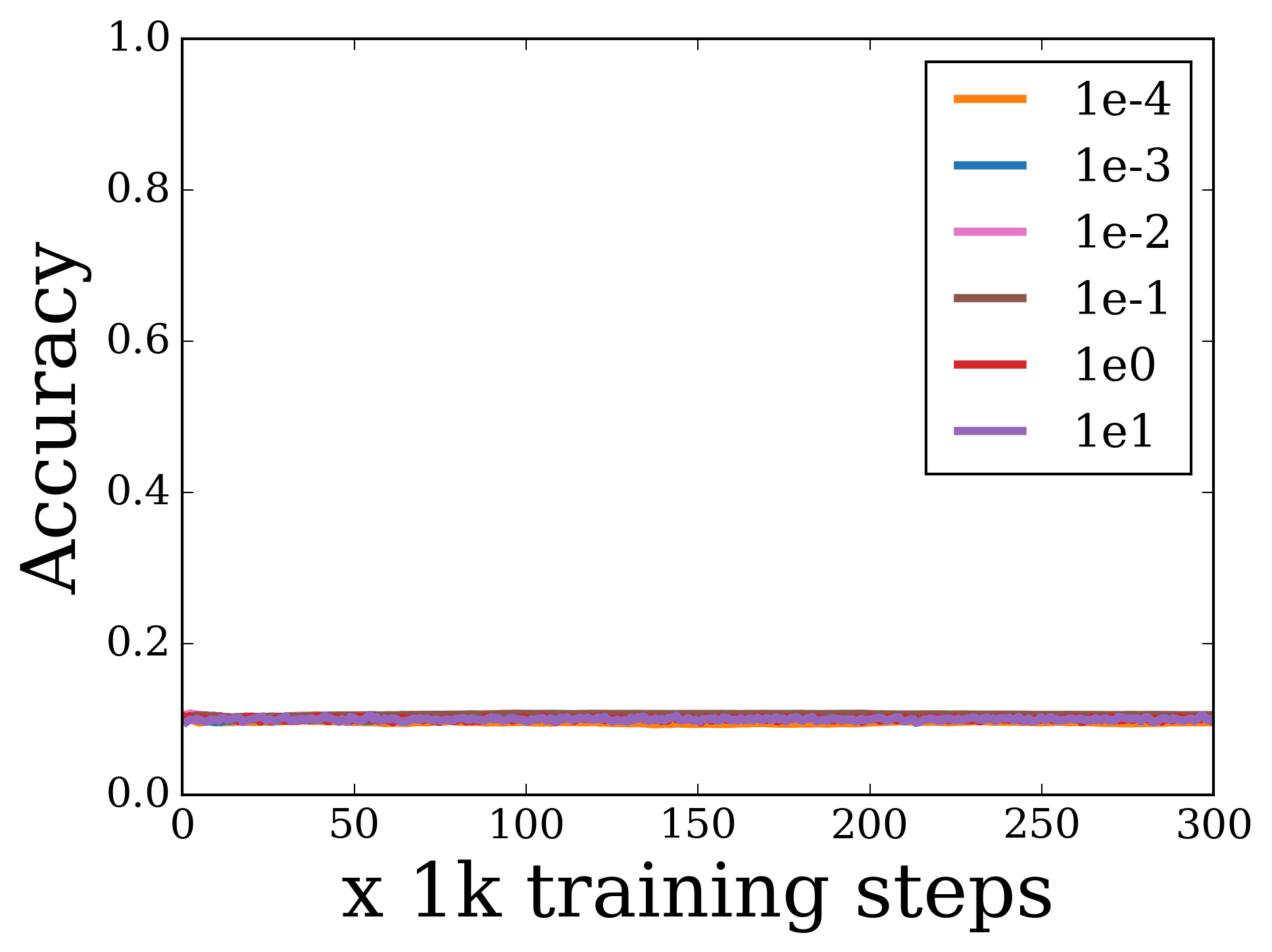}
    \caption{Test accuracy}
  \end{subfigure}
  \begin{subfigure}[b]{0.45\textwidth}
    \includegraphics[width=\textwidth]{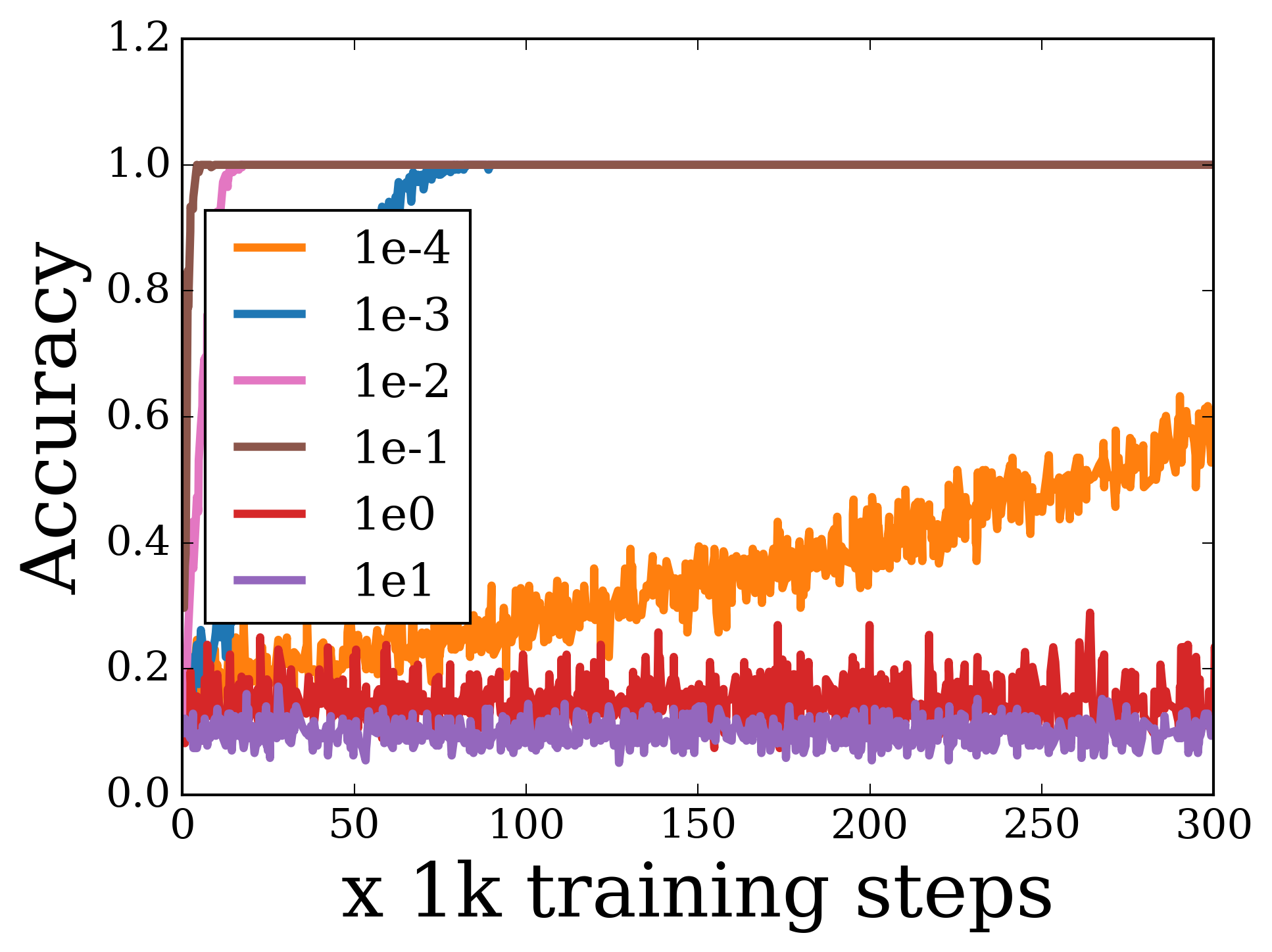}
    \caption{Train accuracy}
  \end{subfigure}
  \caption{\small Results when using $\sin$ activation function on CIFAR-10 dataset with scale of initialization set to 1.0 and varying learning rates for the first layer as shown in the plot.}
  \label{app:fig:sin_cifar10_var1_lr_sweep}
\end{figure}
\FloatBarrier

\begin{figure}[h!]
  \centering
  \begin{subfigure}[b]{0.45\textwidth}
    \includegraphics[width=\textwidth]{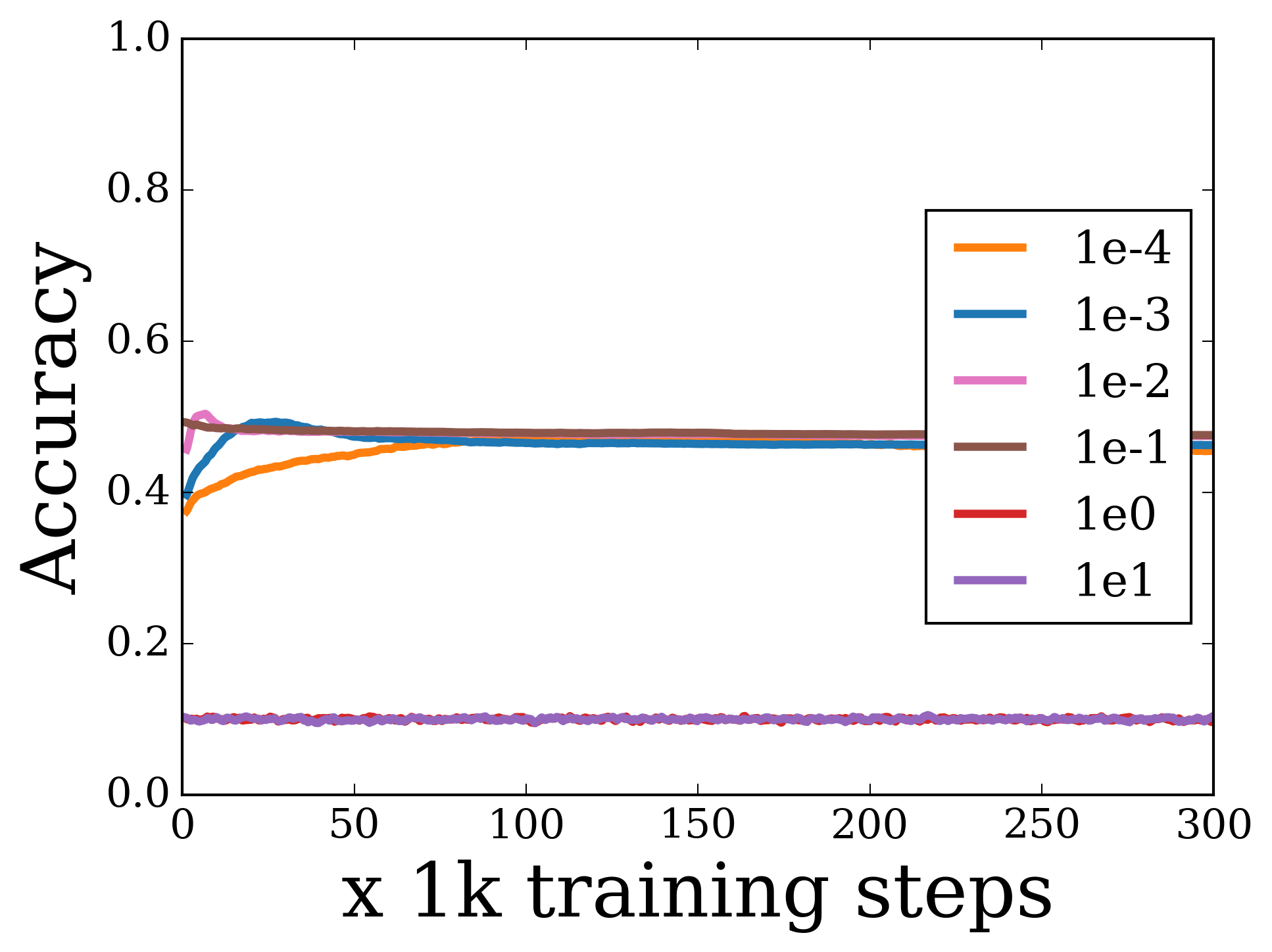}
    \caption{Test accuracy}
  \end{subfigure}
  \begin{subfigure}[b]{0.45\textwidth}
    \includegraphics[width=\textwidth]{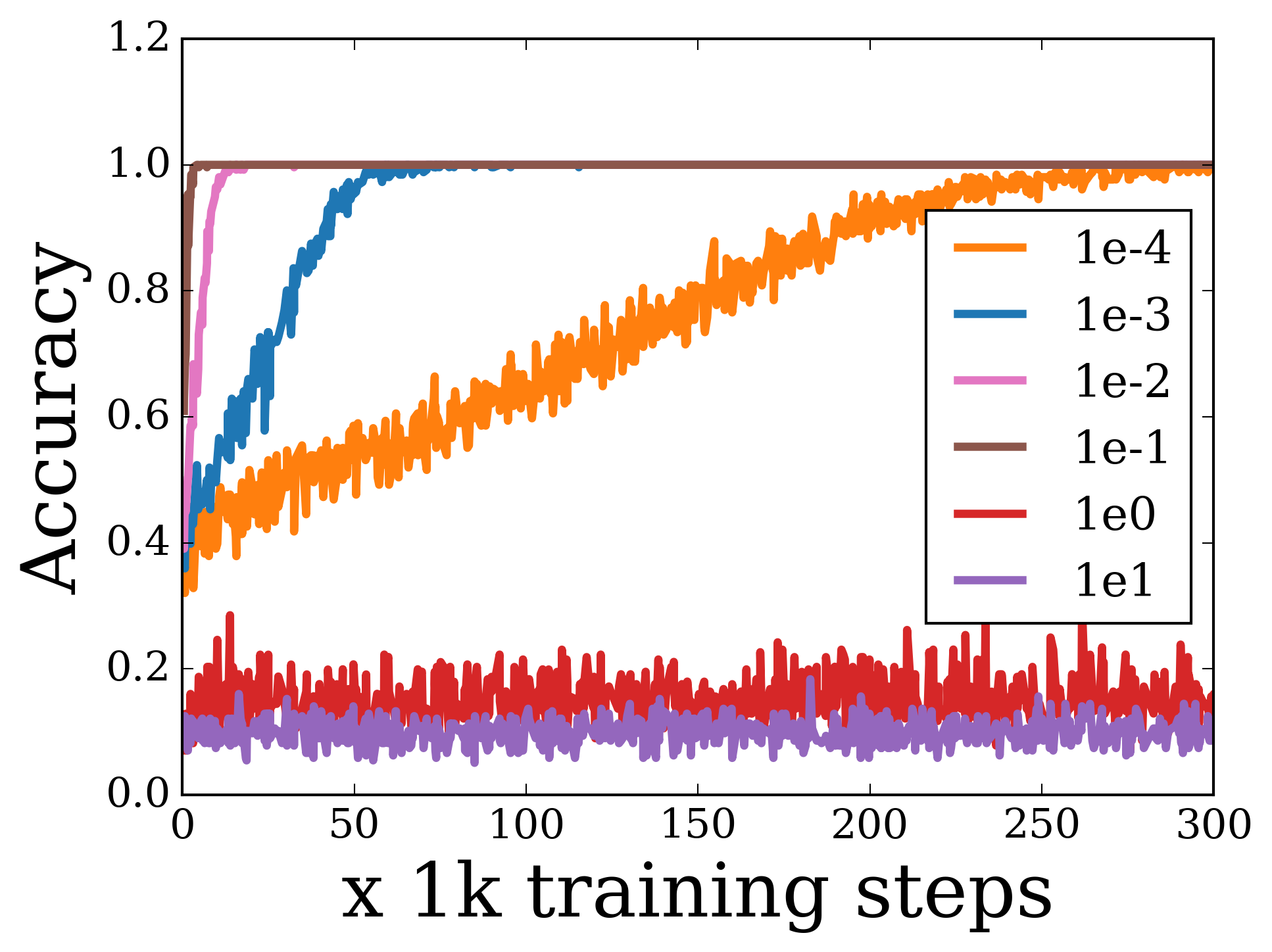}
    \caption{Train accuracy}
  \end{subfigure}
  \caption{\small Results when using $\sin$ activation function on CIFAR-10 dataset with scale of initialization set to 1e-2 and varying learning rates for the first layer as shown in the plot.}
  \label{app:fig:sin_cifar10_var1en2_lr_sweep}
\end{figure}
\FloatBarrier

\begin{figure}[h!]
  \centering
  \begin{subfigure}[b]{0.45\textwidth}
    \includegraphics[width=\textwidth]{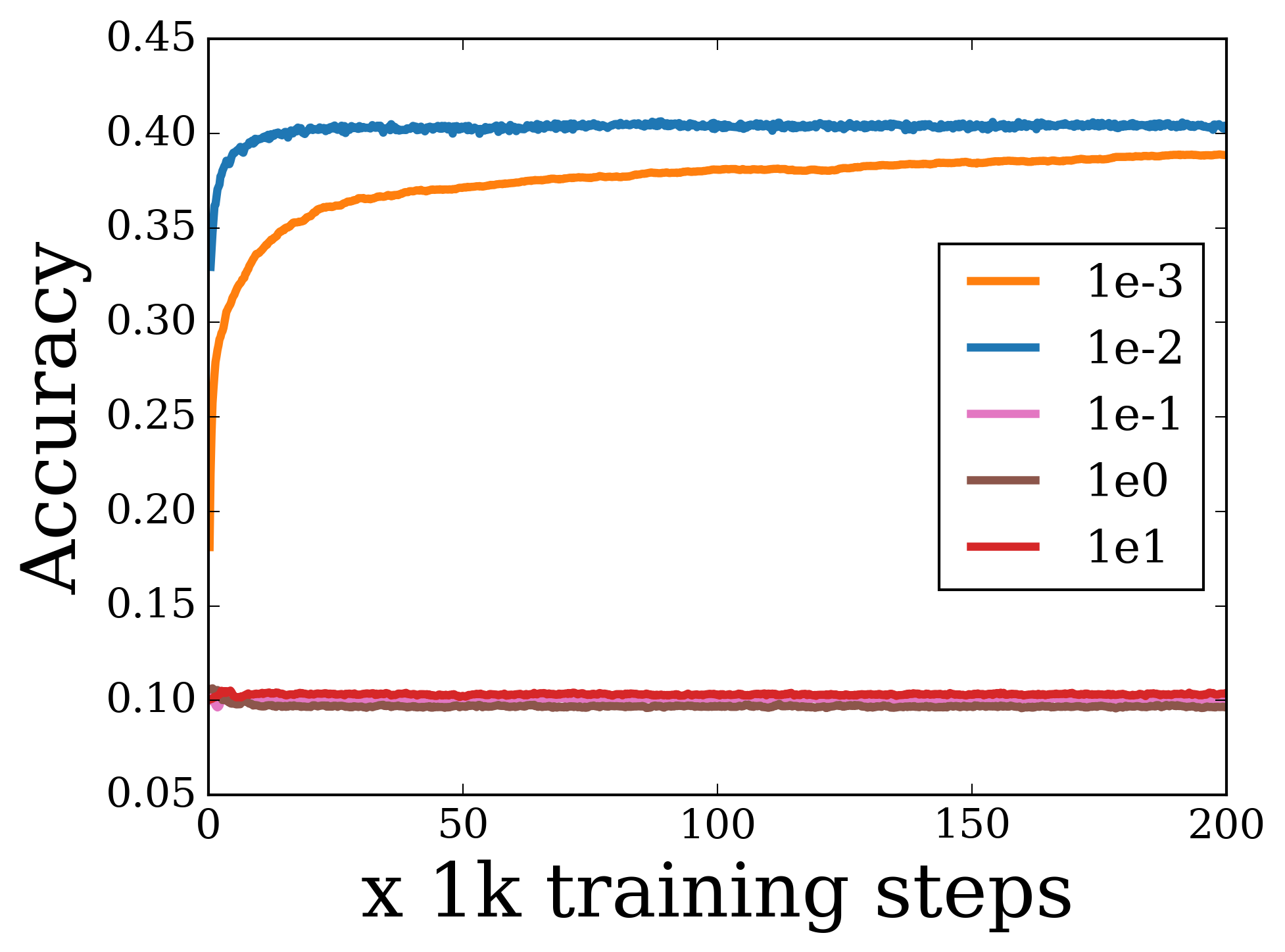}
    \caption{Test accuracy}
  \end{subfigure}
  \begin{subfigure}[b]{0.45\textwidth}
    \includegraphics[width=\textwidth]{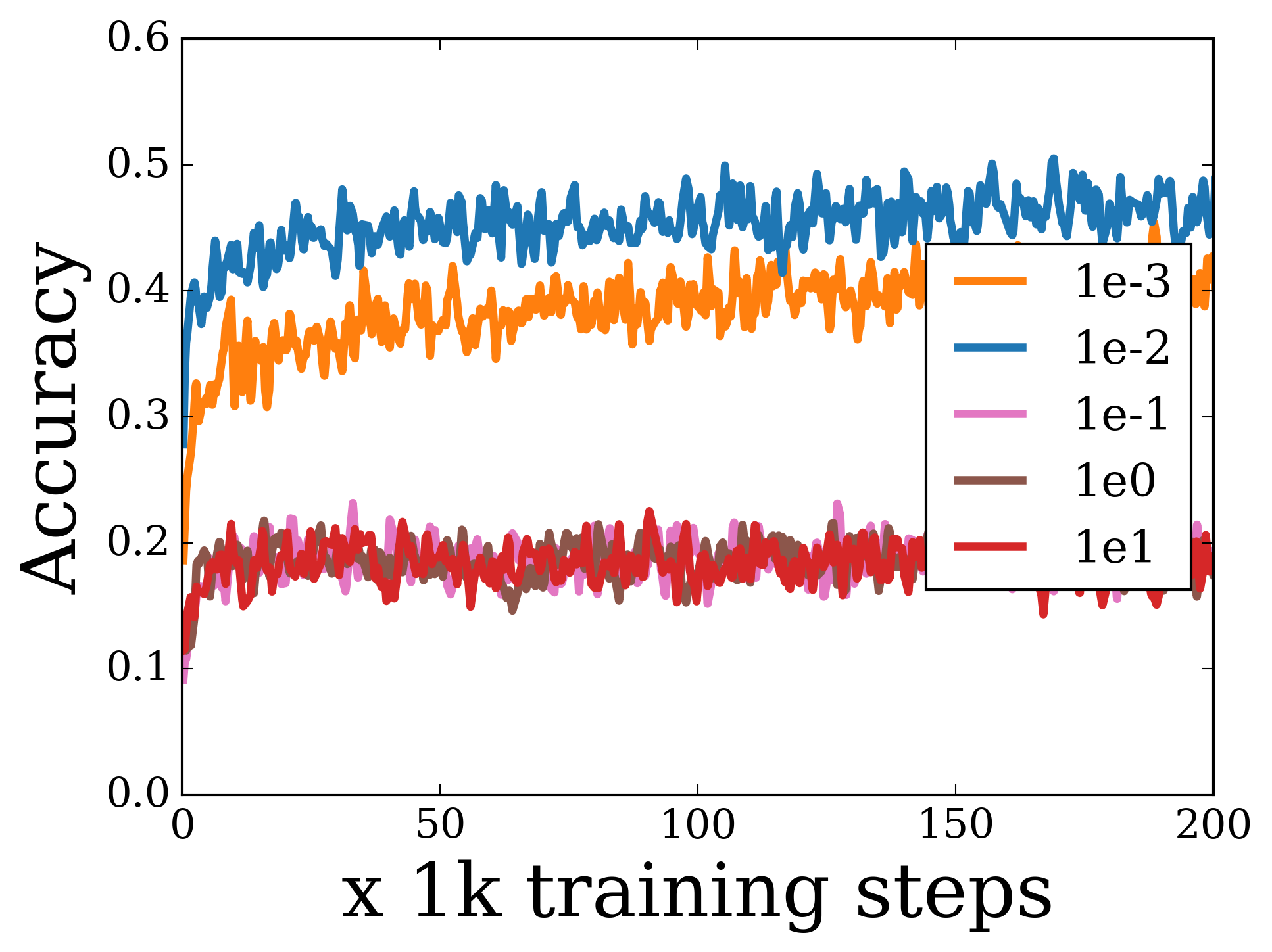}
    \caption{Train accuracy}
  \end{subfigure}
  \caption{\small Results when using $\sin$ activation function on CIFAR-10 dataset when the first layer weights are frozen and varying scale of initialization as shown in the plot.}
  \label{app:fig:sin_cifar10_frozen_var_sweep}
\end{figure}
\FloatBarrier

\subsection{Does depth matter?}
We also experiment with increasing the depth of the MLP from 2-layers to 4-layers. We find the extremem memorization is present even in this case. We also see a similar decrease in generalization performance in the case of ReLU.
\begin{figure}[h!]
  \centering
  \begin{subfigure}[b]{0.32\textwidth}
    \includegraphics[width=\textwidth]{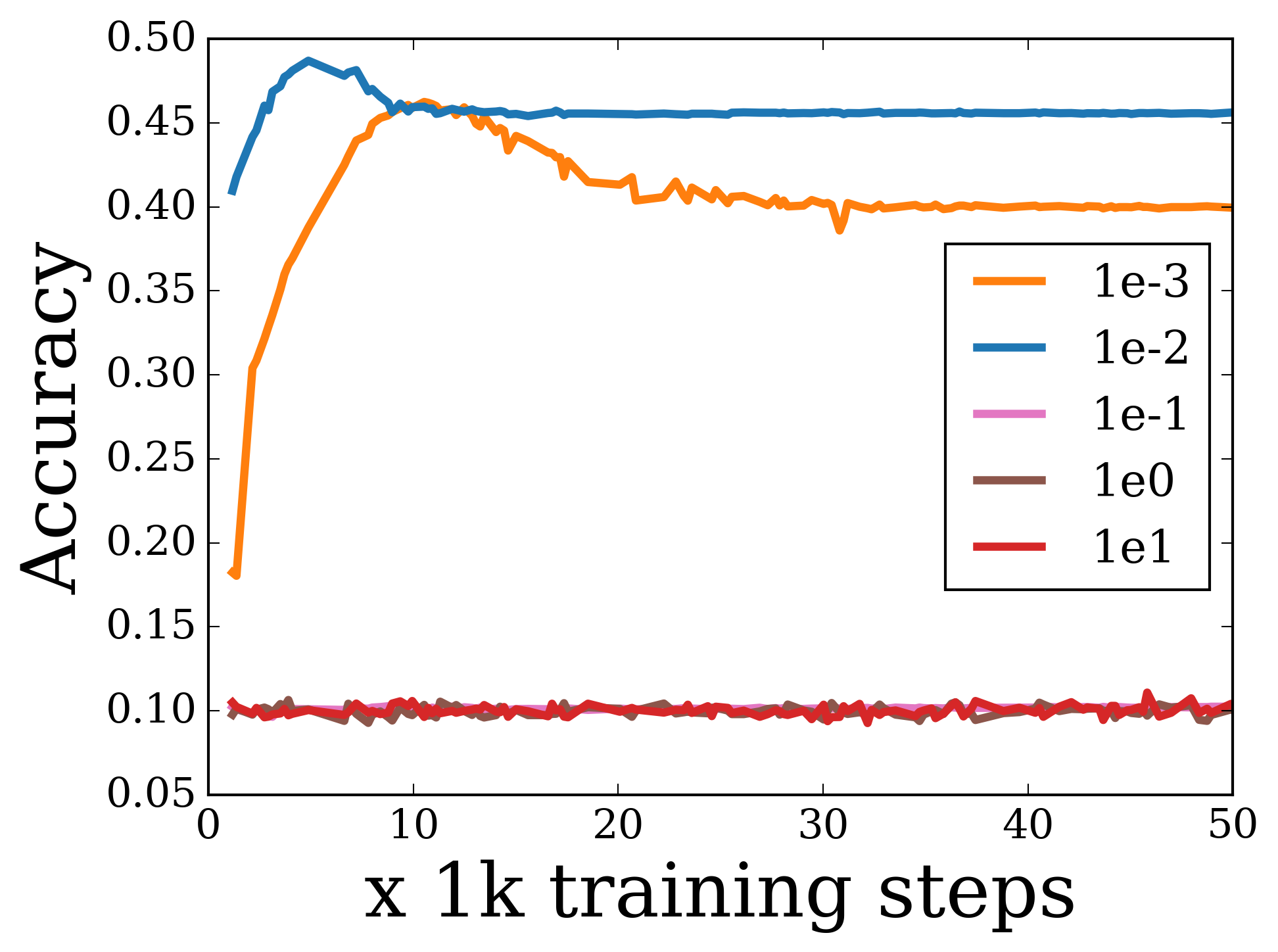}
    \caption{Test accuracy}
  \end{subfigure}
  \begin{subfigure}[b]{0.32\textwidth}
    \includegraphics[width=\textwidth]{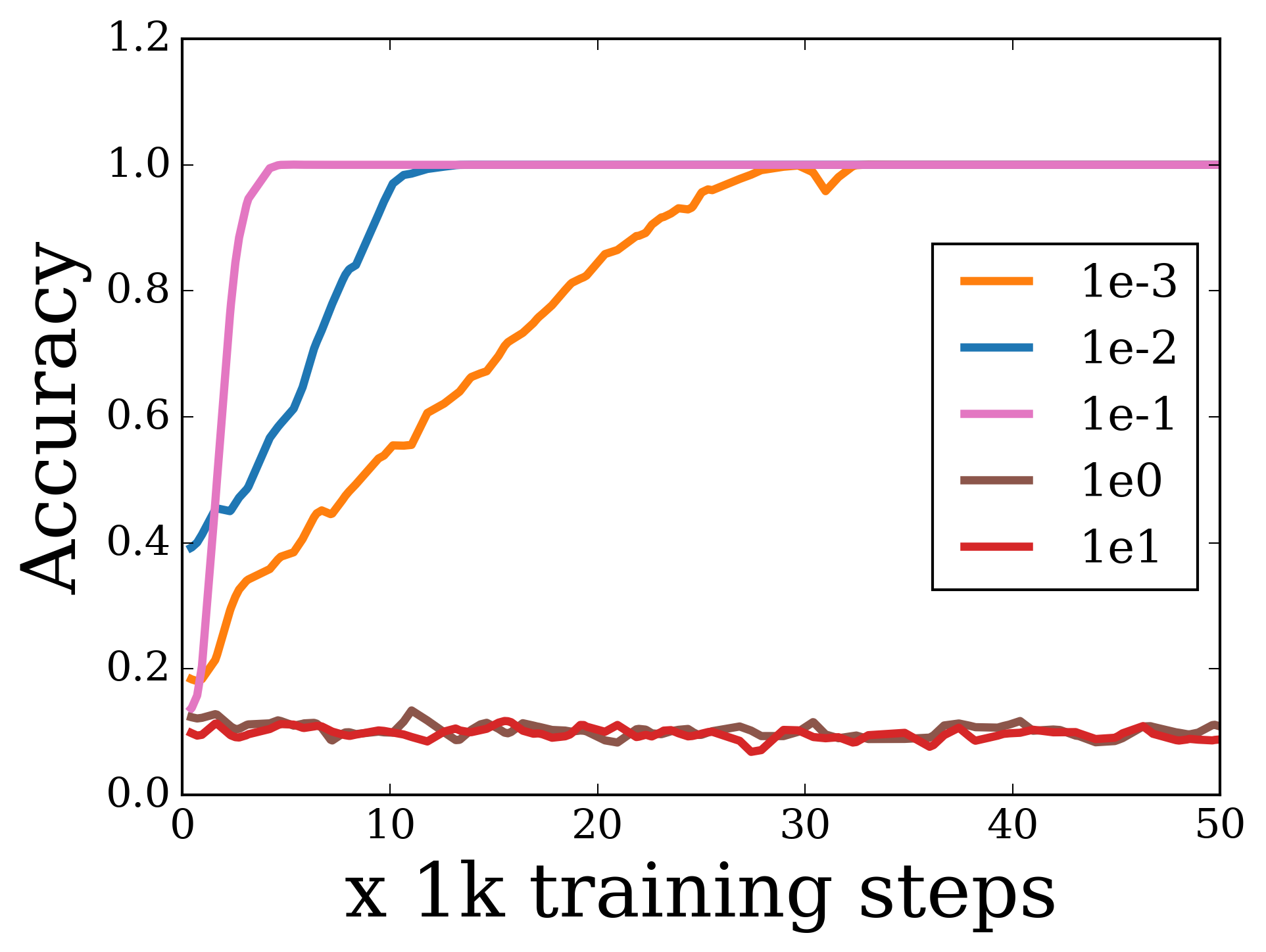}
    \caption{Train accuracy}
  \end{subfigure}
  \begin{subfigure}[b]{0.32\textwidth}
    \includegraphics[width=\textwidth]{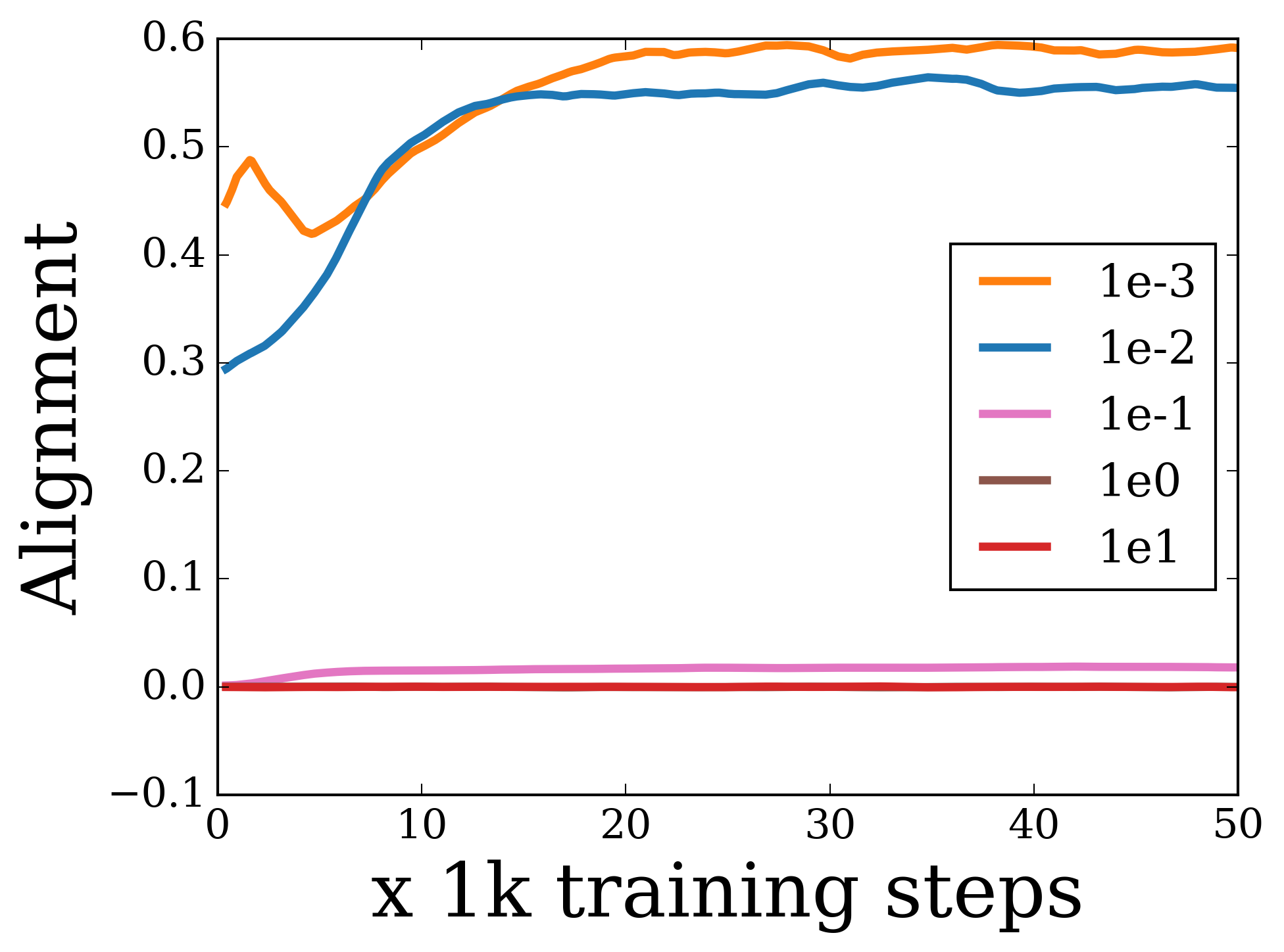}
    \caption{Hidden layer activations}
  \end{subfigure}
  \caption{\small Results when using $\sin$ activation function on CIFAR-10 dataset with 4-layer MLP.}
  \label{app:fig:sin_cifar10_mlp4}
\end{figure}
\FloatBarrier

\section{ReLU activation}
\label{app:sec:relu}

\subsection{Softmax cross entropy}
\begin{figure}[h!]
  \centering
     \begin{subfigure}[b]{0.32\textwidth}
    \includegraphics[width=\textwidth]{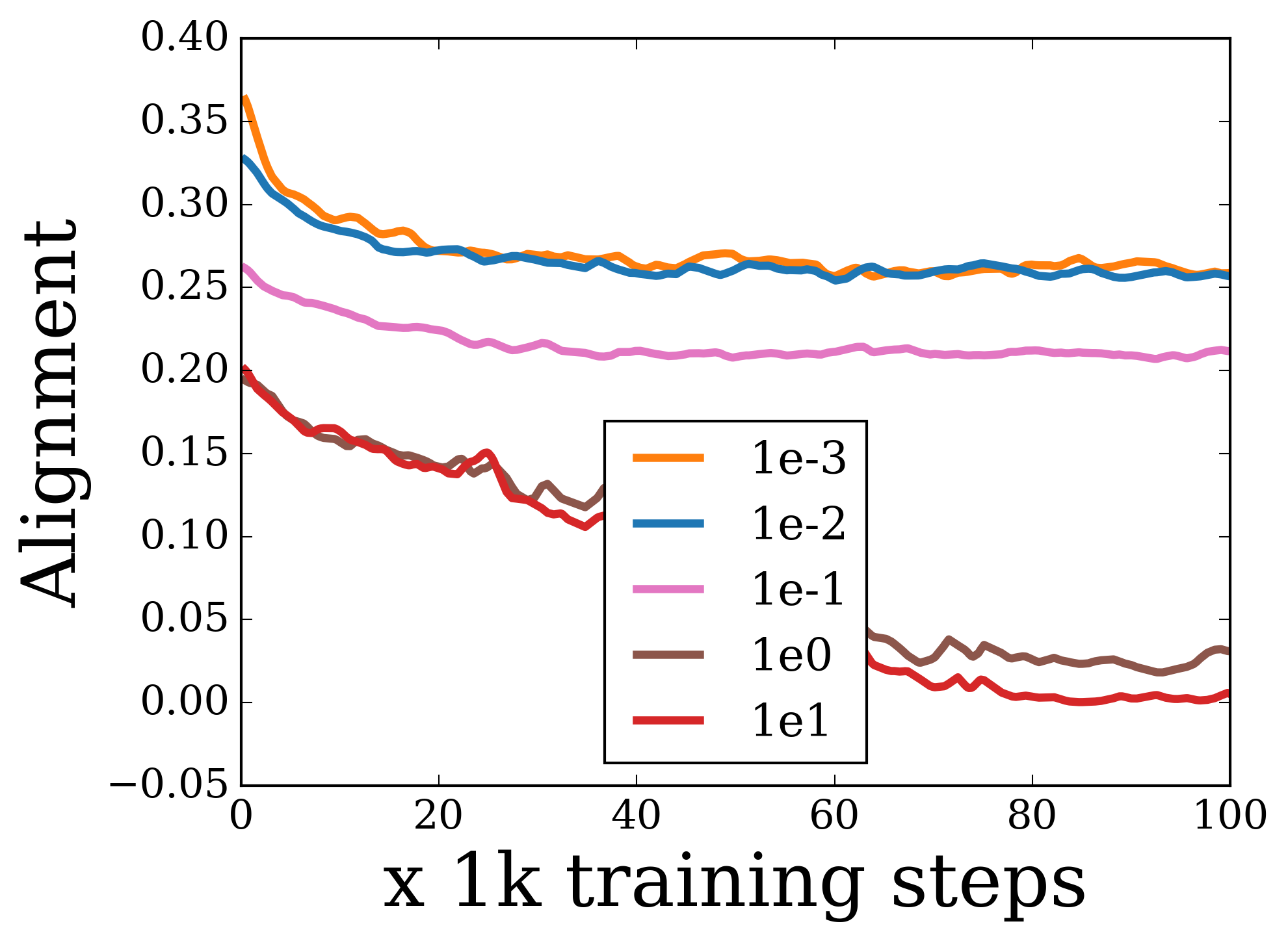}
    \caption{Top layer gradients}
  \end{subfigure}
  \begin{subfigure}[b]{0.32\textwidth}
    \includegraphics[width=\textwidth]{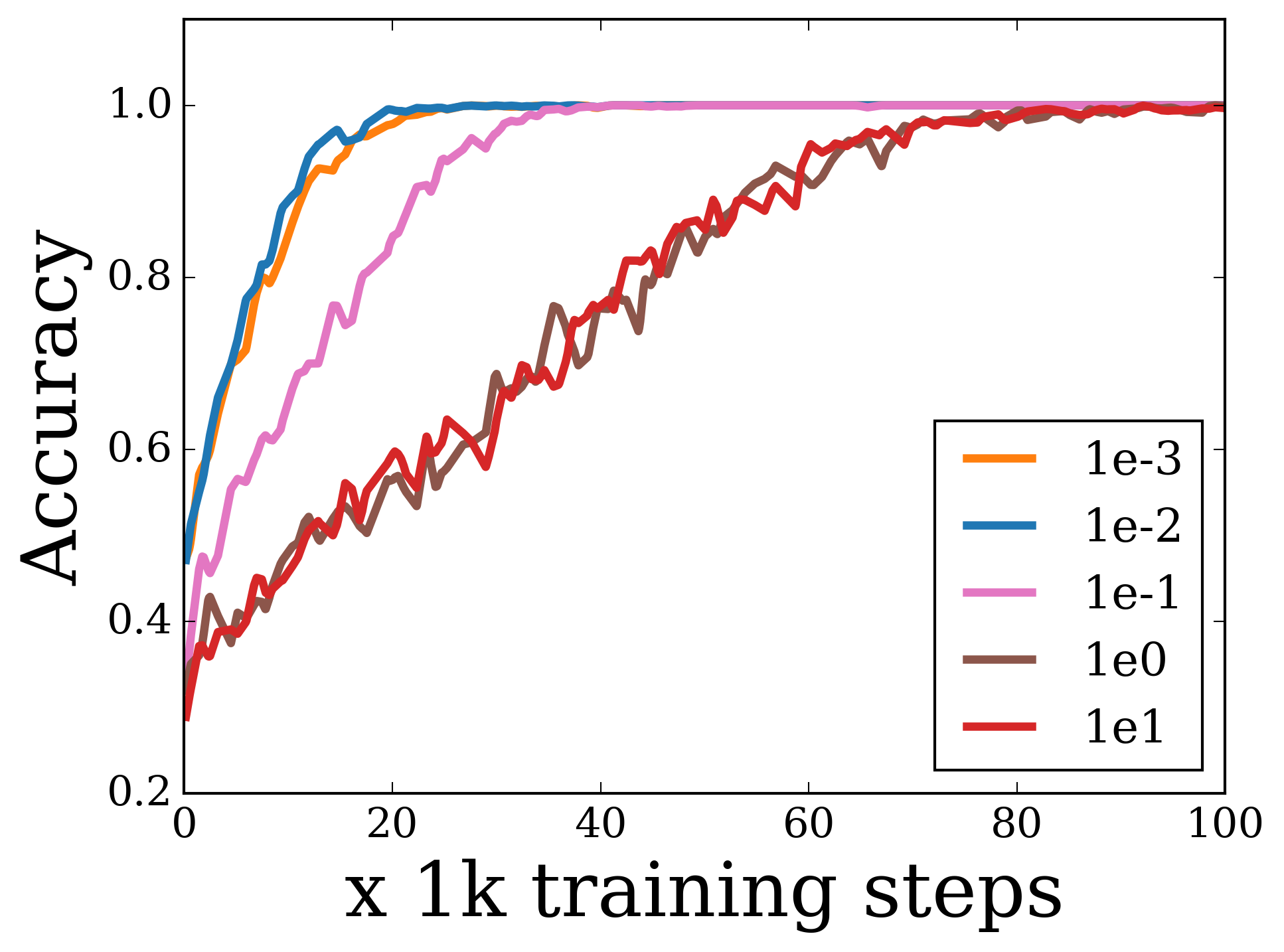}
    \caption{Train accuracy}
  \end{subfigure}
  \begin{subfigure}[b]{0.32\textwidth}
    \includegraphics[width=\textwidth]{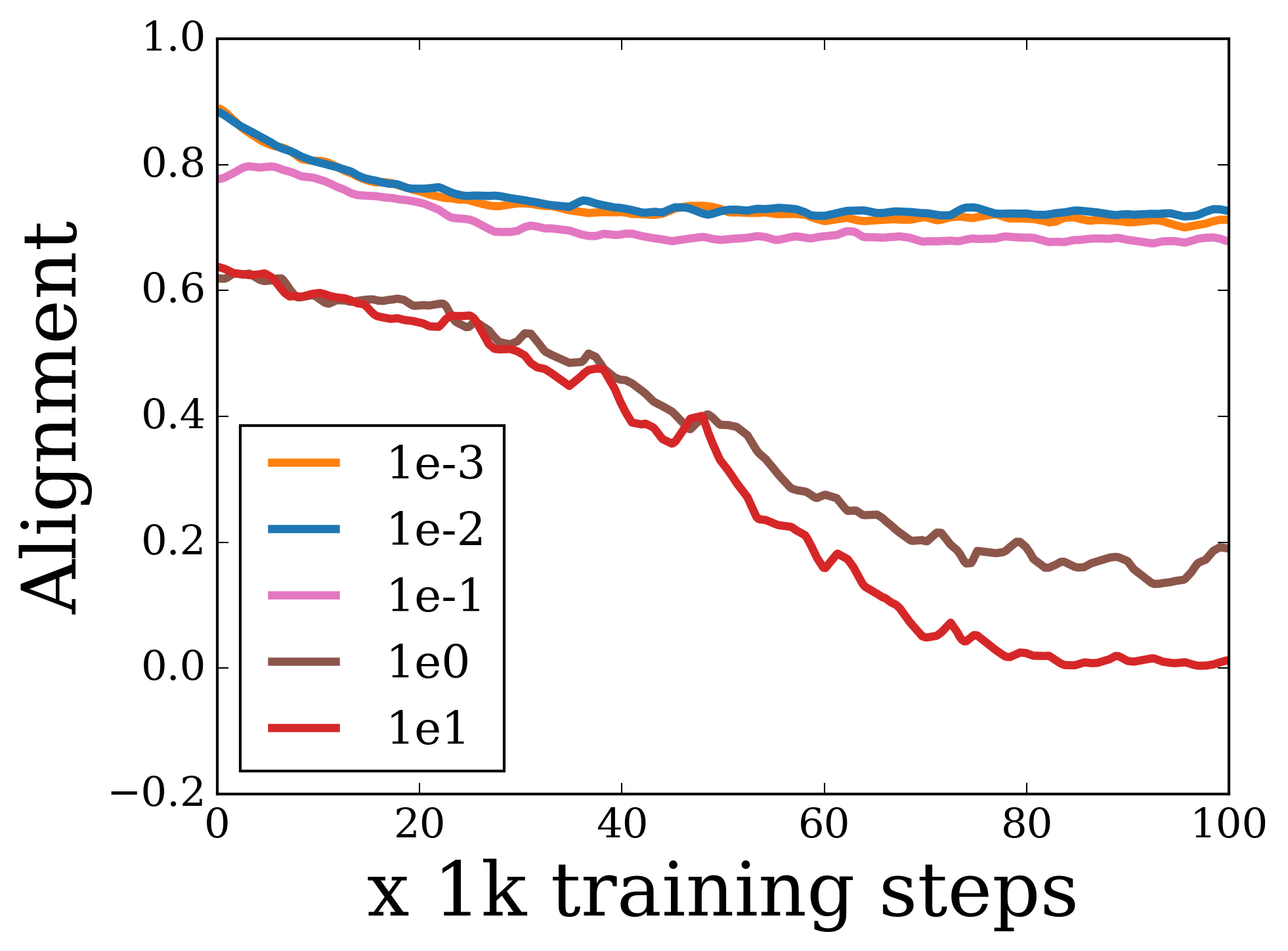}
    \caption{Logit gradients}
  \end{subfigure}
  \caption{\small Additional plots when using ReLU activation function with softmax cross-entropy on CIFAR-10 dataset.}
  \label{app:fig:relu_softmax}
\end{figure}
\FloatBarrier

\begin{figure}[h!]
  \centering
  \begin{subfigure}[b]{0.32\textwidth}
    \includegraphics[width=\textwidth]{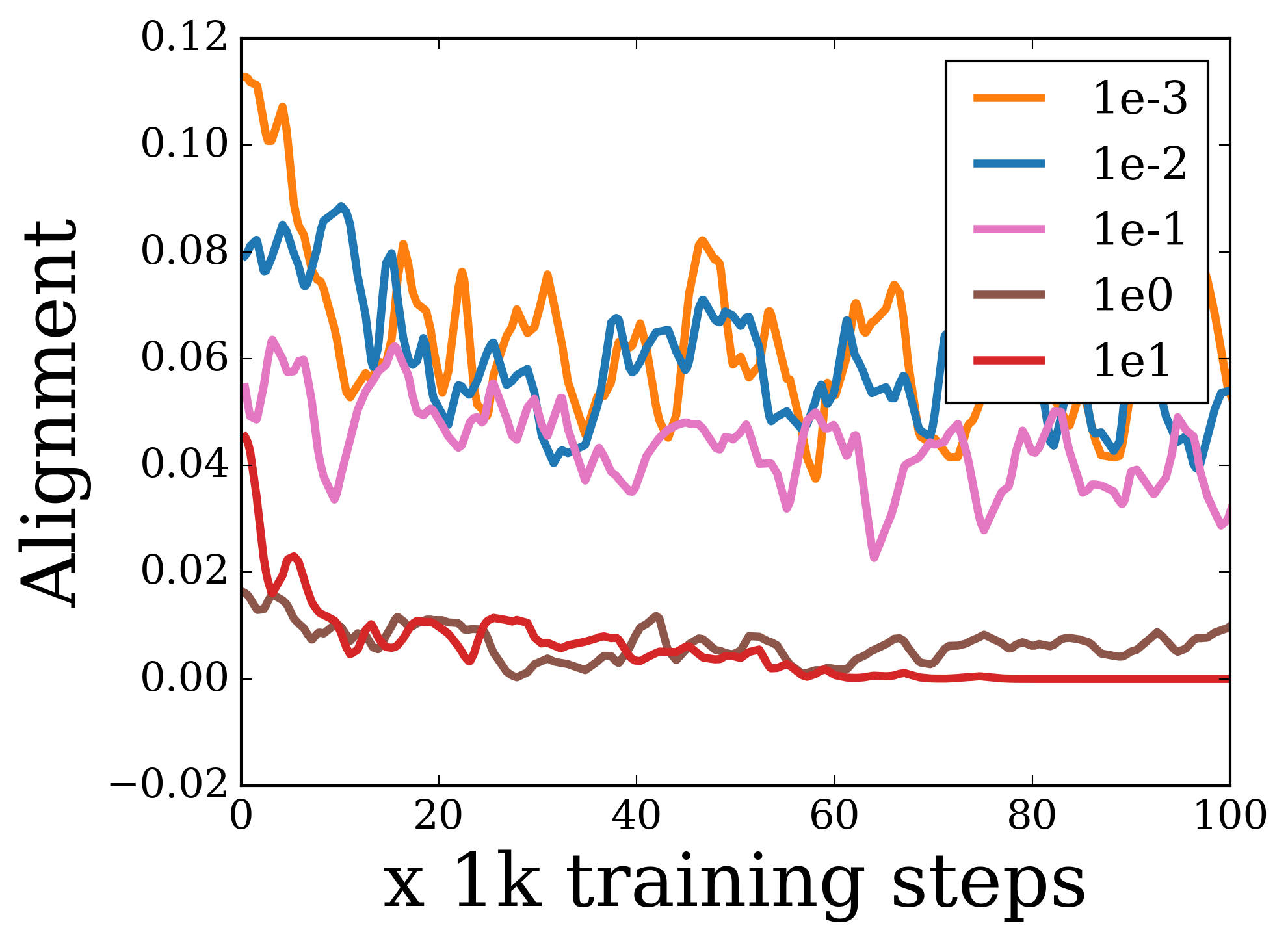}
    \caption{Hidden layer gradients}
  \end{subfigure}
  \begin{subfigure}[b]{0.32\textwidth}
    \includegraphics[width=\textwidth]{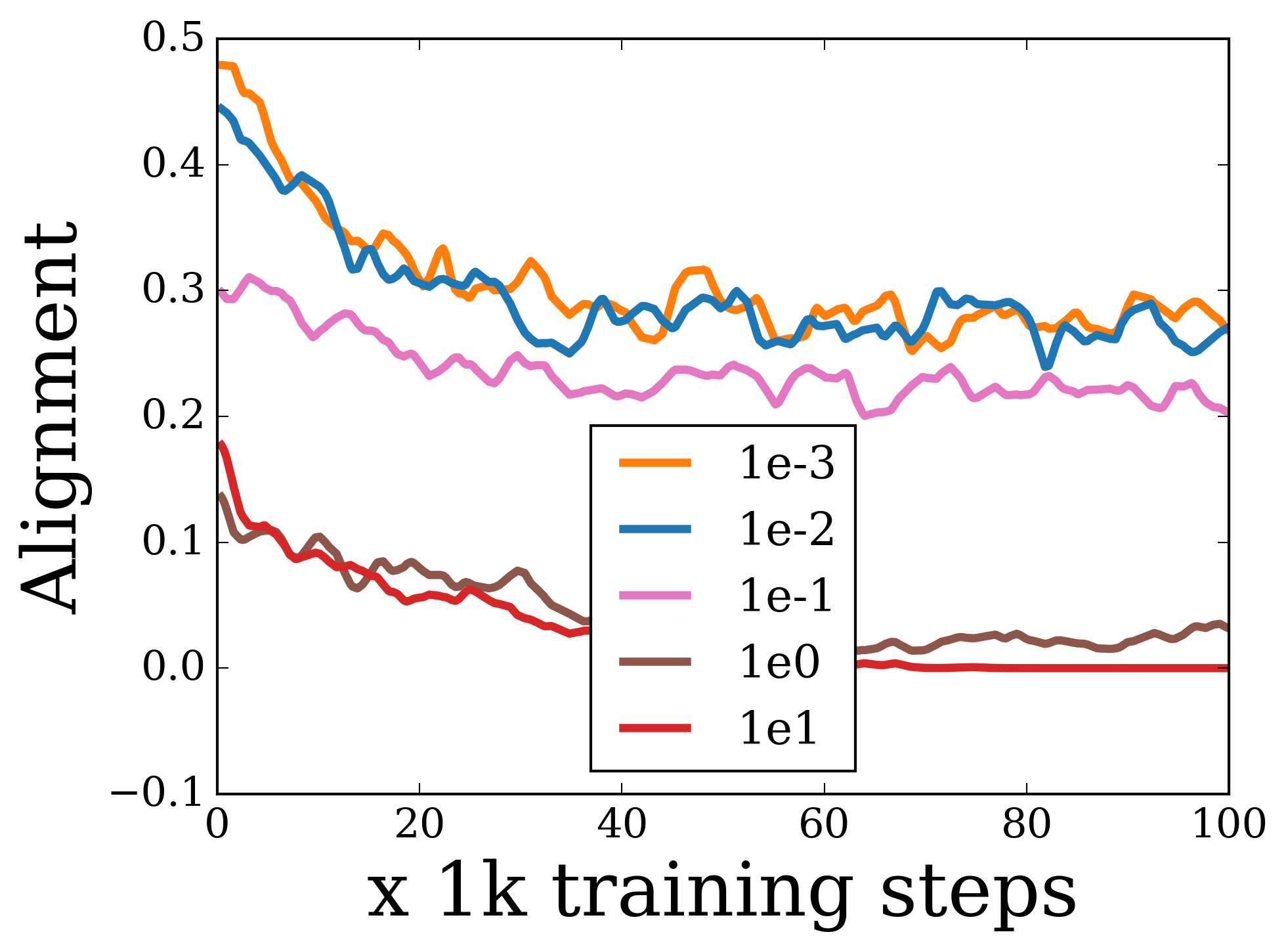}
    \caption{Top layer gradients}
  \end{subfigure}
  \begin{subfigure}[b]{0.32\textwidth}
    \includegraphics[width=\textwidth]{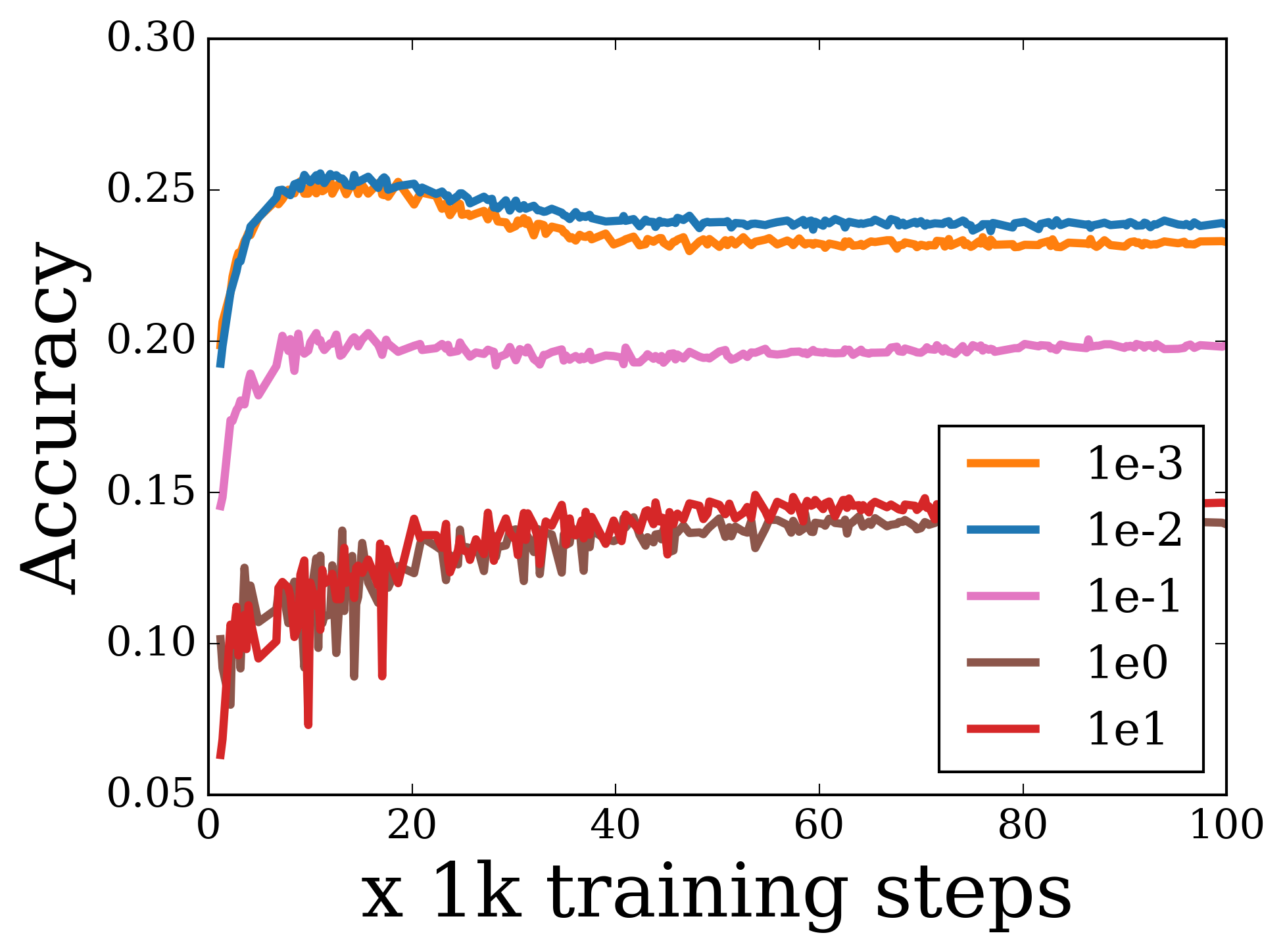}
    \caption{Test accuracy}
  \end{subfigure}
  \begin{subfigure}[b]{0.32\textwidth}
    \includegraphics[width=\textwidth]{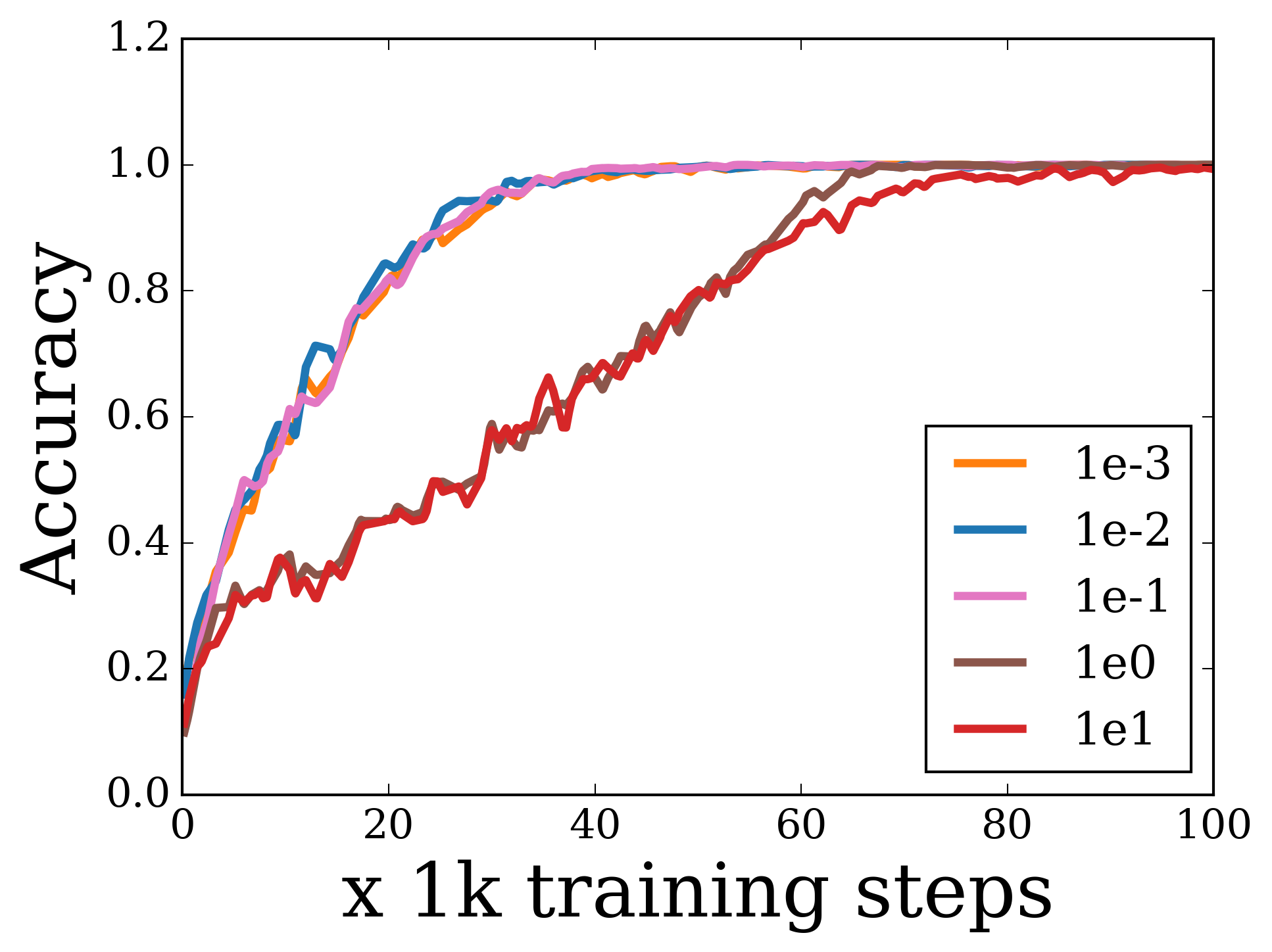}
    \caption{Train accuracy}
  \end{subfigure}
  \begin{subfigure}[b]{0.32\textwidth}
    \includegraphics[width=\textwidth]{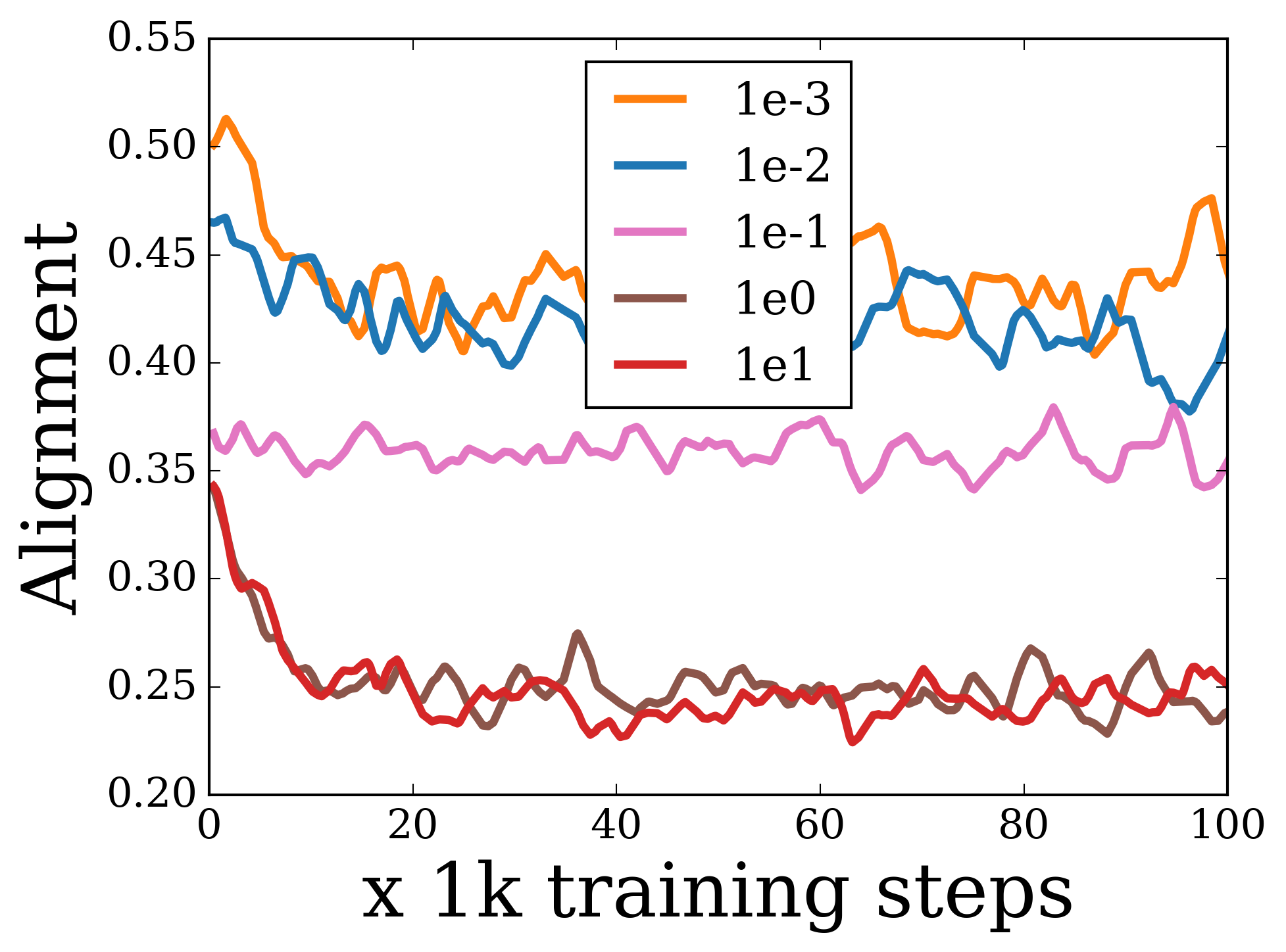}
    \caption{Hidden layer activations}
  \end{subfigure}
\begin{subfigure}[b]{0.32\textwidth}
    \includegraphics[width=\textwidth]{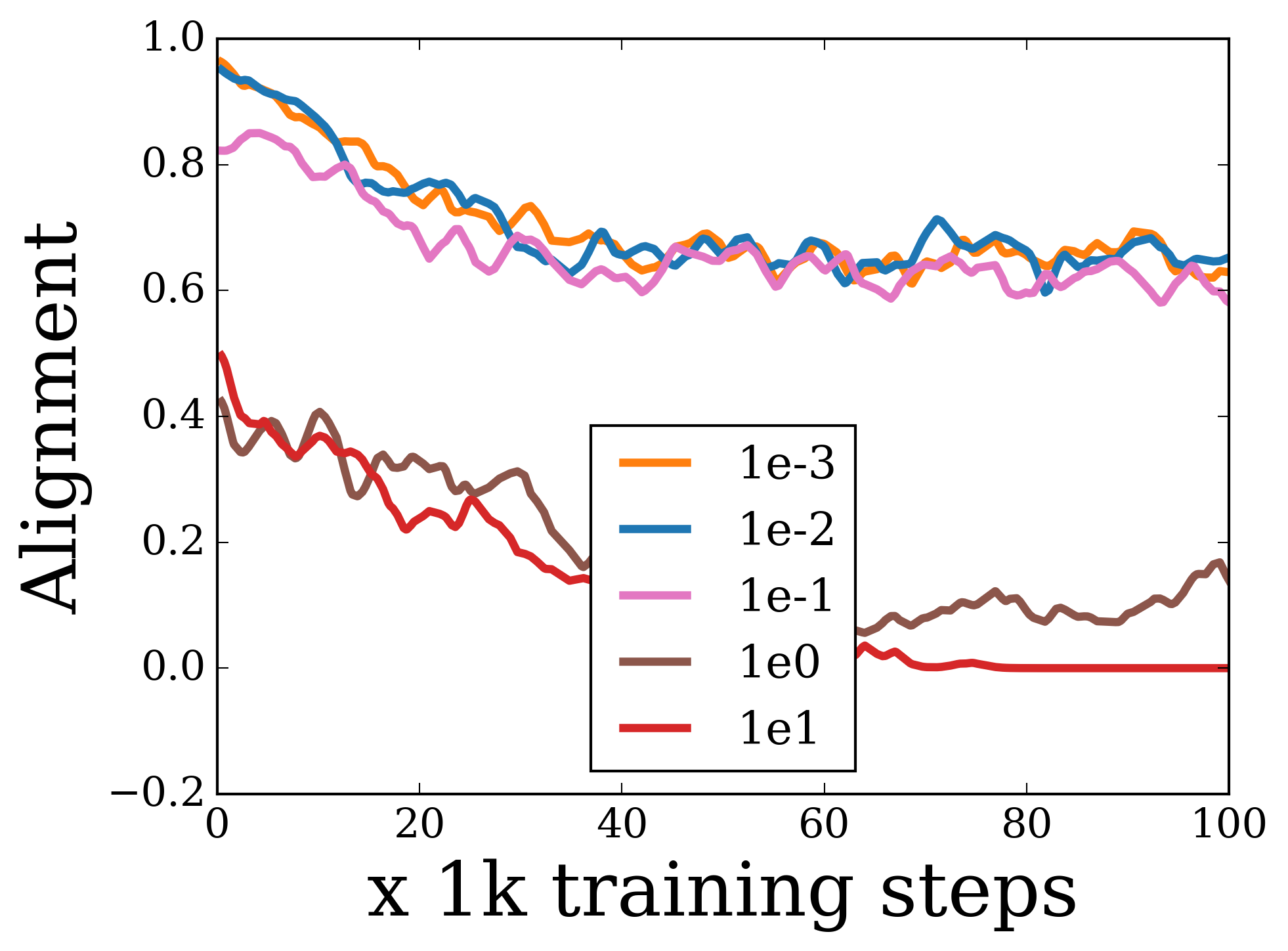}
    \caption{Logit gradients}
  \end{subfigure}
  \caption{\small Results when using ReLU activation function with softmax cross-entropy on CIFAR-100 dataset.}
  \label{app:fig:relu_softmax_cifar100}
\end{figure}
\FloatBarrier

\begin{figure}[h!]
  \centering
  \begin{subfigure}[b]{0.32\textwidth}
    \includegraphics[width=\textwidth]{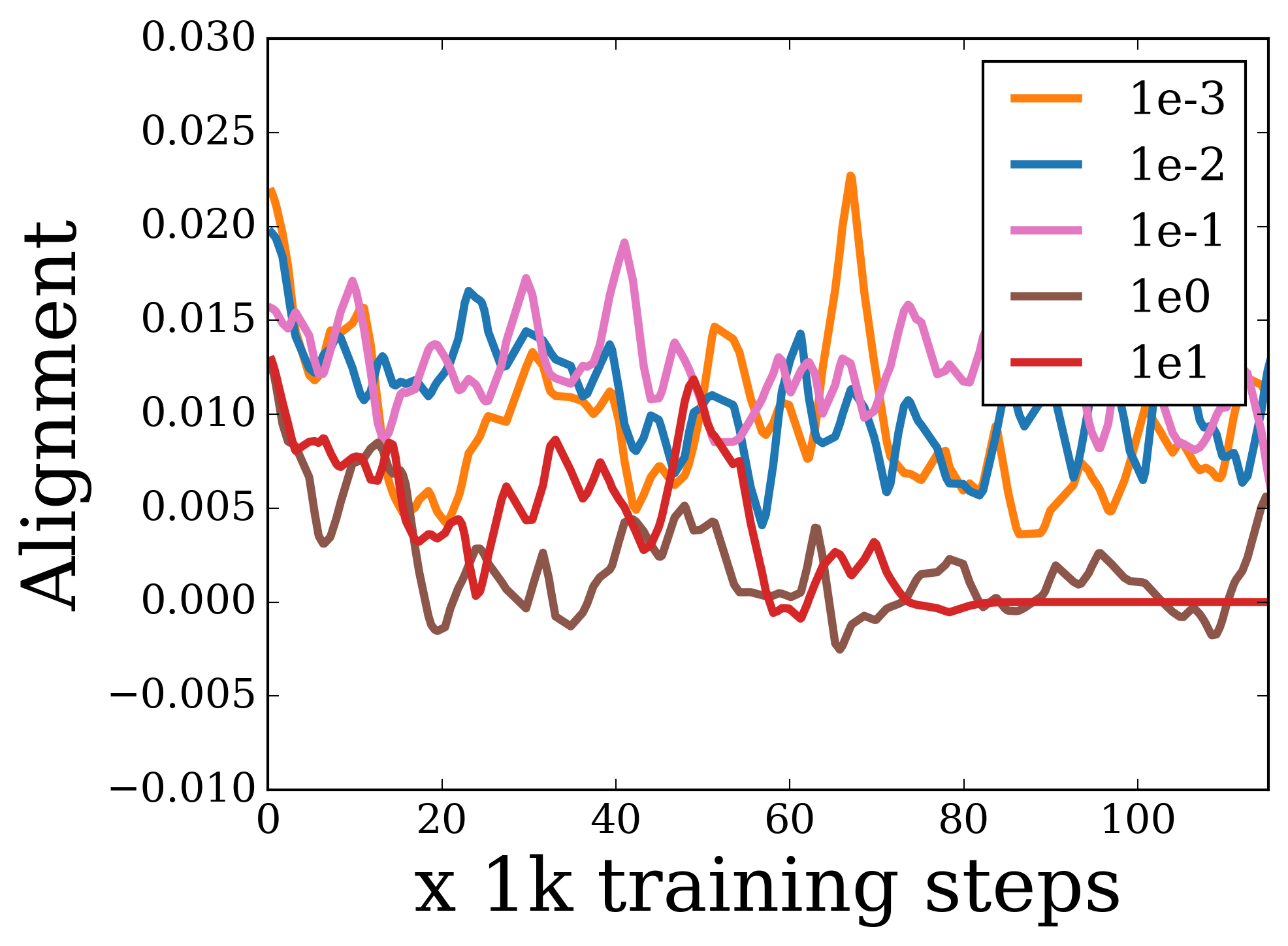}
    \caption{Hidden layer gradients}
  \end{subfigure}
  \begin{subfigure}[b]{0.32\textwidth}
    \includegraphics[width=\textwidth]{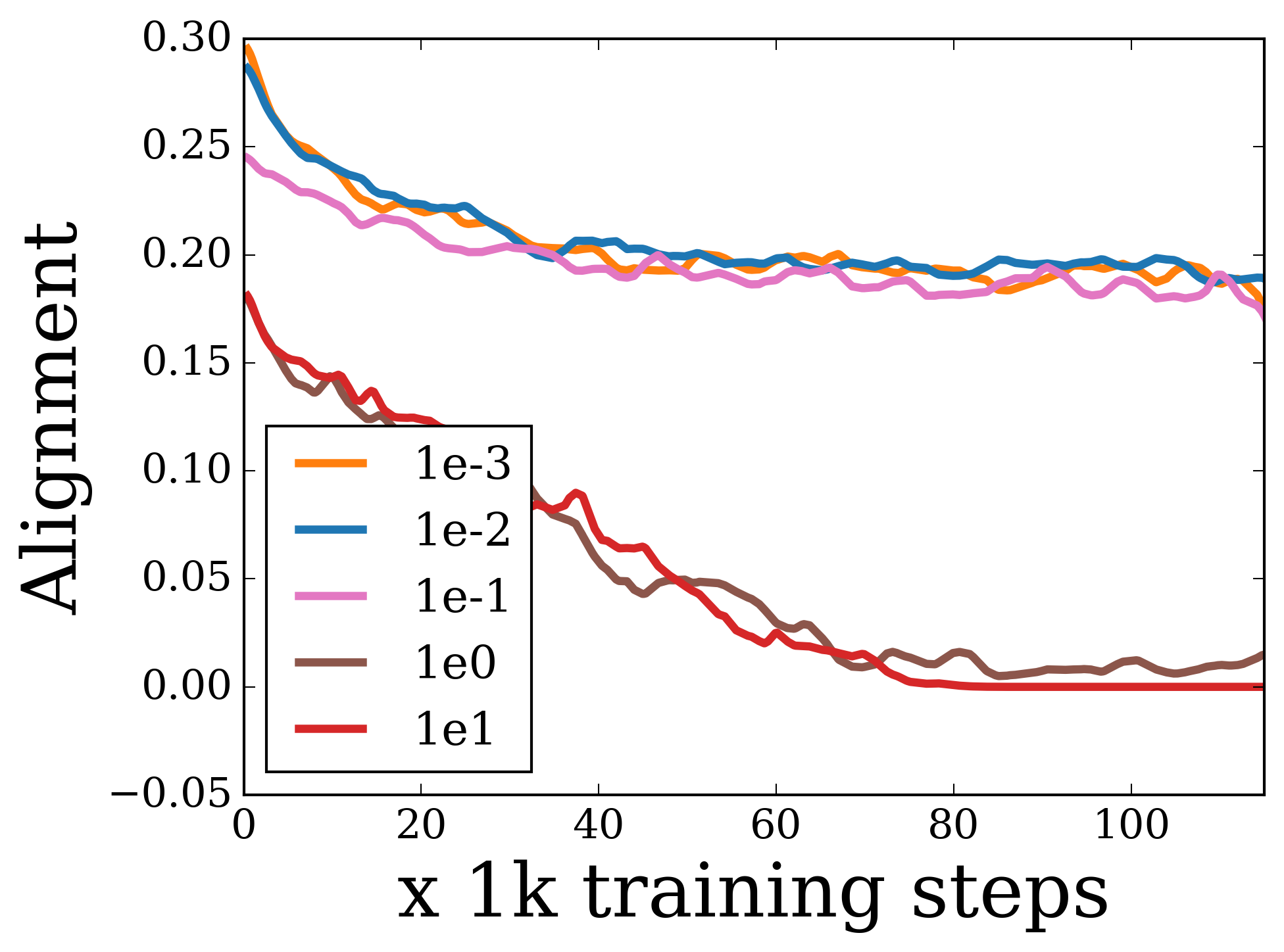}
    \caption{Top layer gradients}
  \end{subfigure}
  \begin{subfigure}[b]{0.32\textwidth}
    \includegraphics[width=\textwidth]{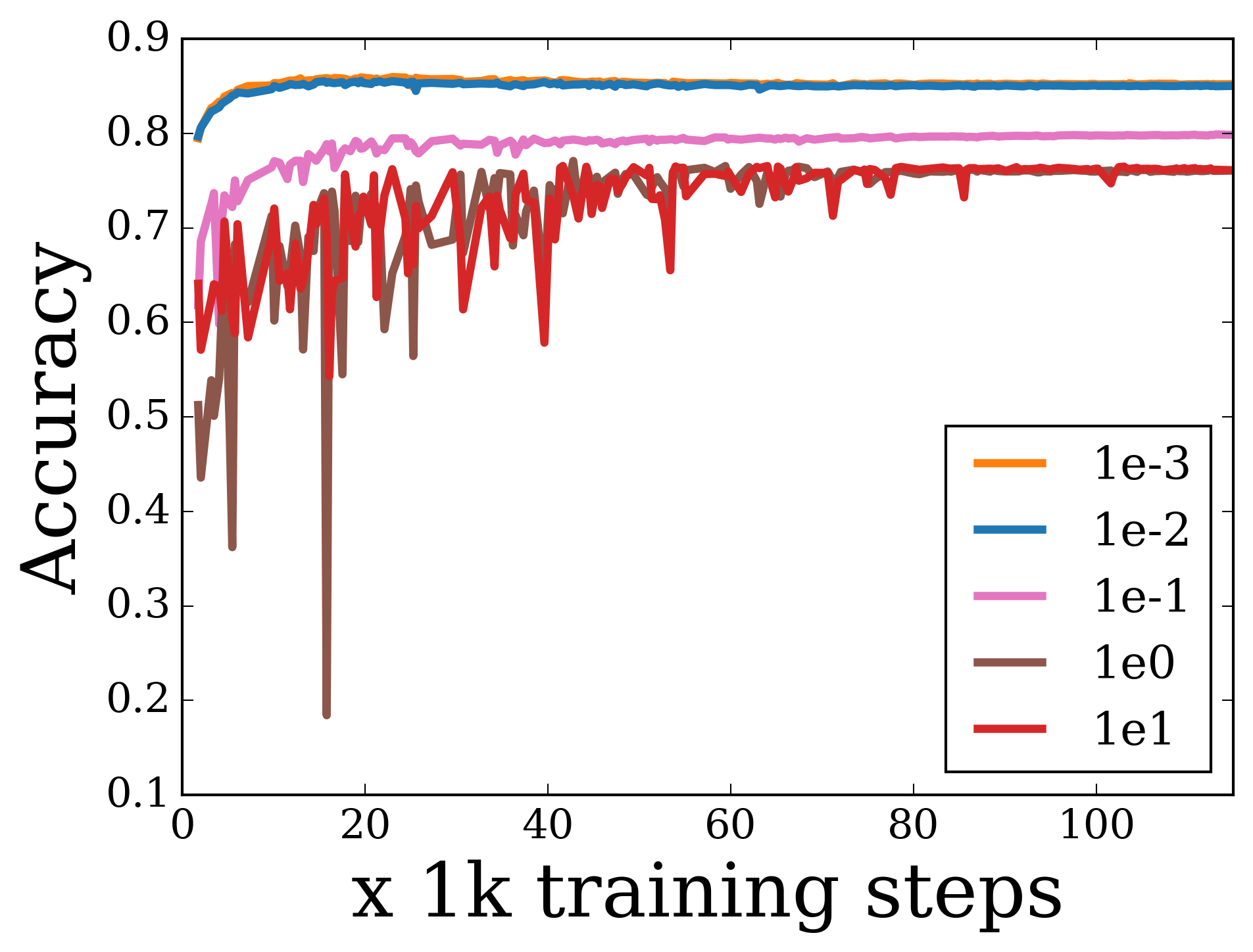}
    \caption{Test accuracy}
  \end{subfigure}
  \begin{subfigure}[b]{0.32\textwidth}
    \includegraphics[width=\textwidth]{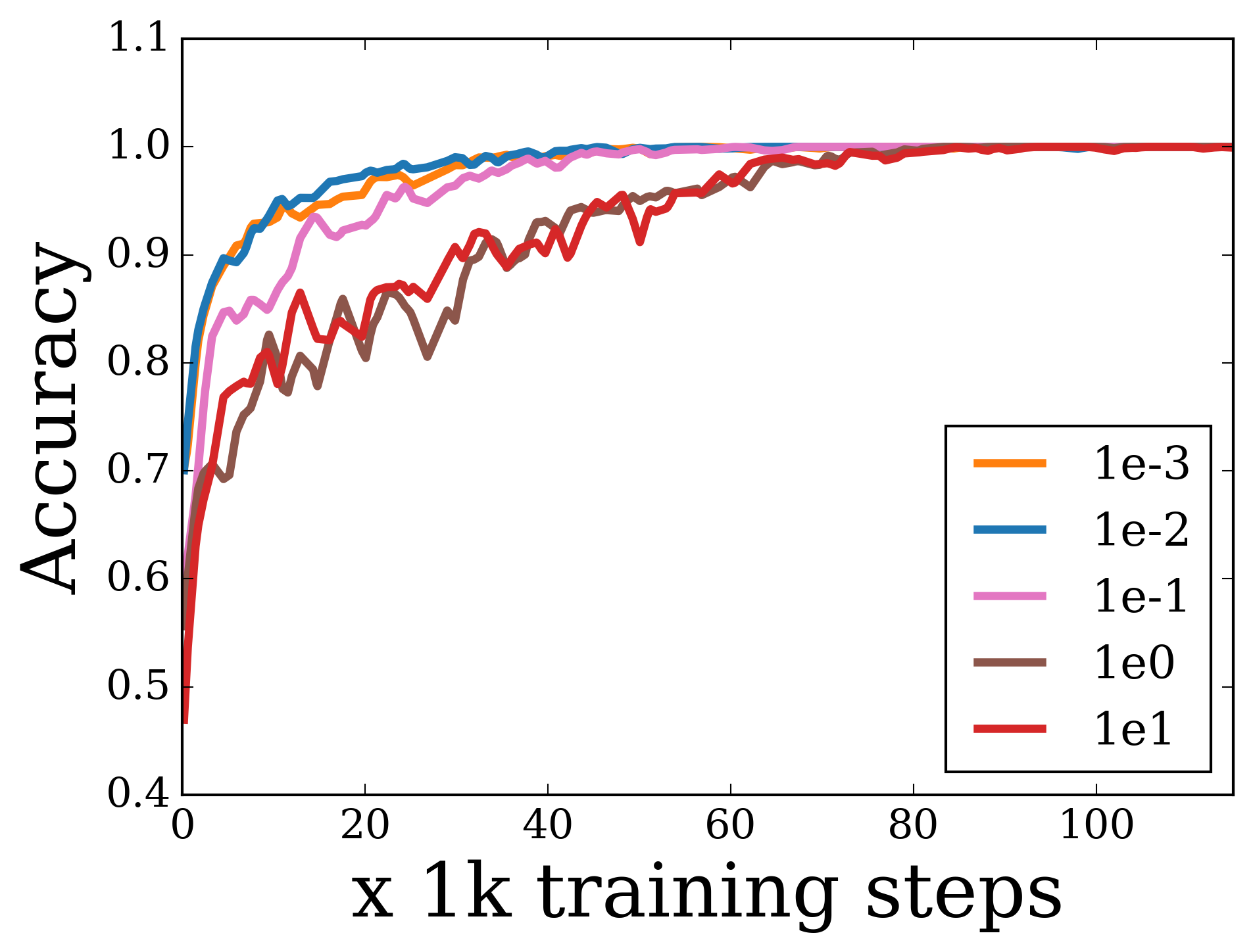}
    \caption{Train accuracy}
  \end{subfigure}
  \begin{subfigure}[b]{0.32\textwidth}
    \includegraphics[width=\textwidth]{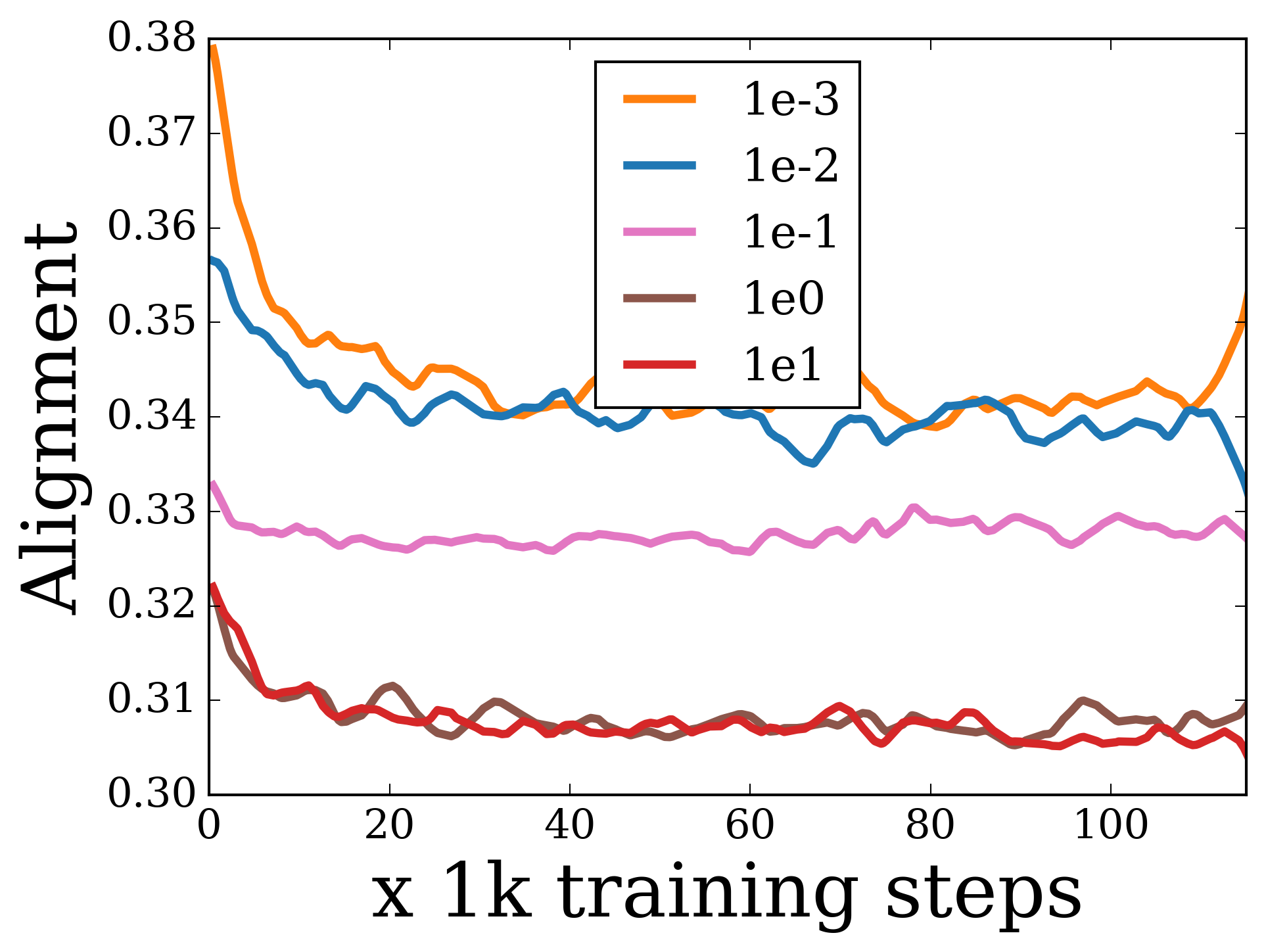}
    \caption{Hidden layer activations}
  \end{subfigure}
\begin{subfigure}[b]{0.32\textwidth}
    \includegraphics[width=\textwidth]{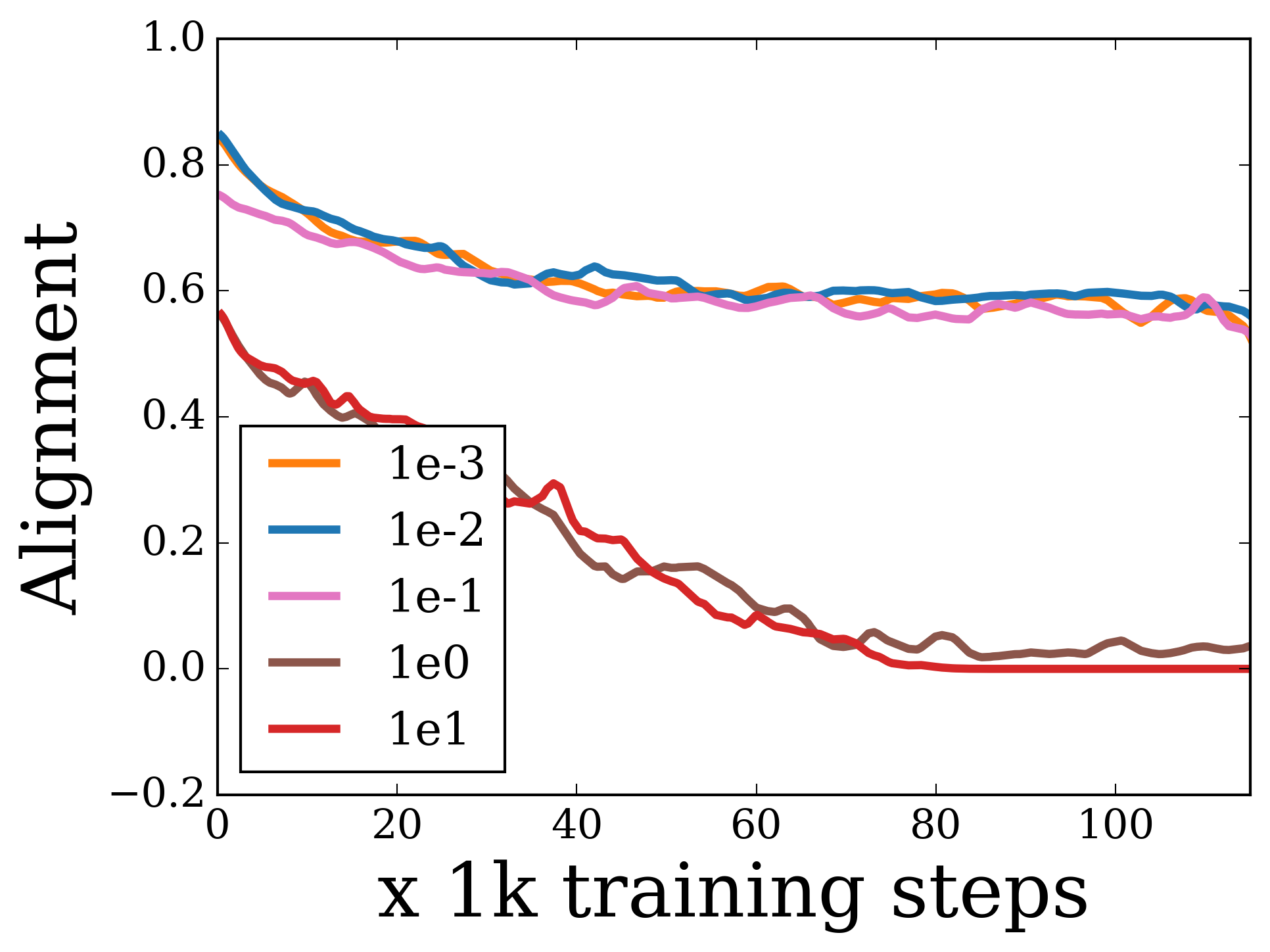}
    \caption{Logit gradients}
  \end{subfigure}
  \caption{\small Results when using ReLU activation function with softmax cross-entropy on SVHN dataset.}
  \label{app:fig:relu_softmax_svhn}
\end{figure}
\FloatBarrier

\subsection{Hinge loss}
\begin{figure}[h!]
  \centering
  \begin{subfigure}[b]{0.32\textwidth}
    \includegraphics[width=\textwidth]{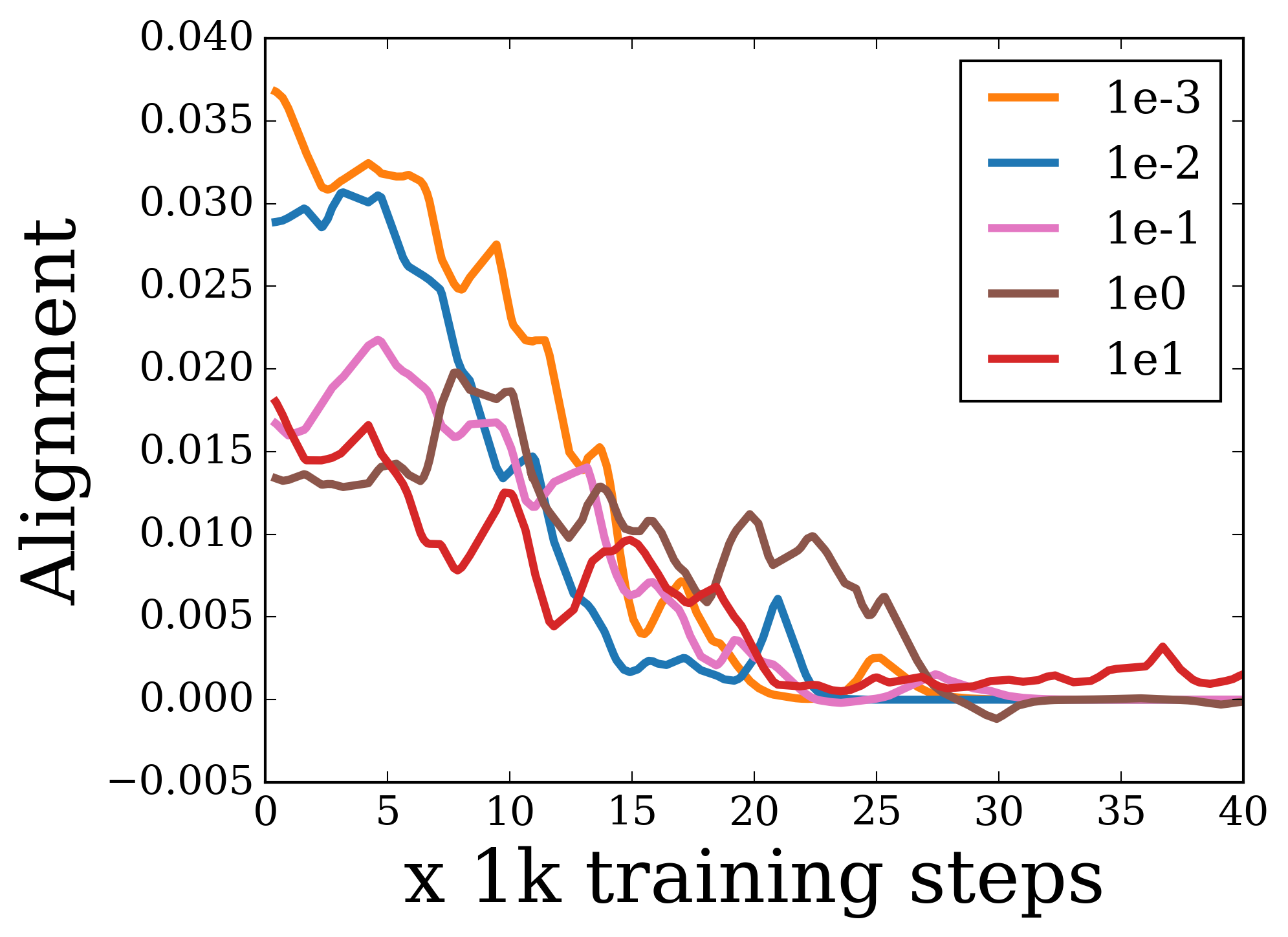}
    \caption{Hidden layer gradients}
  \end{subfigure}
  \begin{subfigure}[b]{0.32\textwidth}
    \includegraphics[width=\textwidth]{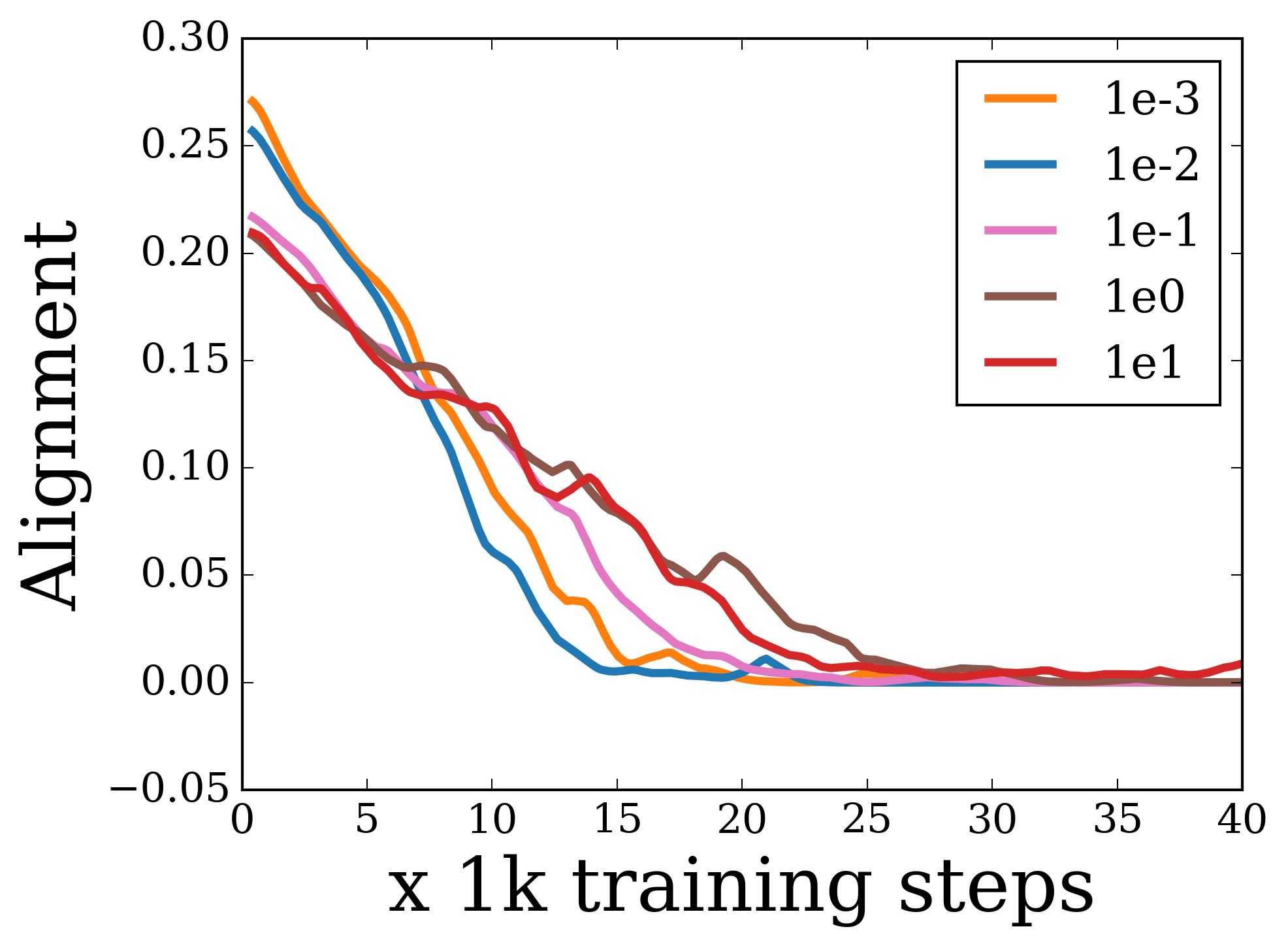}
    \caption{Top layer gradients}
  \end{subfigure}
  \begin{subfigure}[b]{0.32\textwidth}
    \includegraphics[width=\textwidth]{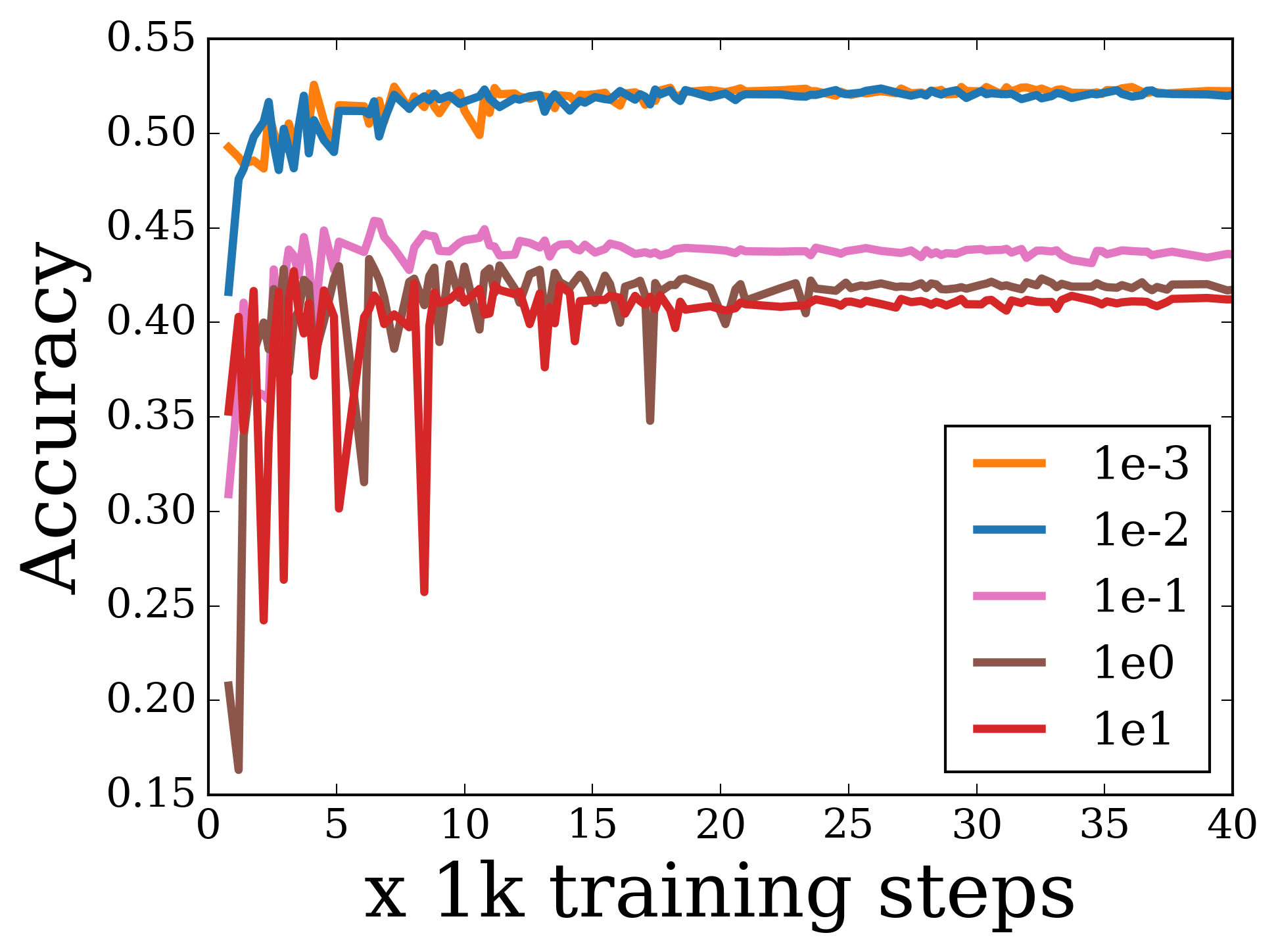}
    \caption{Test accuracy}
  \end{subfigure}
  \begin{subfigure}[b]{0.32\textwidth}
    \includegraphics[width=\textwidth]{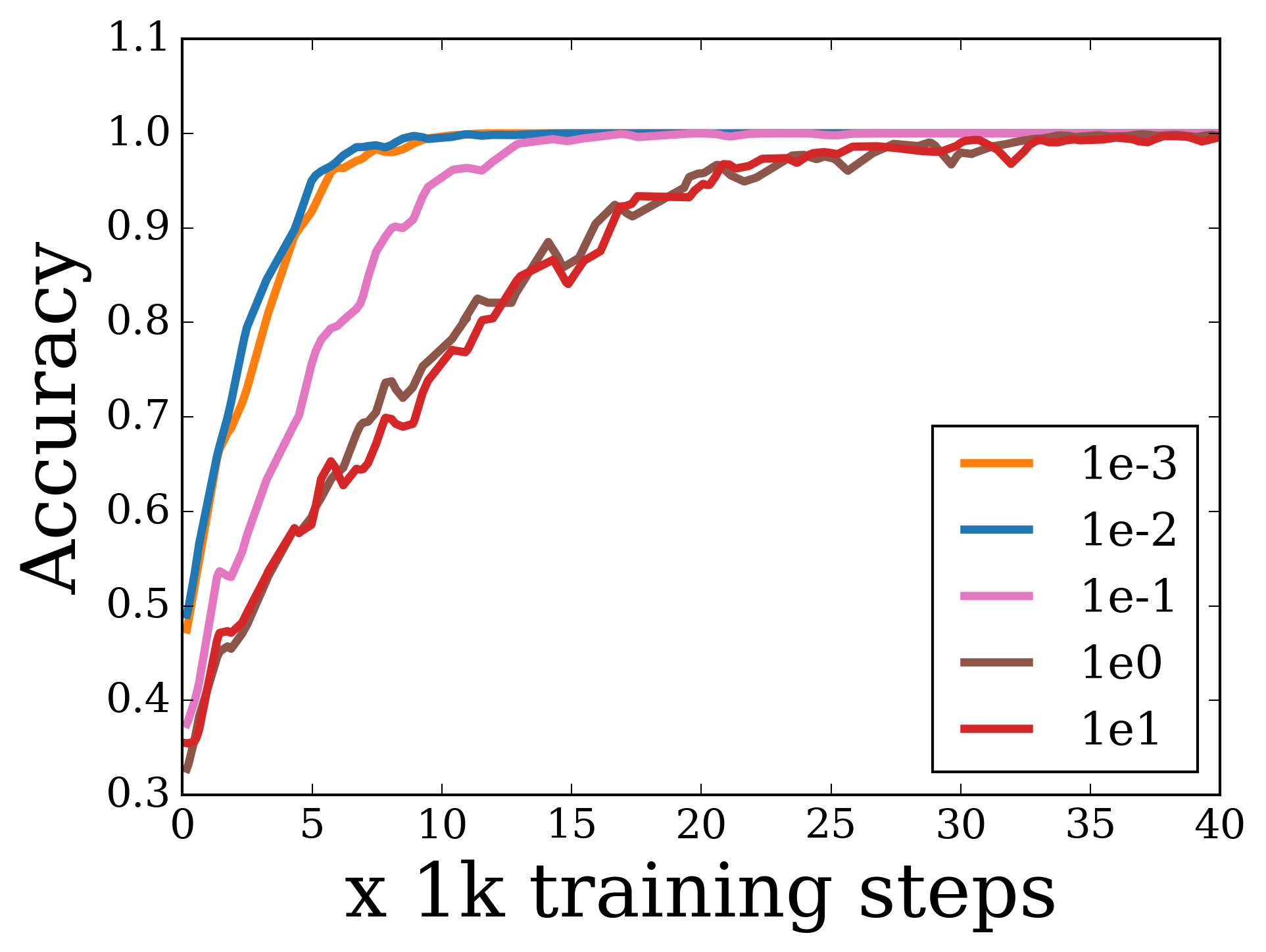}
    \caption{Train accuracy}
  \end{subfigure}
  \begin{subfigure}[b]{0.32\textwidth}
    \includegraphics[width=\textwidth]{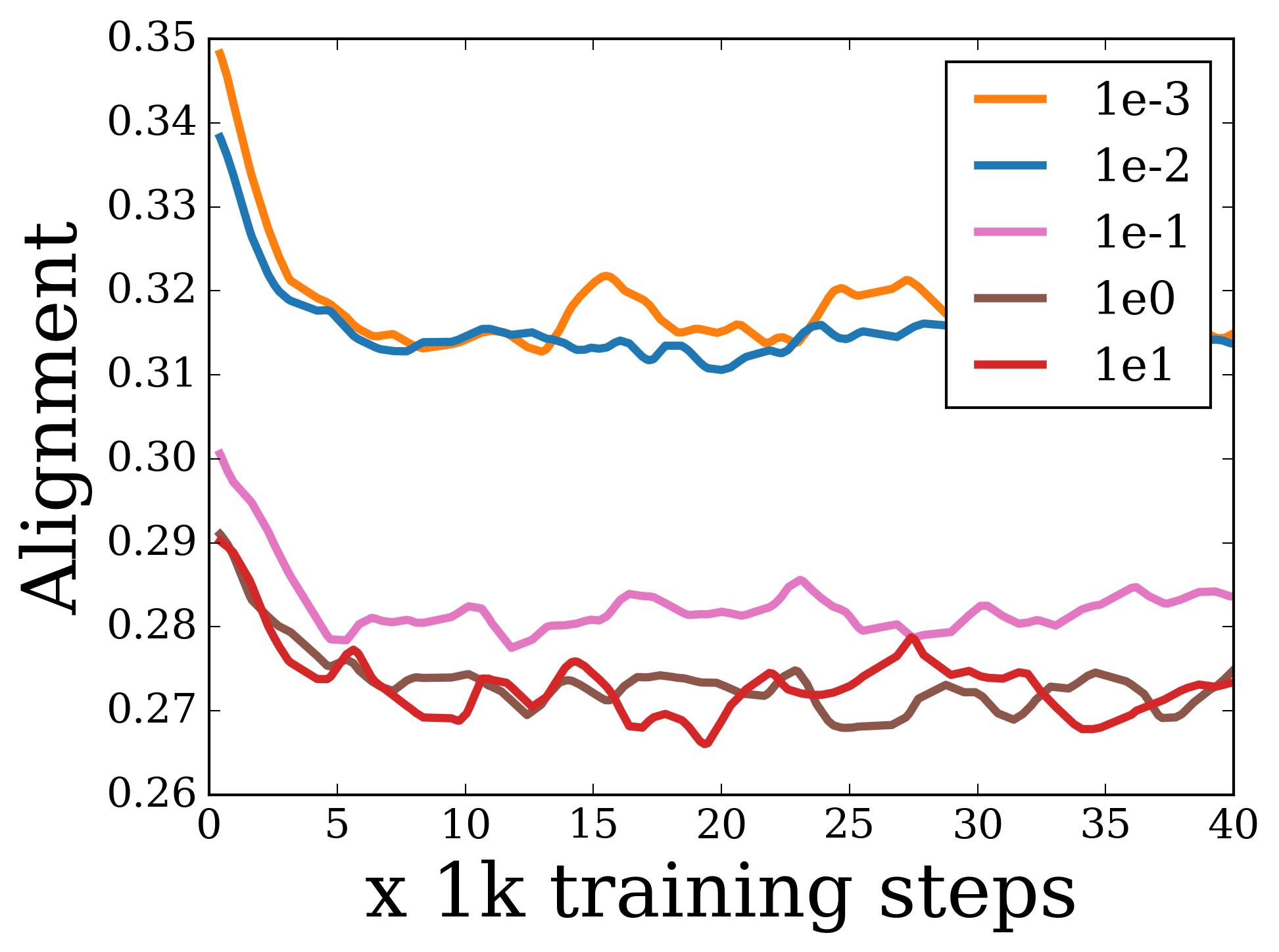}
    \caption{Hidden layer activations}
  \end{subfigure}
\begin{subfigure}[b]{0.32\textwidth}
    \includegraphics[width=\textwidth]{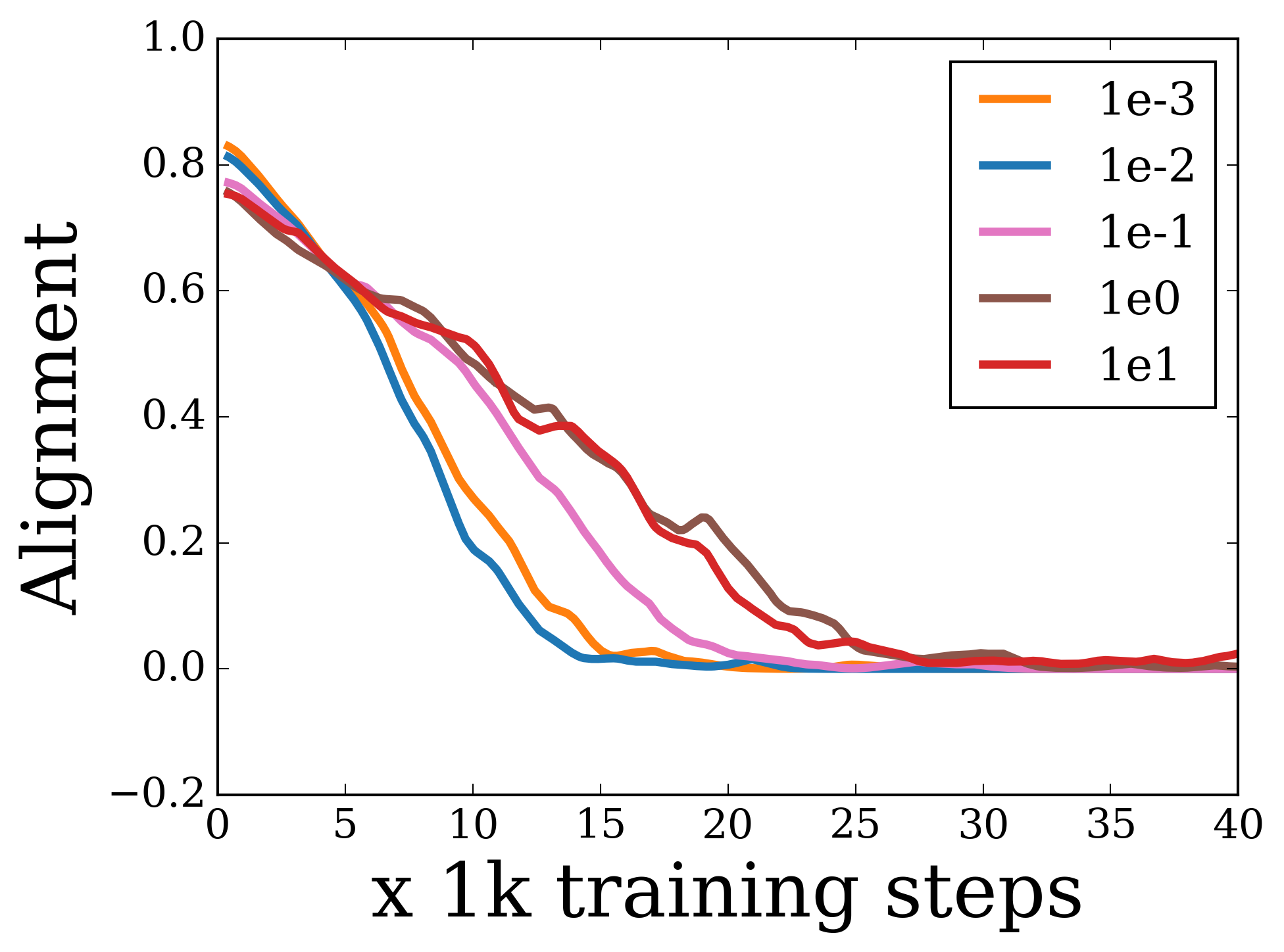}
    \caption{Logit gradients}
  \end{subfigure}
  \caption{\small Results when using ReLU activation function with hinge loss on CIFAR-10 dataset.}
  \label{app:fig:relu_hinge_cifar}
\end{figure}
\FloatBarrier

\begin{figure}[h!]
  \centering
  \begin{subfigure}[b]{0.32\textwidth}
    \includegraphics[width=\textwidth]{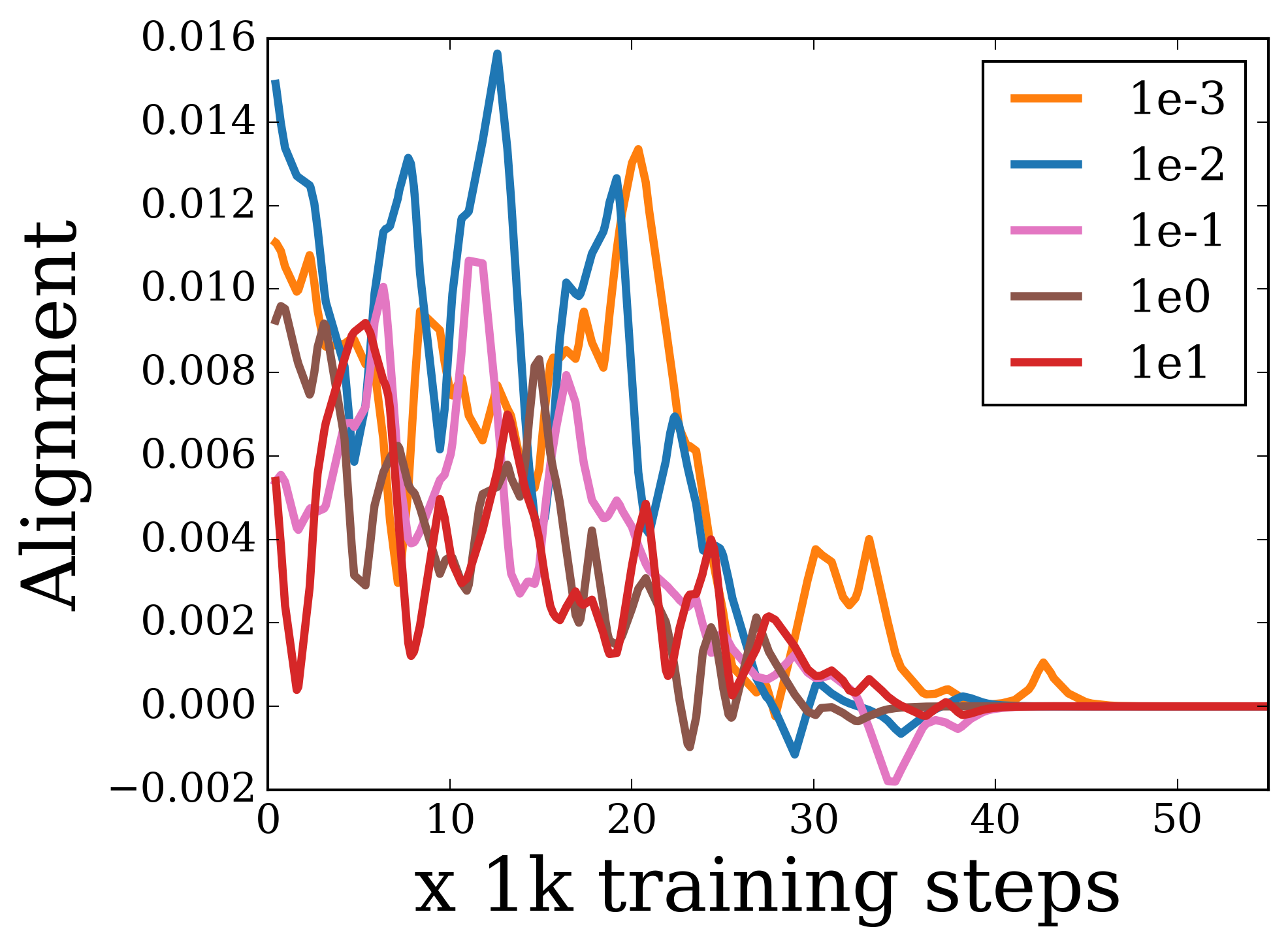}
    \caption{Hidden layer gradients}
  \end{subfigure}
  \begin{subfigure}[b]{0.32\textwidth}
    \includegraphics[width=\textwidth]{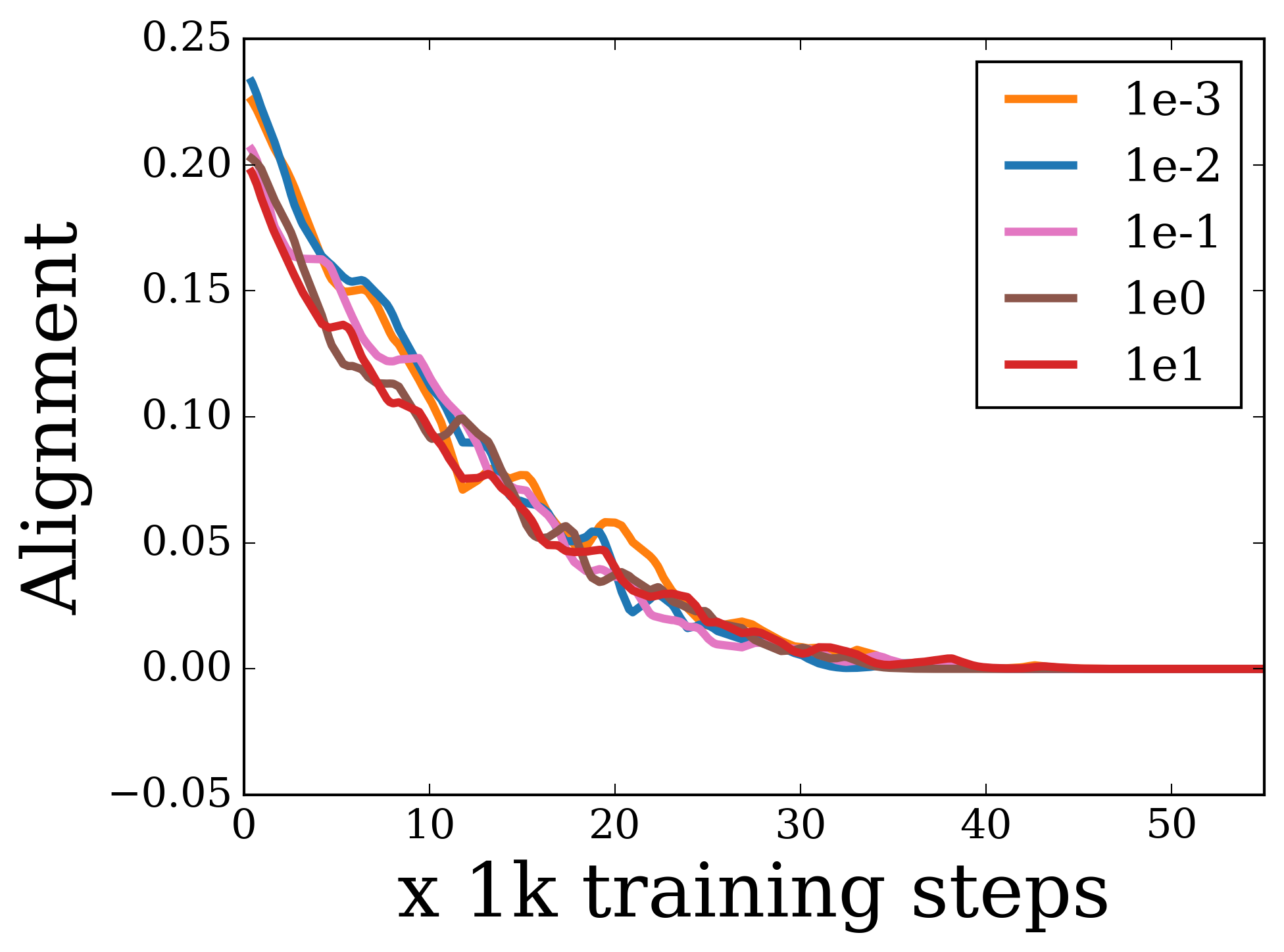}
    \caption{Top layer gradients}
  \end{subfigure}
  \begin{subfigure}[b]{0.32\textwidth}
    \includegraphics[width=\textwidth]{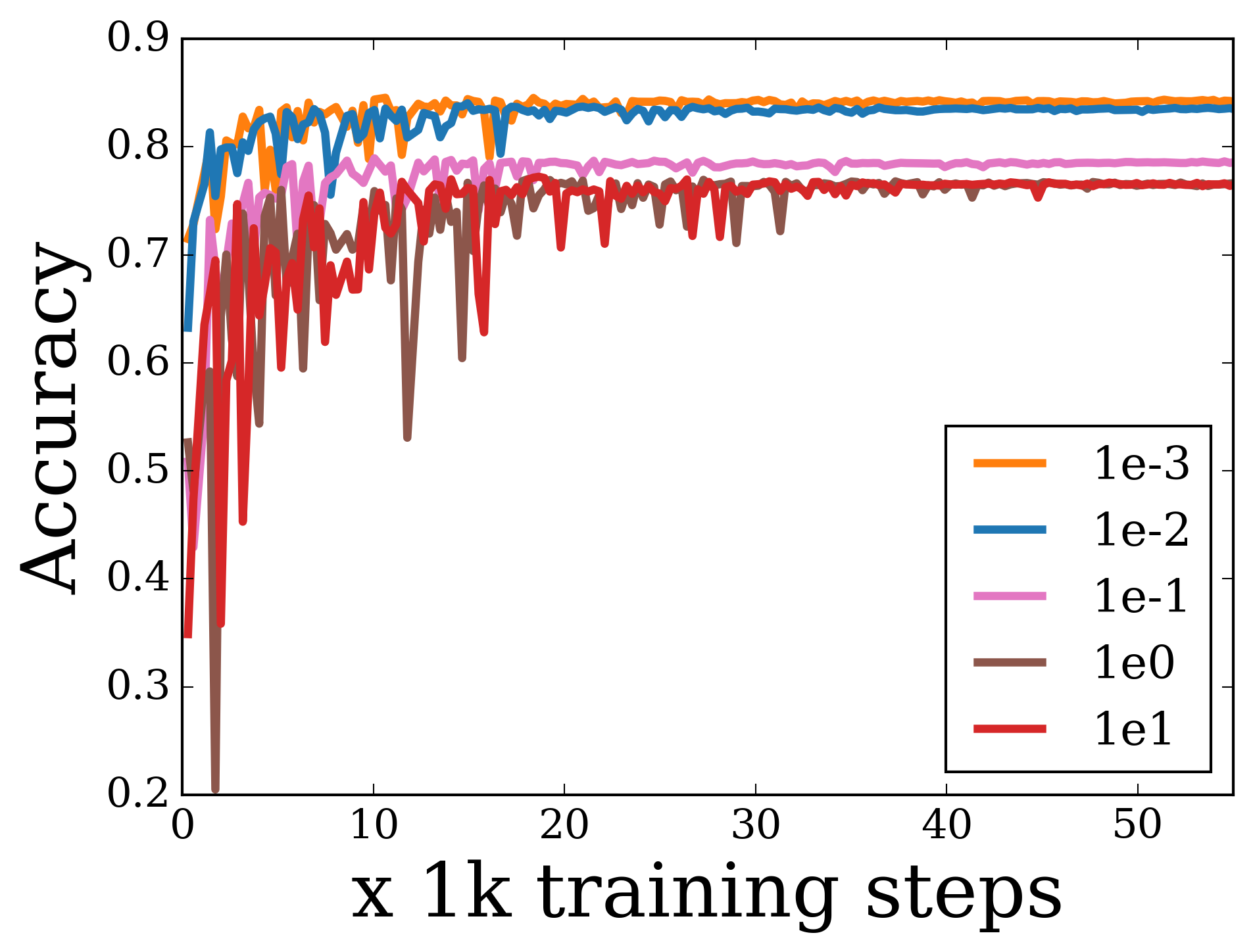}
    \caption{Test accuracy}
  \end{subfigure}
  \begin{subfigure}[b]{0.32\textwidth}
    \includegraphics[width=\textwidth]{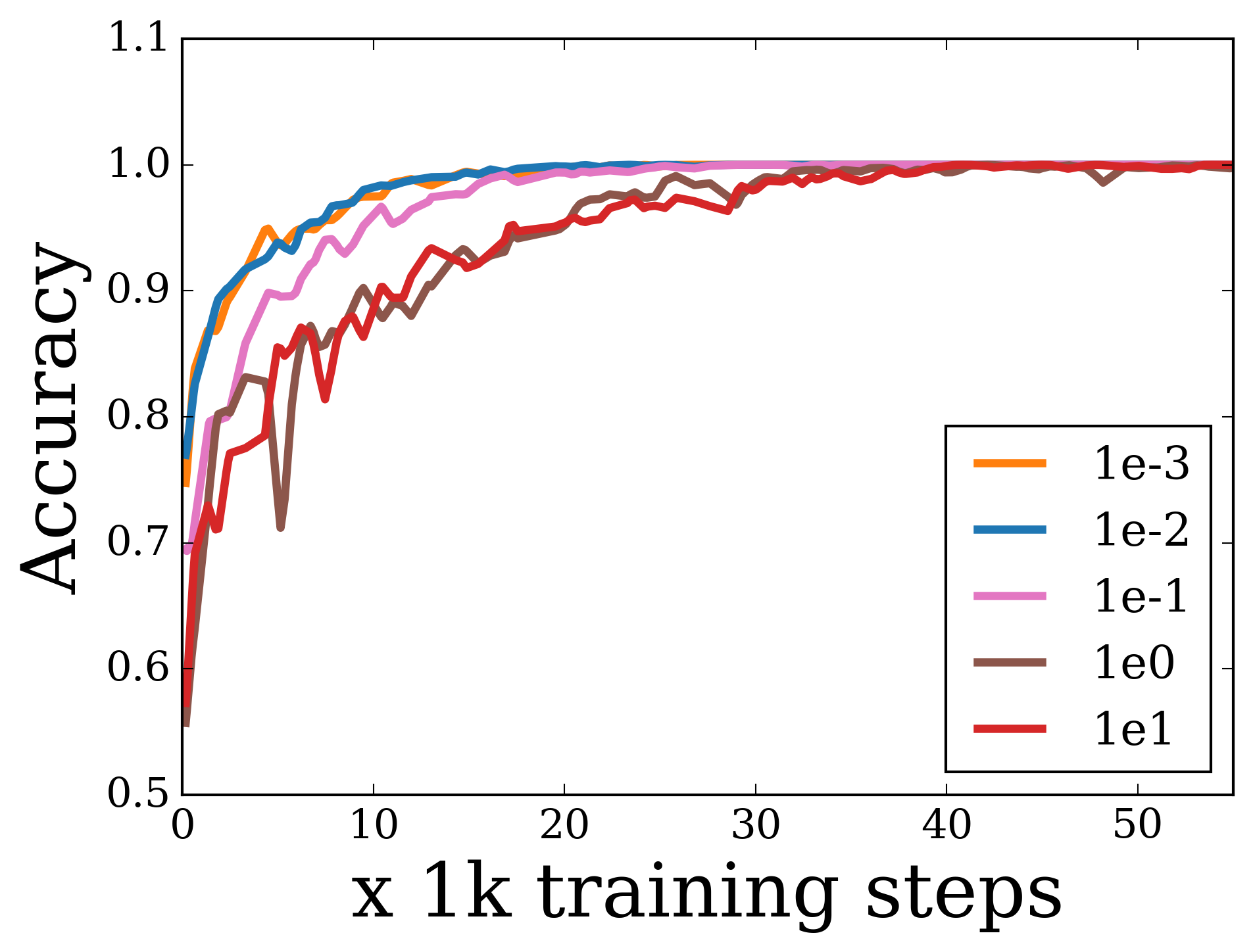}
    \caption{Train accuracy}
  \end{subfigure}
  \begin{subfigure}[b]{0.32\textwidth}
    \includegraphics[width=\textwidth]{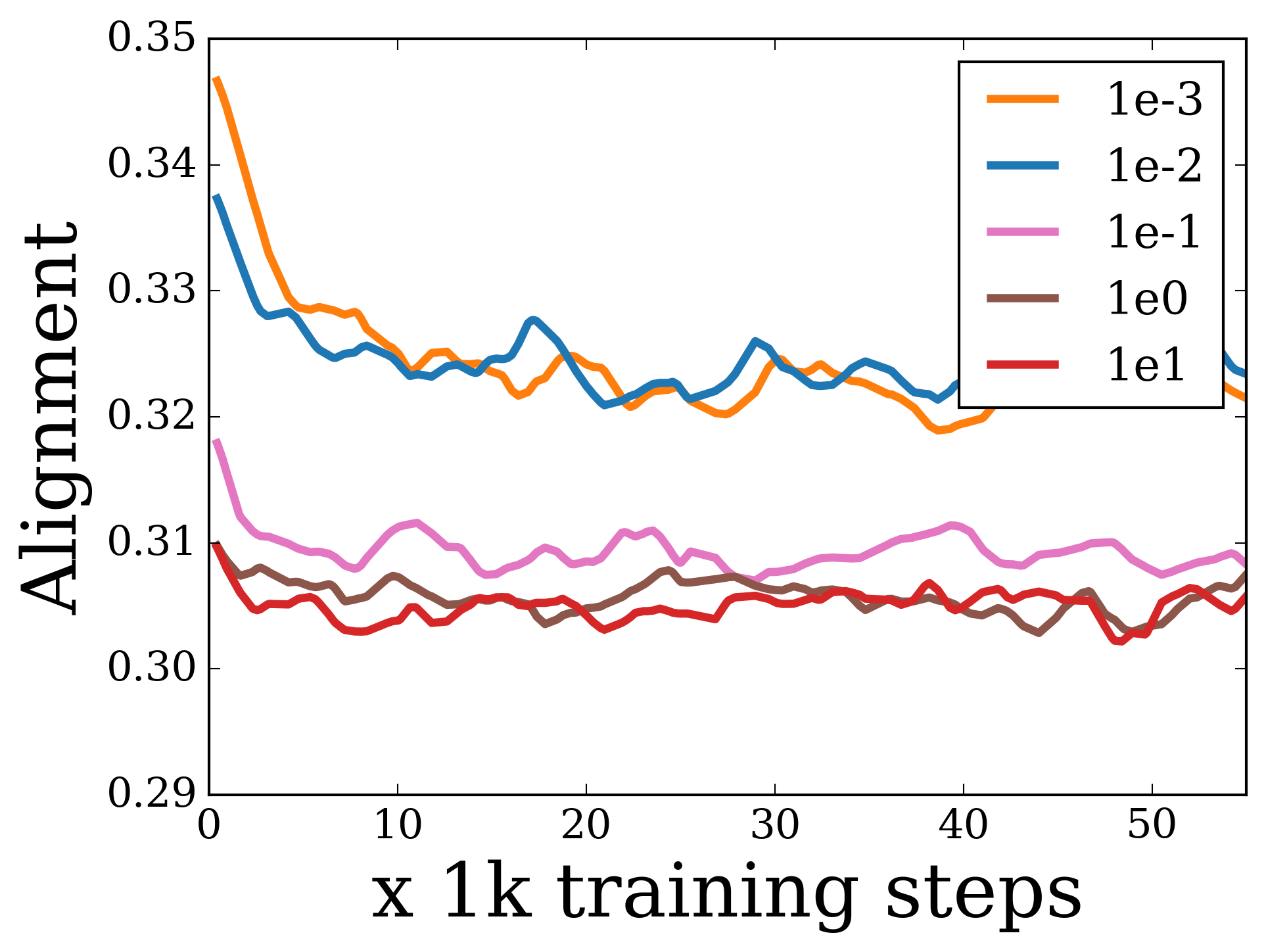}
    \caption{Hidden layer activations}
  \end{subfigure}
\begin{subfigure}[b]{0.32\textwidth}
    \includegraphics[width=\textwidth]{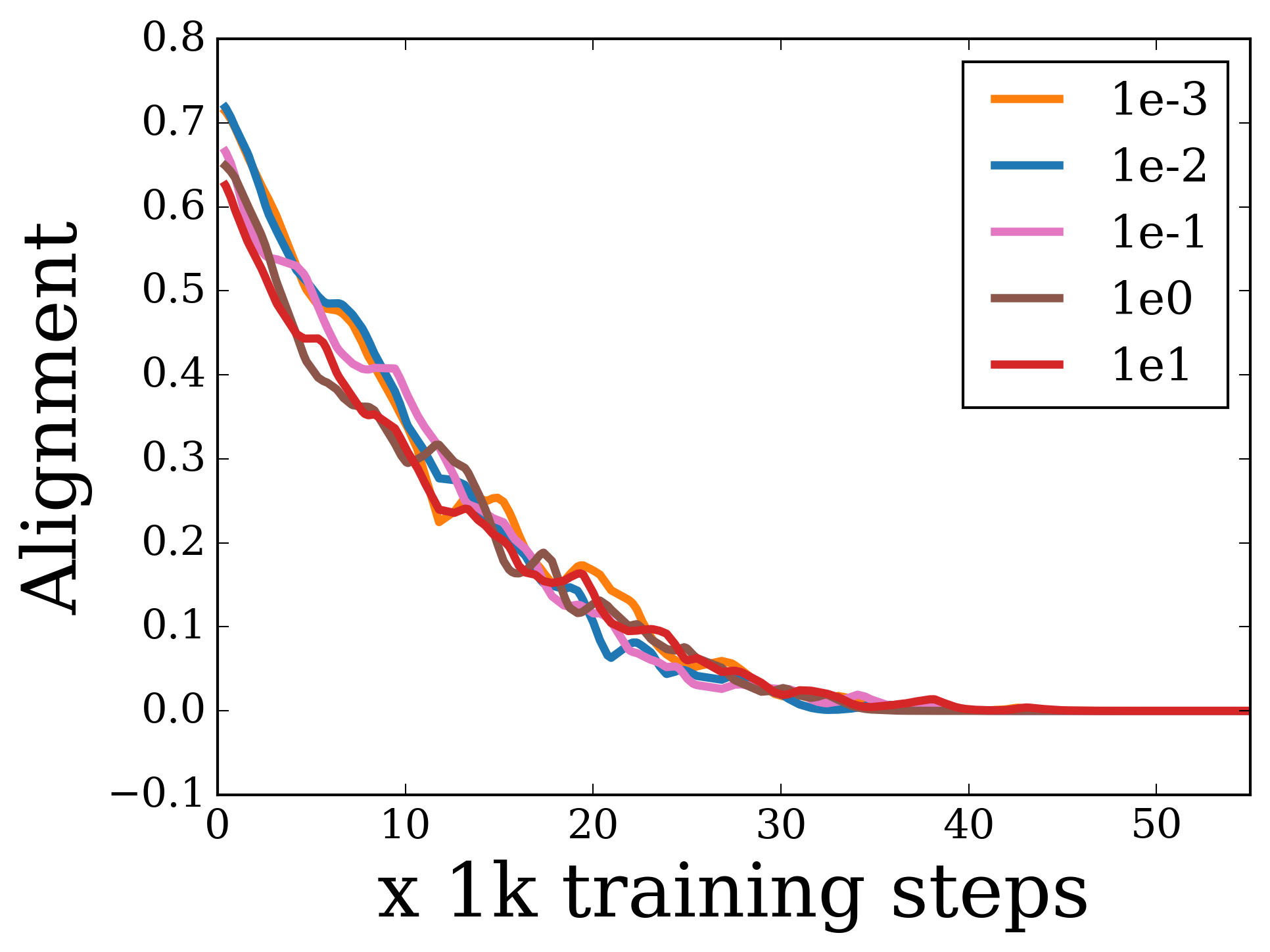}
    \caption{Logit gradients}
  \end{subfigure}
  \caption{\small Results when using ReLU activation function with hinge loss on SVHN dataset.}
  \label{app:fig:relu_hinge_svhn}
\end{figure}
\FloatBarrier

\begin{figure}[h!]
  \centering
  \begin{subfigure}[b]{0.32\textwidth}
    \includegraphics[width=\textwidth]{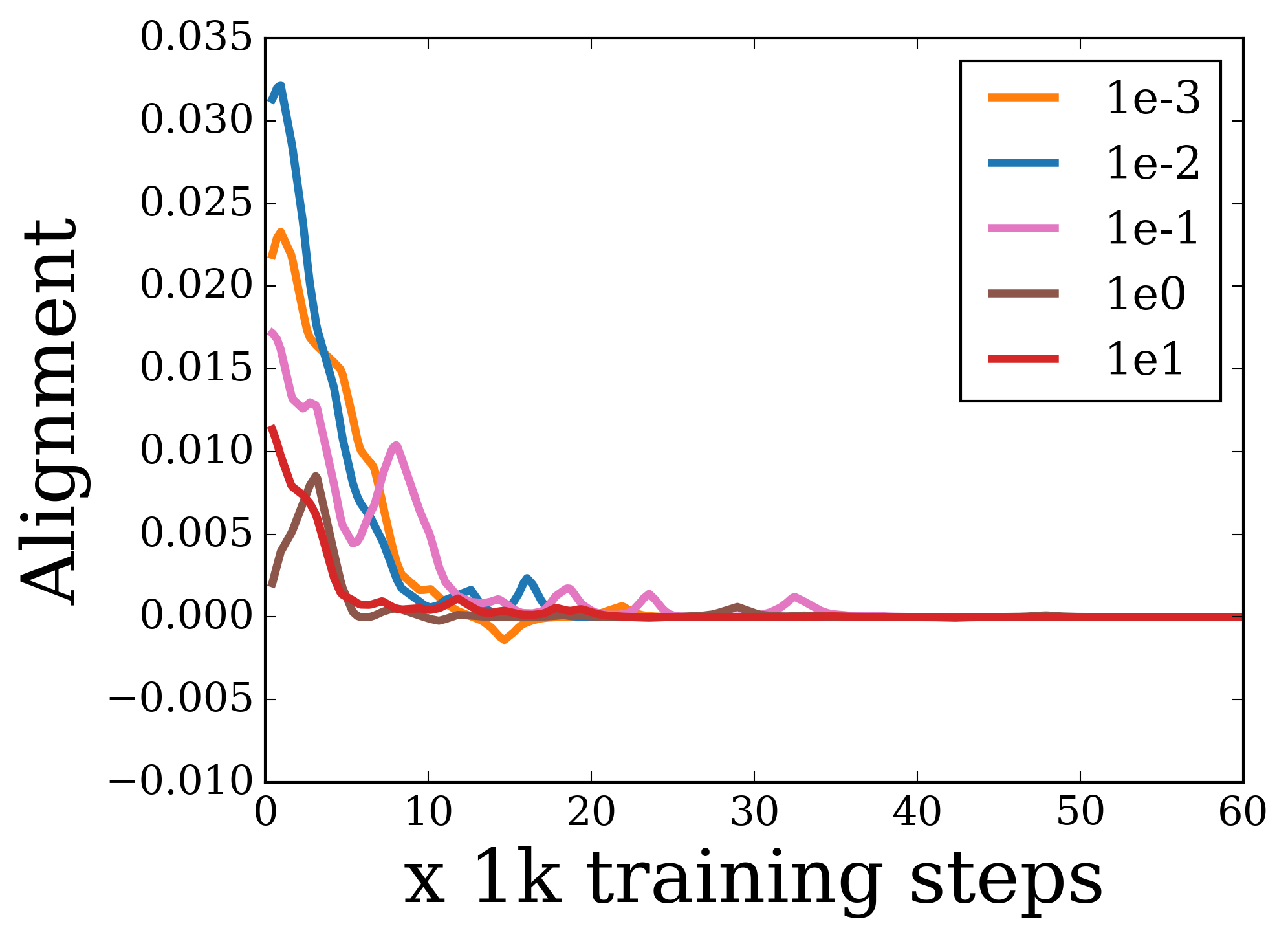}
    \caption{Hidden layer gradients}
  \end{subfigure}
  \begin{subfigure}[b]{0.32\textwidth}
    \includegraphics[width=\textwidth]{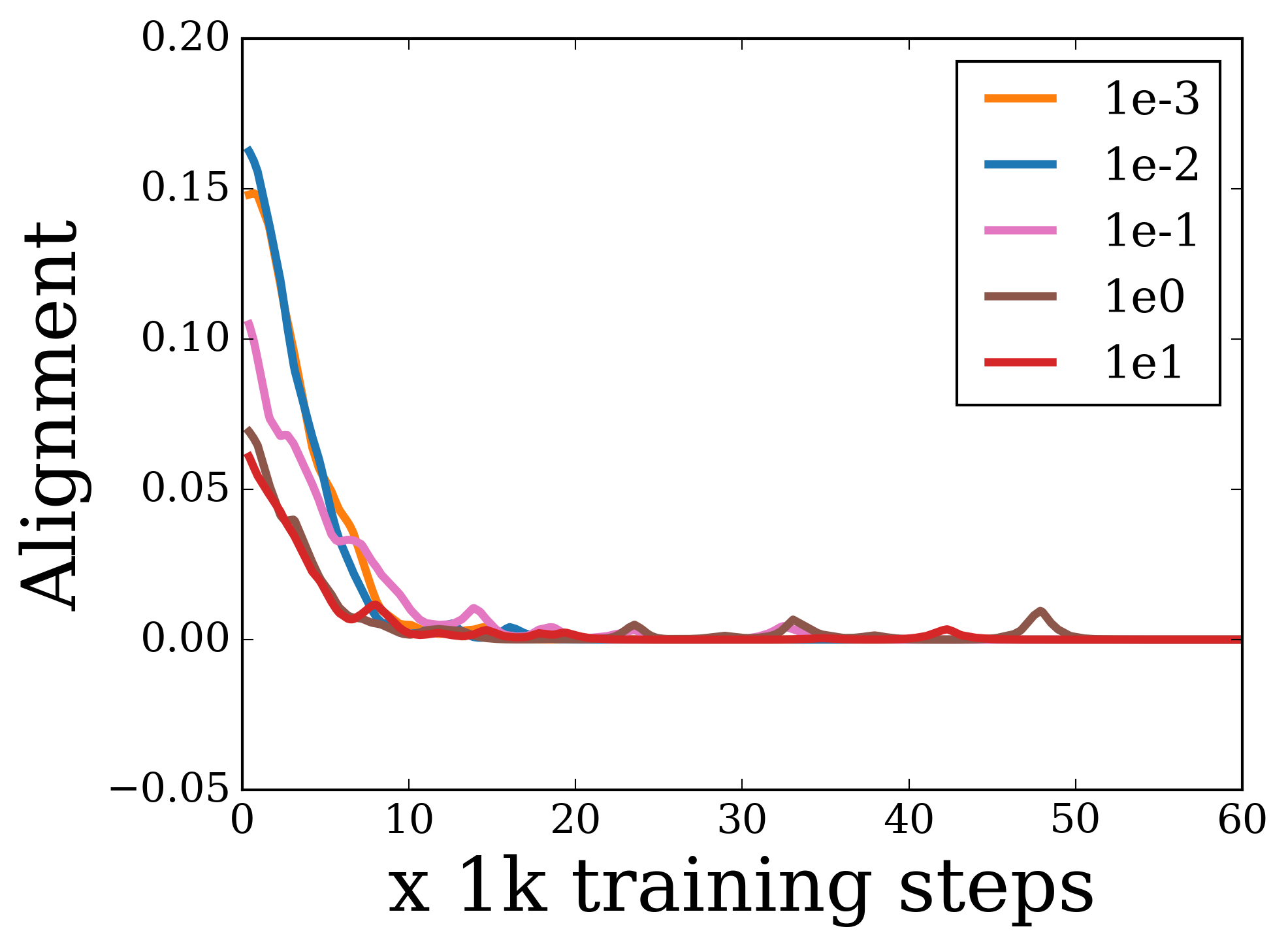}
    \caption{Top layer gradients}
  \end{subfigure}
  \begin{subfigure}[b]{0.32\textwidth}
    \includegraphics[width=\textwidth]{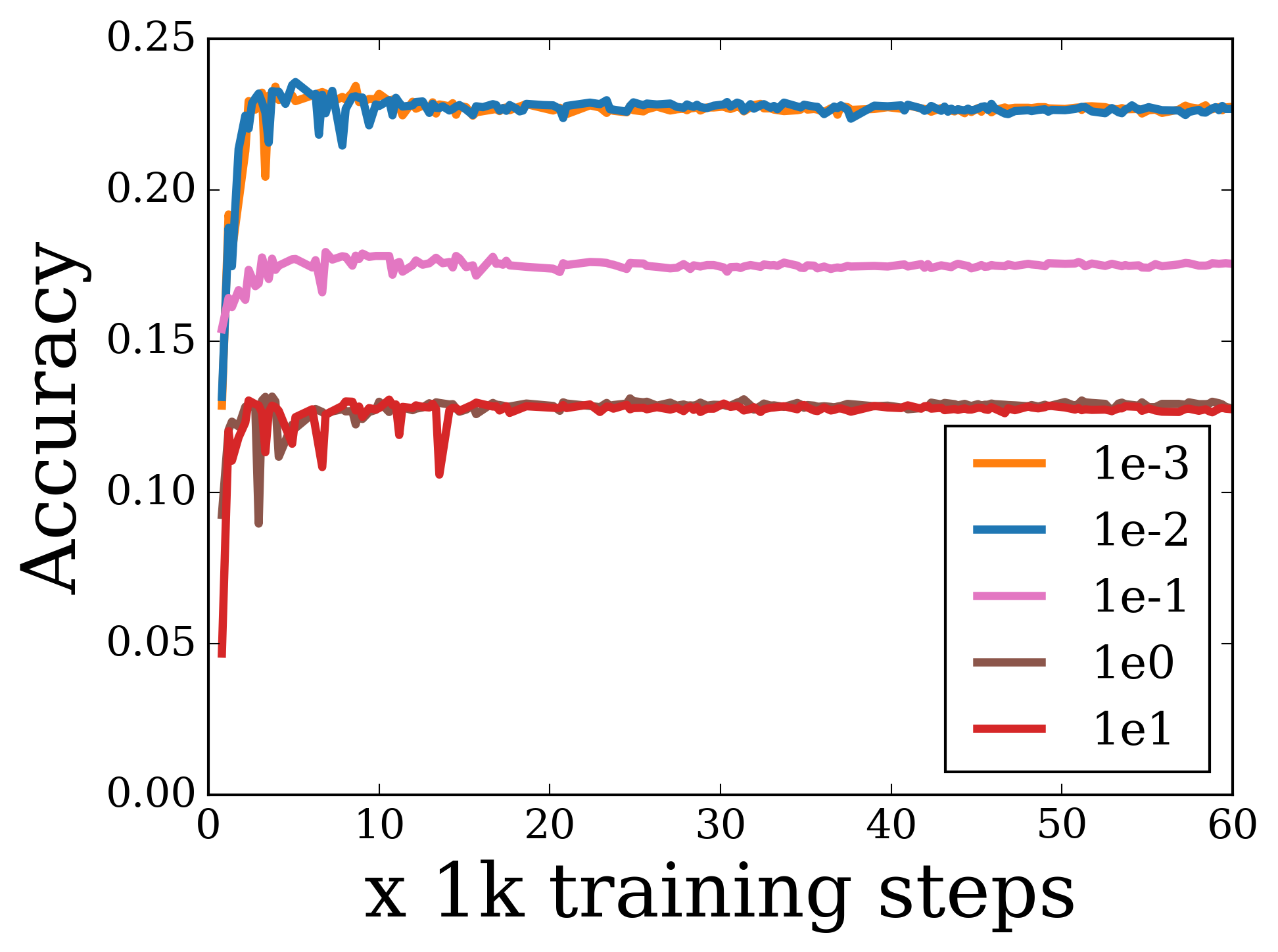}
    \caption{Test accuracy}
  \end{subfigure}
  \begin{subfigure}[b]{0.32\textwidth}
    \includegraphics[width=\textwidth]{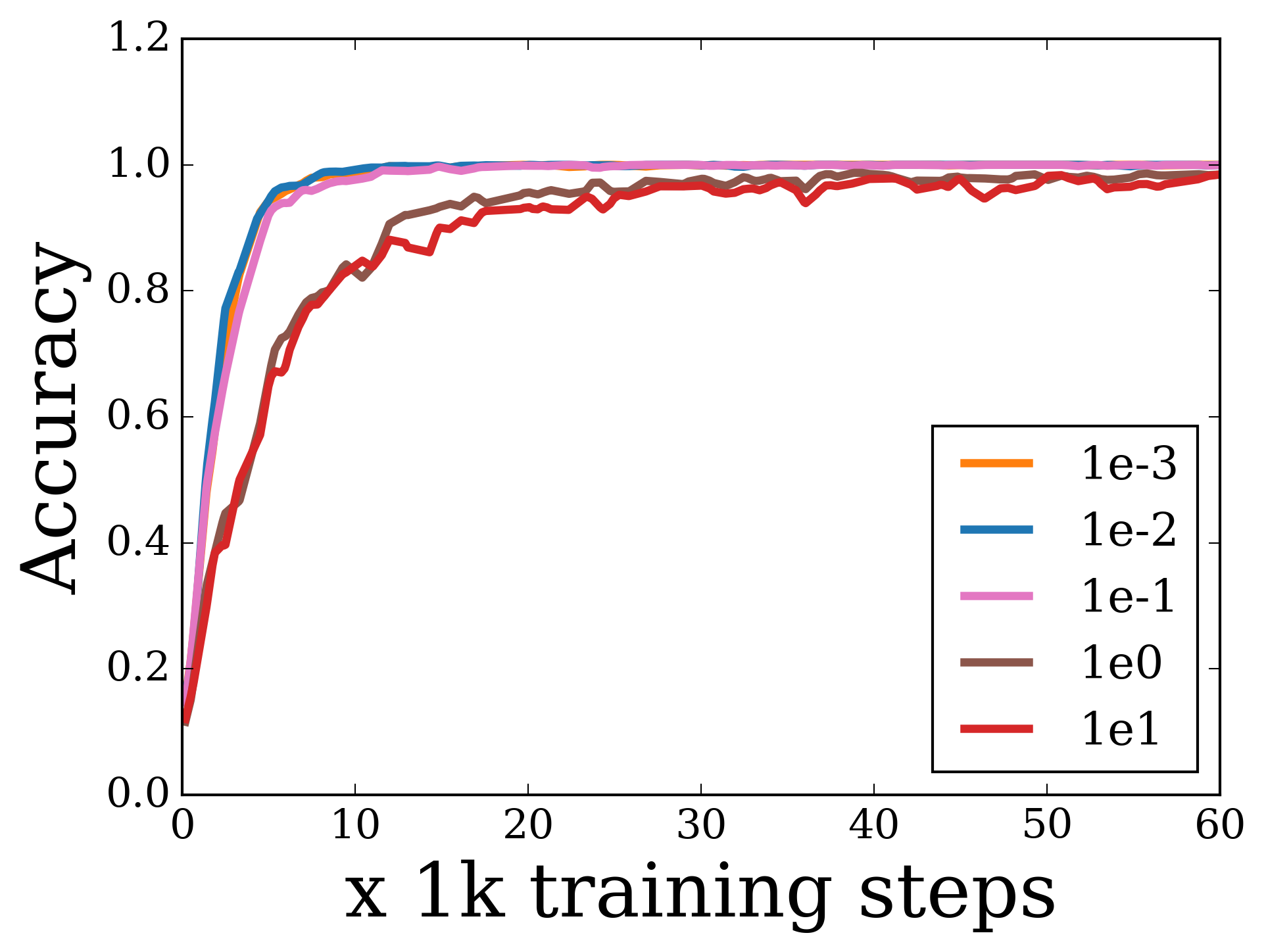}
    \caption{Train accuracy}
  \end{subfigure}
  \begin{subfigure}[b]{0.32\textwidth}
    \includegraphics[width=\textwidth]{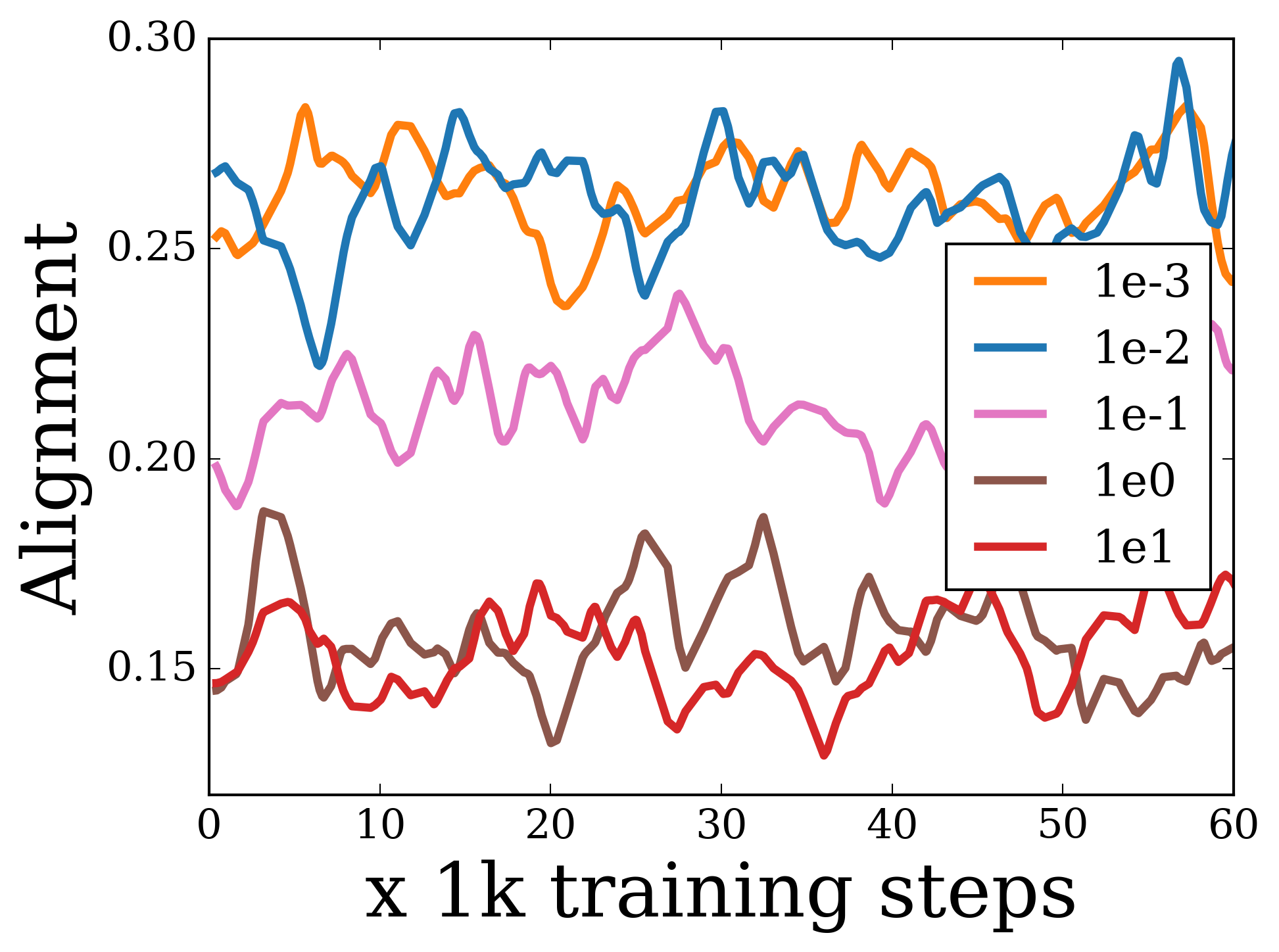}
    \caption{Hidden layer activations}
  \end{subfigure}
\begin{subfigure}[b]{0.32\textwidth}
    \includegraphics[width=\textwidth]{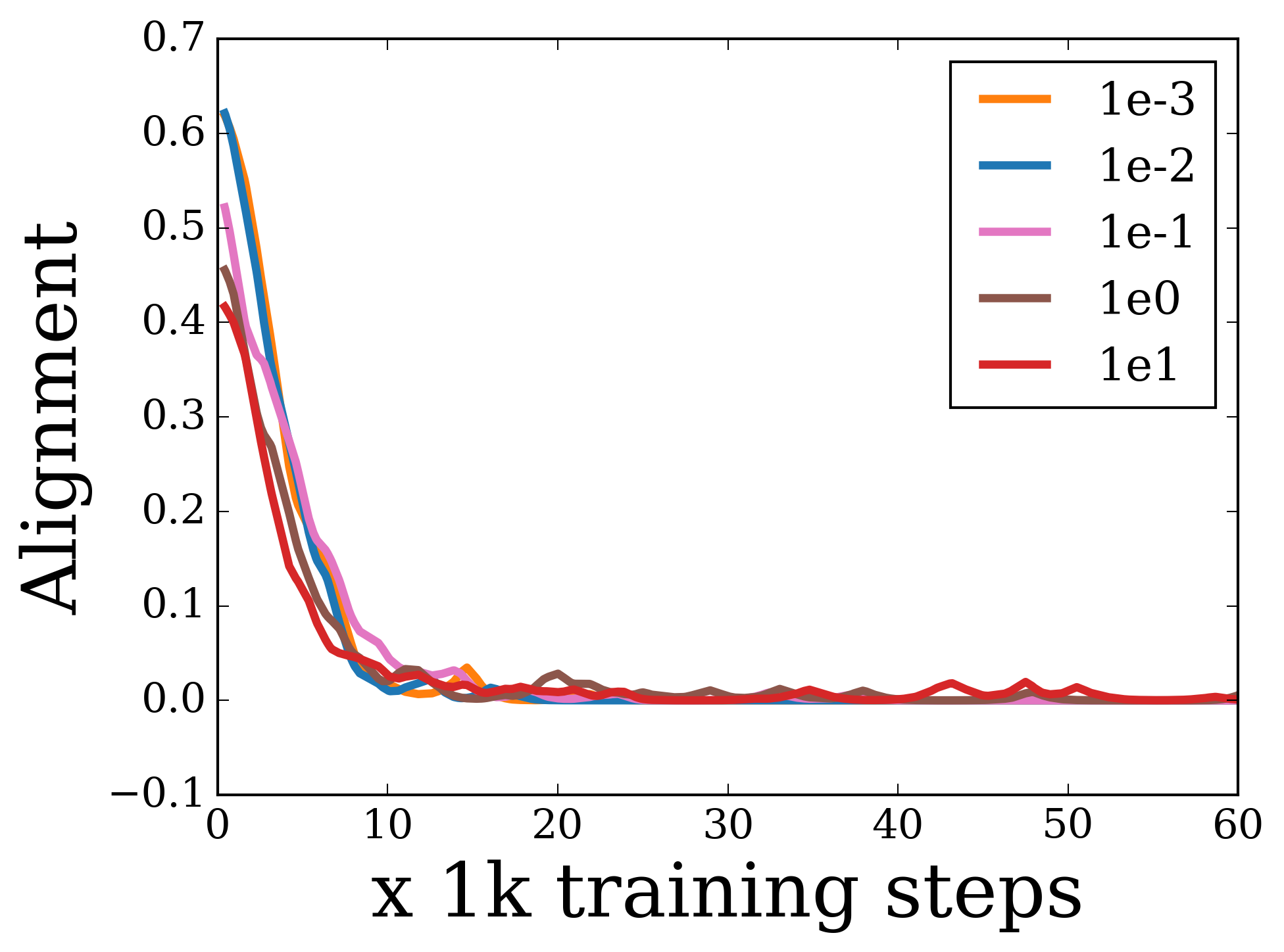}
    \caption{Logit gradients}
  \end{subfigure}
  \caption{\small Results when using ReLU activation function with hinge loss on CIFAR-100 dataset.}
  \label{app:fig:relu_hinge}
\end{figure}
\FloatBarrier

\subsection{Does depth matter?}
Similar to $\sin$ activation, we see a decrease in generalization performance even in the case of ReLU, despite increasing the depth.
\begin{figure}[h!]
  \centering
  \begin{subfigure}[b]{0.32\textwidth}
    \includegraphics[width=\textwidth]{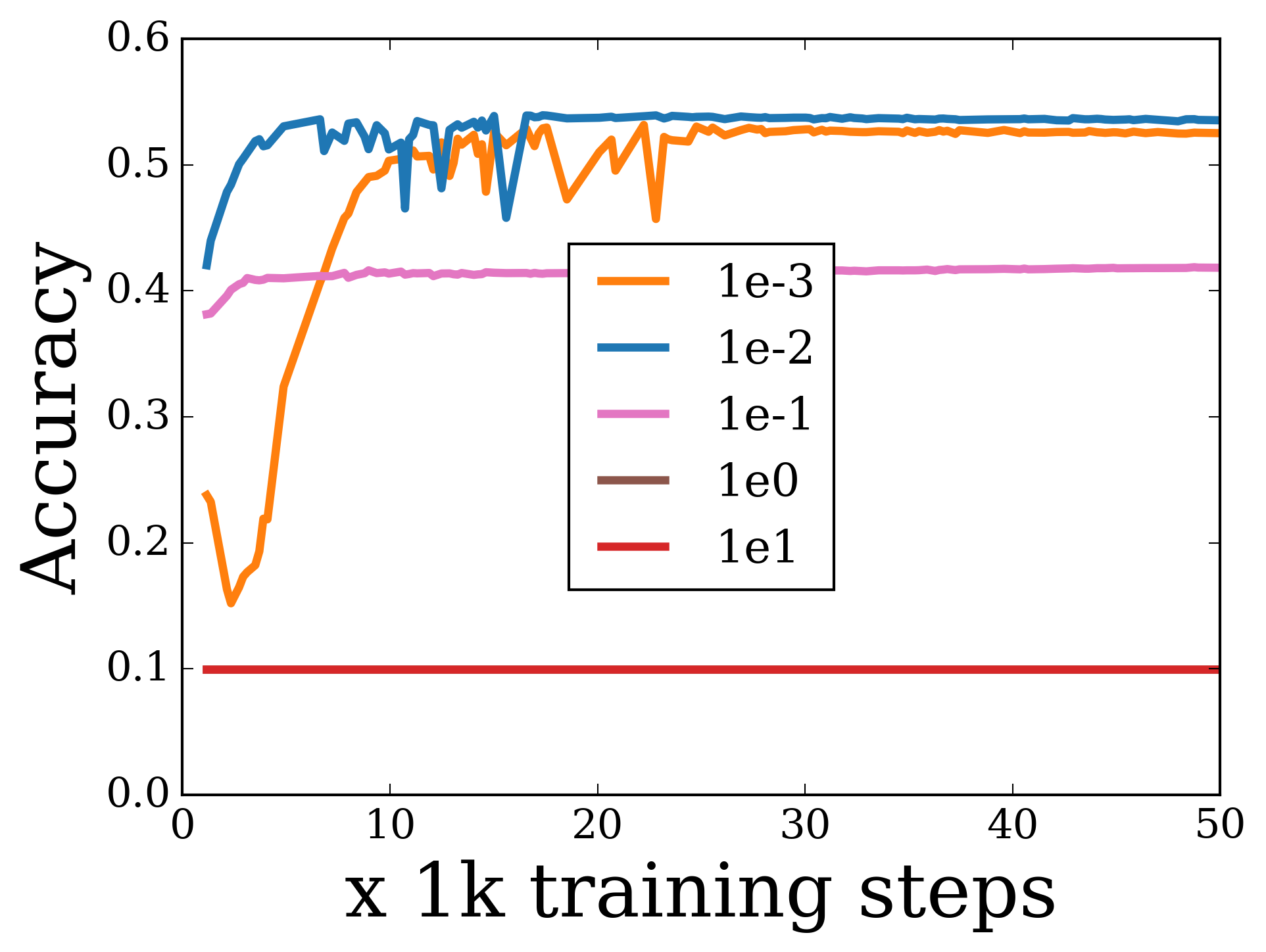}
    \caption{Test accuracy}
  \end{subfigure}
  \begin{subfigure}[b]{0.32\textwidth}
    \includegraphics[width=\textwidth]{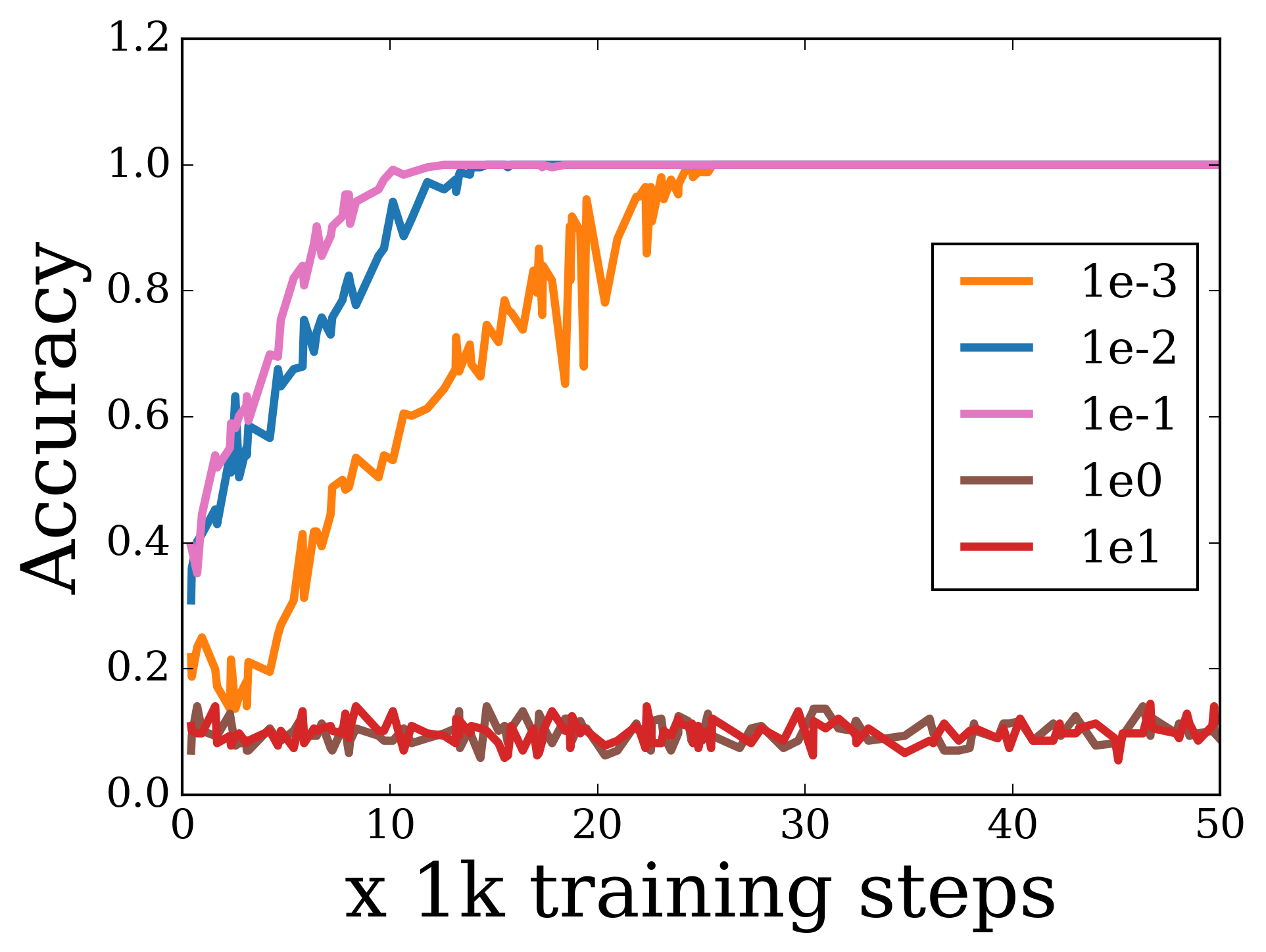}
    \caption{Train accuracy}
  \end{subfigure}
  \begin{subfigure}[b]{0.32\textwidth}
    \includegraphics[width=\textwidth]{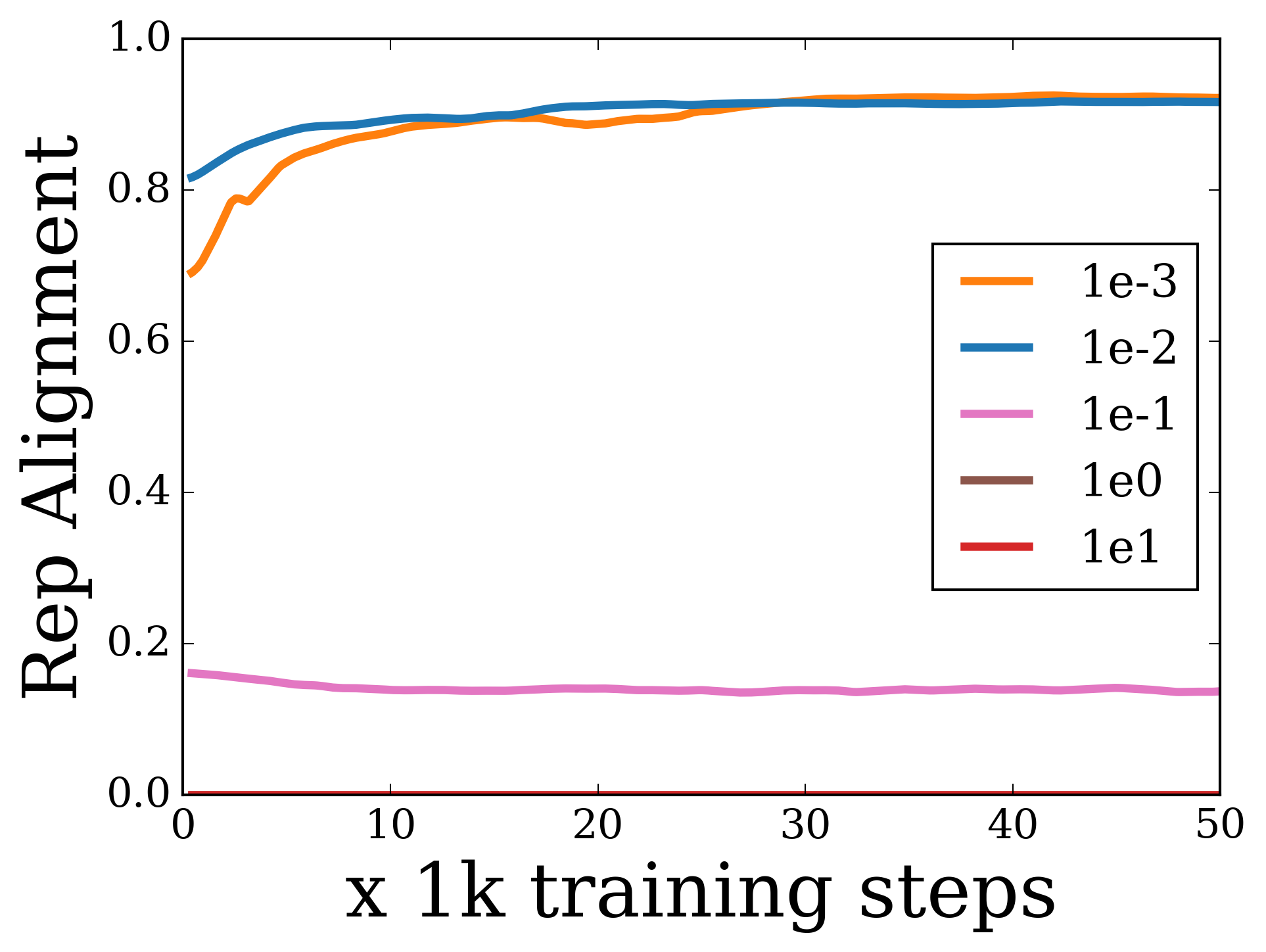}
    \caption{Hidden layer activations}
  \end{subfigure}
  \caption{\small Results when using ReLU activation function on CIFAR-10 dataset with 4-layer MLP.}
  \label{app:fig:sin_cifar10_mlp4}
\end{figure}
\FloatBarrier

\subsection{Squared loss}
Note that when employing squared loss, we increase the number of hidden units from the usual 1024 units to 2048 in order to compensate for very low training speed. Also, we observed that increasing the scale of initialization for $W_1$ beyond a certain scale leads to divergence in training after a few iterations. Thus, we recover the phenomenon of interest with much less aggresive increase in scale of initialization i.e. we double the standard deviation instead of increasing it by ten times as done in other experiments. 

\begin{figure}[h!]
  \centering
  \begin{subfigure}[b]{0.32\textwidth}
    \includegraphics[width=\textwidth]{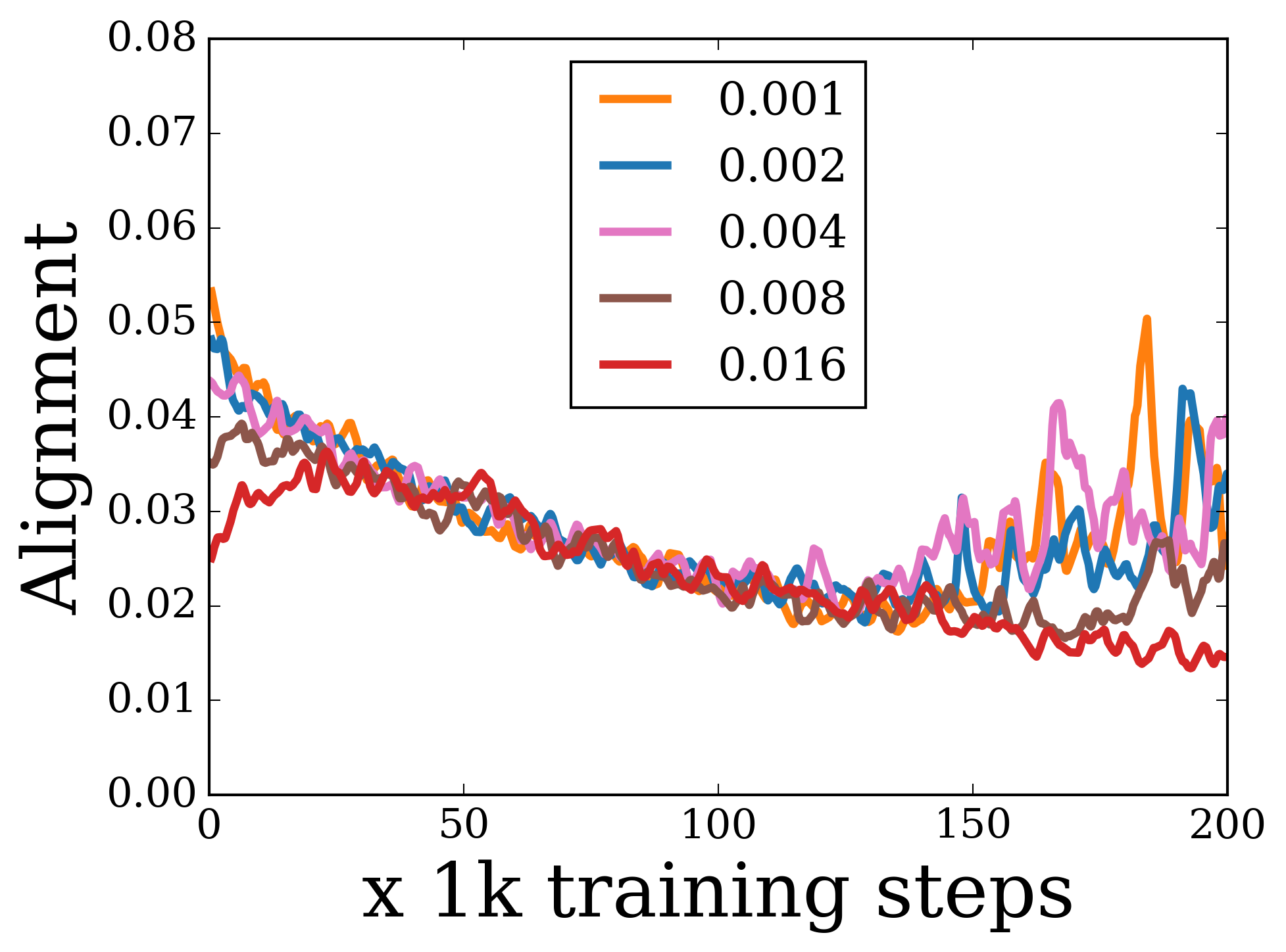}
    \caption{Hidden layer gradients}
  \end{subfigure}
  \begin{subfigure}[b]{0.32\textwidth}
    \includegraphics[width=\textwidth]{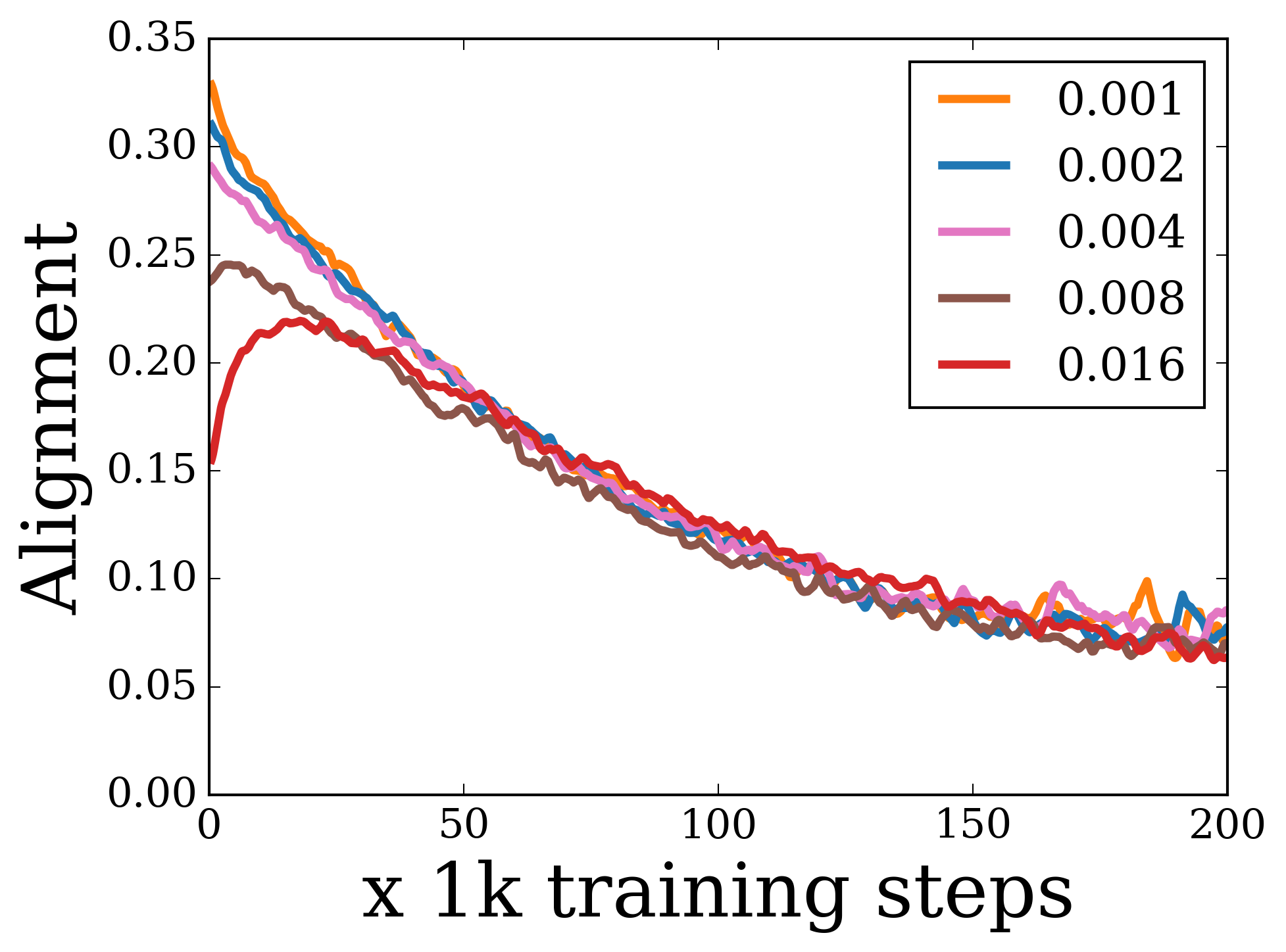}
    \caption{Top layer gradients}
  \end{subfigure}
  \begin{subfigure}[b]{0.32\textwidth}
    \includegraphics[width=\textwidth]{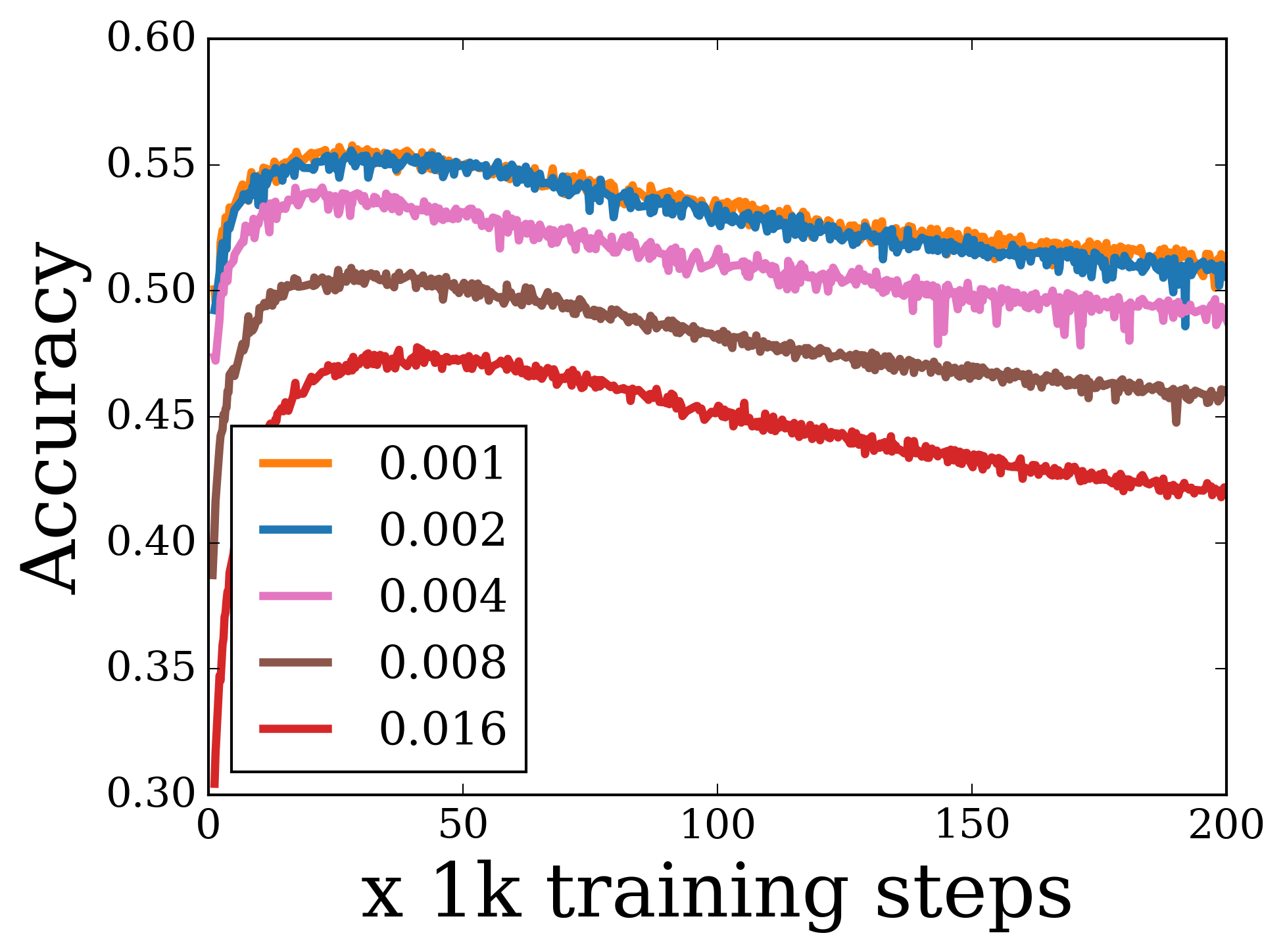}
    \caption{Test accuracy}
  \end{subfigure}
  \begin{subfigure}[b]{0.32\textwidth}
    \includegraphics[width=\textwidth]{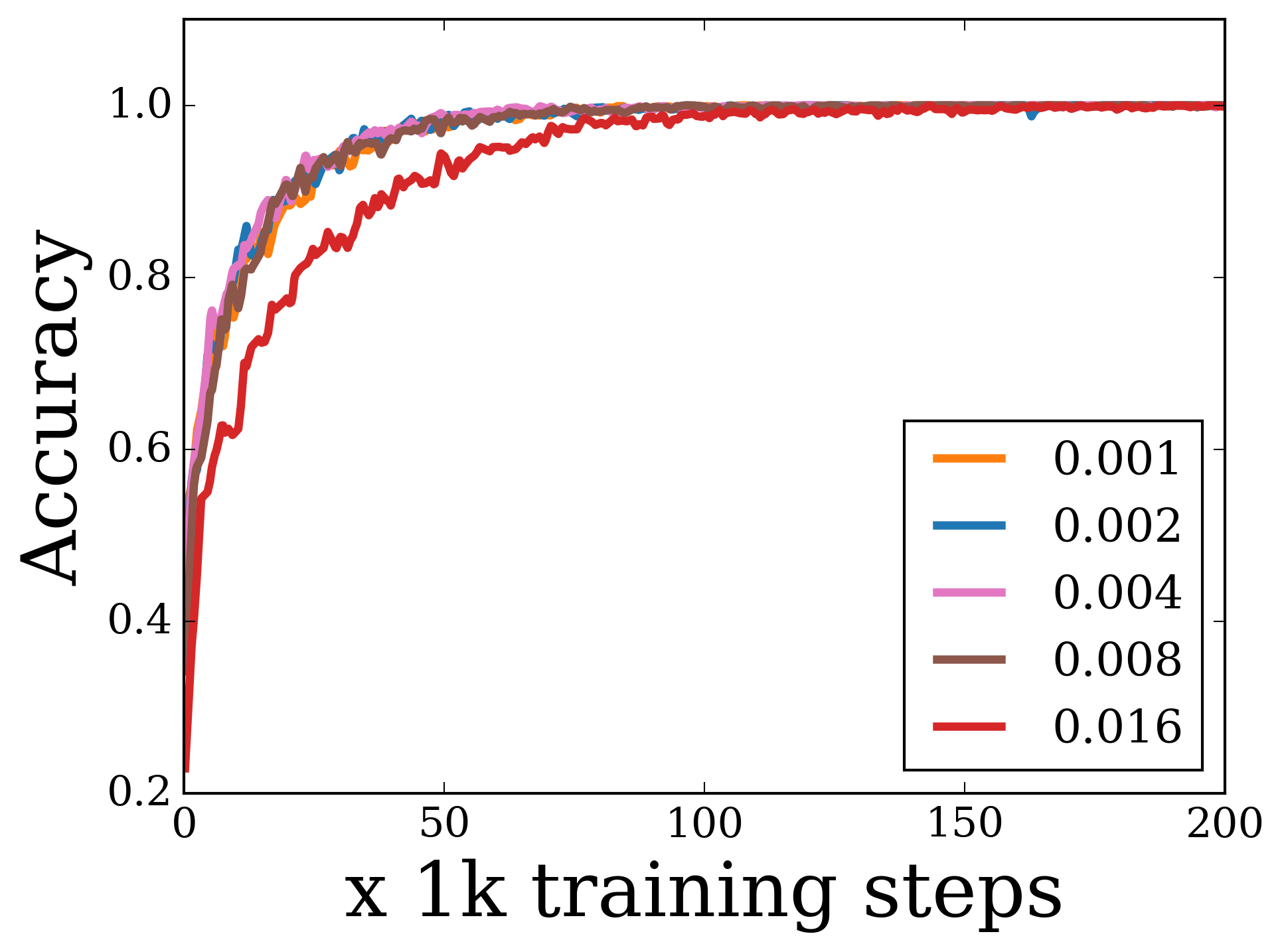}
    \caption{Train accuracy}
  \end{subfigure}
  \begin{subfigure}[b]{0.32\textwidth}
    \includegraphics[width=\textwidth]{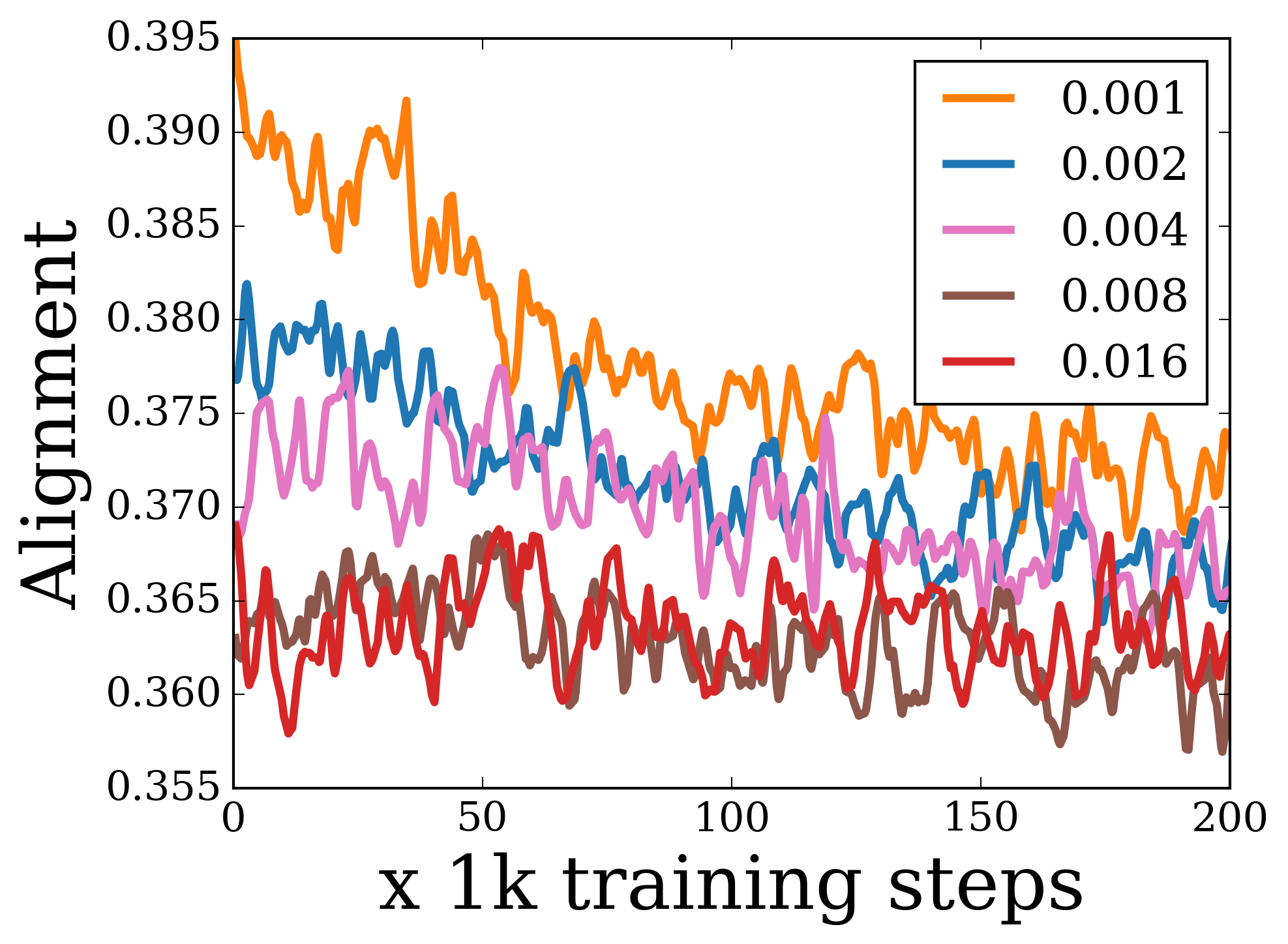}
    \caption{Hidden layer activations}
  \end{subfigure}
\begin{subfigure}[b]{0.32\textwidth}
    \includegraphics[width=\textwidth]{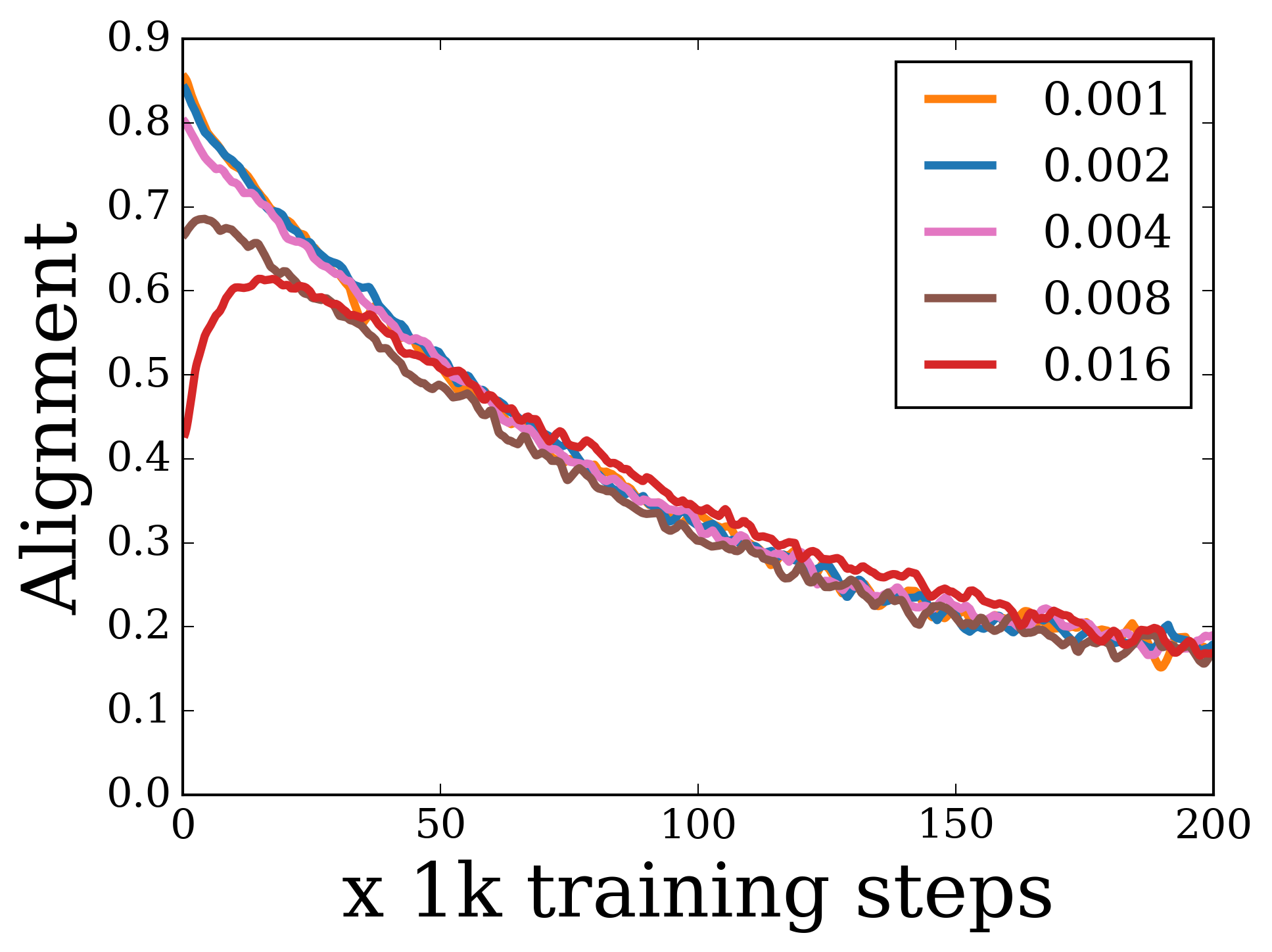}
    \caption{Logit gradients}
  \end{subfigure}
  \caption{\small Results when using ReLU activation function with squared loss on CIFAR-10 dataset.}
  \label{app:fig:relu_l2}
\end{figure}
\FloatBarrier

\begin{figure}[h!]
  \centering
  \begin{subfigure}[b]{0.32\textwidth}
    \includegraphics[width=\textwidth]{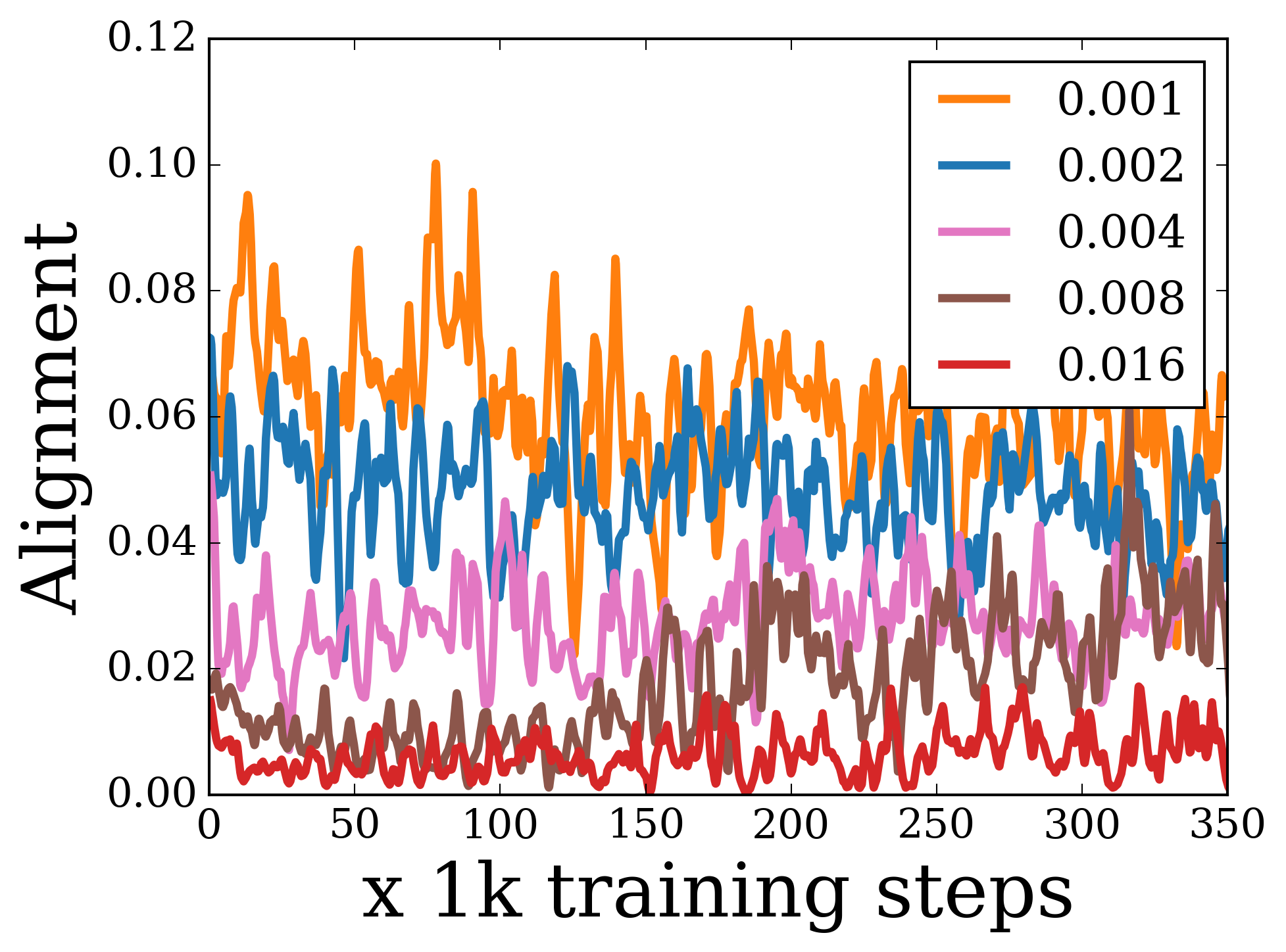}
    \caption{Hidden layer gradients}
  \end{subfigure}
  \begin{subfigure}[b]{0.32\textwidth}
    \includegraphics[width=\textwidth]{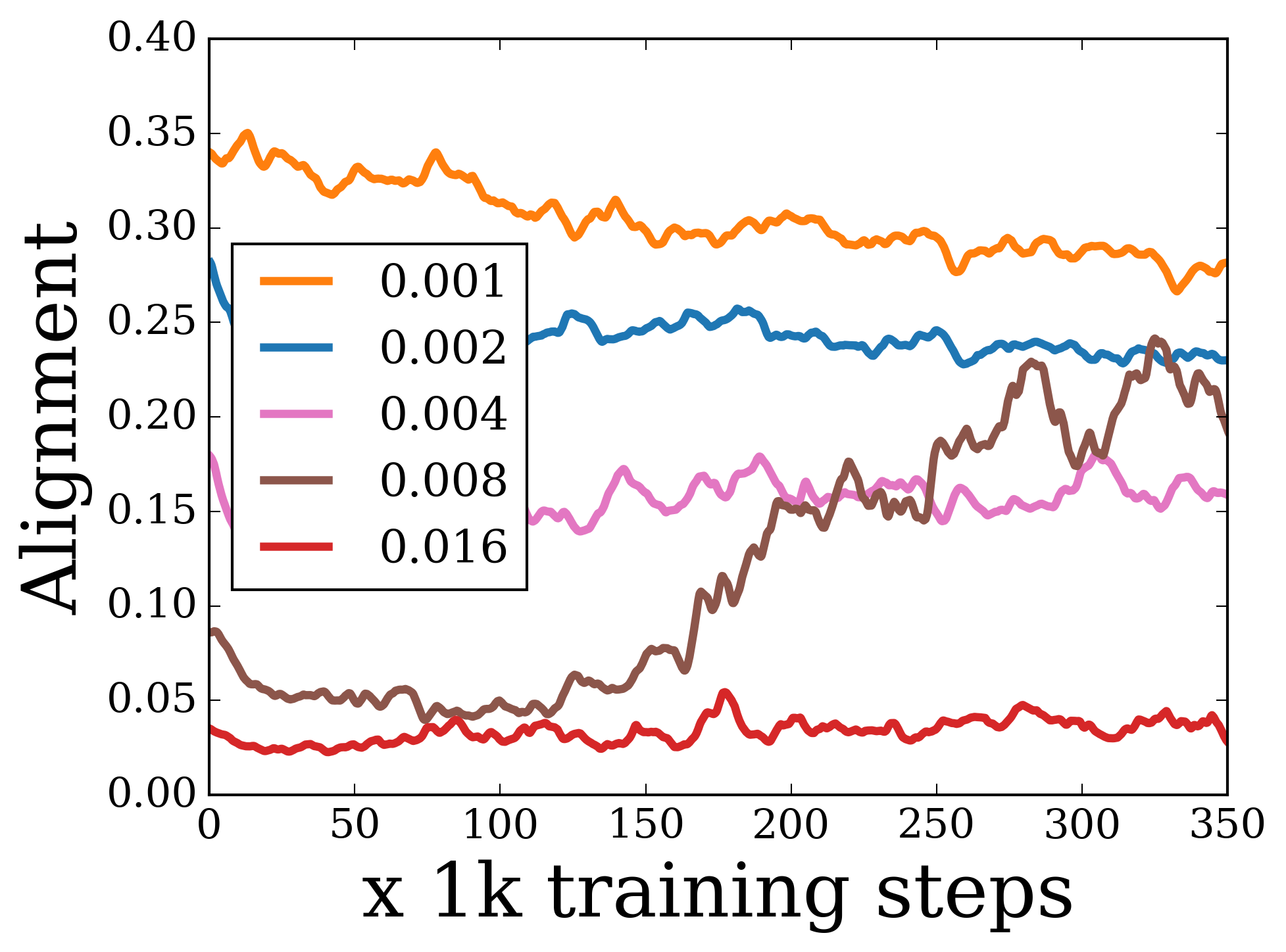}
    \caption{Top layer gradients}
  \end{subfigure}
  \begin{subfigure}[b]{0.32\textwidth}
    \includegraphics[width=\textwidth]{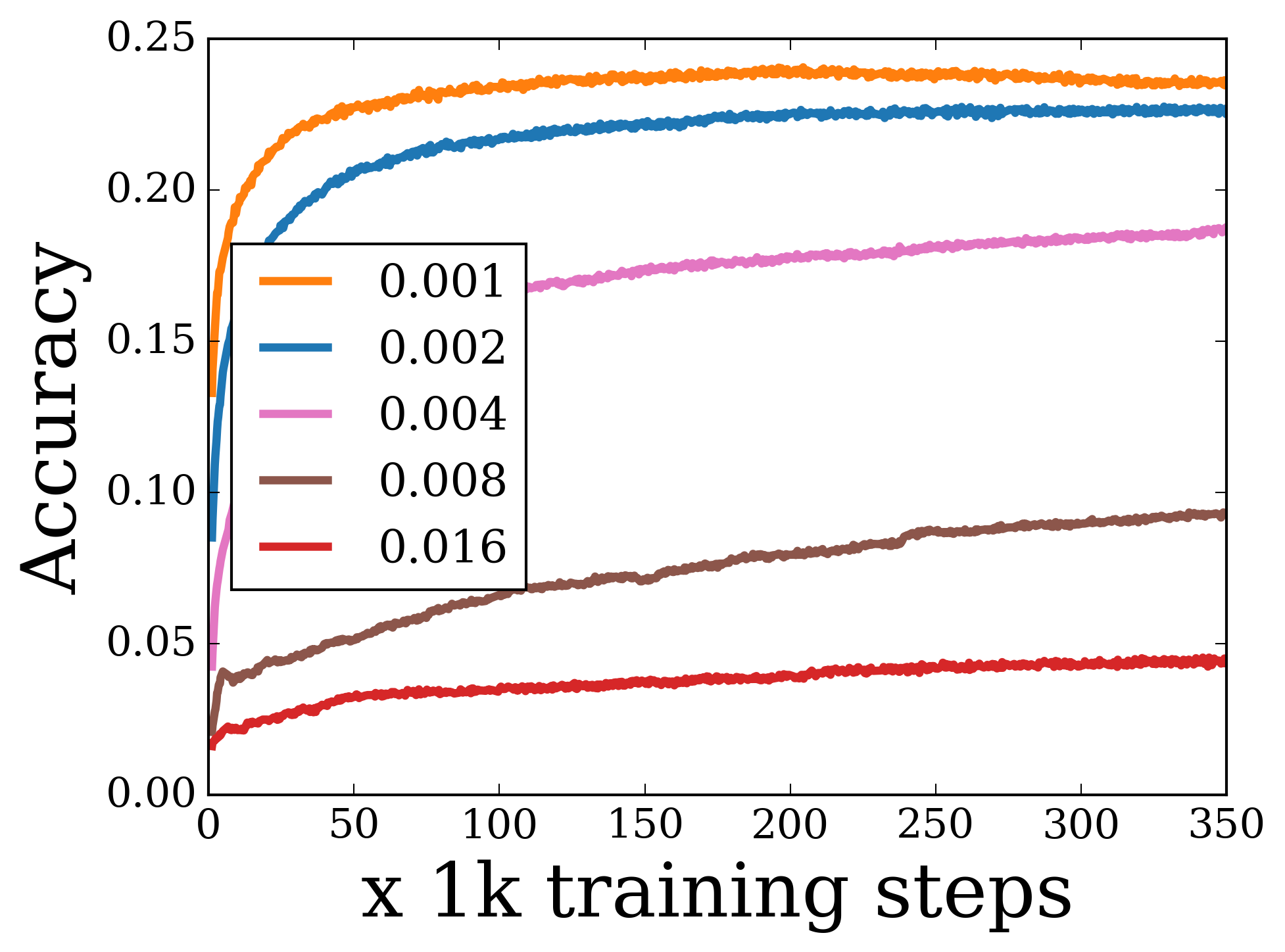}
    \caption{Test accuracy}
  \end{subfigure}
  \begin{subfigure}[b]{0.32\textwidth}
    \includegraphics[width=\textwidth]{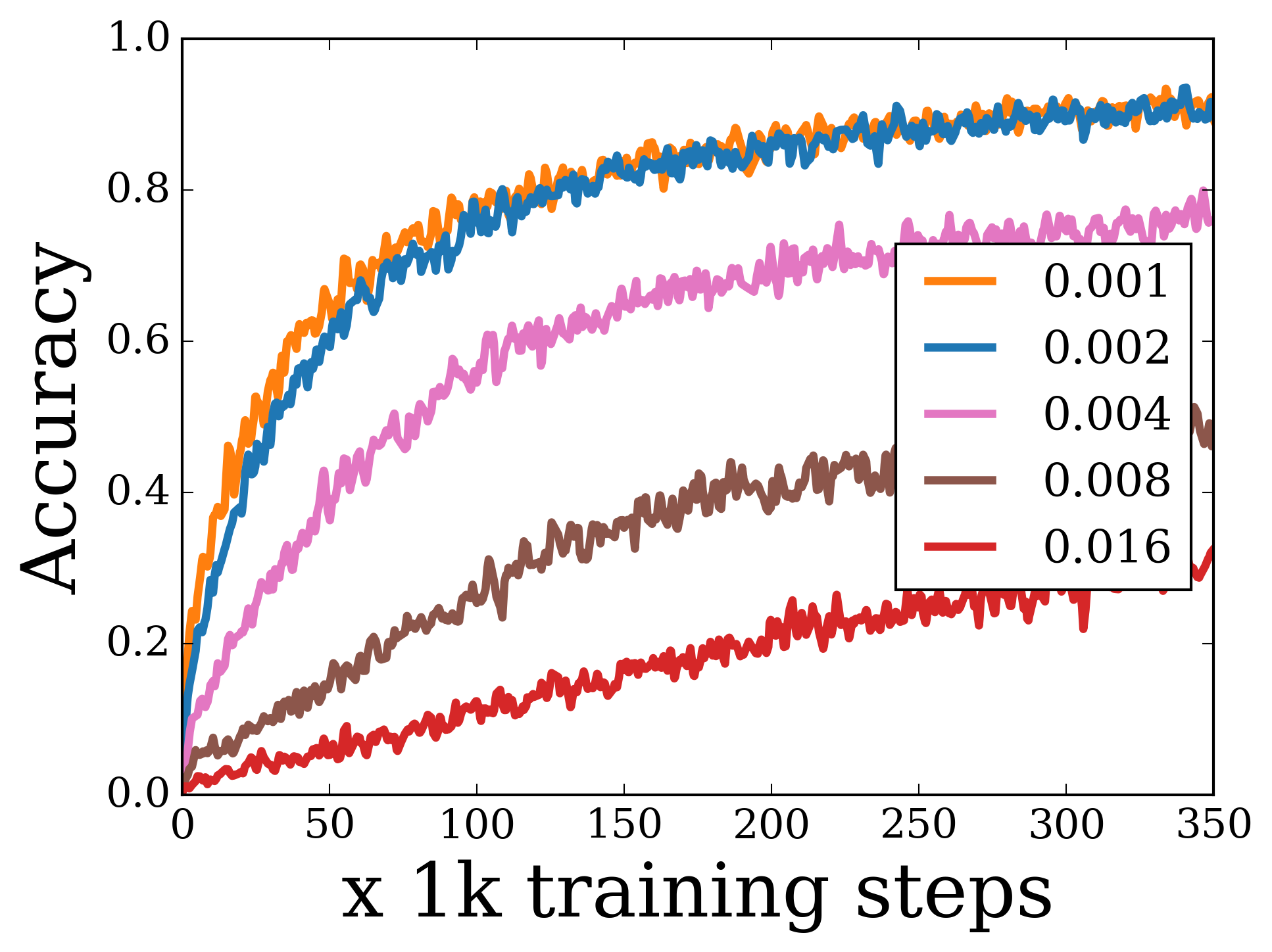}
    \caption{Train accuracy}
  \end{subfigure}
  \begin{subfigure}[b]{0.32\textwidth}
    \includegraphics[width=\textwidth]{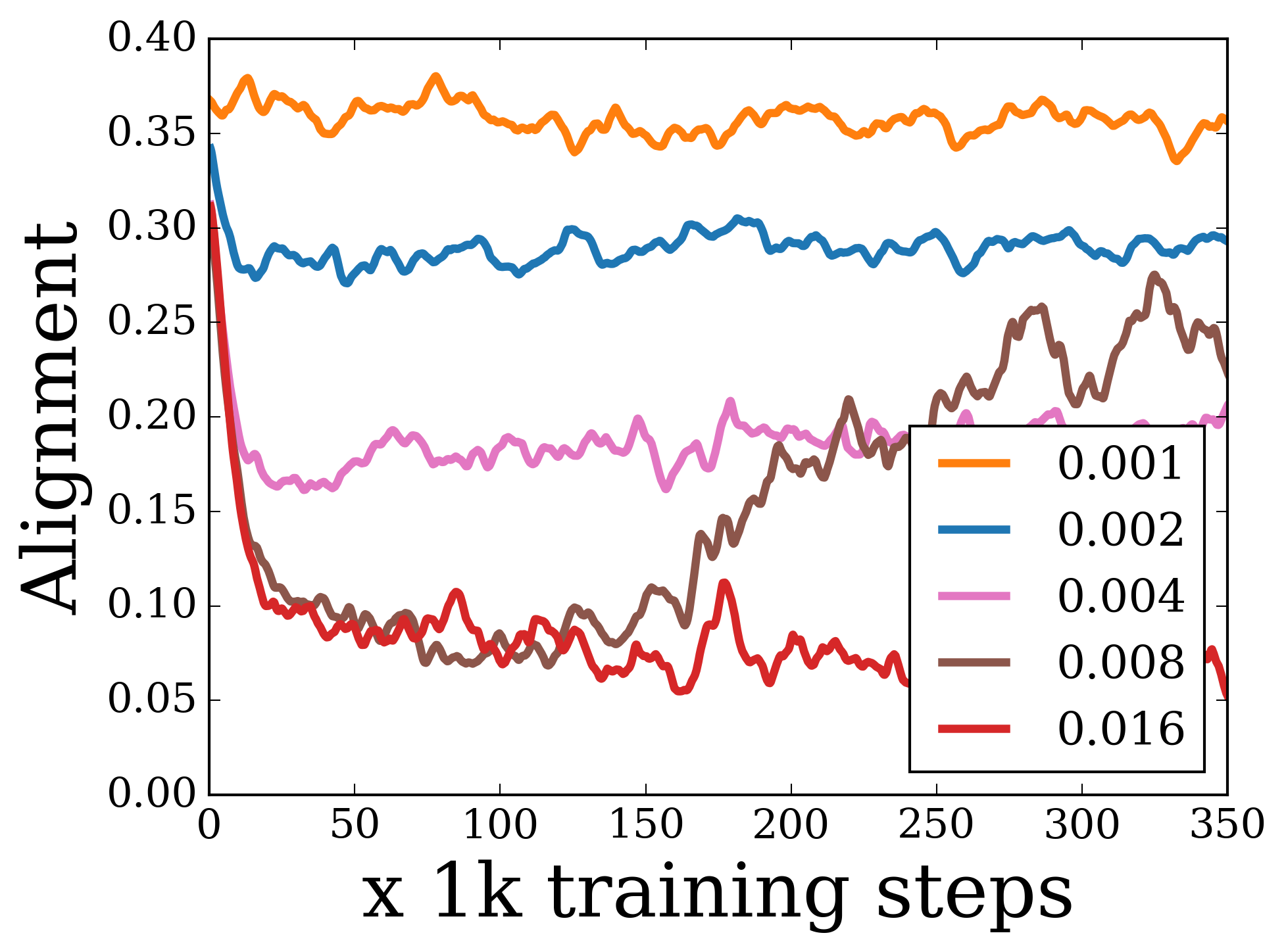}
    \caption{Hidden layer activations}
  \end{subfigure}
\begin{subfigure}[b]{0.32\textwidth}
    \includegraphics[width=\textwidth]{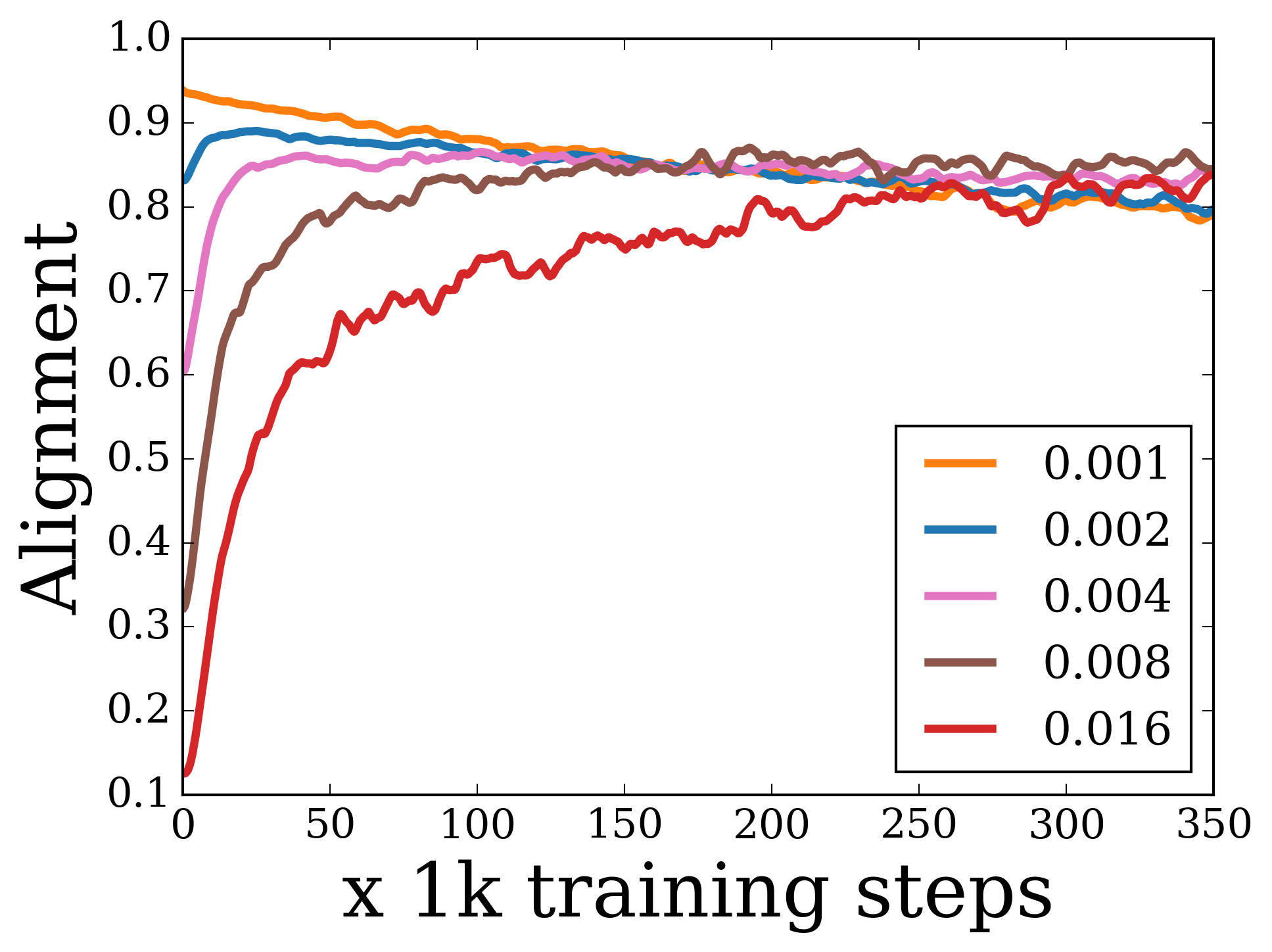}
    \caption{Logit gradients}
  \end{subfigure}
  \caption{\small Results when using ReLU activation function with squared loss on CIFAR-100 dataset.}
  \label{app:fig:relu_l2_cifar100}
\end{figure}
\FloatBarrier

\begin{figure}[h!]
  \centering
  \begin{subfigure}[b]{0.32\textwidth}
    \includegraphics[width=\textwidth]{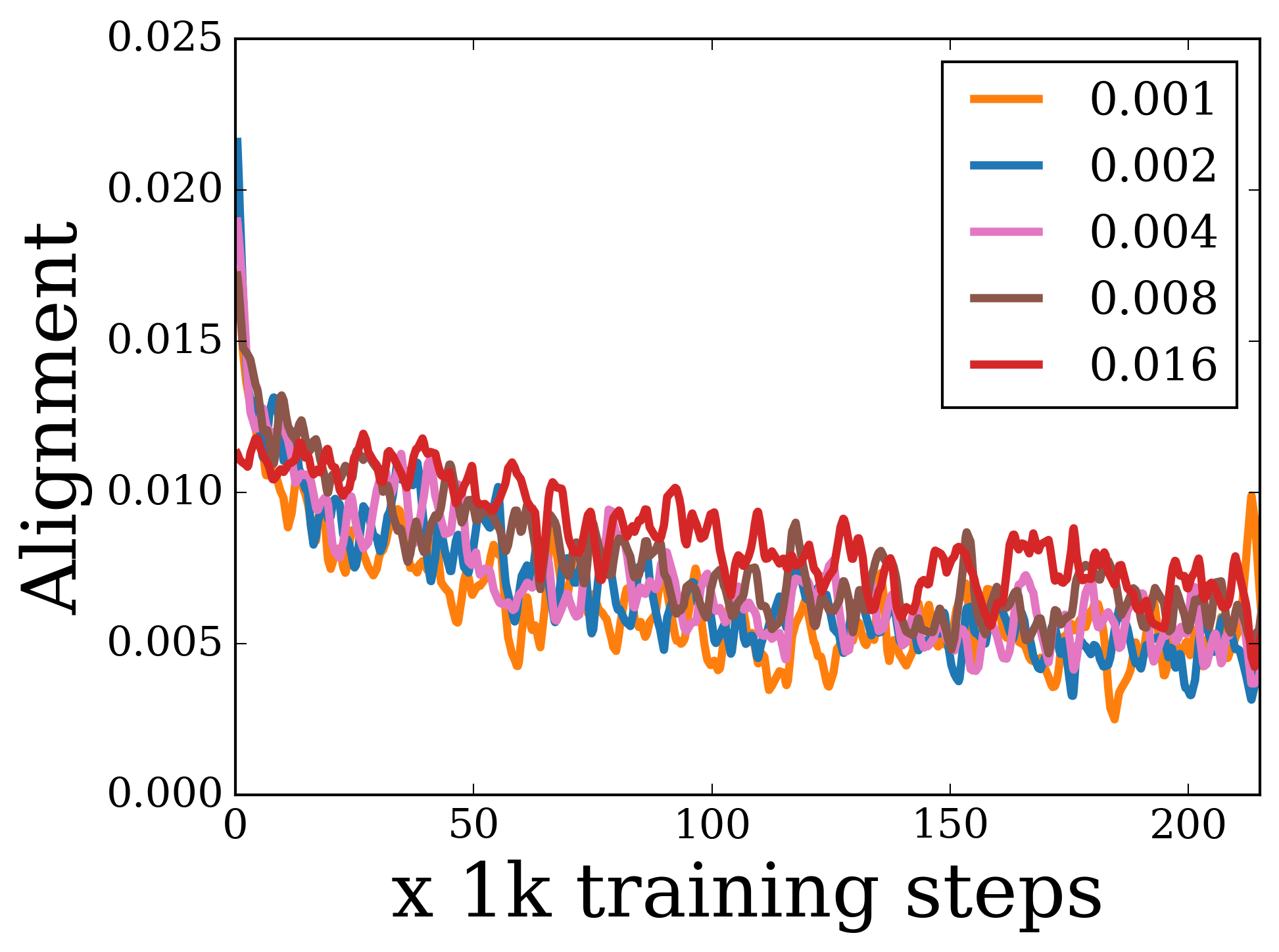}
    \caption{Hidden layer gradients}
  \end{subfigure}
  \begin{subfigure}[b]{0.32\textwidth}
    \includegraphics[width=\textwidth]{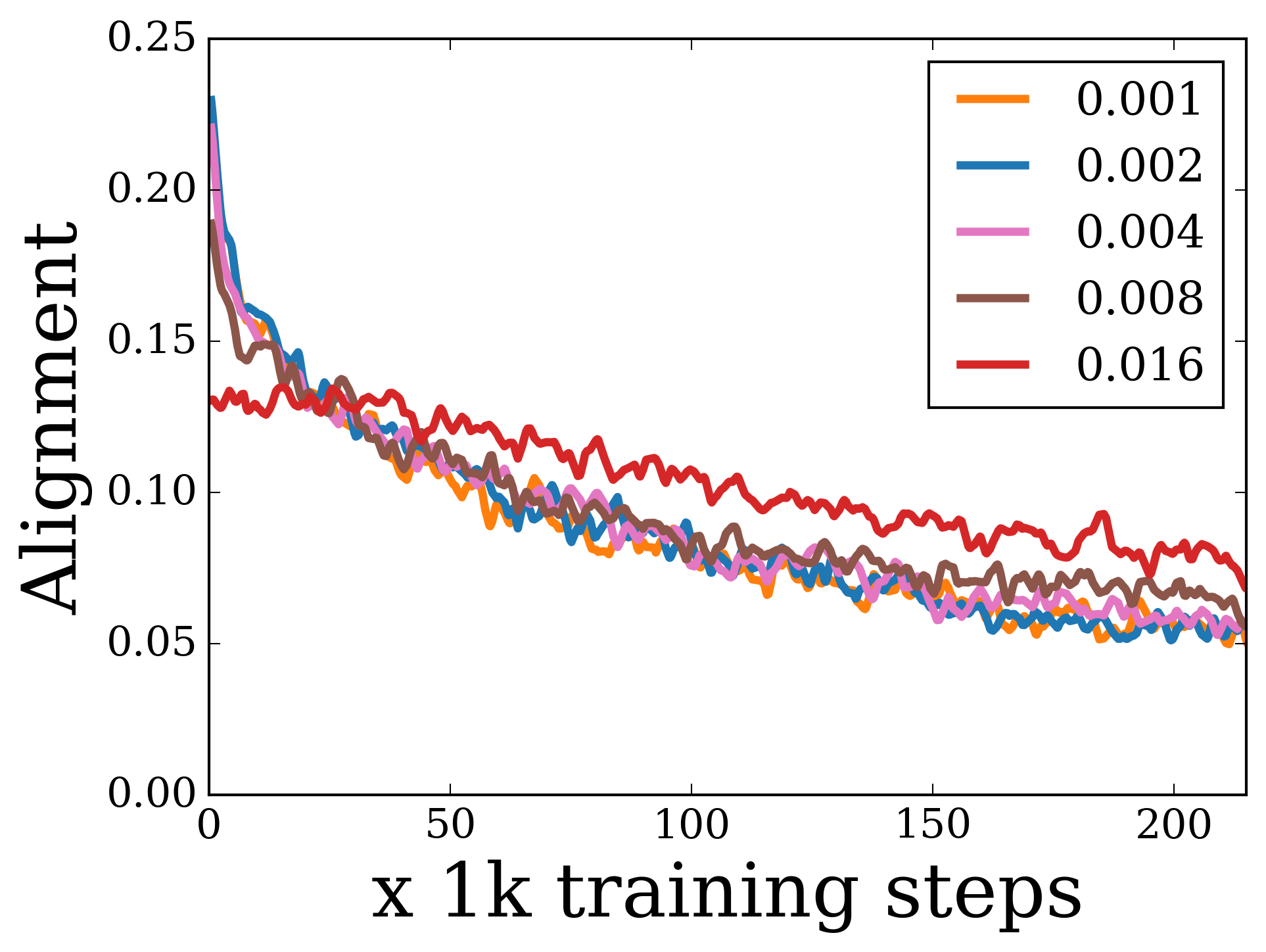}
    \caption{Top layer gradients}
  \end{subfigure}
  \begin{subfigure}[b]{0.32\textwidth}
    \includegraphics[width=\textwidth]{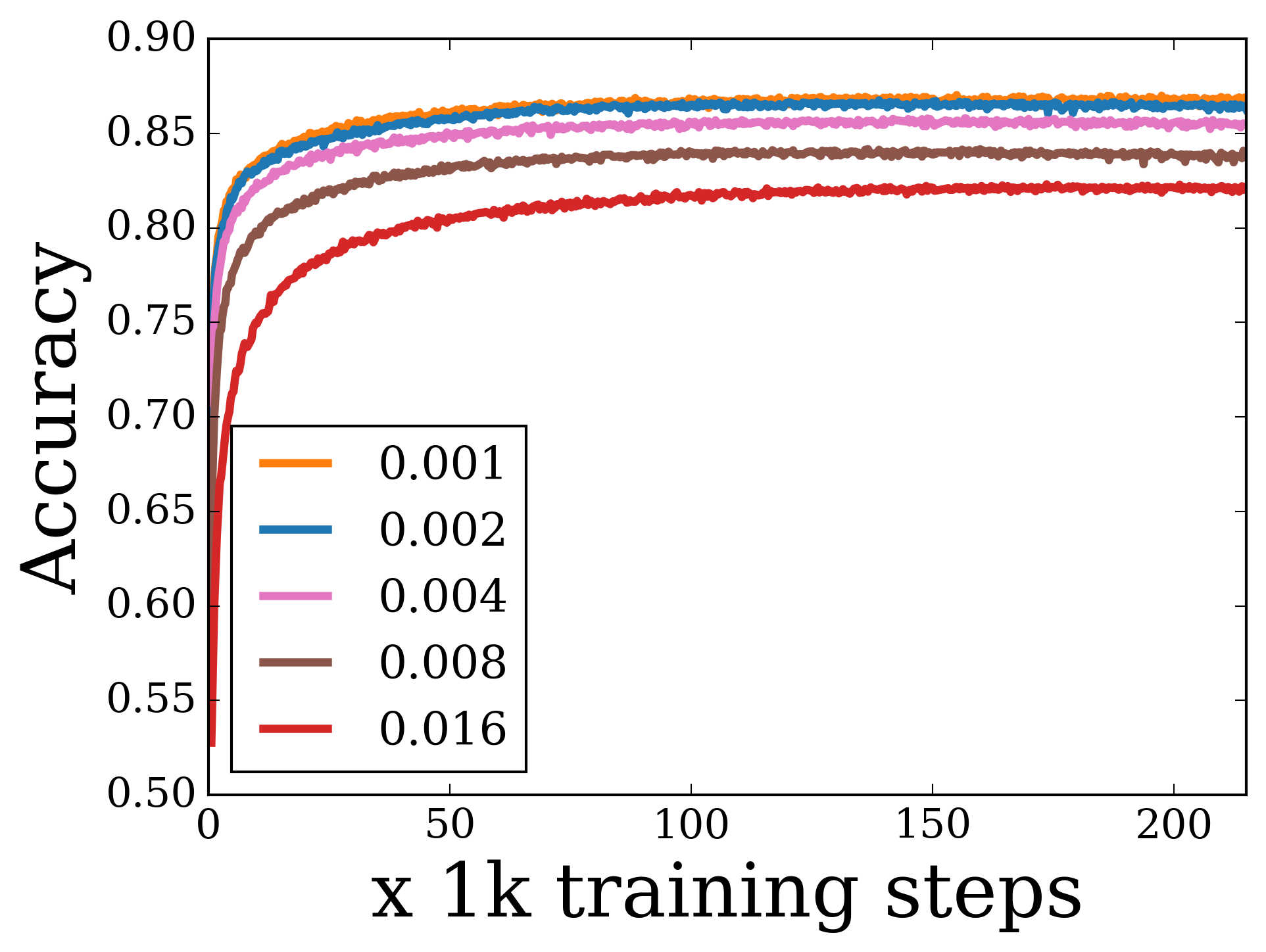}
    \caption{Test accuracy}
  \end{subfigure}
  \begin{subfigure}[b]{0.32\textwidth}
    \includegraphics[width=\textwidth]{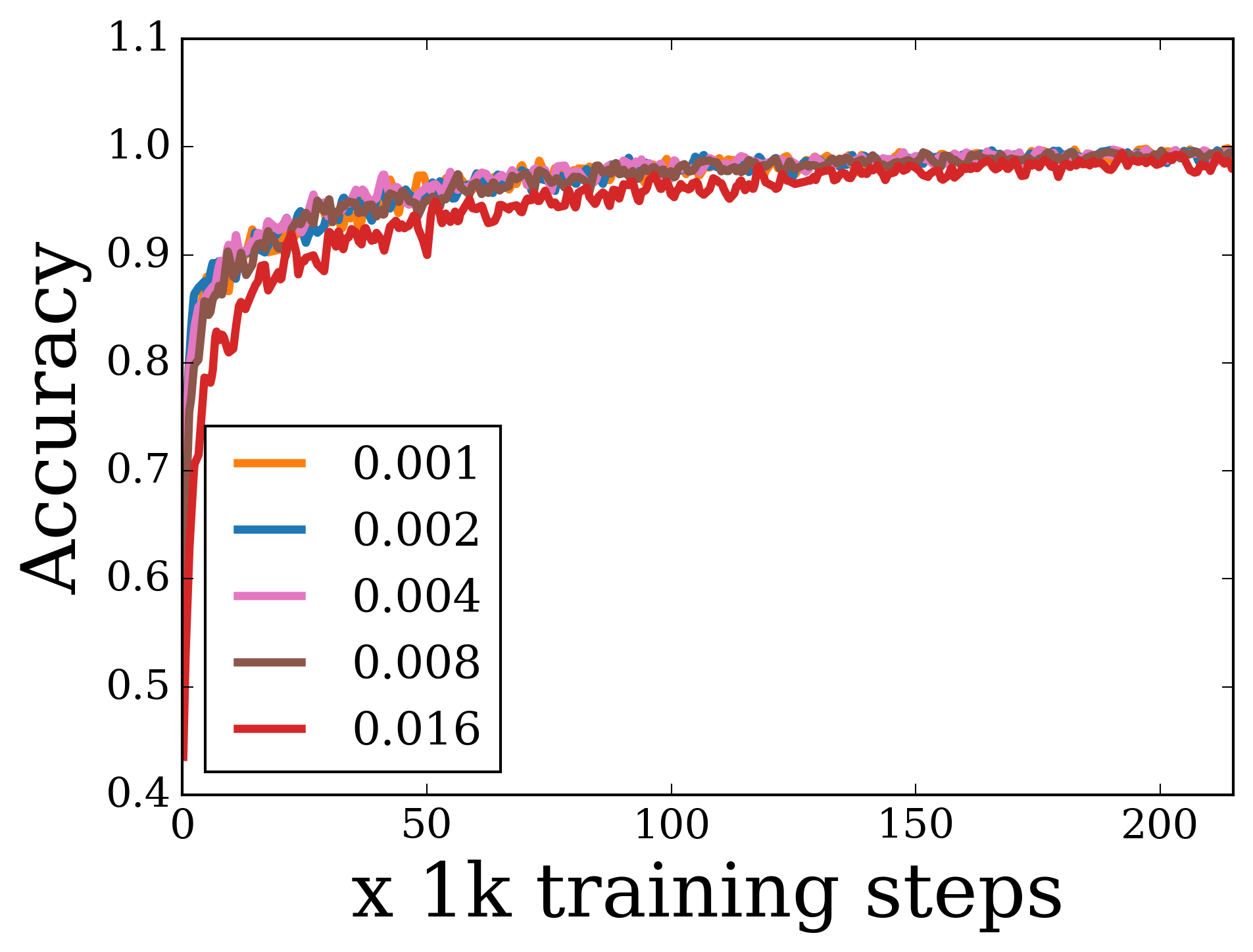}
    \caption{Train accuracy}
  \end{subfigure}
  \begin{subfigure}[b]{0.32\textwidth}
    \includegraphics[width=\textwidth]{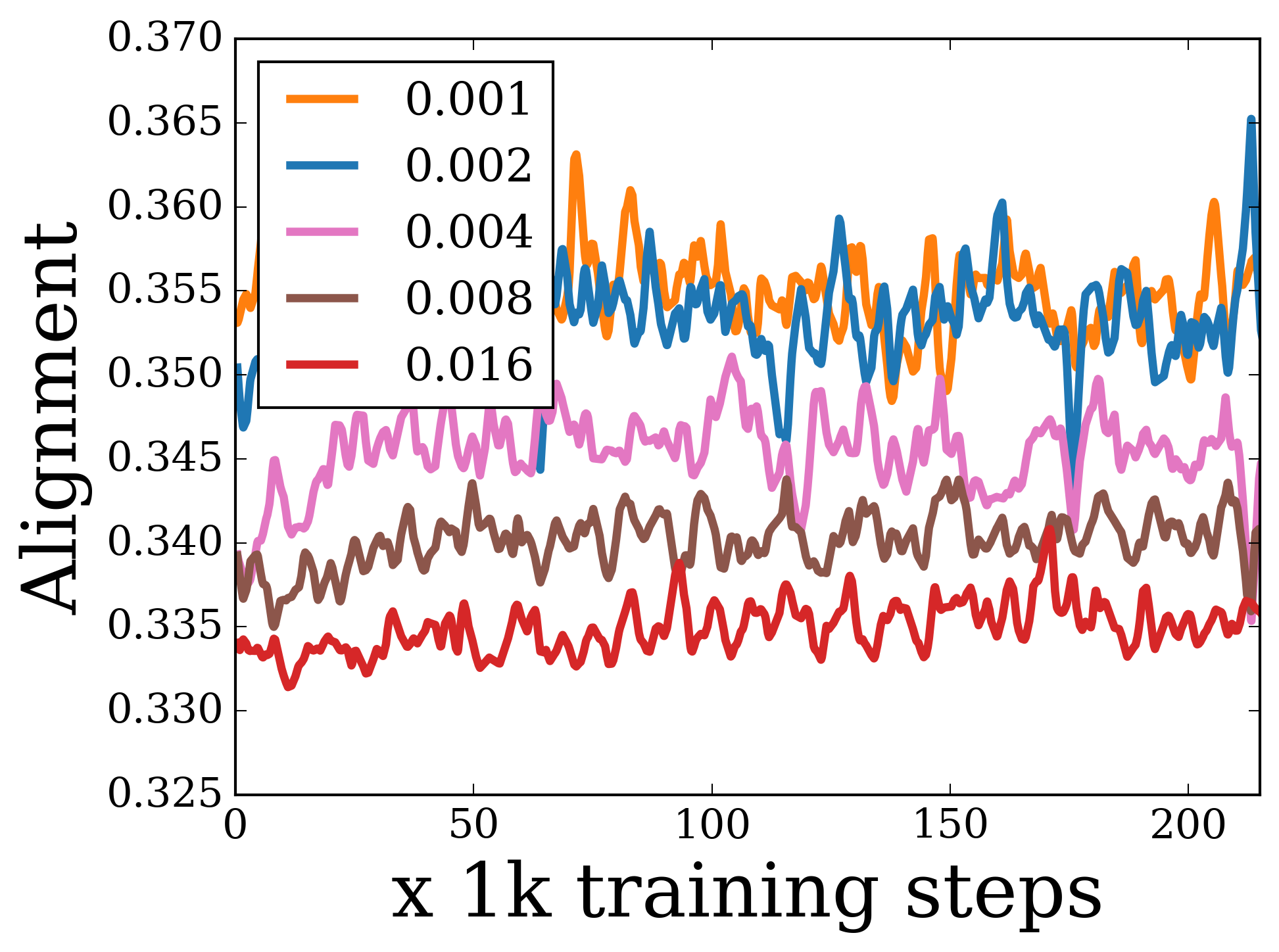}
    \caption{Hidden layer activations}
  \end{subfigure}
\begin{subfigure}[b]{0.32\textwidth}
    \includegraphics[width=\textwidth]{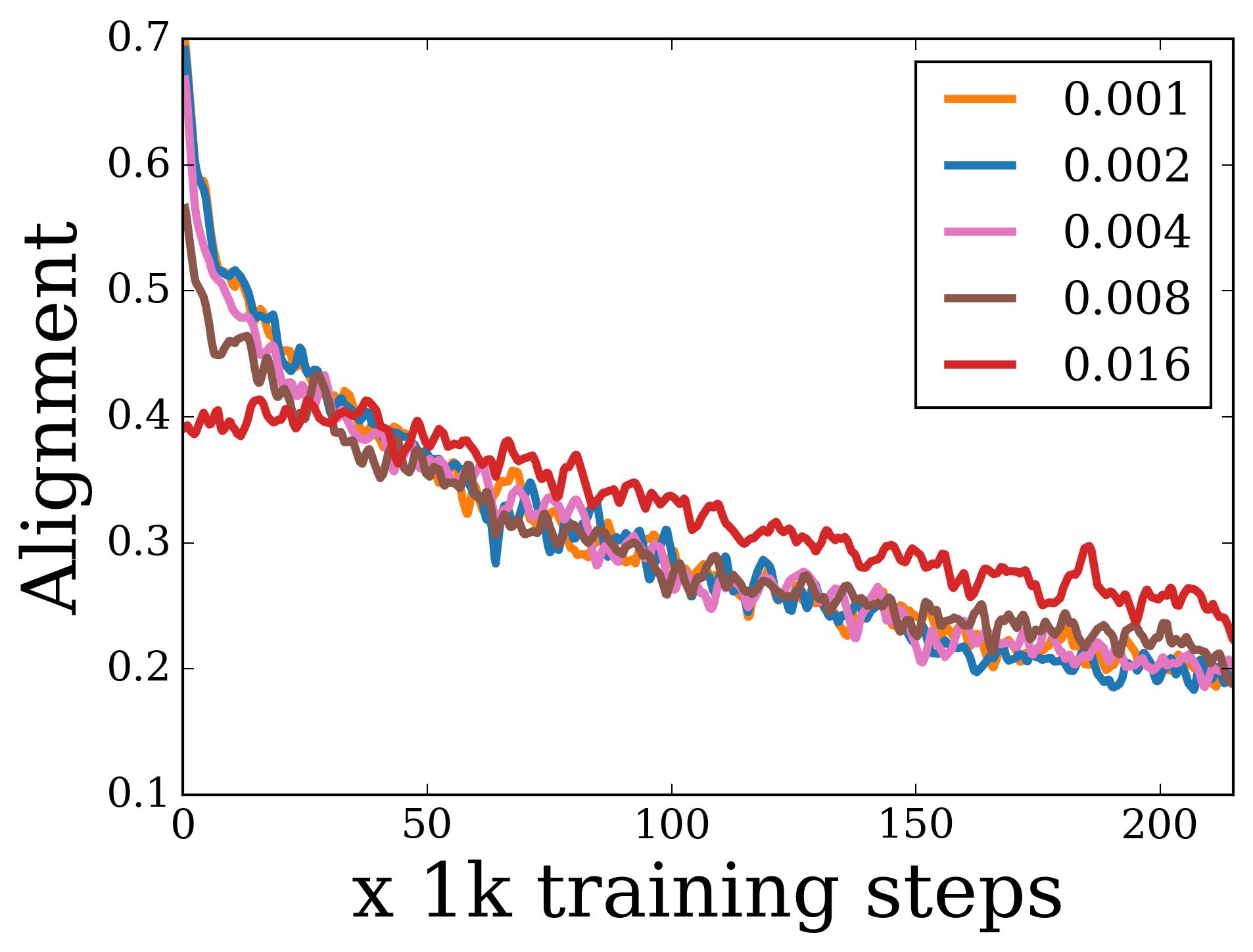}
    \caption{Logit gradients}
  \end{subfigure}
  \caption{\small Results when using ReLU activation function with squared loss on SVHN dataset.}
  \label{app:fig:relu_l2_svhn}
\end{figure}
\FloatBarrier

\section{Linear activation}
\label{app:sec:linear}
\begin{figure}[h!]
  \centering
  \begin{subfigure}[b]{0.32\textwidth}
    \includegraphics[width=\textwidth]{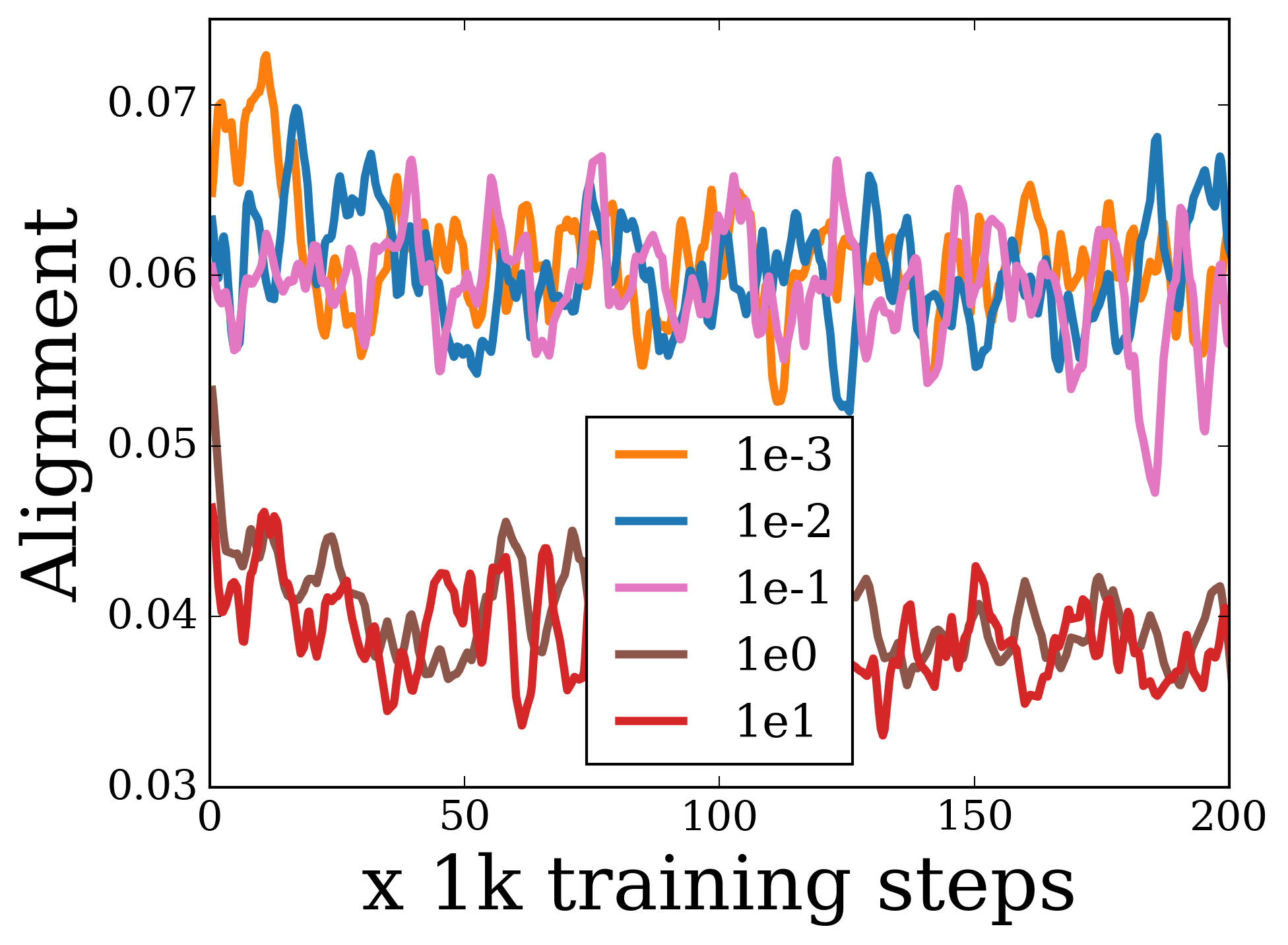}
    \caption{Hidden layer gradients}
  \end{subfigure}
  \begin{subfigure}[b]{0.32\textwidth}
    \includegraphics[width=\textwidth]{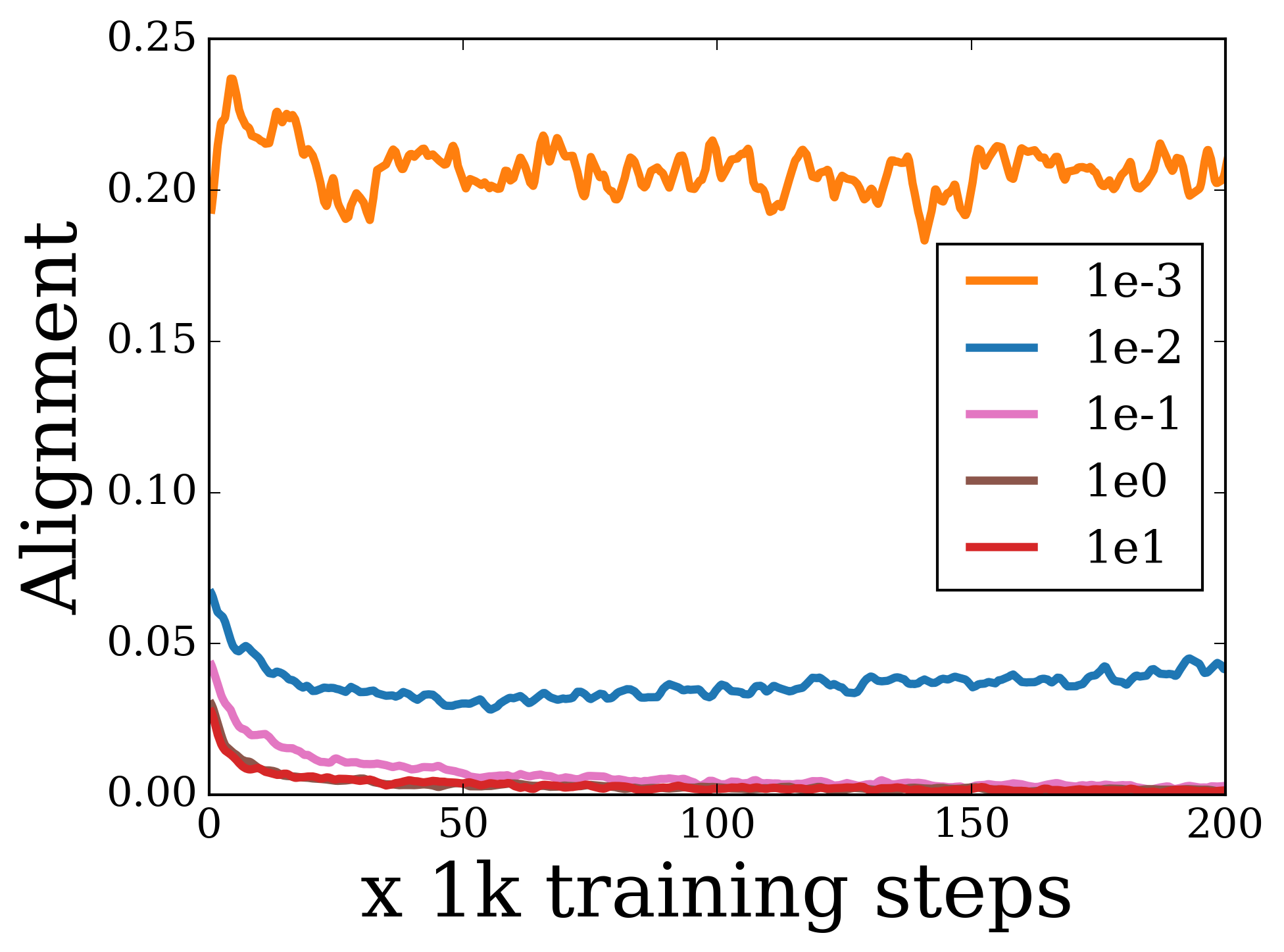}
    \caption{Top layer gradients}
  \end{subfigure}
  \begin{subfigure}[b]{0.32\textwidth}
    \includegraphics[width=\textwidth]{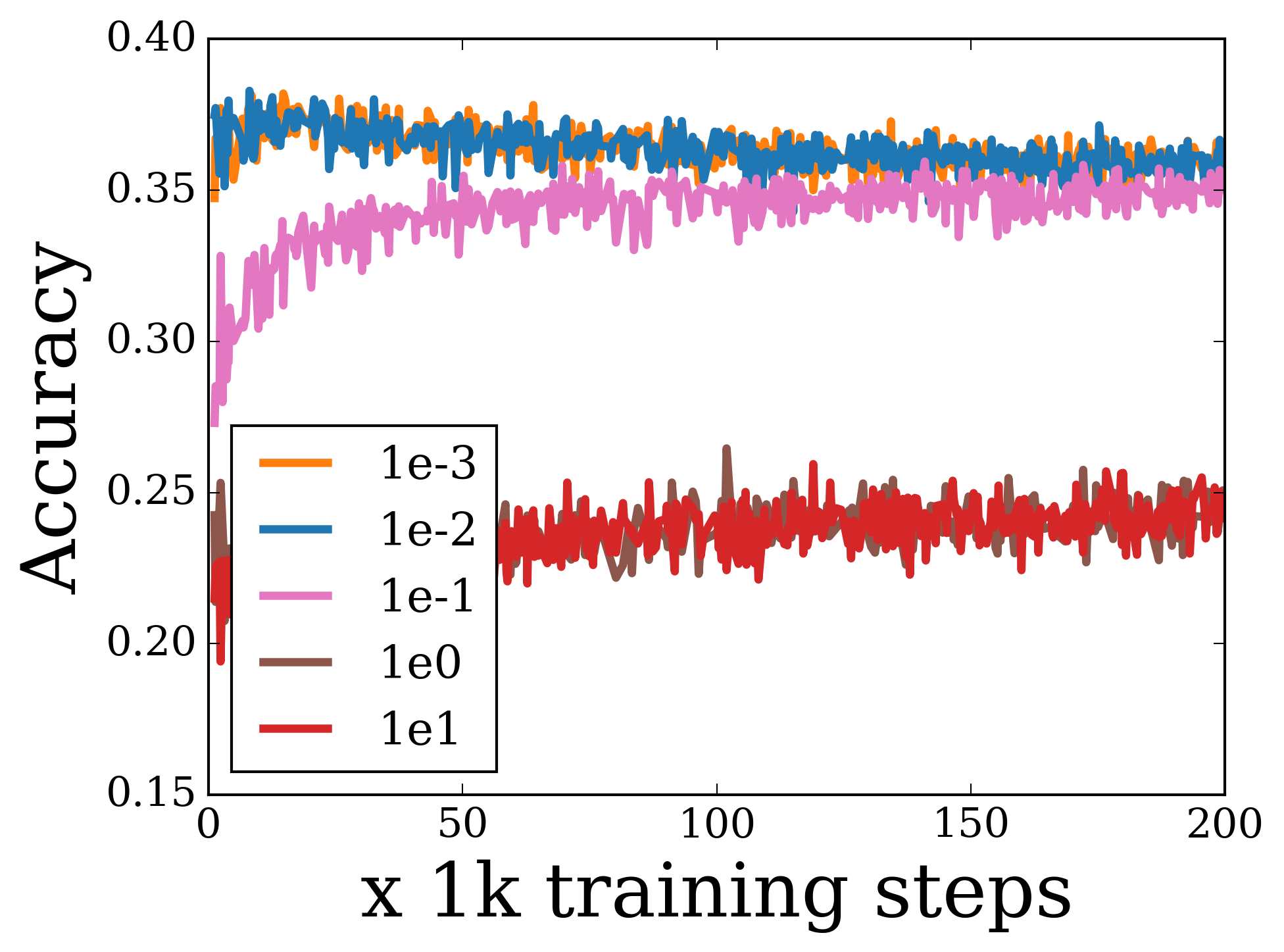}
    \caption{Test accuracy}
  \end{subfigure}
  \begin{subfigure}[b]{0.32\textwidth}
    \includegraphics[width=\textwidth]{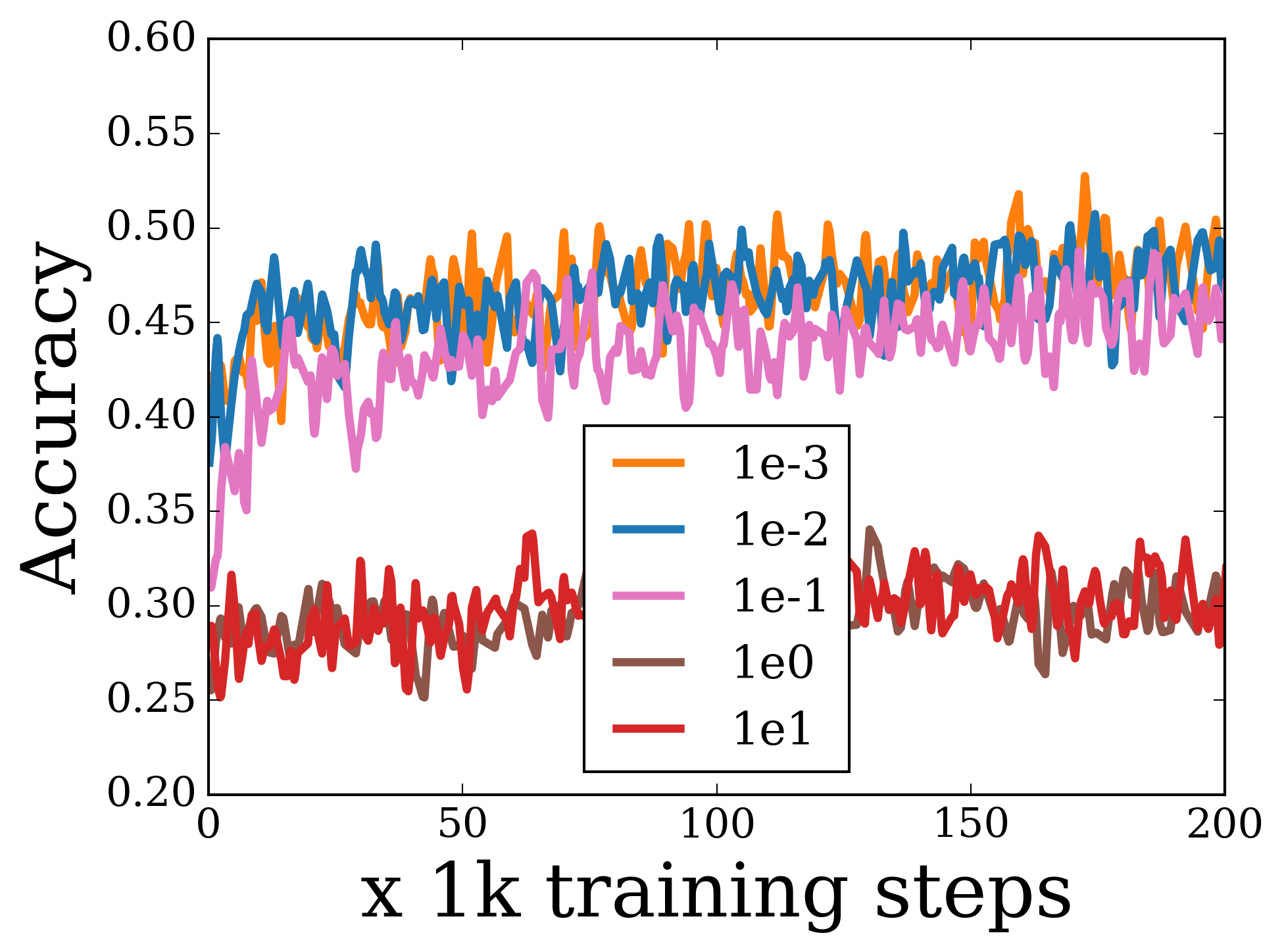}
    \caption{Train accuracy}
  \end{subfigure}
  \begin{subfigure}[b]{0.32\textwidth}
    \includegraphics[width=\textwidth]{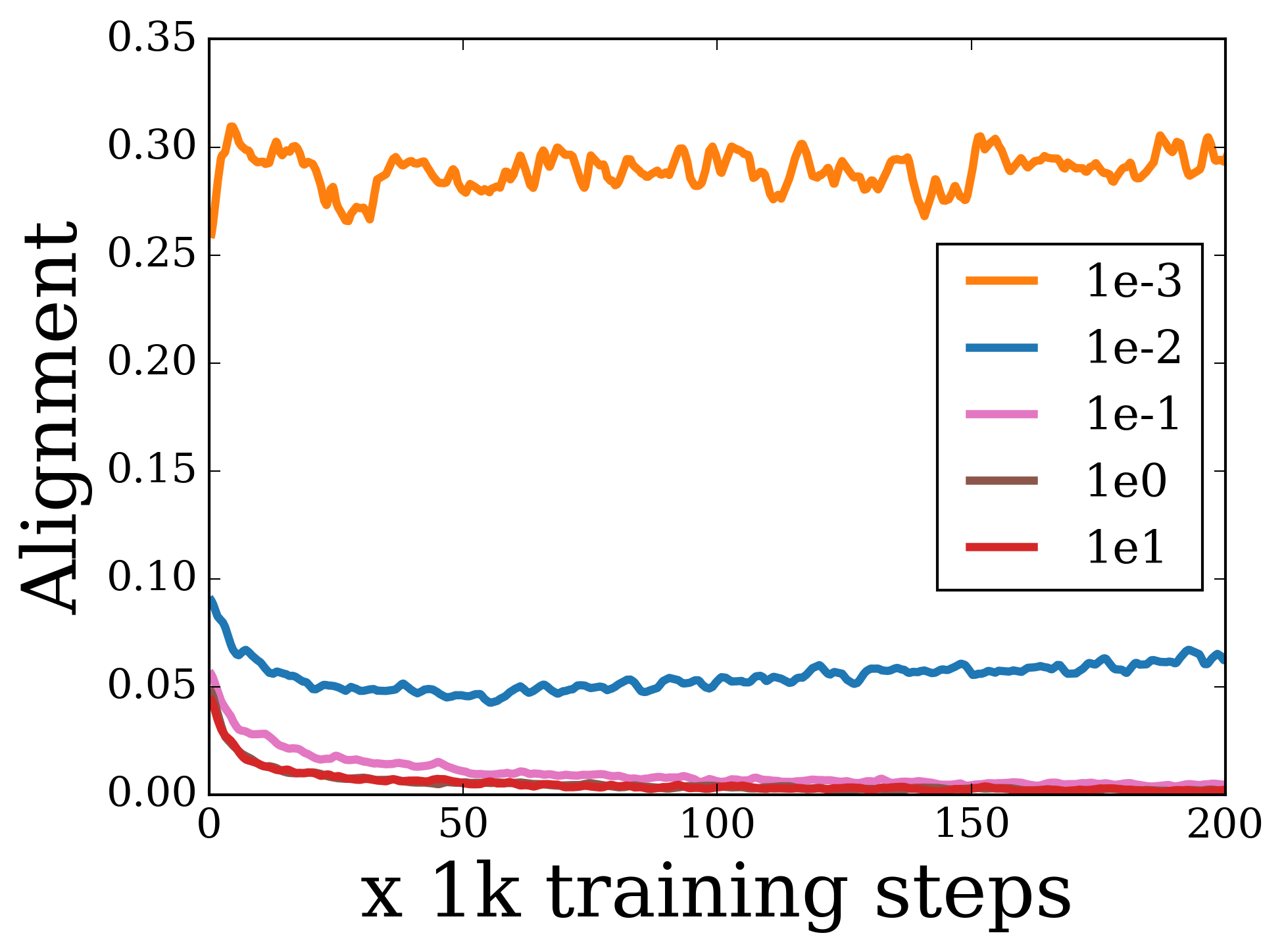}
    \caption{Hidden layer activations}
  \end{subfigure}
  \caption{\small Results when using linear activation function with softmax cross entropy loss on CIFAR-10 dataset.}
  \label{app:fig:linear}
\end{figure}
\FloatBarrier

\begin{figure}[h!]
  \centering
  \begin{subfigure}[b]{0.32\textwidth}
    \includegraphics[width=\textwidth]{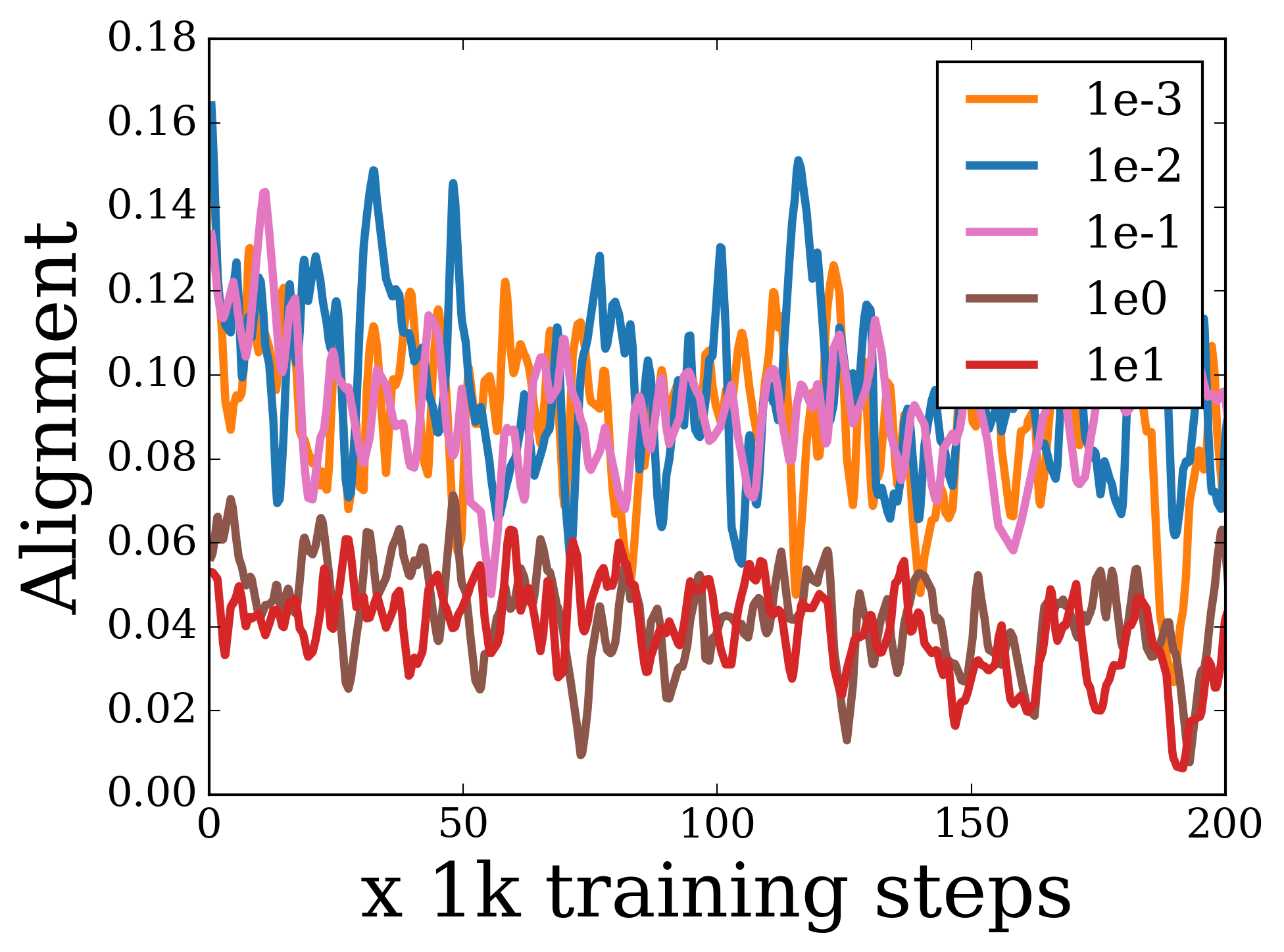}
    \caption{Hidden layer gradients}
  \end{subfigure}
  \begin{subfigure}[b]{0.32\textwidth}
    \includegraphics[width=\textwidth]{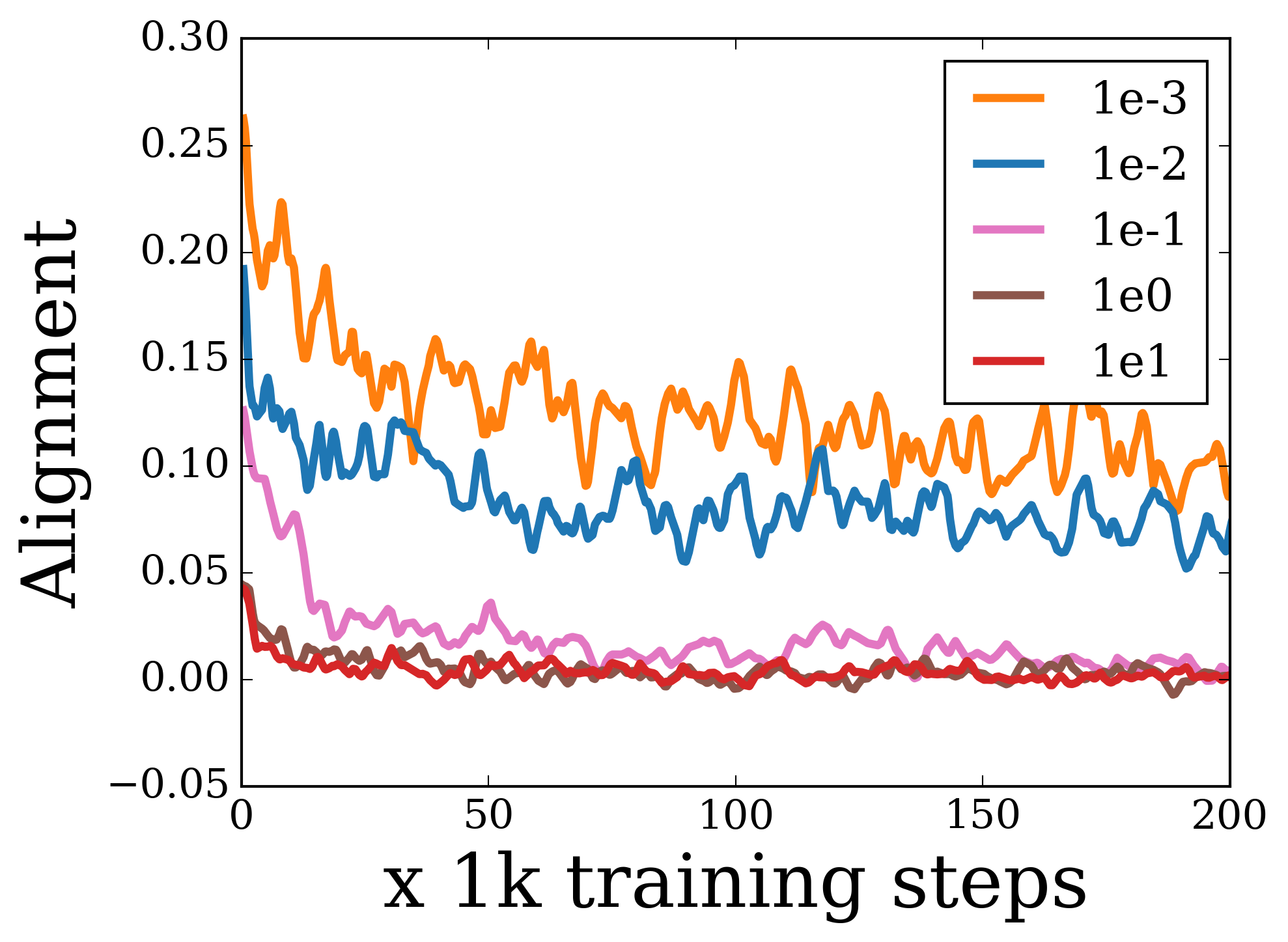}
    \caption{Top layer gradients}
  \end{subfigure}
  \begin{subfigure}[b]{0.32\textwidth}
    \includegraphics[width=\textwidth]{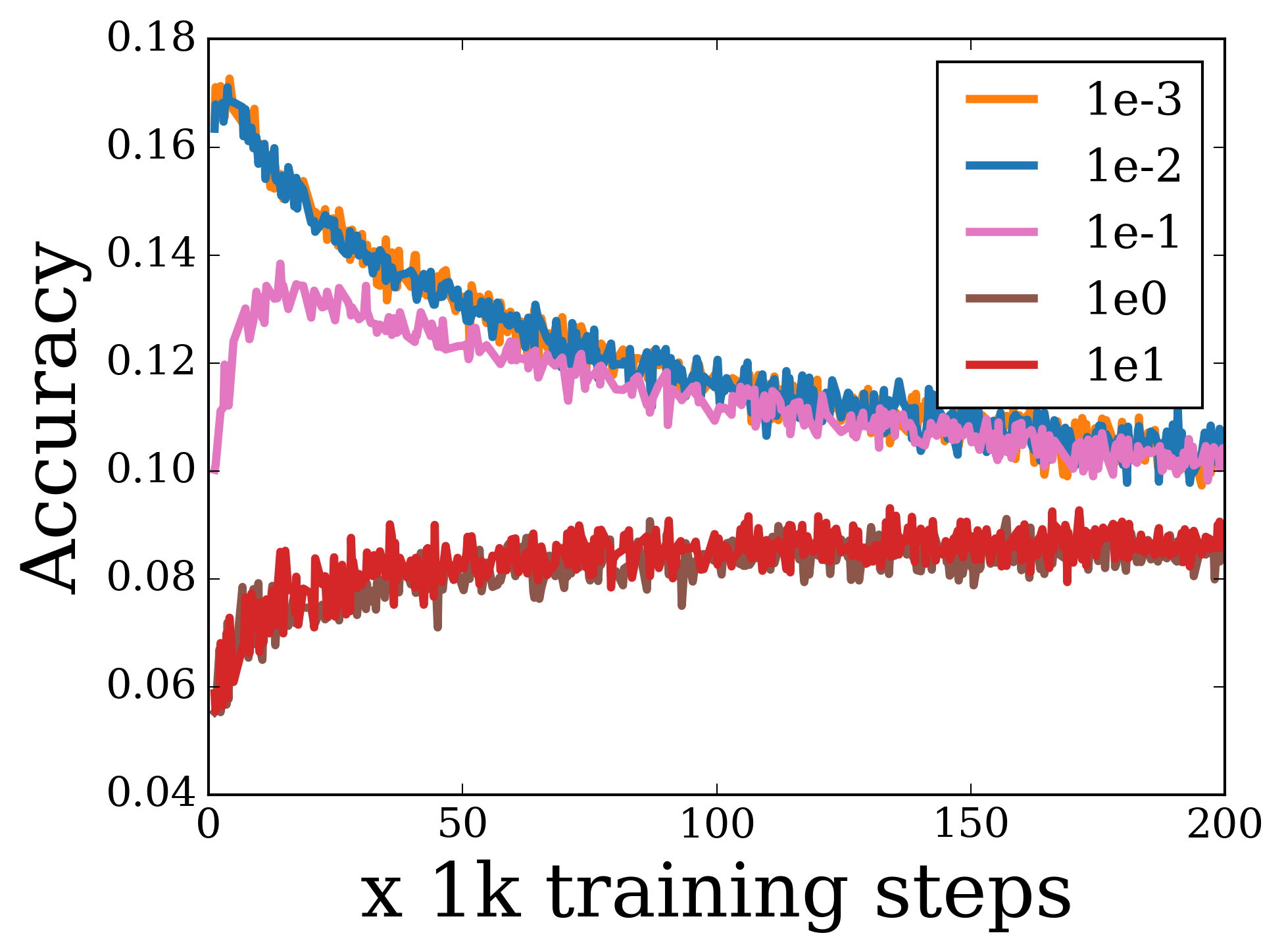}
    \caption{Test accuracy}
  \end{subfigure}
  \begin{subfigure}[b]{0.32\textwidth}
    \includegraphics[width=\textwidth]{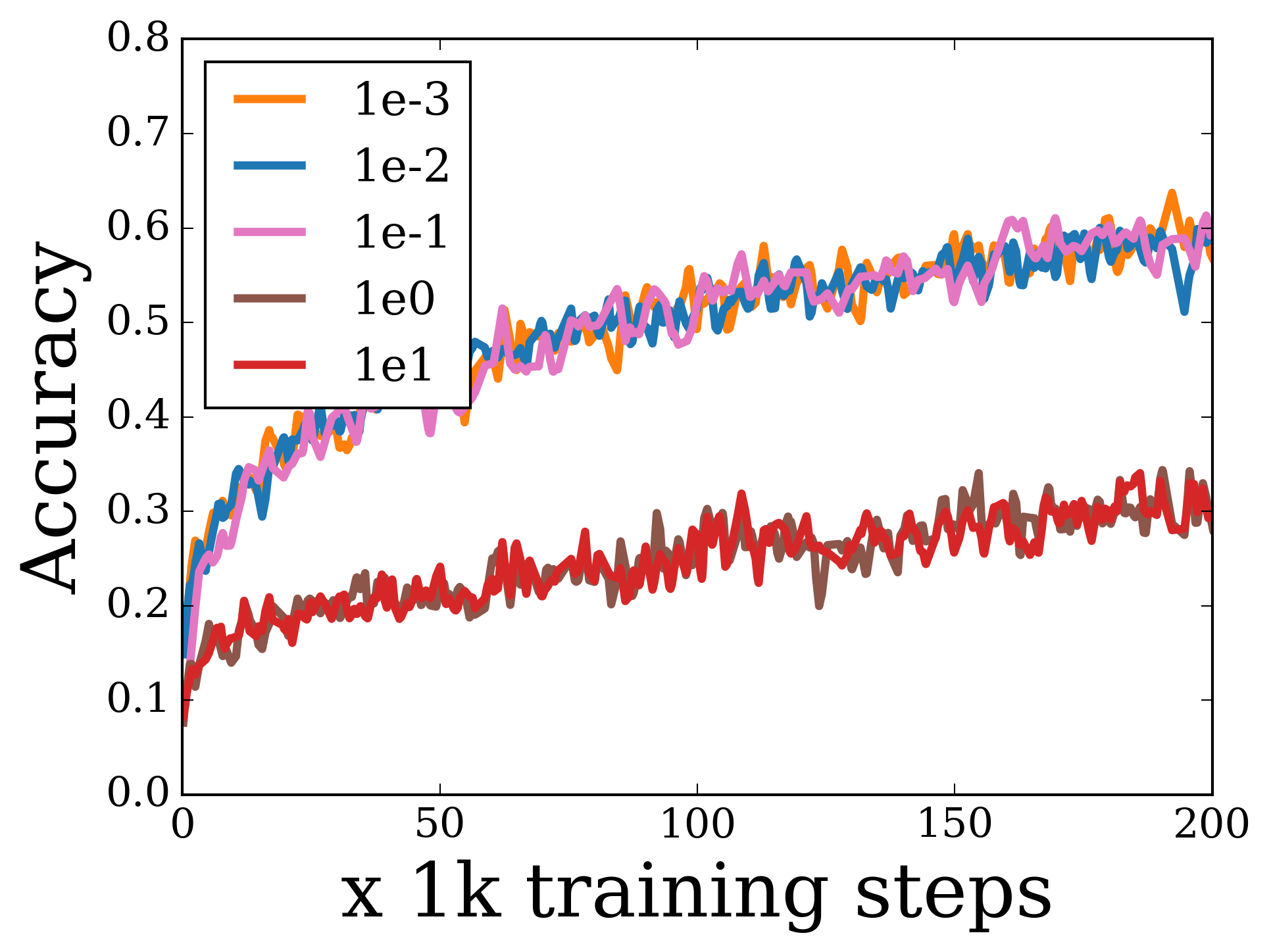}
    \caption{Train accuracy}
  \end{subfigure}
  \begin{subfigure}[b]{0.32\textwidth}
    \includegraphics[width=\textwidth]{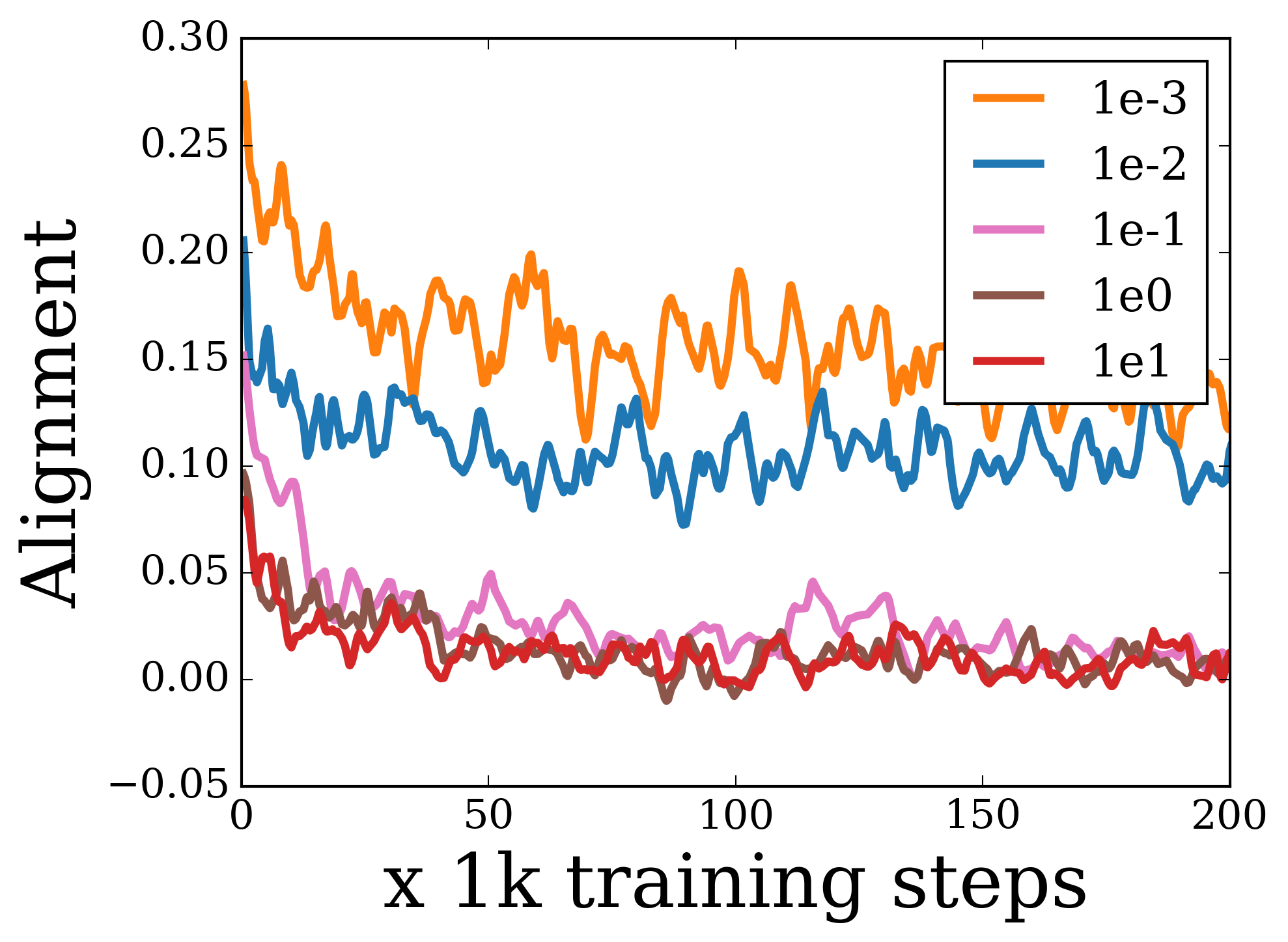}
    \caption{Hidden layer activations}
  \end{subfigure}
  \caption{\small Results when using linear activation function with softmax cross entropy loss on CIFAR-100 dataset.}
  \label{app:fig:linear_cifar100}
\end{figure}
\FloatBarrier

\begin{figure}[h!]
  \centering
  \begin{subfigure}[b]{0.32\textwidth}
    \includegraphics[width=\textwidth]{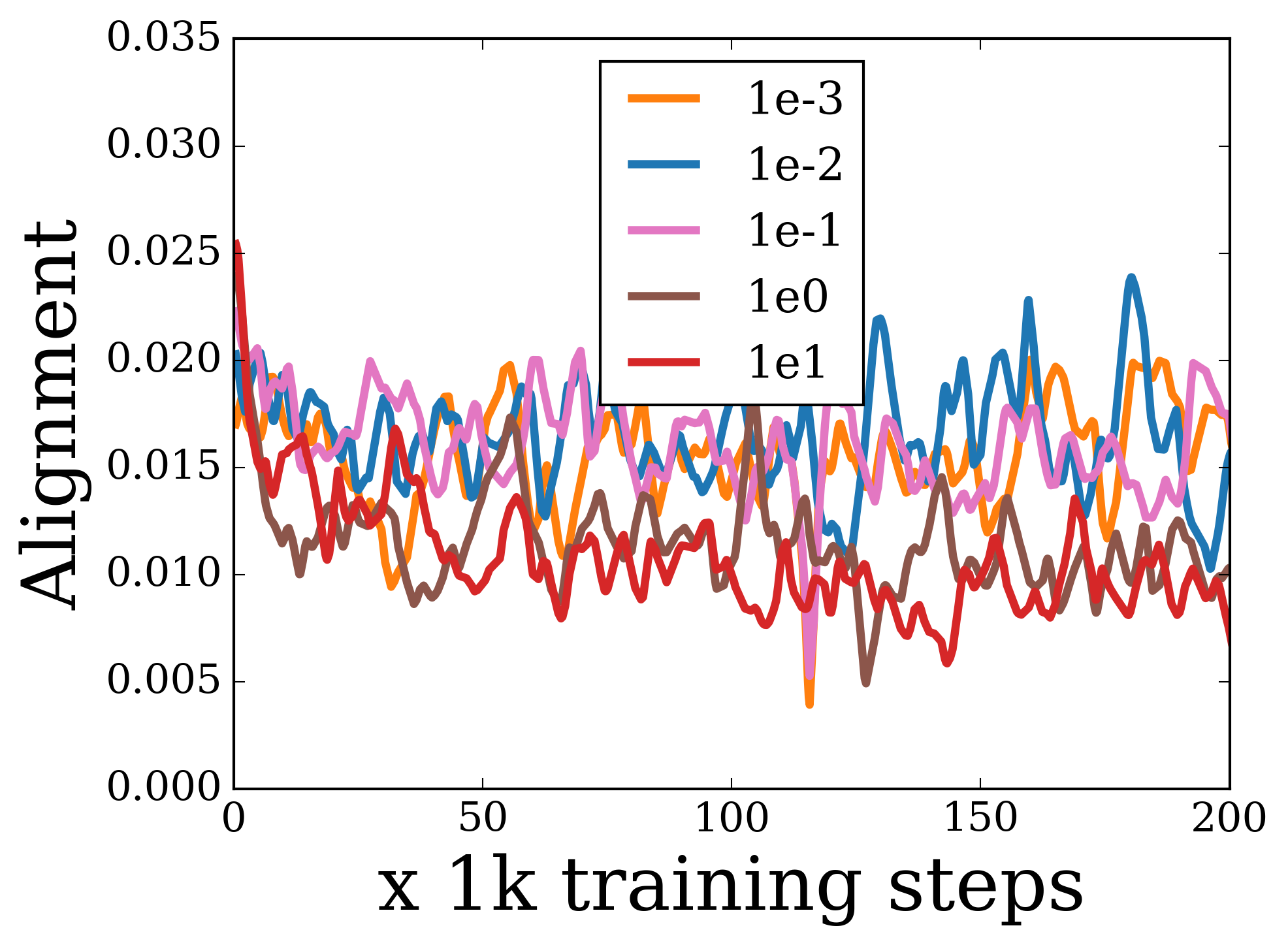}
    \caption{Hidden layer gradients}
  \end{subfigure}
  \begin{subfigure}[b]{0.32\textwidth}
    \includegraphics[width=\textwidth]{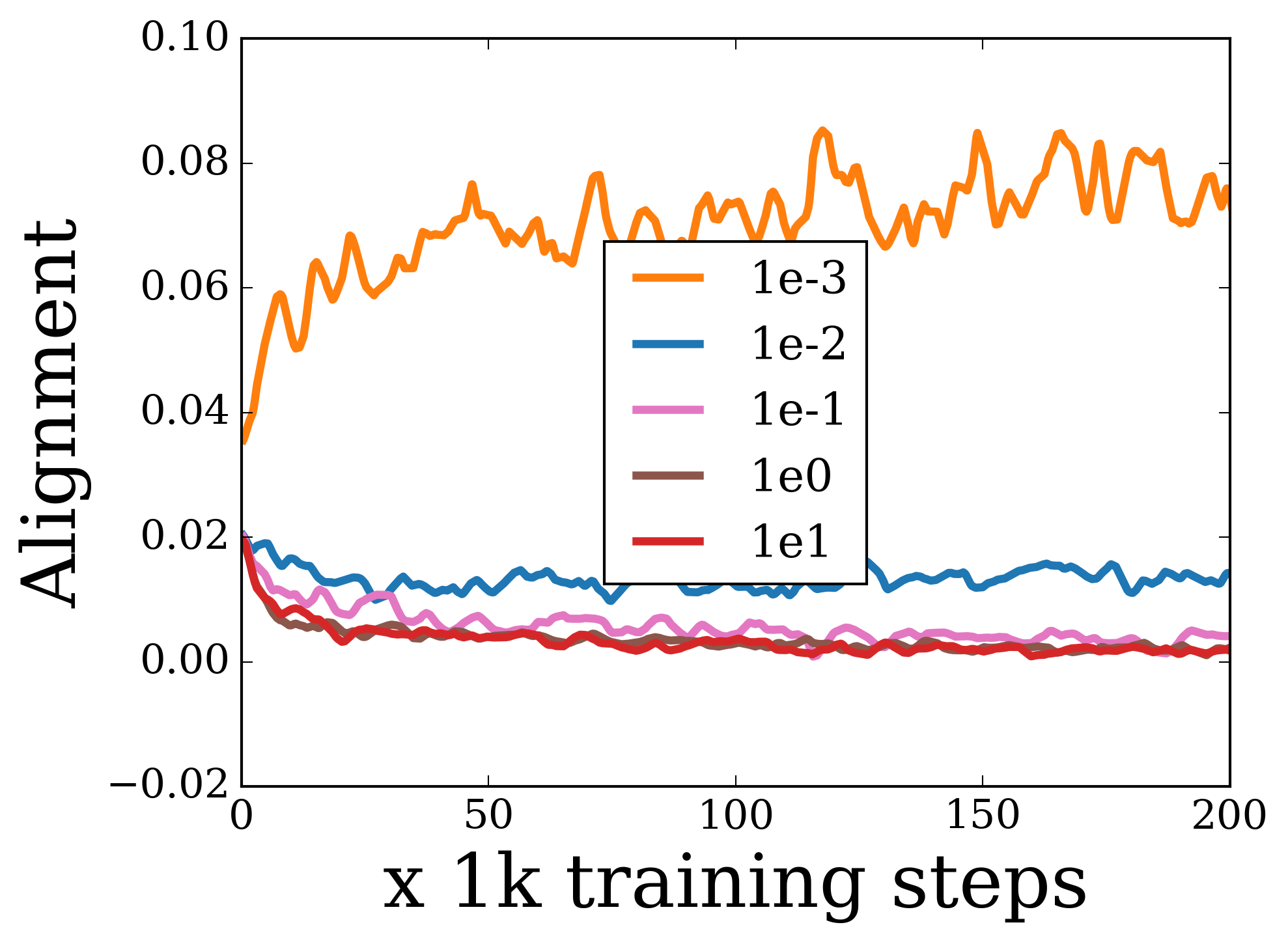}
    \caption{Top layer gradients}
  \end{subfigure}
  \begin{subfigure}[b]{0.32\textwidth}
    \includegraphics[width=\textwidth]{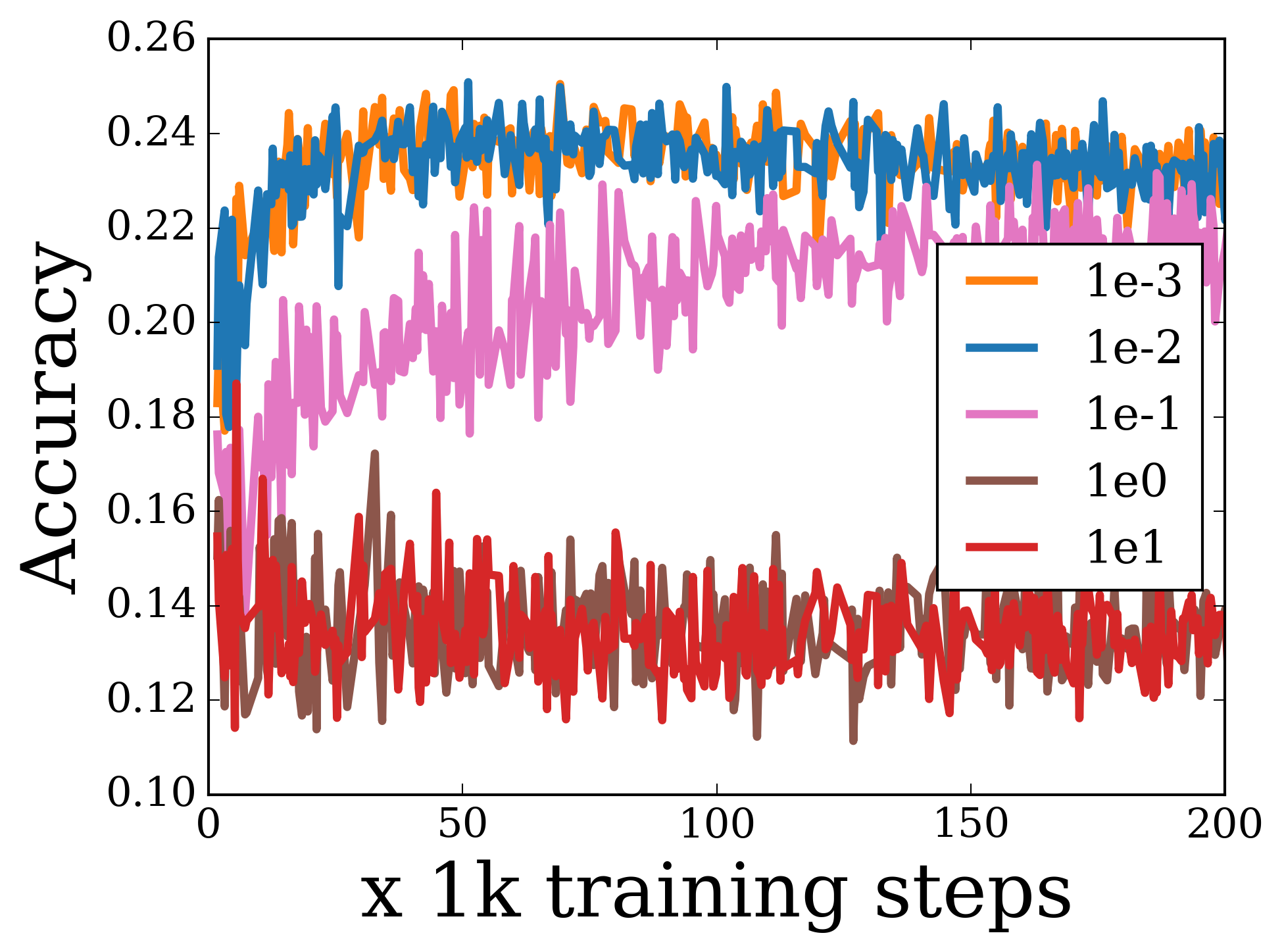}
    \caption{Test accuracy}
  \end{subfigure}
  \begin{subfigure}[b]{0.32\textwidth}
    \includegraphics[width=\textwidth]{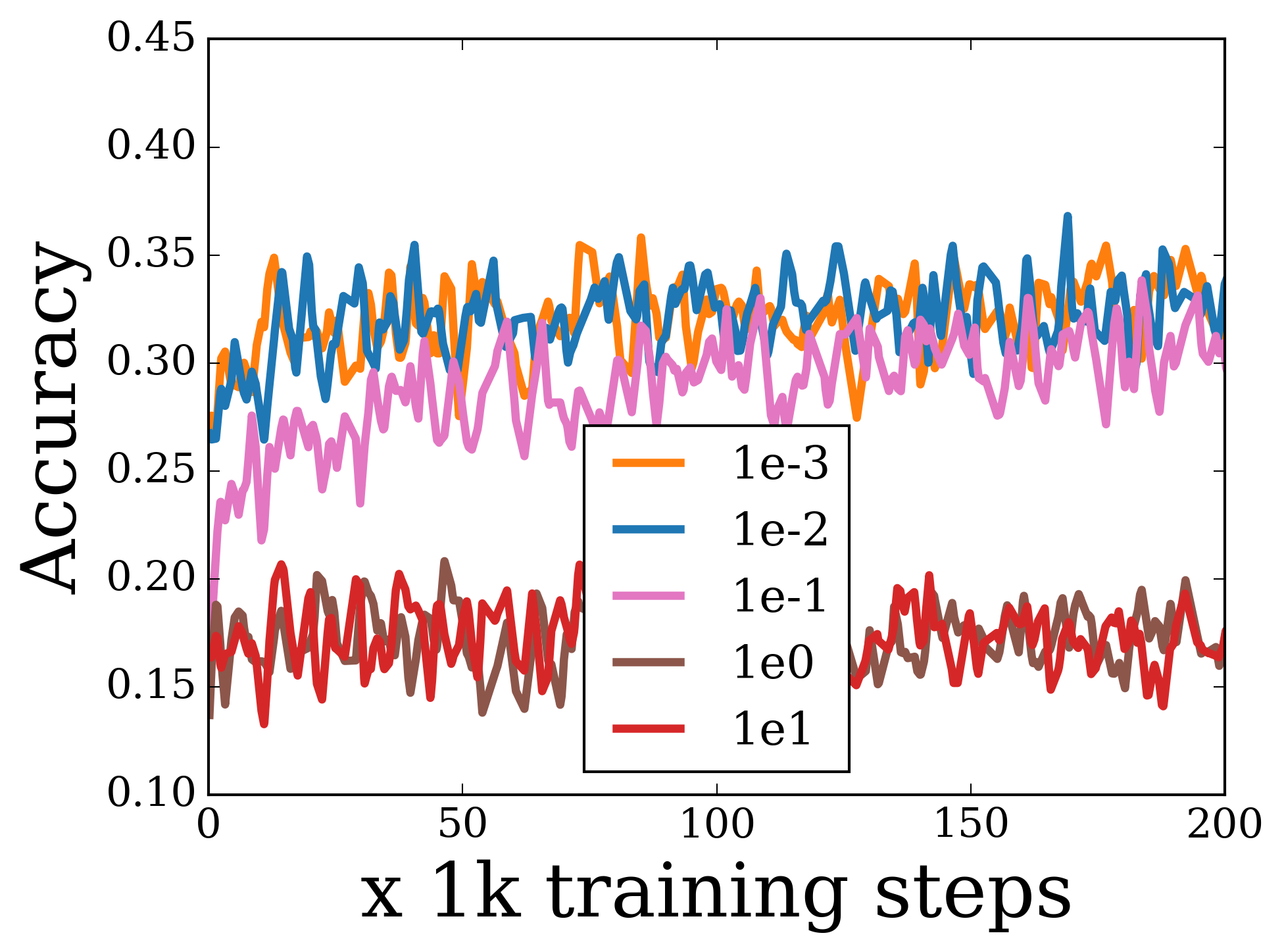}
    \caption{Train accuracy}
  \end{subfigure}
  \begin{subfigure}[b]{0.32\textwidth}
    \includegraphics[width=\textwidth]{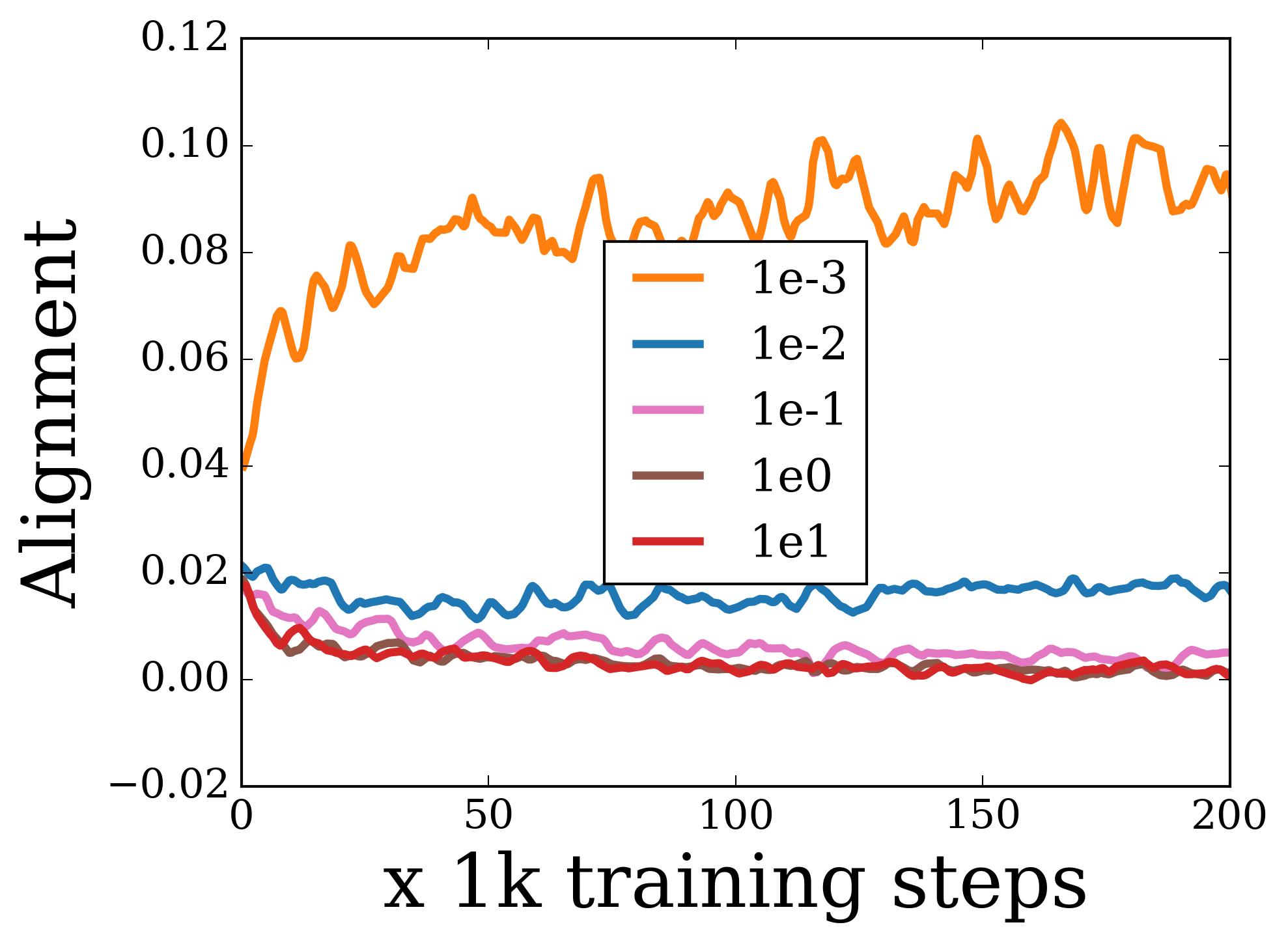}
    \caption{Hidden layer activations}
  \end{subfigure}
  \caption{\small Results when using linear activation function with softmax cross entropy loss on SVHN dataset.}
  \label{app:fig:linear_svhn}
\end{figure}
\FloatBarrier

\section{Sigmoid activation}
Note that when employing Sigmoid activation, after a certain scale the hidden layer gradients start to vanish. We try to compensate for this by increasing the number of hidden units to 2048.

\label{app:sec:sigmoid}
\begin{figure}[h!]
  \centering
  \begin{subfigure}[b]{0.32\textwidth}
    \includegraphics[width=\textwidth]{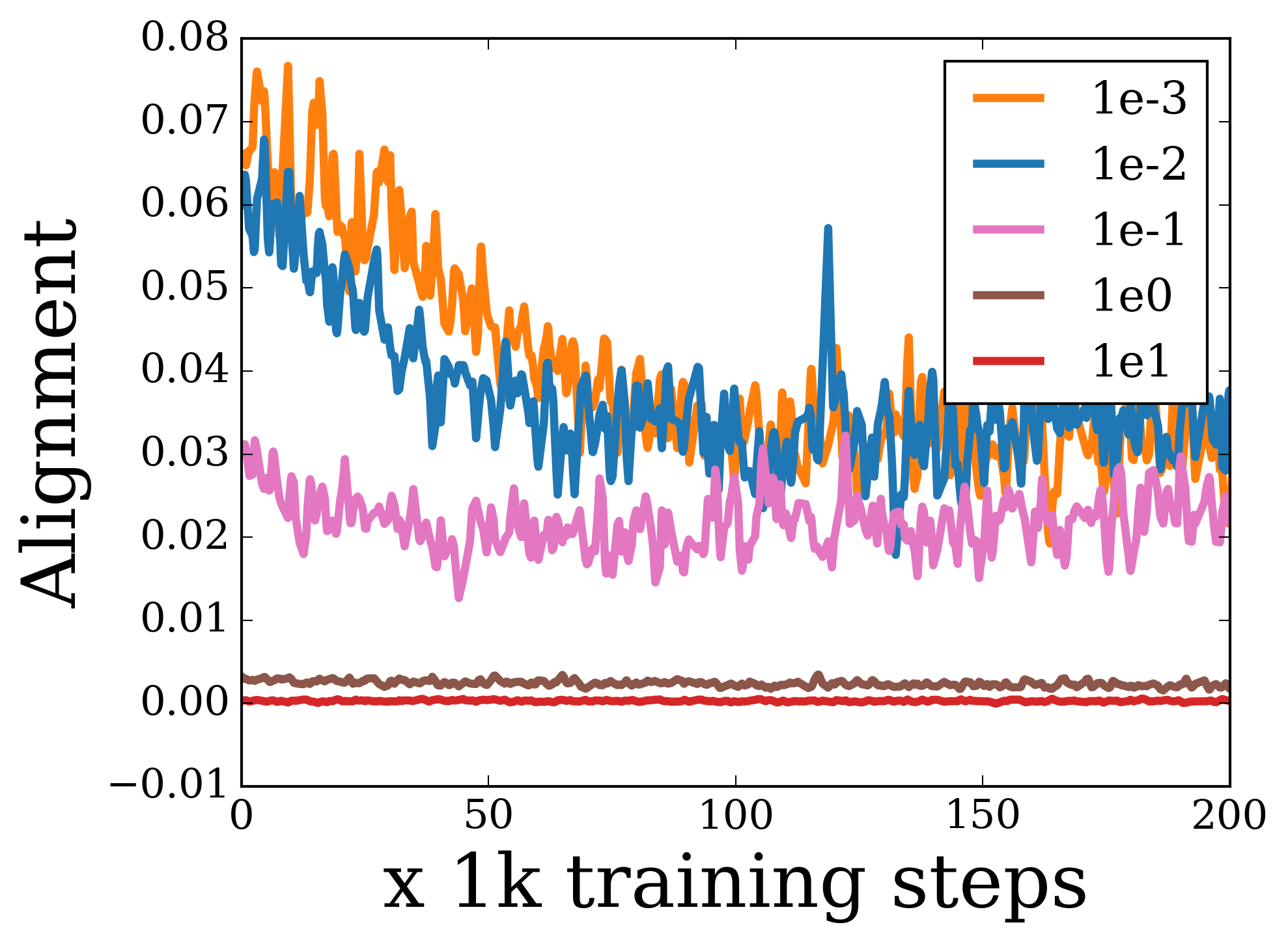}
    \caption{Hidden layer gradients}
  \end{subfigure}
  \begin{subfigure}[b]{0.32\textwidth}
    \includegraphics[width=\textwidth]{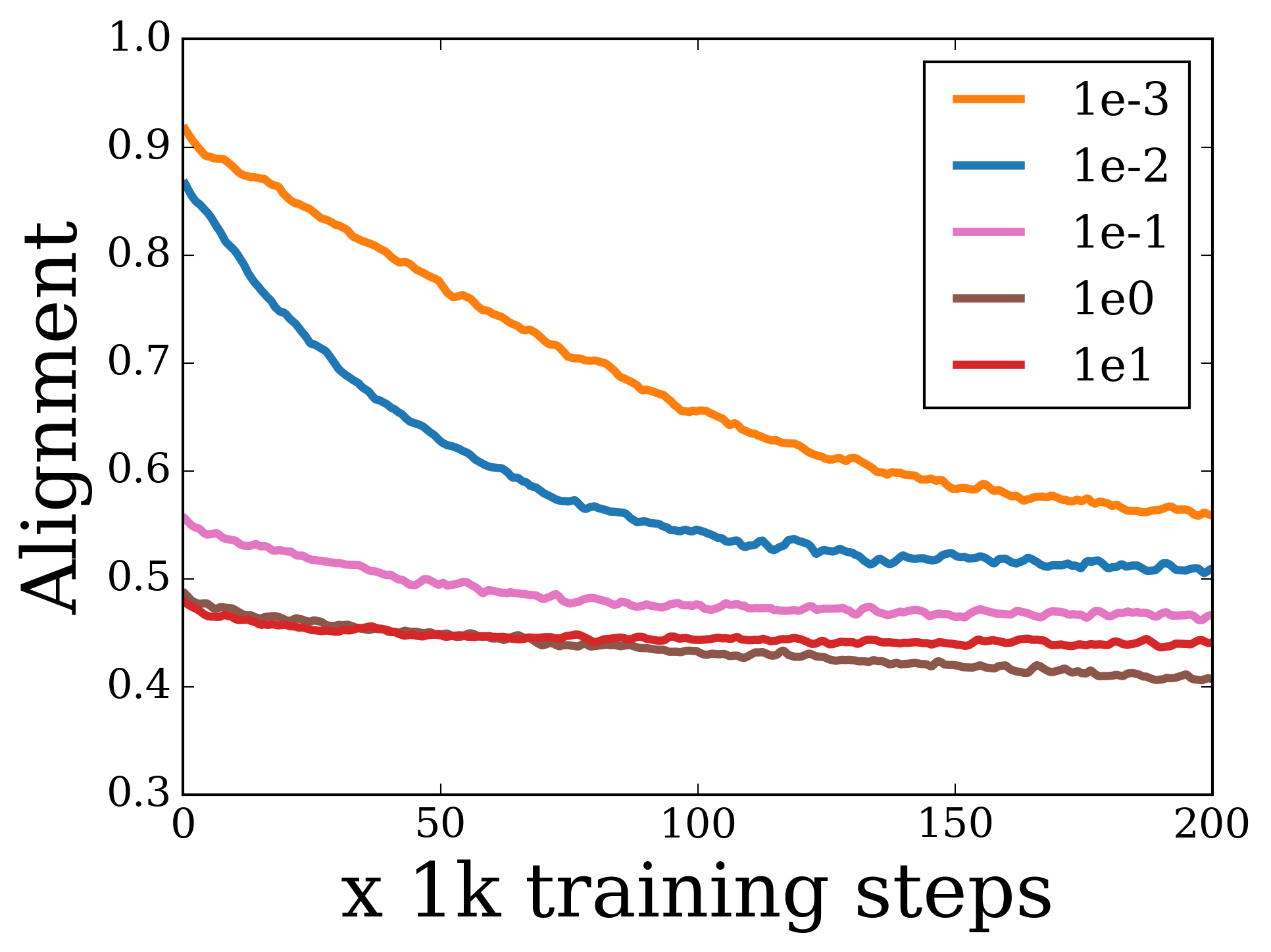}
    \caption{Top layer gradients}
  \end{subfigure}
  \begin{subfigure}[b]{0.32\textwidth}
    \includegraphics[width=\textwidth]{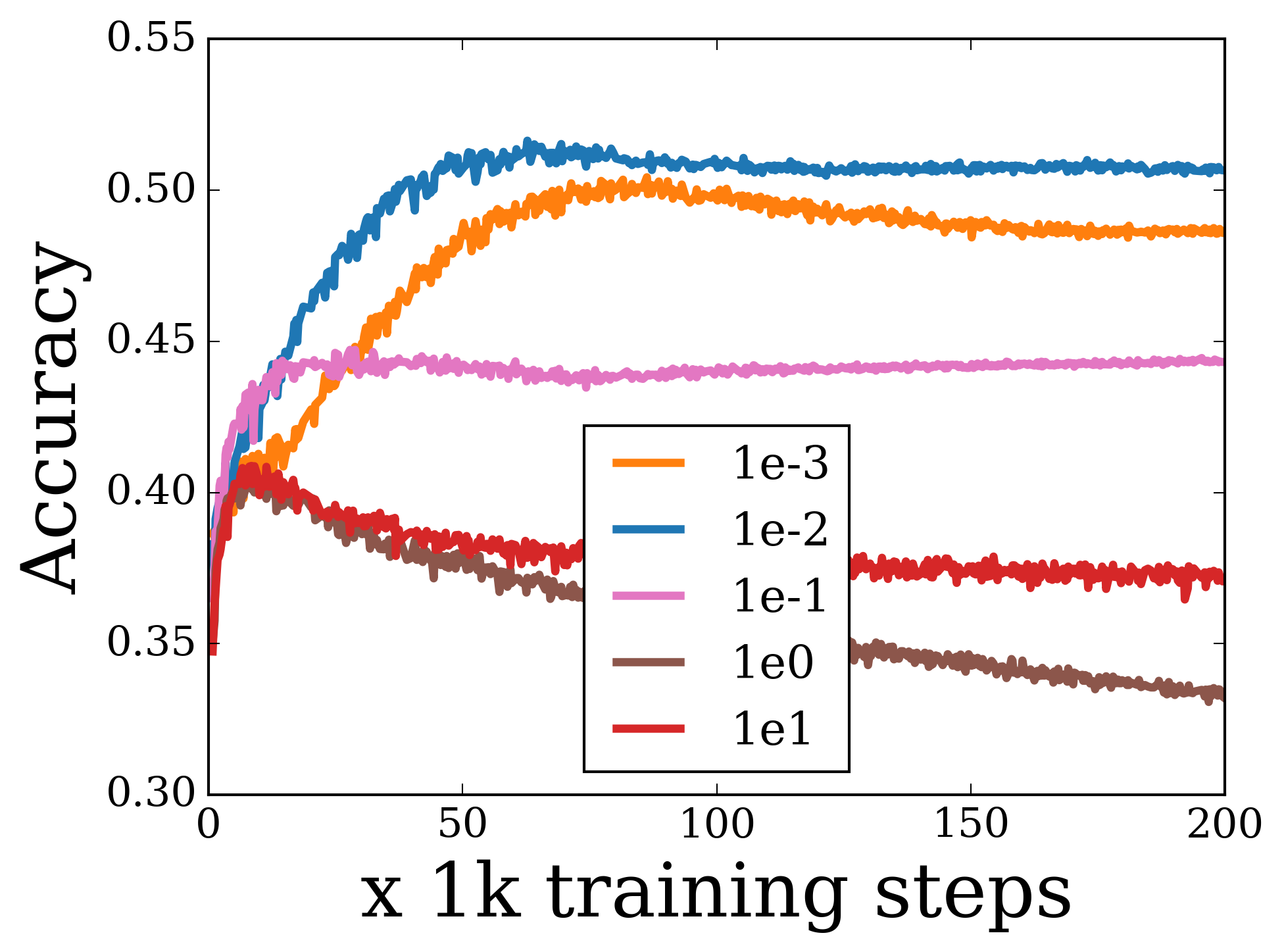}
    \caption{Test accuracy}
  \end{subfigure}
  \begin{subfigure}[b]{0.32\textwidth}
    \includegraphics[width=\textwidth]{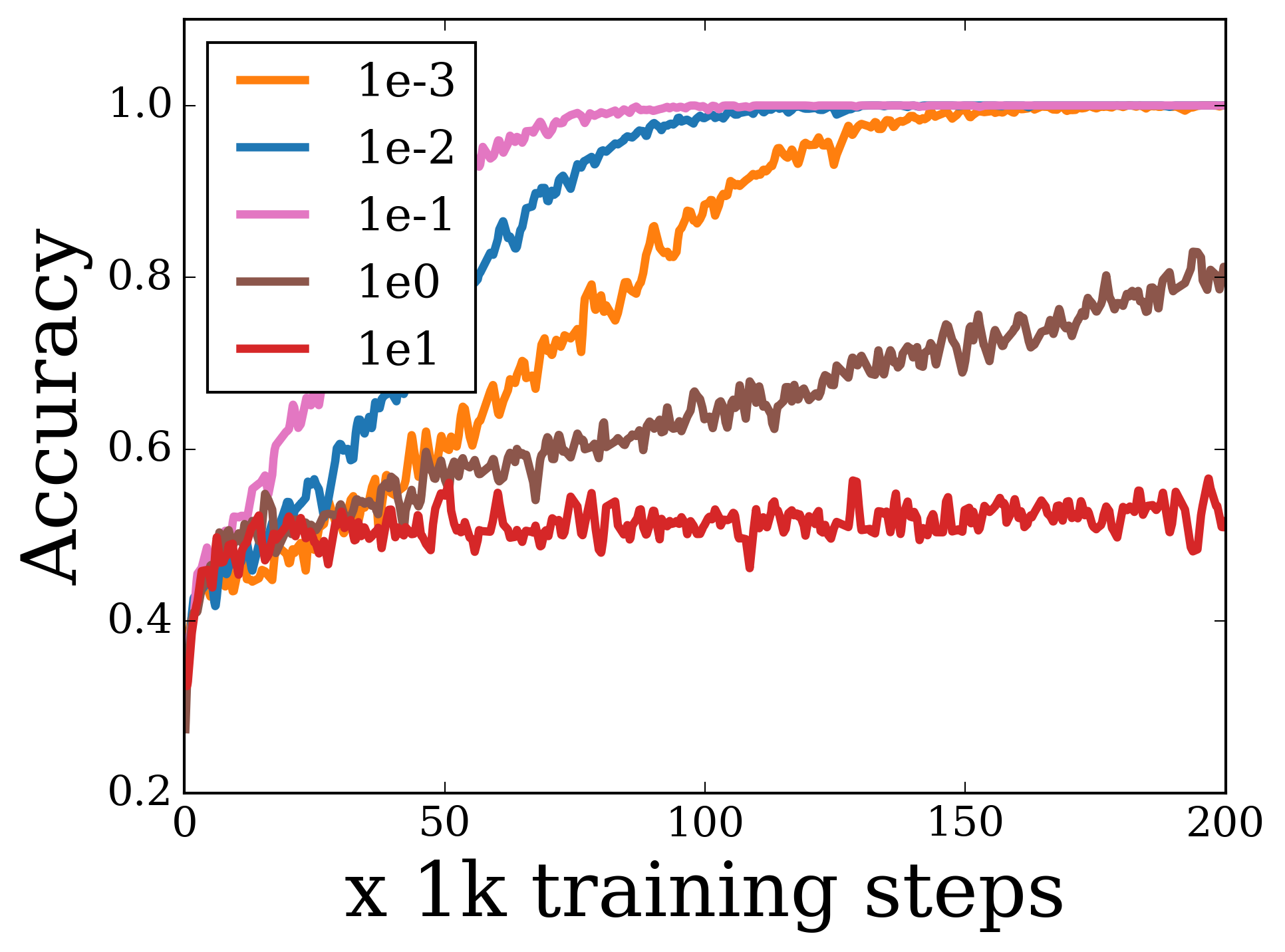}
    \caption{Train accuracy}
  \end{subfigure}
  \begin{subfigure}[b]{0.32\textwidth}
    \includegraphics[width=\textwidth]{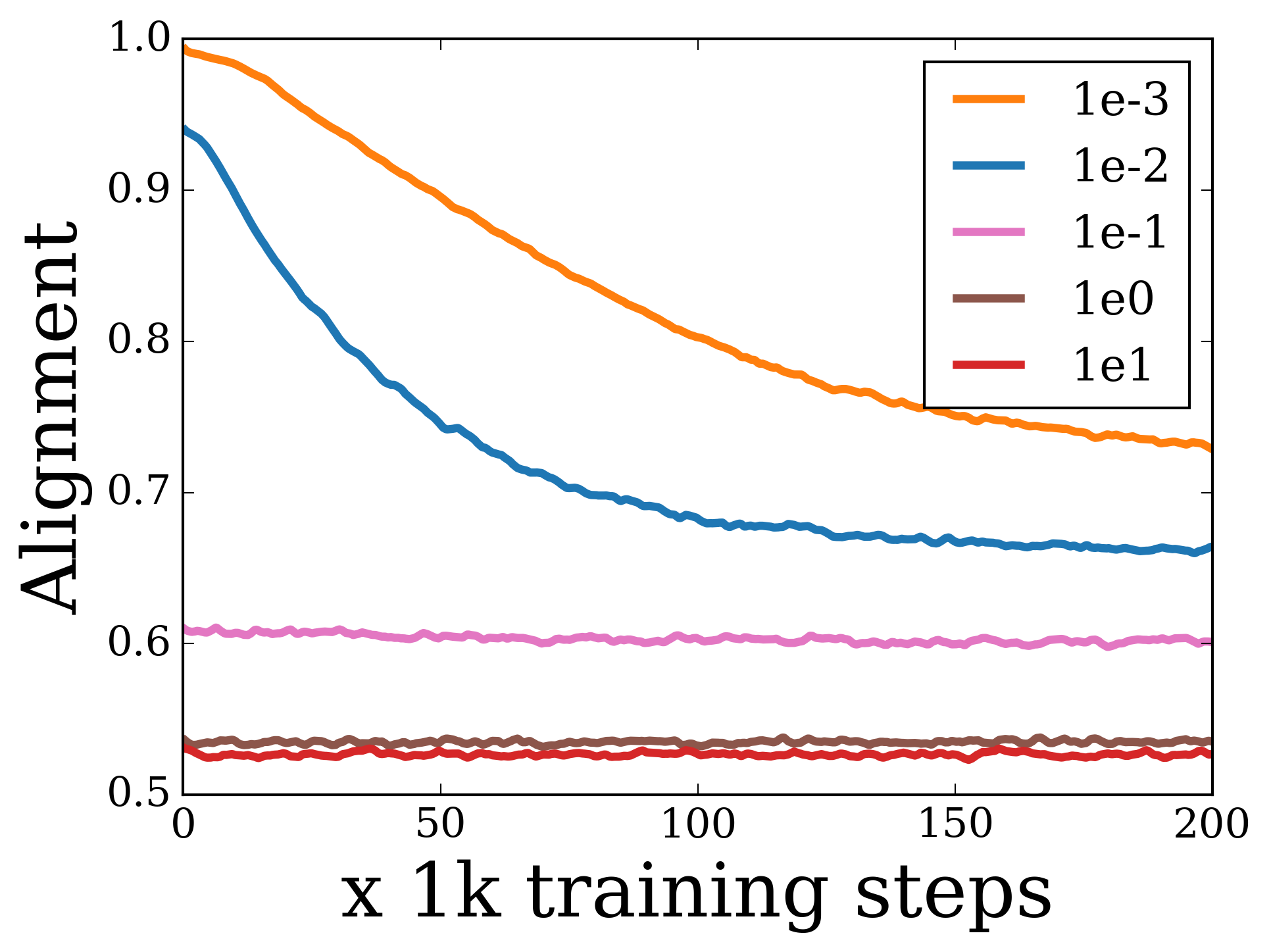}
    \caption{Hidden layer activations}
  \end{subfigure}
  \caption{\small Results when using Sigmoid activation function on CIFAR10 dataset.}
  \label{app:fig:sigmoid}
\end{figure}
\FloatBarrier

\begin{figure}[h!]
  \centering
  \begin{subfigure}[b]{0.32\textwidth}
    \includegraphics[width=\textwidth]{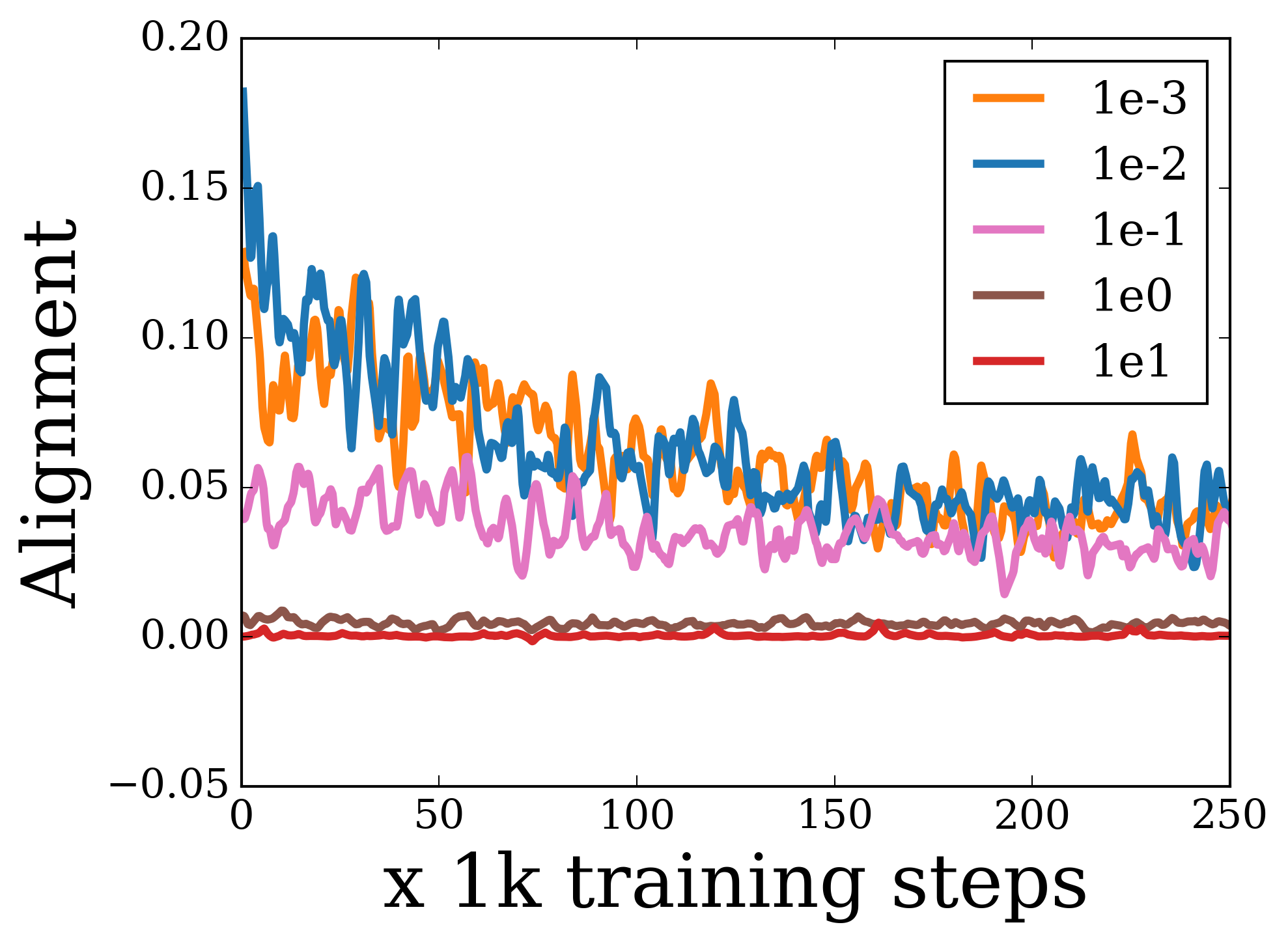}
    \caption{Hidden layer gradients}
  \end{subfigure}
  \begin{subfigure}[b]{0.32\textwidth}
    \includegraphics[width=\textwidth]{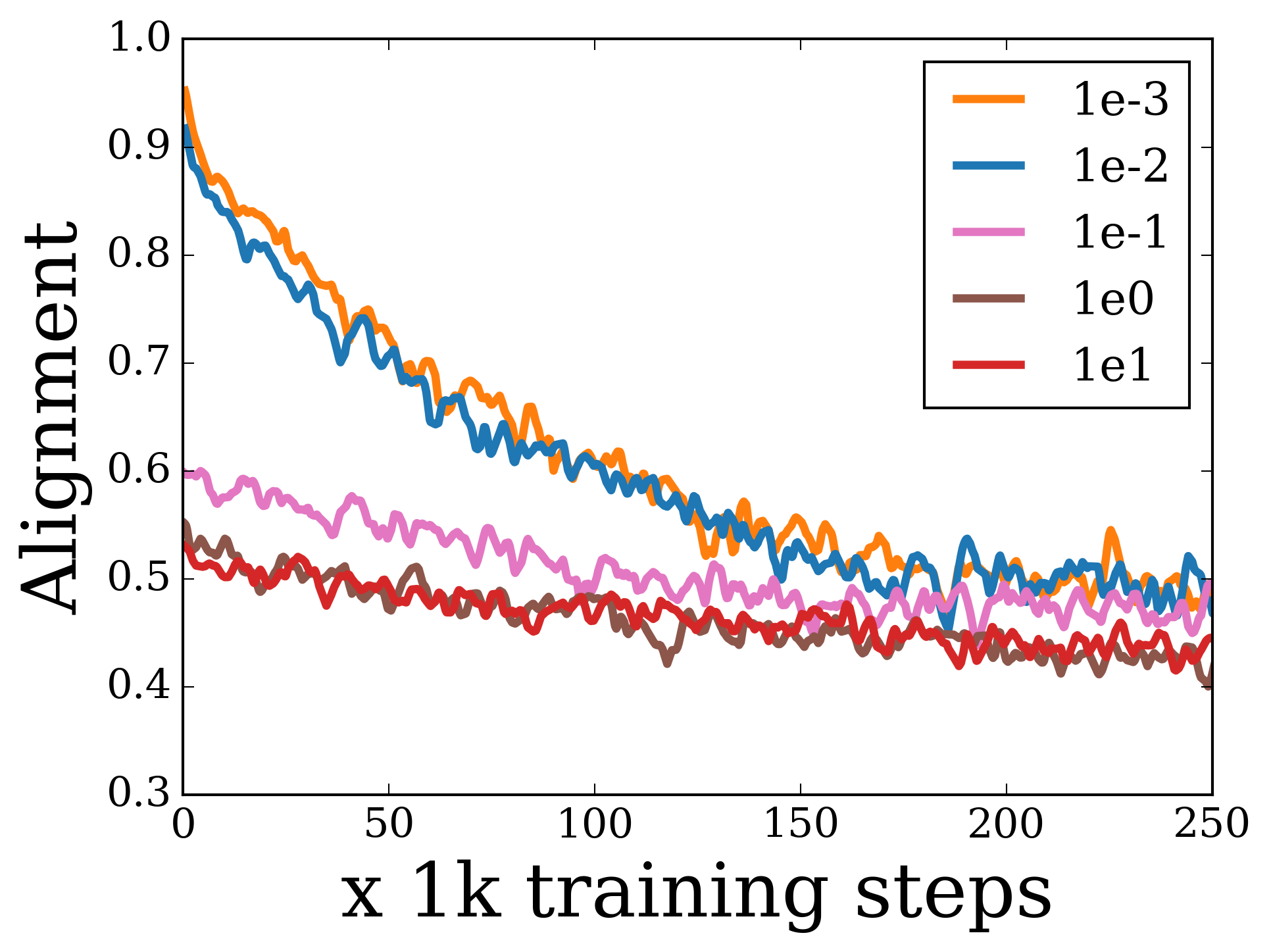}
    \caption{Top layer gradients}
  \end{subfigure}
  \begin{subfigure}[b]{0.32\textwidth}
    \includegraphics[width=\textwidth]{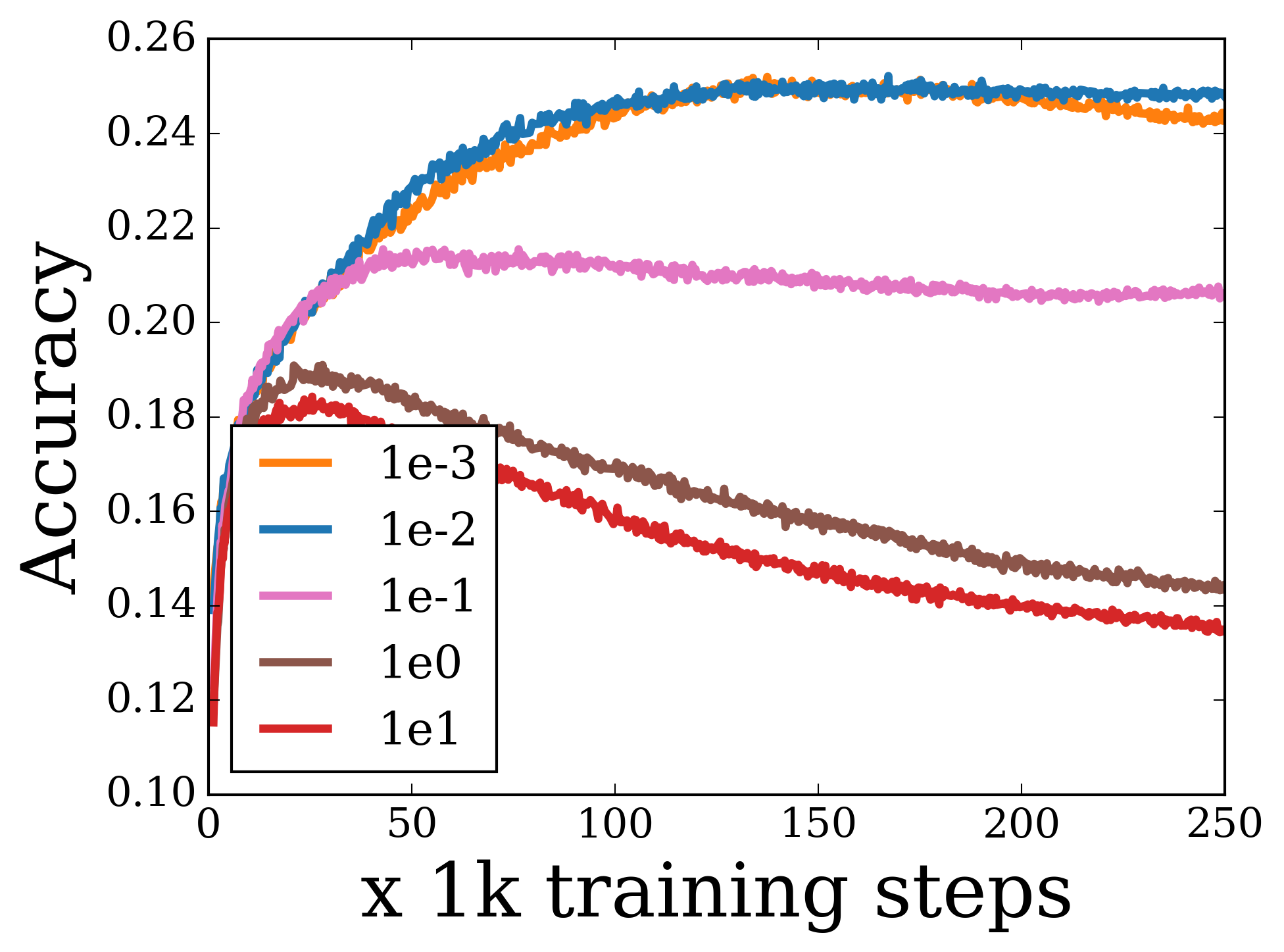}
    \caption{Test accuracy}
  \end{subfigure}
  \begin{subfigure}[b]{0.32\textwidth}
    \includegraphics[width=\textwidth]{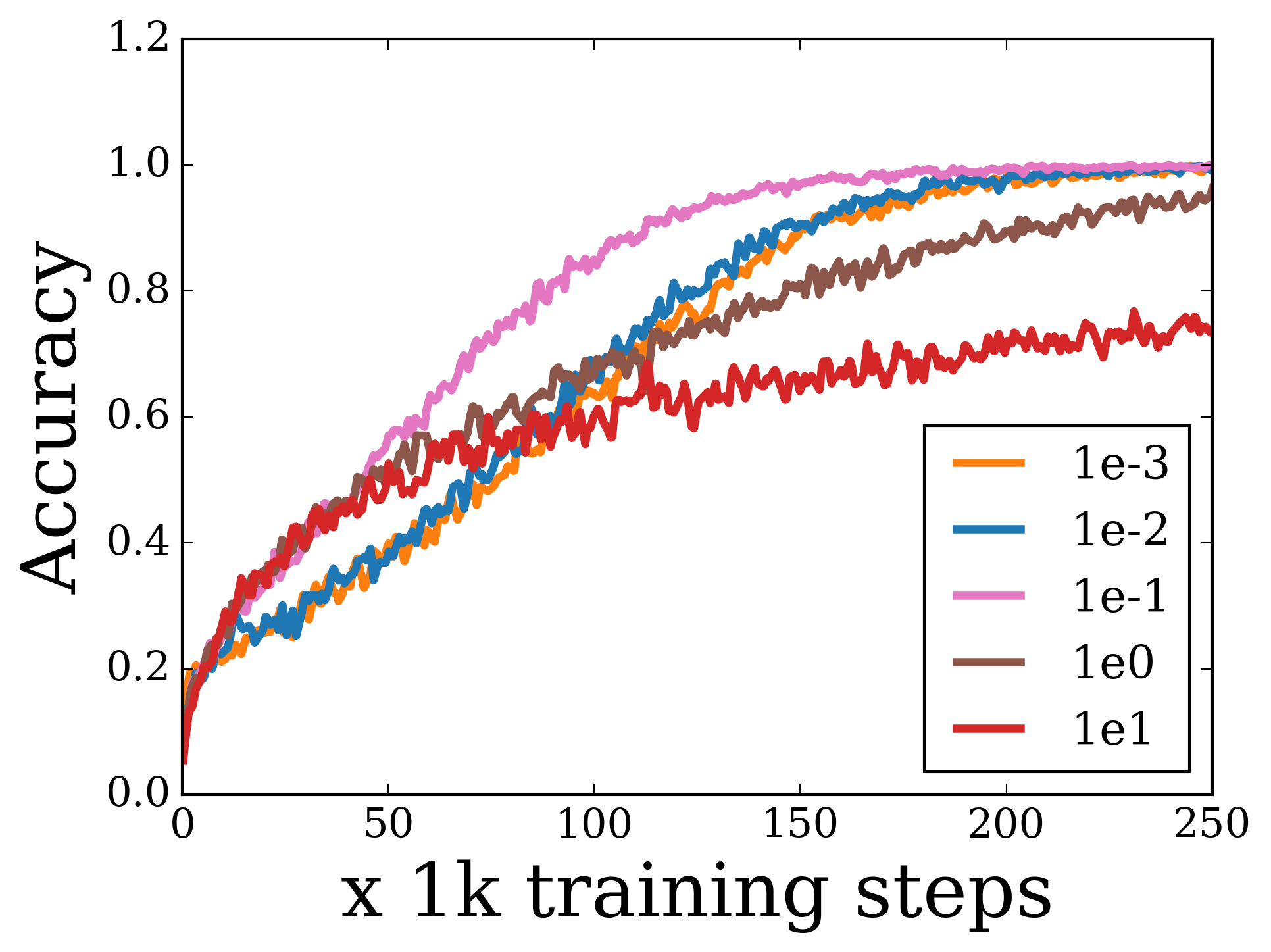}
    \caption{Train accuracy}
  \end{subfigure}
  \begin{subfigure}[b]{0.32\textwidth}
    \includegraphics[width=\textwidth]{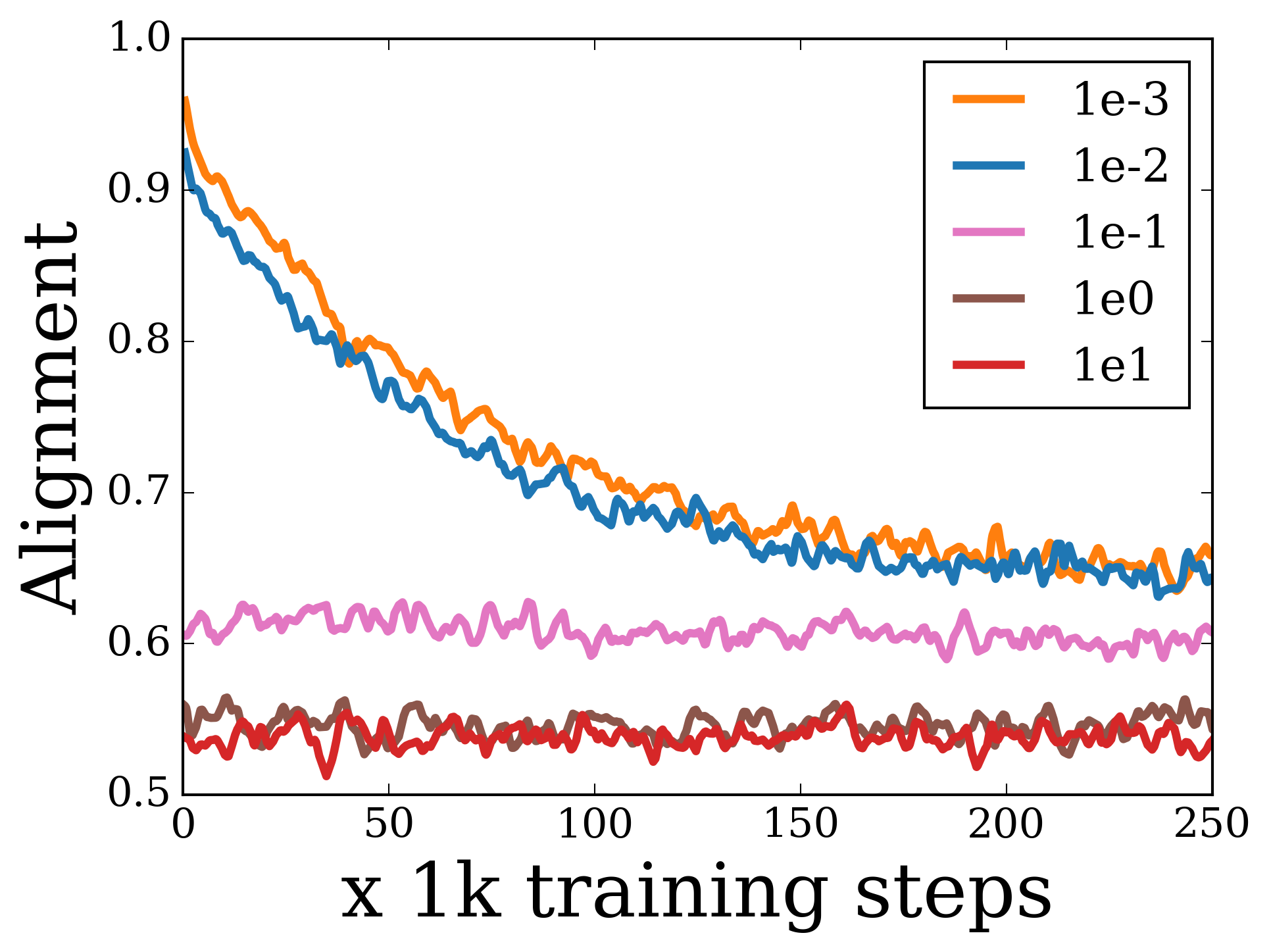}
    \caption{Hidden layer activations}
  \end{subfigure}
  \caption{\small Results when using Sigmoid activation function on CIFAR100 dataset.}
  \label{app:fig:sigmoid_cifar100}
\end{figure}
\FloatBarrier

\begin{figure}[h!]
  \centering
  \begin{subfigure}[b]{0.32\textwidth}
    \includegraphics[width=\textwidth]{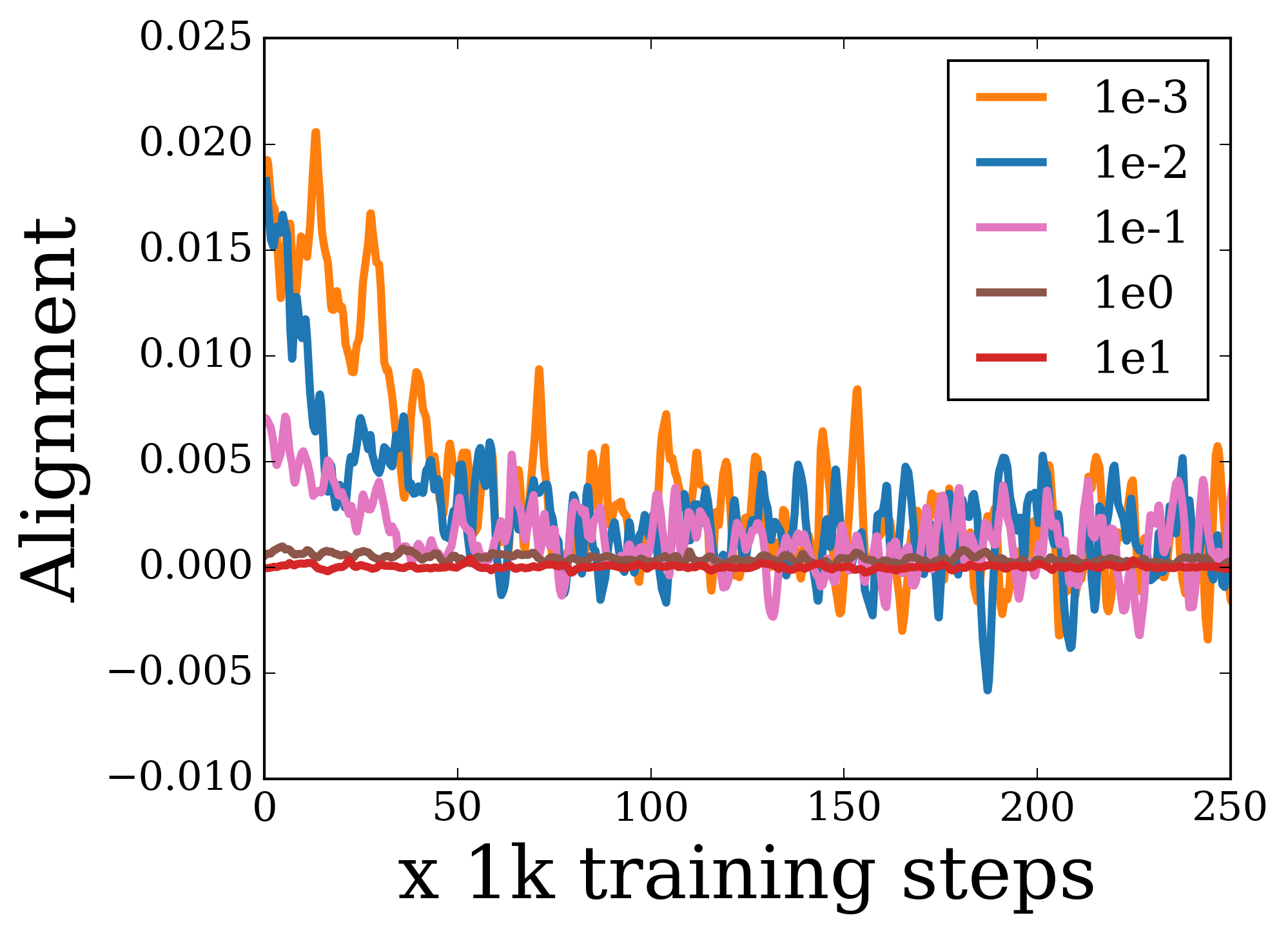}
    \caption{Hidden layer gradients}
  \end{subfigure}
  \begin{subfigure}[b]{0.32\textwidth}
    \includegraphics[width=\textwidth]{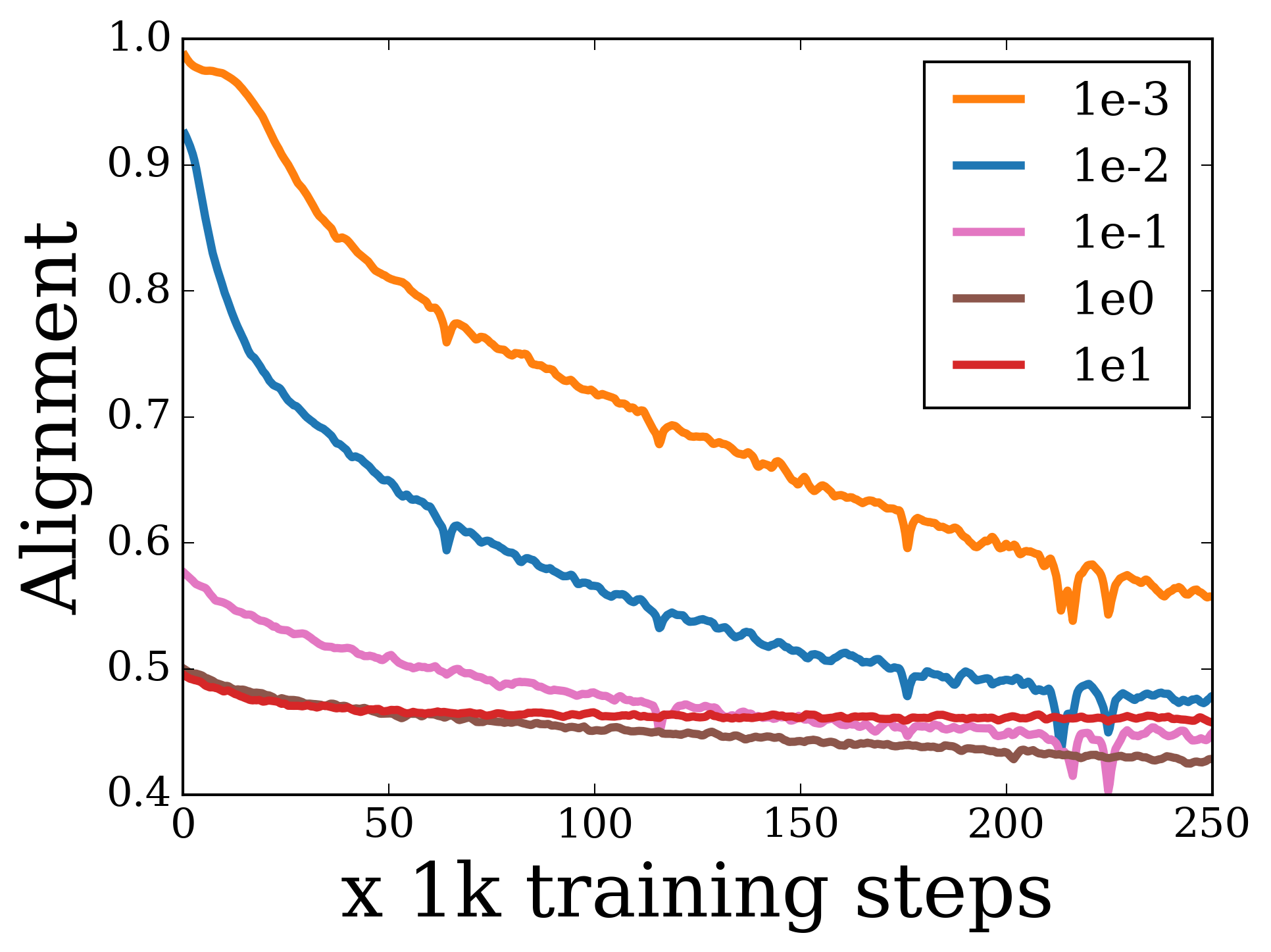}
    \caption{Top layer gradients}
  \end{subfigure}
  \begin{subfigure}[b]{0.32\textwidth}
    \includegraphics[width=\textwidth]{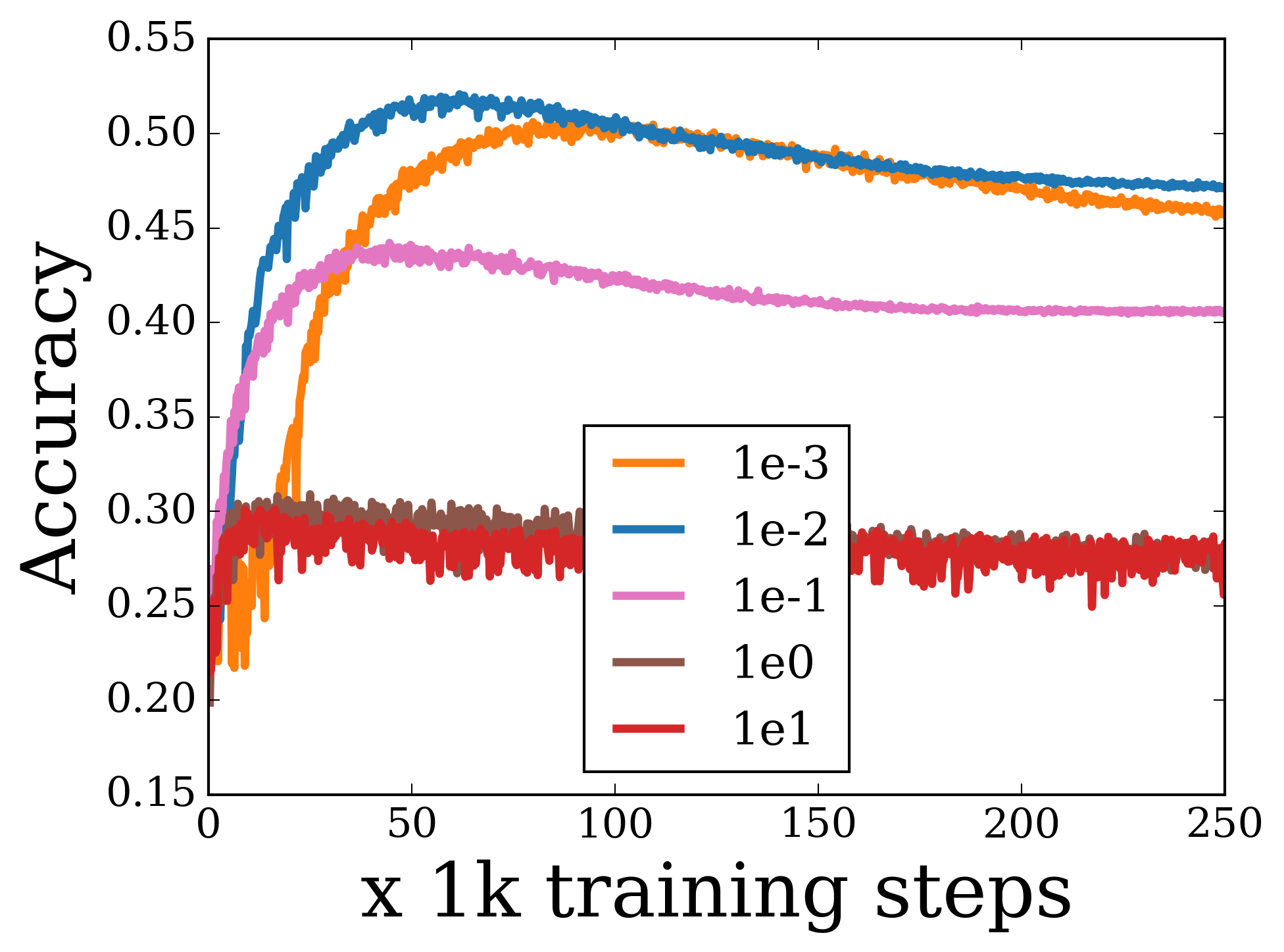}
    \caption{Test accuracy}
  \end{subfigure}
  \begin{subfigure}[b]{0.32\textwidth}
    \includegraphics[width=\textwidth]{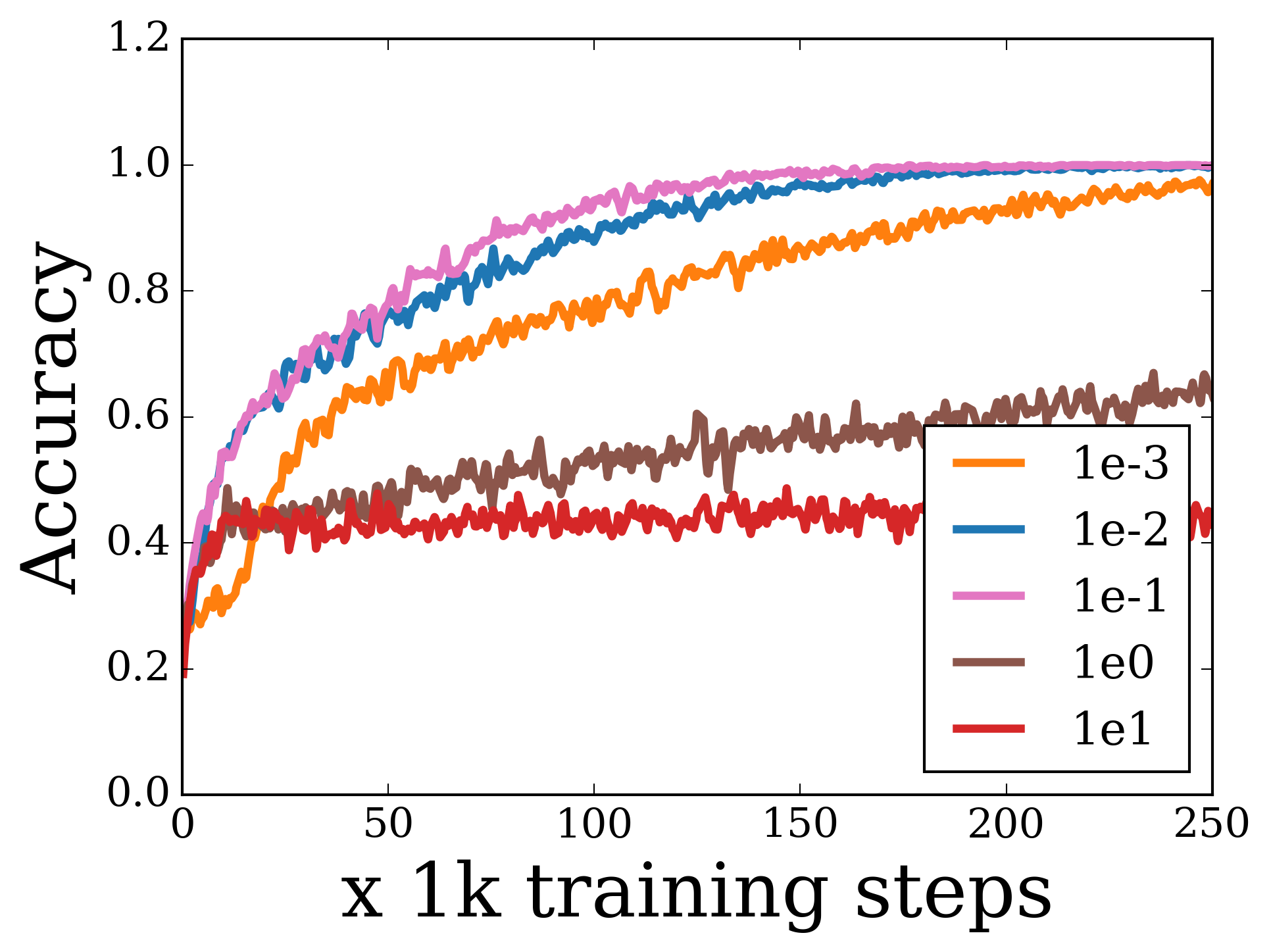}
    \caption{Train accuracy}
  \end{subfigure}
  \begin{subfigure}[b]{0.32\textwidth}
    \includegraphics[width=\textwidth]{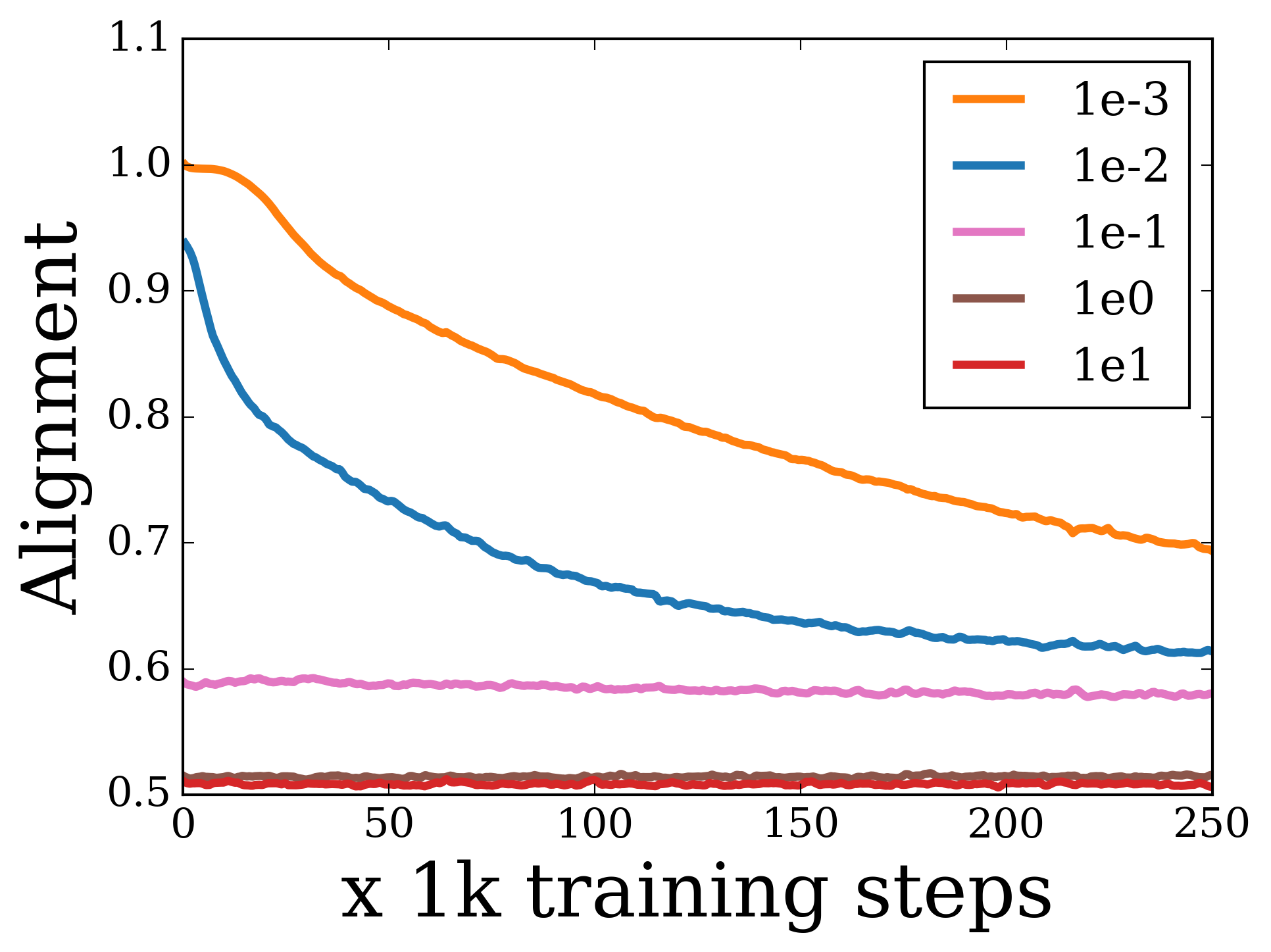}
    \caption{Hidden layer activations}
  \end{subfigure}
  \caption{\small Results when using Sigmoid activation function on SVHN dataset.}
  \label{app:fig:sigmoid_svhn}
\end{figure}
\FloatBarrier

\section{ConvNet architecture}
\label{app:sec:cnn}
Our goal is to recover the phenomenon that convolution and pooling operation leads to more aligned representations. In order to do this, we construct a simple ConvNet architecture. We start with two consecutive convolution and max pool operations, followed by a fully-connected and softmax layers. Note that the last two layers fully connected and softmax are operationally the same as our 2-layer MLP. 

We keep all the other hyper parameters the same between training runs for all the architectures. All the models are trained with SGD without momentum with learning rate set to 0.01 and batch size to 256.

\begin{itemize}
    \item Convolution layer 1: 32 filters with 5x5 kernel size followed by ReLU activation.
    \item Max pooling layer 1: pool size 3x3 with stride of 2x2
    \item Convolution layer 2: 64 filters with 5x5 kernel size followed by ReLU activation.
    \item Max pooling layer 2: pool size 3x3 with stride of 2x2
    \item Fully connected layer with 1024 hidden units followed by ReLU activation.
    \item Softmax layer with 10 output logits.
\end{itemize}

\end{document}